\numberwithin{equation}{section}
\newcommand*\circled[1]{\tikz[baseline=(char.base)]{
            \node[shape=circle,draw,inner sep=0.5pt] (char) {#1};}}
\newtheorem{theorem}{Theorem}
\newtheorem{lemma}{Lemma}
\newtheorem{assumption}{Assumption}
\newtheorem{definition}{Definition}
\newtheorem{proposition}{Proposition}
\newtheorem{remark}{Remark}
\newcommand{\add}[1]{\textcolor{black}{#1}}
\newcommand{\javad}[1]{\textcolor{black}{#1}}
\newcommand{\jj}[1]{\textcolor{black}{#1}}
\newcommand{\hyunin}[1]{\textcolor{black}{#1}}
\newcommand{\hh}[1]{\textcolor{black}{#1}}
\newcommand{\floor}[1]{\left\lfloor #1 \right\rfloor}
\DeclareMathOperator*{\argmin}{arg\,min}
\renewcommand{\footnotemark}{\fnsymbol{footnote}}
\title{Tempo Adaptation in Non-stationary Reinforcement Learning}
\author{%
Hyunin Lee$^{1,*}$\thanks{\scriptsize{* Corresponding authors. \hyunin{This work was supported by grants from ARO, ONR, AFOSR, NSF, and the UC Noyce Initiative.}}} \quad Yuhao Ding$^{1}$ \quad Jongmin Lee$^1$ \quad \textbf{Ming Jin}$^{2,*}$ \\ \textbf{Javad Lavaei}$^1$ \quad \textbf{Somayeh Sojoudi}$^1$\\
$^1$UC Berkeley, Berkeley, CA 94709 \\
$^2$Virginia Tech, Blacksburg, VA 24061 \\
\texttt{\{hyunin,yuhao$\_$ding,jongmin.lee,lavaei,sojoudi\}@berkeley.edu}\\
\texttt{jinming@vt.edu}
}
\begin{document}

\maketitle

\begin{abstract}

    We first raise and tackle a ``time synchronization'' issue between the agent and the environment in non-stationary reinforcement learning (RL), a crucial factor hindering its real-world applications. In reality, environmental changes occur over wall-clock time ($\mathfrak{t}$) rather than episode progress ($k$), where wall-clock time signifies the actual elapsed time within the fixed duration $\mathfrak{t} \in [0, T]$. In existing works, at episode $k$, the agent rolls a trajectory and trains a policy before transitioning to episode $k+1$. In the context of the time-desynchronized environment, however, the agent at time $\mathfrak{t}_k$ allocates $\Delta \mathfrak{t}$ for trajectory generation and training, subsequently moves to the next episode at $\mathfrak{t}_{k+1}=\mathfrak{t}_{k}+\Delta \mathfrak{t}$. Despite a fixed total number of episodes ($K$), the agent accumulates different trajectories influenced by the choice of \textit{interaction times} ($\mathfrak{t}_1,\mathfrak{t}_2,...,\mathfrak{t}_K$), significantly impacting the suboptimality gap of the policy. We propose a Proactively Synchronizing Tempo (\texttt{ProST}) framework that computes a suboptimal sequence $\{ \mathfrak{t}_1,\mathfrak{t}_2,...,\mathfrak{t}_K \} (= \{ \mathfrak{t} \}_{1:K})$ by minimizing an upper bound on its performance measure, i.e., the dynamic regret. Our main contribution is that we show that a suboptimal $\{ \mathfrak{t} \}_{1:K}$ trades-off between the policy training time (agent tempo) and how fast the environment changes (environment tempo). Theoretically, this work develops a suboptimal $\{ \mathfrak{t} \}_{1:K}$ as a function of the degree of the environment's non-stationarity while also achieving a sublinear dynamic regret. Our experimental evaluation on various high-dimensional non-stationary environments shows that the \texttt{ProST} framework achieves a higher online return at suboptimal $\{ \mathfrak{t} \}_{1:K}$ than the existing methods.

\end{abstract}

\section{Introduction}

The prevailing reinforcement learning \javad{(RL)} paradigm gathers past data, trains models in the present, and deploys them in the \emph{future}. This approach has proven successful for \textit{stationary} Markov decision processes (MDPs), where the reward and transition functions remain constant \cite{mnih2013playing,Silver2016MasteringTG,dulac2019challenges}. However, challenges arise when the environments undergo significant changes, particularly when the reward and transition functions are dependent on time or latent factors \cite{zintgraf2019varibad,zhu2020transfer,padakandla2021survey}, in \textit{non-stationary} MDPs. Managing non-stationarity in environments is crucial for real-world \javad{RL} applications. Thus, adapting to changing environments is pivotal in non-stationary RL.

This paper addresses a practical concern that has inadvertently been overlooked within traditional non-stationary RL environments, namely, the time synchronization between the agent and the environment. We raise the impracticality of utilizing \emph{episode-varying} environments in existing non-stationary RL research, as such environments do not align with the real-world scenario where changes occur regardless of the agent's behavior. In an episode-varying environment, the agent has complete control over determining the time to execute the episode $k$, the duration of policy training between the \javad{episodes $k$ and $k+1$}, and the transition time to the episode $k+1$. The issue stems from the premise that the environment undergoes dynamic changes throughout the course of each episode, with the rate of non-stationarity contingent upon the behavior exhibited by the agent. However, an independent \textit{wall-clock time} ($\mathfrak{t}$) exists in a real\javad{-}world environment, thereby \javad{the} above three events are now recognized as wall-clock time $\mathfrak{t}_k$, training time $\Delta \mathfrak{t}$, and $\mathfrak{t}_{k+1}$. The selection of interaction times ($\mathfrak{t}_{k},\mathfrak{t}_{k+1}$) has a notable impact on the collected trajectories, while the interval $\mathfrak{t}_{k+1}-\mathfrak{t}_{k}$ establishes an upper limit on the duration of training ($\Delta \mathfrak{t}$). This interval profoundly influences the suboptimality gap of the policy. In the context of \javad{a} time-desynchronized environment, achieving an optimal policy requires addressing a previously unexplored question: the determination of the \textit{optimal time sequence} $\{ \mathfrak{t}_1,\mathfrak{t}_2,..,.\mathfrak{t}_K\} (= \{ \mathfrak{t} \}_{1:K} )$  \javad{at which} the agent \javad{should interact} with the environment.

\begin{figure}[t]
    \centering
    \includegraphics[width=1.0\textwidth]{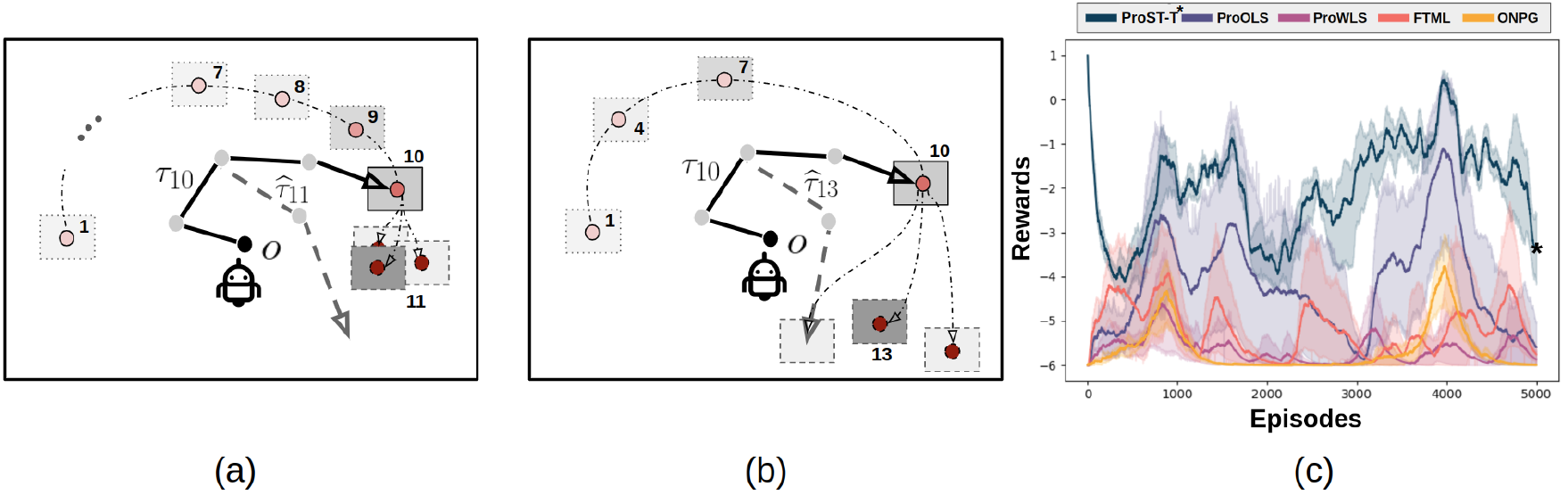}
    \caption{(a) 2D goal reacher in a time-desynchronized environment for one policy update\javad{, where the} agent learns an inaccurate policy on an accurate model\javad{;} (b) For three policy updates, the agent learns a near-optimal policy on an inaccurate model\javad{;} (c) Rewards per episode in 2D goal reacher with four model-free baselines, where \texttt{ProST-T}$^{*}$ is one of our proposed methods.}
    \label{fig:fig1}
\end{figure}
 
We elucidate the significance of the aforementioned research question through \javad{an} example. Consider a robot with the goal of reaching inside a gray-shaded non-fixed target box, known as the goal reacher (Appendix \ref{appendix: 2D reacher environment setting}). Note that the reward changes as the position of the box changes over time (Figure \ref{fig:fig1}-(a)). We begin by considering a scenario in which the wall-clock time and episode are synchronized, wherein the environment evolves alongside the episode. During each episode $k$, the agent rollouts a trajectory and iteratively updates the policy $N$ times, with the assumption that one policy update requires one second, \javad{and} then the agent transitions to the subsequent episode $k+1$. In conventional non-stationary RL environments, it is evident that a larger value of $N$ provides an advantage in terms of \javad{a} faster adaptation to achieve a near-optimal policy. However, regardless of the chosen value of $N$, the agent will consistently encounter the same environment in the subsequent episode. Now\javad{,} consider a scenario in which \javad{the} wall-clock time and episode are desynchronized. In this context, given a fixed wall-clock time duration $\mathfrak{t} \in [0,10]$, the agent is faced with \javad{the} additional task of determining both the total number of interactions (denoted as the total episode $K$) and the specific time \javad{instances} for these interactions $\{ \mathfrak{t} \}_{1:K}$\javad{,} where $\mathfrak{t}_{k} \in [0,10],\mathfrak{t}_{k-1} < \mathfrak{t}_{k}$ for $\forall k \in [K]$. Figure \ref{fig:fig1}\javad{(a)} shows an agent that interacts with the environment ten times, that is, \hyunin{$\{\mathfrak{t}\}_{1:K}=\{1,2,...,10\}$}, and spends \javad{the time} interval $(\mathfrak{t}_{k},\mathfrak{t}_{k+1})$ to train the policy, which consumes \javad{one} second ($K=10, N=1)$. The high frequency of interaction ($K=10$) provides adequate data for precise future box position learning (\hyunin{$\mathfrak{t}=11$}), yet a single policy update $(N=1)$ may not approximate the optimal policy. Now, if the agent interacts with the environment four times, i.e. \hyunin{$\{\mathfrak{t}\}_{1:K}=\{1,4,7,10\}$} (see Figure \ref{fig:fig1}\javad{(b)}), it becomes feasible to train the policy over a duration of three seconds ($K=4, N=3$). A longer period of policy training $(N=3)$ helps the agent in obtaining a near-optimal policy. However, limited observation data $(K=4)$ and \hyunin{large} time intervals (\hyunin{$\mathfrak{t} \in\{11,12,13\}$}) \javad{may} lead to less accurate box predictions. This example underscores the practical importance of aligning \javad{the interaction time of the agent} with the environment in non-stationary RL. Determining the optimal \javad{sequence $\{ \mathfrak{t} \}_{1:K}$} involves a trade-off between achieving an optimal model and an optimal policy.

Based on the previous example, our key insight is that, in non-stationary environments, \javad{the} new factor \textbf{tempo} emerges. Informally, tempo refers to the pace of processes occurring in a non-stationary environment. We define \textbf{environment tempo} as how fast the environment changes and \textbf{agent tempo} as how frequently it updates the policy. Despite the importance of considering the tempo to find the optimal $\{ \mathfrak{t} \}_{1:K}$, the existing formulations and methods for non-stationarity RL are insufficient. None of the existing works has adequately addressed this crucial aspect.

Our framework, \texttt{ProST}, provides a solution to \javad{finding} the optimal $\{ \mathfrak{t} \}_{1:K}$ \add{by computing a minimum solution \javad{to an} upper bound \javad{on} its performance measure}. It proactively optimizes the time sequence by leveraging the agent tempo and the environment tempo. \javad{The} \texttt{ProST} framework is divided into two components: future policy optimizer ($\texttt{OPT}_{\pi}$) and time optimizer ($\texttt{OPT}_{\mathfrak{t}}$), and is characterized by three key features: 1) it is \emph{proactive} in nature as it forecasts the future MDP model; 2) it is \emph{model-based} as it optimizes the policy in the created MDP; and 3) it \javad{is a} \emph{synchronizing tempo} framework as it finds \javad{a} suboptimal training time by adjusting how many times the agent needs to update the policy (agent tempo) \javad{relative} to how fast the environment changes (environment tempo).
Our framework is general in the sense that it \javad{can} be equipped with \javad{any common algorithm for policy update}. Compared to the existing works \cite{chandak2020optimizing,al2017continuous,finn2019online}, our approach achieves higher rewards and \javad{a} more stable performance over time (\javad{see} Figure \ref{fig:fig1}\javad{(c)} and Section \ref{experiments}). 

We analyze the statistical and computational properties of \texttt{ProST} in \javad{a} tabular MDP, which is named \texttt{ProST-T}. Our framework learns in a novel MDP, \javad{namely} elapsed time-varying MDP, and quantifies its non-stationarity with a novel metric, \javad{namely} time-elapsing variation budget, where both consider \hyunin{wall-clock time taken}. We \hyunin{analyze} the dynamic regret of \texttt{ProST-T} (Theorem \ref{theorem1}) into two components: dynamic regret of $\texttt{OPT}_{\pi}$ that learns a future MDP model (Proposition \ref{proposition1}) and \javad{dynamic regret} of $\texttt{OPT}_{\mathfrak{t}}$ that computes a near-optimal policy in that model (Theorem \ref{theorem2}, Proposition \ref{proposition2}). We show that both regrets \javad{satisfy} a \javad{sublinear rate with respect} to the total \javad{number of episodes} regardless of \javad{the} agent tempo. \javad{More} importantly, we \javad{obtain} suboptimal training time \javad{by minimizing an objective \jj{that strikes} a balance between} the upper \javad{bounds} of those two dynamic regrets, \javad{which reflect} the tempo\javad{s} of the agent and the environment (Theorem \ref{theorem3}). We find an interesting property that the future MDP model error of $\texttt{OPT}_{\pi}$ serves as a common factor on both regrets and \javad{show that the upper bound on the} dynamic regret of \texttt{ProST-T} can be \javad{improved by} a joint optimization problem of learning both different weights on observed data and a model (Theorem \ref{thm:3}, Remark \ref{rmk:W-LSE_modelerror}).

Finally, we introduce \texttt{ProST-G}, \javad{which is} an adaptable learning algorithm for high-dimensional tasks achieved through a practical approximation of \texttt{ProST}. Empirically, \texttt{ProST-G} provides solid evidence on different reward returns depending on policy training time and the significance of learning the future MDP model. \texttt{ProST-G} also consistently finds a near-optimal policy, outperforming four popular RL baselines that are used in non-stationary environments on three different Mujoco tasks.

\textbf{Notation}

The sets of natural, real, and non-negative real numbers are denoted by $\mathbb{N}, \mathbb{R}$\javad{, and} $\mathbb{R}_{+}$, respectively. For a finite set $Z$, the notation $|Z|$ denotes its cardinality and the notation $\Delta (Z)$ denotes the probability simplex over $Z$. For $X \in \mathbb{N}$, we define $[X] \javad{:=} \{1,2,..,X\}$. For \javad{a} variable $X$, we denote $\widehat{X}$ as a \textit{forecasted} (or \textit{predicted}) variable at the current time, \javad{and} $\widetilde{X}$ as the observed value in the past. Also, for any time variable $\mathfrak{t}>0$ and $k\in\mathbb{N}$, we denote \javad{the} time sequence $\{ \mathfrak{t}_1, \mathfrak{t}_2,..,\mathfrak{t}_k\}$ as $\{\mathfrak{t}\}_{1:k}$ , and variable $X$ at time $\mathfrak{t}_k$ as $X_{\mathfrak{t}_k}$. We use the \javad{shorthand} notation $X_{(k)}$(or $X^{(k)}$) for $X_{\mathfrak{t}_k}$(or $X^{\mathfrak{t}_k}$). We \javad{use the notation} $\{x\}_{a:b}$ \javad{to denote} a sequence of variables $\{x_{a},x_{a+1},...,x_{b}\}$, and $\{x\}_{(a:b)}$ \javad{to represent} a sequence of variables $\{x_{\mathfrak{t}_{a}},x_{\mathfrak{t}_{a+1}},...,x_{\mathfrak{t}_{b}}\}$. \javad{Given two} variables $x$ \javad{and} $y$, \javad{let} $x \vee y$ \javad{denote} $\max (x,y)$, and $x \wedge y$ \javad{denote} $\min (x,y)$. \javad{Given two} complex numbers $z_1$ and $z_2$, we \javad{write} $z_2=W(z_1)$ \javad{if} ${z_2}e^{z_2} =z_1$\javad{, where $W$ is the Lambert function}. \hyunin{Given a variable $x$, the notation $a = \mathcal{O}(b(x))$ means that $a \leq C \cdot b(x)$ for some constant $C > 0$ that is independent of $x$, and the notation $a = \Omega(b(x))$ means that $a \geq C \cdot b(x)$ for some constant $C > 0$ that is independent of $x$.} We have described the specific details in Appendix \ref{appendix:notations}.
\section{Problem statement: \javad{D}esynchronizing timelines}
\label{problem_statement_and_background}
\subsection{Time\javad{-}elapsing Markov Decision Process}
In this paper, we study a non-stationary Markov Decision Process (MDP) \javad{for which} the transition probability and the reward change \javad{over time}. We begin by clarifying that the term \emph{episode} is agent-centric, not environment-centric. Prior solutions for episode-varying (or step-varying) MDPs operate under the assumption that the timing of MDP changes aligns with the agent commencing a new episode (or step). \javad{We} introduce \javad{the new concept of} \textbf{time\javad{-}elapsing MDP}. It starts from \javad{the} wall-clock time $\mathfrak{t}=0$ to $\mathfrak{t}=T$\javad{, where} $T$ is fixed. The time\javad{-}elapsing MDP at time $\mathfrak{t} \in [0,T]$ is defined as $\mathcal{M}_\mathfrak{t} \coloneqq \langle \mathcal{S},\mathcal{A},H,P_\mathfrak{t},R_\mathfrak{t},\gamma \rangle $\javad{, where} $\mathcal{S}$ is the state space, $\mathcal{A}$ is the action space, $H$ is the number of steps, $P_\mathfrak{t} : \mathcal{S} \times \mathcal{A} \times \mathcal{S} \rightarrow \Delta(\mathcal{S})$ is the transition probability , $R_\mathfrak{t} : \mathcal{S}\times \mathcal{A} \rightarrow \mathbb{R}$ is the reward function, and $\gamma$ is a discounting factor. Prior to executing the first episode, the agent determines the total number of interactions with the environment (denoted as the \hyunin{number of} total episode $K$) and subsequently computes the sequence of interaction times $ \{ \mathfrak{t} \}_{1:K}$ through an optimization problem. We denote $\mathfrak{t}_k$ as the wall-clock time of the environment when the agent starts \javad{the} episode $k$. \javad{Similar to} the existing non-stationary RL \javad{framework}, the agent's objective is \javad{to learn} a policy $\pi^{\mathfrak{t}_k}: \mathcal{S} \rightarrow \Delta(\mathcal{A})$ for all $k$. This is achieved through engaging in a total of $K$ \javad{episode} interactions\javad{, namely} $\{ \mathcal{M}_{\mathfrak{t}_1},\mathcal{M}_{\mathfrak{t}_1},...,\mathcal{M}_{\mathfrak{t}_K} \}$\javad{, where} the agent dedicates \javad{the} time interval $(\mathfrak{t}_k,\mathfrak{t}_{k+1})$ for policy \javad{training and then} obtains a sequence of \add{sub}optimal policies $\{\pi^{\mathfrak{t}_1},\pi^{\mathfrak{t}_2},...,\pi^{\mathfrak{t}_K}\}$ to maximize \javad{a} non-stationary policy evaluation metric, \textit{dynamic regret}.

\javad{Dealing with time-elapsing MDP instead of conventional MDP} raises \javad{an important} question that should be addressed: within \javad{the} time duration $[0,T]$, \javad{what} time sequence $\{\mathfrak{t}\}_{1:K}$ yields favorable trajectory samples to obtain \javad{an} optimal policy\javad{? This question is also related to the following:} what is optimal \javad{value of $K$, i.e. the total number of episode} that encompasses \javad{a satisfactory} balance between \javad{the} amount of observed trajectories and \javad{the} accuracy of policy training\javad{? These interwined questions are concerned with an important aspect of RL, which is} the computation of the optimal policy for a given $\mathfrak{t}_k$. In \javad{Section \ref{From_FT-MBPO_to_F-MBPO}}, we propose \javad{the} \texttt{ProST} framework that computes \javad{a} suboptimal $K^*$ and \javad{its} corresponding \add{sub}optimal time sequence $\{\mathfrak{t}^*\}_{1:K^*}$ based on the information of the environment's rate of change. \javad{In Section \ref{method}, we compute a} near-optimal policy for $\{\mathfrak{t}^*\}_{1:K^*}$. \javad{Before proceeding with the above results,} we introduce a new metric quantifying the environment's pace of change, \javad{referred to as} time\javad{-}elapsing variation budget.

\subsection{Time\javad{-}elapsing variation budget}

\textit{Variation budget}  \cite{ding2022non,bradtke1996linear,Gur2014StochasticMP} is a metric to quantify the speed with which the environment changes. Driven by our motivations, we introduce a new metric imbued with real-time considerations, \javad{named} \textit{time\javad{-}elapsing variation budget} $B(\Delta \mathfrak{t})$. Unlike the existing variation budget, which quantifies the environment's \javad{non-stationarity} across \javad{episodes} $\{1,2,..,K\}$, our definition accesses it across $\{\mathfrak{t}_1,\mathfrak{t}_2,...,\mathfrak{t}_K\}$, where \javad{the} interval $\Delta \mathfrak{t} =\mathfrak{t}_{k+1}-\mathfrak{t}_k$ remains constant regardless of $ k\in[K-1]$. For further analysis, we define two time\javad{-}elapsing variation budgets, one for transition probability and \javad{another} for reward function.
\begin{definition}[Time\javad{-}elapsing variation budget\javad{s}] For \javad{a given sequence} $\{ \mathfrak{t}_1,\mathfrak{t}_2,..,\mathfrak{t}_K\}$, assume that \javad{the} interval $\Delta \mathfrak{t}$ is equal to the policy training time $\Delta_\pi$. We define two time\javad{-}elapsing variation budgets $B_{p}(\Delta_\pi)$ and $B_{r}(\Delta_\pi)$ as
    \begin{equation*}
            B_{p}(\Delta_\pi) \coloneqq \sum_{k=1}^{K-1} \sup_{s,a}|| P_{\mathfrak{t}_{k+1}}(\cdot~|s,a) - P_{\mathfrak{t}_k}(\cdot~|s,a) ||_{1}, ~B_{r}(\Delta_\pi)  \coloneqq \sum_{k=1}^{K-1}\sup_{s,a} |  R_{\mathfrak{t}_{k+1}}(s,a) - R_{\mathfrak{t}_{k}}(s,a)  |.
    \end{equation*}
\end{definition}
To enhance the representation of a real-world system \javad{using the} time\javad{-}elapsing variation budget\javad{s}, we \javad{make} the following assumption. 
\begin{assumption}[Drifting constants]
    There \javad{exist} constants $c>1$ and $\alpha_r,\alpha_p \geq 0$ \javad{such that} $B_p(c\Delta_\pi) \add{\leq} c^{\alpha_{p}} B_p(\Delta_\pi)$ and $B_r(c\Delta_\pi) \add{\leq} c^{\alpha_{r}} B_r(\Delta_\pi)$. We call $\alpha_p$ and $\alpha_r$ the drifting constants. 
    \label{assum:tevb}
\end{assumption}

\subsection{\add{Sub}optimal training time}

Aside from the formal MDP framework, the agent can be informed of varying time\javad{-}elapsing variation budget\javad{s} based on \javad{the} training time $\Delta_\pi \in (0,T)$ even within the same time-elapsing MDP. Intuitively, \javad{a short time} $\Delta_\pi$ is inadequate to obtain a near-optimal policy, yet it facilitates frequent interactions with the environment, leading to a reduction in empirical model error due to \javad{a} larger volume of data. On the contrary, \javad{a long time} $\Delta_\pi$ may ensure obtaining a near-optimal policy but also \javad{introduces} greater uncertainty in \javad{learning} the environment. \javad{This motivates us to find a} \textbf{suboptimal training time} $\Delta^*_\pi \in (0,T)$ that strikes a balance between the sub-optimal gap of the policy and the empirical model error. If it exists, then $\Delta^*_{\pi}$ provides \javad{a} suboptimal $K^*= \lfloor T / \Delta^*_{\pi} \rfloor$, and \javad{a} suboptimal time sequence where $\mathfrak{t}^*_k = \mathfrak{t}_1 + \Delta^*_{\pi}\cdot(k-1)$ for \javad{all} $k \in [K^*]$. Our \texttt{ProST} framework computes \javad{the parameter} $\Delta^*_{\pi}$\javad{, then} sets $\{ \mathfrak{t}^* \}_{1:K^*}$, \javad{and finally} finds a \emph{future} near-optimal policy \javad{for} time $\mathfrak{t}^*_{k+1}$ at time $\mathfrak{t}^*_k$. \javad{In the next section, we first study} how to approximate \javad{the one-episode-ahead} suboptimal policy $\pi^{*,\mathfrak{t}_{k+1}}$ at time $\mathfrak{t}_k$ when $\{ \mathfrak{t} \}_{1:K}$ is given.

\section{Future policy optimizer}
\label{method}
\begin{figure}[ht]
    \centering
    \includegraphics[width=\textwidth]{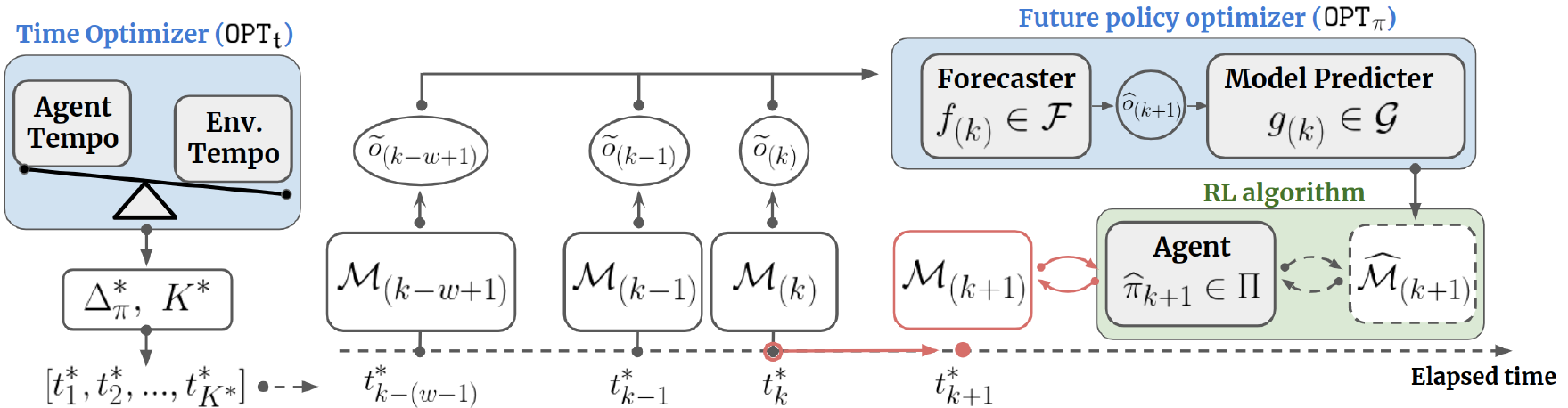}
    \caption{\texttt{ProST} framework}
    \label{fig:meta_alg}
\end{figure}

For given $\mathfrak{t}_{k}$ \javad{and} $\mathfrak{t}_{k+1}$, \javad{the} future policy optimizer ($\texttt{OPT}_{\pi}$), \javad{as a module of the} \texttt{ProST} framework \hyunin{(Figure \ref{fig:meta_alg})}, computes \javad{a near-}optimal policy for the future time $\mathfrak{t}_{k+1}$ at \javad{time} $\mathfrak{t}_k$ \javad{via} two procedures: \javad{(i) it first} forecasts the future MDP model of time $\mathfrak{t}_{k+1}$ at time $\mathfrak{t}_k$ utilizing the MDP forecaster function\javad{, (ii) it then} utilizes \javad{an \jj{arbitrary}} policy optimization algorithm within the forecasted MDP model $\texttt{OPT}_{\pi}$ to obtain \javad{a} future near-optimal policy $\pi^{*.\mathfrak{t}_{k+1}}$.

\subsection{MDP forecaster}
Our \texttt{ProST} framework is applicable in an environment that meets the following assumption.
\begin{assumption}[Observable non-stationary set $\mathcal{O}$]
    \javad{Assume that} \jj{the} non-stationarity of $\mathcal{M}_{\mathfrak{t}_k}$ \javad{is} fully characterized by a non-stationary \javad{parameter} $o_{\mathfrak{t}_k} \in \mathcal{O}$. \javad{Assume also that} the agent observes a noisy non-stationary \javad{parameter} $\tilde{o}_{\mathfrak{t}_k}$ \javad{at the end of episode $k \in [K]$ (at time $\mathfrak{t}_k$)}.
    \label{assume:observable_nonstatinoaryvar}
\end{assumption}
It is worth noting that  Assumption $\ref{assume:observable_nonstatinoaryvar}$ is mild, as prior research in non-stationary RL has proposed techniques to estimate $o_{(k)}$ through latent factor identification methods \cite{zintgraf2019varibad,chen2022adaptive,huang2021adarl,feng2022factored,kwon2021rl}, and our framework accommodates the incorporation of \javad{those works for the estimation of} $o_{(k)}$. Based on Assumption \ref{assume:observable_nonstatinoaryvar}, we define the MDP forecaster function $g \circ f$ \javad{below}.

\begin{definition}[MDP forecaster $g \circ f$]
    \javad{Consider two function} classes $\mathcal{F}$ and $\mathcal{G}$ \javad{such that} $\mathcal{F}: \mathcal{O}^w \rightarrow \mathcal{O} $ \javad{and} $\mathcal{G} : \mathcal{S} \times \mathcal{A} \times \mathcal{O} \rightarrow \mathbb{R} \times
 \Delta(\mathcal{S})$\hyunin{, where $w \in  \mathbb{N}$. Then, for} $f_{(k)} \in \mathcal{F}$ and $g_{(k)}\in {G}$, we define MDP forecaster at time $t_k$ as $(g \circ f)_{(k)} : \mathcal{O}^{w} \times \mathcal{S} \times \mathcal{A} \rightarrow \mathbb{R} \times \Delta(\mathcal{S})$.
 \label{def:MDPforecaster}
\end{definition}
The function $f_{(k)}$, \javad{acting as} a non-stationarity forecaster, predicts a non-stationary \javad{parameter} $\hat{o}_{(k+1)}$ at time $t_{k+1}$ based on \javad{the last $w$ observations given by the set} $\{\tilde{o}\}_{(k-w+1:k)}$, i.e.\javad{,} $\hat{o}_{(k+1)} = f(\{\tilde{o}\}_{(k-w+1,k)} )$. The agent can determine \jj{the} \javad{number of used historical observations}, denoted as $w$, by leveraging information from the environment (Section \ref{From_FT-MBPO_to_F-MBPO}). Then, the function $g_{(k)}$, \javad{acting as a} model predictor, predicts a reward $\widehat{R}_{(k+1)}(s,a)$ and a transition probability $\widehat{P}_{(k+1)}(\cdot|s,a)$ \javad{for} time $t_{k+1}$, i.e.\javad{,} $(\widehat{R}_{(k+1)},\widehat{P}_{(k+1)}) = g_{(k)}(s,a,\hat{o}_{k+1})$. Finally, the $\texttt{OPT}_\pi$ generates the estimated future MDP $\widehat{\mathcal{M}}_{(k+1)} = \langle \mathcal{S}, \mathcal{A}, H, \widehat{P}_{(k+1)} , \widehat{R}_{(k+1)}, \gamma \rangle $ \javad{associated with} time $t_{k+1}$.

\subsection{\javad{Finding} future optimal policy}

Now, consider an arbitrary RL algorithm \javad{provided by the user} to obtain \javad{an} optimal policy from the model $\widehat{\mathcal{M}}_{(k+1)}$. For a given time sequence $\{\mathfrak{t}\}_{1:K}$, the $\texttt{OPT}_\pi$ finds \jj{a} \javad{near-}optimal future policy as follows: (1) observe and forecast, (2) optimize \javad{using the} future MDP model. 

\textbf{(1) Observe and forecast. } At time $\mathfrak{t}_k$, the agent executes an episode $k$ in the environment $\mathcal{M}_{(k)}$, completes its trajectory $\tau_{(k)}$, and observes the noisy non-stationary \javad{parameter} $\hat{o}_{(k)}$ (Assumption \ref{assume:observable_nonstatinoaryvar}). The algorithm then updates the function $f_{(k)}$ based on \javad{the last $w$ observed parameters}, and the function $g_{(k)}$ with input from all previous trajectories. Following these updates, the MDP forecaster at time $t_k$ predicts $\widehat{P}_{(k+1)}$ and $\widehat{R}_{(k+1)}$, thus creating the MDP model $\widehat{\mathcal{M}}_{(k+1)}$ for time $\mathfrak{t}_{k+1}$.

\textbf{(2) Optimize \javad{using the} future MDP model. } Up until time $\mathfrak{t}_{k+1}$, the agent continually updates the policy within the estimated future MDP $\widehat{\mathcal{M}}_{(k+1)}$ for \javad{a} given duration $\Delta_\pi$. Specifically, the agent rollouts synthetic trajectories $\hat{\tau}_{(k+1)}$ in $\widehat{\mathcal{M}}_{(k+1)}$, \javad{and} utilizes any policy update algorithm to obtain a policy $\widehat{\pi}_{(k+1)}$. Following the duration $\Delta_\pi$, the agent stops \javad{the training by the time $\mathfrak{t}_{k+1}$} and moves to the next episode $\mathcal{M}_{(k+1)}$ with policy $\widehat{\pi}_{(k+1)}$. 

We elaborate on the \javad{above} procedure in Algorithm \ref{alg:meta_algorithm} \javad{given} in Appendix \ref{section:meta_algorithm}. 

\section{Time optimizer}
\label{From_FT-MBPO_to_F-MBPO}

\subsection{Theoretical analysis}

We now present our main theoretical contribution, \javad{which is regarding} the time optimizer ($\texttt{OPT}_{\mathfrak{t}}$): computing \javad{a} suboptimal policy training time $\Delta^*_{\pi}$ (the agent tempo). Our theoretical analysis starts \javad{with} specifying \javad{the} \hyunin{components} of the $\texttt{OPT}_{\mathfrak{\pi}}$ optimizer, which we refer to as \texttt{ProST-T} (\javad{note that} \texttt{-T} stands for an instance in \javad{the} tabular setting). We employ the Natural Policy Gradient (NPG) with entropy regularization \cite{kakade2001natural} as a policy update algorithm \javad{in} $\text{OPT}_{\pi}$. We denote the entropy regularization coefficient as $\tau$, the learning rate as $\eta$, the policy evaluation approximation gap arising due to finite samples \javad{as $\delta$}, and the past reference length for forecaster $f$ \javad{as $w$}. Without loss of generality, we assume that \javad{each} policy iteration takes \javad{one} second. The theoretical analysis is conducted within a tabular environment, allowing us to relax Assumption \ref{assume:observable_nonstatinoaryvar}, which means \javad{that one can} estimate non-stationary \javad{parameters} by counting visitation of state and action pairs at time $\mathfrak{t}_k$, denoted as $n_{(k)}(s,a)$, rather than \javad{observing} them. Additionally, we incorporate the exploration bonus term at time $\mathfrak{t}_k$ \javad{into $\widehat{R}_{(k+1)}$}, denoted as $\Gamma^{(k)}_w(s,a)$, which is proportional to $\sum_{\tau=k-w+1}^{k}(n_{(\tau)}(s,a))^{-1/2}$ \javad{and aims} to promote the exploration of states and actions that are \javad{visited infrequently}.

We compute $\Delta_\pi^*$ by minimizing \javad{an} upper bound \javad{on} the \texttt{ProST-T}'s dynamic regret. The dynamic regret of $\texttt{ProST-T}$ is characterized by \javad{the} \textit{model prediction error} \javad{that} measures the MDP forecaster's error by defining the difference between $\widehat{\mathcal{M}}_{(k+1)}$ and $\mathcal{M}_{(k+1)}$ through a Bellman equation.

\begin{definition}[Model prediction error]
    At time $\mathfrak{t}_k$, \javad{the} MDP forecaster predicts \javad{a} model $\widehat{\mathcal{M}}_{(k+1)}$ \jj{and} \javad{then we obtain a near-optimal policy $\widehat{\pi}^{(k+1)}$ based on $\widehat{\mathcal{M}}_{(k+1)}$}. For \javad{each pair} $(s,a)$, \jj{we} denote the state value function and the state action value function of $\widehat{\pi}^{(k+1)}$ in $\widehat{\mathcal{M}}_{(k+1)}$ at step $h\in [H]$ as $\widehat{V}^{(k+1)}_{h}(s)$ and $\widehat{Q}^{(k+1)}_{h}(s,a)$\javad{, respectively. We also} denote \javad{the} model prediction error \javad{associated with} time $\mathfrak{t}_{k+1}$ \javad{calculated} at time $\mathfrak{t}_k$ as $\iota^{(k+1)}_h(s,a)$\javad{, which is defined as}
    \begin{equation*}
        \iota^{(k+1)}_h(s,a) \javad{:}= \left( R_{(k+1)} + \gamma P_{(k+1)} \widehat{V}^{(k+1)}_{h+1} - \widehat{Q}^{(k+1)}_{h} \right) (s,a).
    \end{equation*}
\end{definition}

We now \javad{derive an} upper bound \javad{on the} \texttt{ProST-T} dynamic regret. We expect the upper bound to be likely controlled by two factors: the error of the MDP forecaster's \javad{prediction of} the future MDP model and the error of the NPG algorithm \javad{due to approximating} the optimal policy within an estimated future MDP model. This insight is clearly articulated \javad{in the next theorem}. 

\begin{theorem}[\texttt{ProST-T} dynamic regret $\mathfrak{R}$]
    Let $\iota^{K}_{H} = \sum_{k=1}^{K-1} \sum_{h=0}^{H-1} \iota^{(k+1)}_{h}(s^{(k+1)}_h,a^{(k+1)}_h) $ and $\bar{\iota}_{\infty}^{K} \coloneqq \sum_{k=1}^{K-1} ||\bar{\iota}_{\infty}^{k+1}||_\infty$, where $\iota^{K}_{H}$ is a data-dependent error. For \javad{a} given $p \in (0,1)$, the dynamic regret of the forecasted policies $\{ \widehat{\pi}^{(k+1)} \}_{1:K-1}$ of \texttt{ProST-T} is upper bounded with probability \javad{at least} $1-p/2$ \javad{as follows:}
     \begin{equation*}
         \mathfrak{R} \left( \{ \widehat{\pi}^{(k+1)}\}_{1:K-1},K) \right) \leq \mathfrak{R}_{I} + \mathfrak{R}_{II}
     \end{equation*}
     where $\mathfrak{R}_{I} = \bar{\iota}_{\infty}^{K}/ (1-\gamma) - \iota^{K}_H \add{+ \hyunin{C_{p}} \cdot \sqrt{K-1}},~~\mathfrak{R}_ {II} = C_{II}[\Delta_\pi] \cdot (K-1)$, and $\hyunin{C_p},C_{II}[\Delta_\pi]$ are \javad{some} functions of $p$, $\Delta_\pi$, respectively.
    \label{theorem1}
\end{theorem}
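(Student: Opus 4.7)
The plan is to follow the standard model-based regret-decomposition template (as in UCBVI/LSVI-UCB style analyses), but refined to cleanly separate the MDP-forecasting error from the NPG optimization error so that the two summands $\mathfrak{R}_I$ and $\mathfrak{R}_{II}$ surface naturally. First I would rewrite
\[
\mathfrak{R}(\{\widehat{\pi}^{(k+1)}\}_{1:K-1}, K) = \sum_{k=1}^{K-1}\Bigl[V^{*,\mathfrak{t}_{k+1}}_{\mathcal{M}_{(k+1)}}(s_0) - V^{\widehat{\pi}^{(k+1)}}_{\mathcal{M}_{(k+1)}}(s_0)\Bigr]
\]
and insert the value of $\widehat{\pi}^{(k+1)}$ inside the forecasted MDP, namely $\widehat{V}^{(k+1)}_0(s_0)$, together with the optimal value in $\widehat{\mathcal{M}}_{(k+1)}$. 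This yields a three-way split per episode: (i) mismatch of the optimal-policy values between $\mathcal{M}_{(k+1)}$ and $\widehat{\mathcal{M}}_{(k+1)}$, (ii) NPG suboptimality inside $\widehat{\mathcal{M}}_{(k+1)}$, and (iii) mismatch of the learned-policy values between the two MDPs. Parts (i) and (iii) will contribute to $\mathfrak{R}_I$ and part (ii) will contribute to $\mathfrak{R}_{II}$.

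For part (ii), I would invoke the linear convergence of entropy-regularized NPG: with regularization coefficient $\tau$, learning rate $\eta$, and $\Delta_\pi$ updates (one per second) inside $\widehat{\mathcal{M}}_{(k+1)}$, the suboptimality of $\widehat{\pi}^{(k+1)}$ relative to the optimal policy of $\widehat{\mathcal{M}}_{(k+1)}$ shrinks geometrically in $\Delta_\pi$, plus an additive term controlled by the finite-sample evaluation gap $\delta$ and the regularization bias from $\tau$. Packaging these into $C_{II}[\Delta_\pi]$ and summing over $k\in[K-1]$ gives $\mathfrak{R}_{II}=C_{II}[\Delta_\pi]\cdot(K-1)$ with the desired functional dependence on $\Delta_\pi$ that will be exploited later in Theorem \ref{theorem3}.

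For parts (i) and (iii), I would use the extended value-difference lemma, which exactly yields
\[
\widehat{V}^{(k+1)}_0(s_0) - V^{\widehat{\pi}^{(k+1)}}_{\mathcal{M}_{(k+1)},0}(s_0) = \mathbb{E}^{\widehat{\pi}^{(k+1)}}_{\mathcal{M}_{(k+1)}}\!\Bigl[\textstyle\sum_{h=0}^{H-1}\iota^{(k+1)}_h(s_h,a_h)\Bigr],
\]
and an analogous identity for the optimal-policy mismatch (controlled by the same $\iota$ by optimism of the bonus $\Gamma^{(k)}_w$). Dominating the discounted expectation by the pointwise supremum gives rise to the factor $\|\bar{\iota}^{k+1}_\infty\|_\infty$, and the geometric sum $\sum_h \gamma^h$ produces the $1/(1-\gamma)$ prefactor, explaining the first summand of $\mathfrak{R}_I$. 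To swap the expectation for the realized trajectory sum, I would define the martingale difference $D_k := \mathbb{E}^{\widehat{\pi}^{(k+1)}}_{\mathcal{M}_{(k+1)}}[\sum_h \iota^{(k+1)}_h] - \sum_h \iota^{(k+1)}_h(s_h^{(k+1)},a_h^{(k+1)})$ with respect to the filtration of past episodes, and apply Azuma--Hoeffding to obtain, with probability at least $1-p/2$, $|\sum_{k=1}^{K-1}D_k|\le C_p\sqrt{K-1}$. Rearranging produces the $-\iota^K_H$ correction and the concentration slack $C_p\sqrt{K-1}$, yielding $\mathfrak{R}_I$.

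The hard part will be executing Step 4 rigorously, because $\widehat{\pi}^{(k+1)}$ depends on the entire history of trajectories, so one must set up the filtration carefully, verify the bounded-difference condition on $D_k$ (which requires uniform bounds on $\iota^{(k+1)}_h$ under the exploration bonus), and ensure the resulting constant $C_p$ is independent of $\Delta_\pi$ so that it does not interfere with the later time-optimization step. A secondary technicality is handling part (i): the \emph{optimal} policy in $\mathcal{M}_{(k+1)}$ differs from the one in $\widehat{\mathcal{M}}_{(k+1)}$, so one must invoke the optimism induced by $\Gamma^{(k)}_w$ (which holds on the good concentration event) to cap the optimal-value gap by the same $\iota$-based quantities used for the learned policy, thereby keeping both (i) and (iii) absorbed into a single $\mathfrak{R}_I$ expression.
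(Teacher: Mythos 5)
Your proposal follows essentially the same route as the paper: the same three-way decomposition into optimal-value mismatch between $\mathcal{M}_{(k+1)}$ and $\widehat{\mathcal{M}}_{(k+1)}$, NPG suboptimality inside $\widehat{\mathcal{M}}_{(k+1)}$ (bounded via linear convergence of entropy-regularized NPG), and learned-policy value mismatch handled by the model-prediction-error identity plus an Azuma--Hoeffding martingale argument yielding $-\iota^{K}_{H} + C_p\sqrt{K-1}$. The one small divergence is that the paper does not invoke optimism of the bonus $\Gamma^{(k)}_w$ for the optimal-value mismatch: it bounds that term by $\bar{\iota}^{K}_{\infty}/(1-\gamma)$ directly through a Bellman recursion that only uses the optimality of $\widehat{\pi}^{\bigast,(k+1)}$ in the forecasted MDP, deferring all bonus/concentration arguments to Theorem~\ref{theorem2}.
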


Specifically, the upper bound is composed of two terms: $\mathfrak{R}_{I}$ \javad{that} originates from \javad{the} MDP forecaster error between $\mathcal{M}_{(k+1)}$ and $\widehat{\mathcal{M}}_{(k+1)}$, and $\mathfrak{R}_{II}$ \javad{that} arises due to the suboptimality gap between $\pi^{*,(k+1)}$ and $\widehat{\pi}^{(k+1)}$. Theorem \ref{theorem1} clearly demonstrates that \javad{a} prudent construction of the MDP forecaster that controls the model prediction errors and the selection of the agent tempo $\Delta_\pi$ is significant in guaranteeing \javad{sublinear rates for} $\mathfrak{R}_{I}$ \javad{and} $\mathfrak{R}_{II}$. To \javad{understand the role of} the environment tempo \javad{in} $\mathfrak{R}_{I}$, we observe that the MDP forecaster utilizes $w$ previous observations\javad{,} which inherently encapsulates the environment tempo. \javad{We} expect the model prediction errors, at least in part, to be controlled by the environment tempo $B(\Delta_\pi)$, so \javad{that a} trade-off between two tempos can be framed as the trade-off between $\mathfrak{R}_{I}$ and $\mathfrak{R}_{II}$. \javad{Hence, it is desirable to somehow minimize} the upper bound \javad{with respect to $\Delta_{\pi}$ to obtain a solution,} denoted as  $\Delta^*_{\pi}$ \jj{, which} strikes a balance between $\mathfrak{R}_{I}$ and $\mathfrak{R}_{II}$. 

\subsubsection[RII analysis]{Analysis of $\mathfrak{R}_{II}$} 

\javad{A direct analysis of the upper bound $\mathfrak{R}_I +\mathfrak{R}_{II}$ is difficult since its dependence on $K$ is not explicit. To address this issue, we recall that an optimal $\Delta^*_\pi$ should be a natural number that guarantees the sublinearity of both $\mathfrak{R}_{I}$ and $\mathfrak{R}_{II}$ with respect to the total number of episodes $K$.} We first compute \javad{a} set $\mathbb{N}_{II} \subset \mathbb{N}$ \javad{that includes those values of} $\Delta_\pi$ \javad{that} guarantee \javad{the sublinearity of} $\mathfrak{R}_{II}$\javad{, and then compute a} set $\mathbb{N}_{I} \subset \mathbb{N}$ that guarantees \javad{the sublinearity of} $\mathfrak{R}_{I}$\javad{. Finally, we solve for} $\Delta^*_\pi$ in \javad{the} common set $\mathbb{N}_{I} \cap \mathbb{N}_{II}$.

\begin{proposition}[$\Delta_\pi$ bounds for sublinear $\mathfrak{R}_{II}$]
    A \hyunin{total step} $H$ is given by MDP. For \javad{a number} $\epsilon > 0$ \javad{such that} $H=\hyunin{\Omega} \left( \log \left( ( \hyunin{\widehat{r}_{\text{max}}} \vee \hyunin{r_{\text{max}}} ) / \epsilon \right) \right)$, we choose $\delta,\tau,\eta$ to satisfy $\delta =\mathcal{O} \left( \epsilon \right),~ \tau =\Omega \left( \epsilon / \log |\mathcal{A}| \right)$ \javad{and} $\eta \leq \left(1-\gamma\right) / \tau$\hyunin{, where $\widehat{r}_{\text{max}}$ and $r_{\text{max}}$ are \jj{the} maximum reward of the forecasted model and \jj{the} maximum reward of the environment\jj{, respectively}}. \javad{Define} $\mathbb{N}_{II} \javad{:=} \{n~|~ n > \frac{1}{\eta \tau} \log \left( \frac{\hyunin{C_1}(\gamma+2)}{\epsilon} \right), n \in \mathbb{N}\}$\hyunin{, where $C_1$ is a constant}. Then $ \mathfrak{R}_{II} \leq 4 \epsilon (K-1)$ \javad{for all $\Delta_{\pi} \in \mathbb{N}_{II}$}.
    \label{proposition1}
\end{proposition}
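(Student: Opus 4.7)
The plan is to recognize that $\mathfrak{R}_{II}$ reflects the cumulative per-episode gap between the true future optimal policy $\pi^{*,(k+1)}$ on the forecasted model $\widehat{\mathcal{M}}_{(k+1)}$ and the entropy-regularized NPG iterate $\widehat{\pi}^{(k+1)}$ obtained after $\Delta_\pi$ updates, so that $C_{II}[\Delta_\pi]$ is itself a single-episode suboptimality bound. Since the bound $\mathfrak{R}_{II} \le C_{II}[\Delta_\pi](K-1)$ is linear in $K-1$, achieving $\mathfrak{R}_{II} \le 4\epsilon(K-1)$ reduces to proving $C_{II}[\Delta_\pi] \le 4\epsilon$ for every $\Delta_\pi \in \mathbb{N}_{II}$. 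This naturally motivates a four-way decomposition of the per-episode gap into (i) an NPG optimization error measuring how far $\widehat{\pi}^{(k+1)}$ is from the entropy-regularized optimum $\pi^{*,(k+1)}_\tau$, (ii) an entropy-regularization bias between $\pi^{*,(k+1)}_\tau$ and $\pi^{*,(k+1)}$, (iii) a sample-based evaluation error of magnitude $\delta$, and (iv) a finite-horizon truncation error from using $H$ steps instead of the infinite-horizon discounted return.

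First I would handle term (i) using the linear convergence guarantee for entropy-regularized NPG under $\eta \le (1-\gamma)/\tau$, which yields $\|\widehat{Q}^{(k+1)} - Q^{*,(k+1)}_\tau\|_\infty \le C_1 e^{-n \eta \tau}$ after $n$ iterations, where the constant $C_1$ absorbs the initialization gap and reward magnitude. Requiring this to fall below $\epsilon/(\gamma+2)$ (so that, after applying the performance difference lemma, the policy gap is at most $\epsilon$) gives exactly the threshold
\begin{equation*}
n > \frac{1}{\eta \tau}\log\!\left(\frac{C_1(\gamma+2)}{\epsilon}\right),
\end{equation*}
which defines $\mathbb{N}_{II}$. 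Next I would bound term (ii) by the standard entropy bias estimate $\tau \log|\mathcal{A}|/(1-\gamma)$; the condition $\tau = \Omega(\epsilon/\log|\mathcal{A}|)$ (paired with the implicit upper bound needed to make this useful, controlled by $1-\gamma$) certifies this is $\le \epsilon$. Term (iii) is $\le \epsilon$ directly from $\delta = \mathcal{O}(\epsilon)$, and term (iv) uses the geometric tail $(\widehat{r}_{\max}\vee r_{\max})\gamma^H/(1-\gamma) \le \epsilon$, which follows from $H = \Omega(\log((\widehat{r}_{\max}\vee r_{\max})/\epsilon))$.

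Summing the four contributions gives $C_{II}[\Delta_\pi] \le 4\epsilon$ for all $\Delta_\pi \in \mathbb{N}_{II}$, and multiplying by $(K-1)$ concludes the proposition. The main obstacle is carrying through the algebra linking the NPG iterate gap at the $Q$-function level to the induced value-function gap on the \emph{true} forecasted model, since the performance difference lemma introduces extra $(1-\gamma)^{-1}$ and $\gamma$ factors that determine the exact constant $C_1(\gamma+2)$ appearing in the threshold; one must track these carefully so that the threshold in $\mathbb{N}_{II}$ genuinely suffices. A secondary subtlety is ensuring the four $\epsilon$-sized terms are compatible, i.e., that the prescribed scalings of $\tau$, $\eta$, $\delta$, and $H$ can be realized simultaneously with the learning-rate constraint $\eta \le (1-\gamma)/\tau$; this requires verifying that the lower bound on $n$ remains finite and that $\Delta_\pi \in \mathbb{N}$ exists in the prescribed set, which is immediate once $\epsilon$ is sufficiently small.
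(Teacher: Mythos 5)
Your proposal is correct and follows essentially the same route as the paper: the paper's Lemma \ref{lemma:optHyp_with_entropy} decomposes the per-episode term of $\mathfrak{R}_{II}$ into exactly your four pieces — the linear-convergence error $(\gamma+2)(1-\eta\tau)^{\Delta_\pi-1}C_1$ of entropy-regularized NPG (yielding the $\mathbb{N}_{II}$ threshold, with your $e^{-n\eta\tau}$ form recovered via the first-order Taylor expansion of $\log(1-\eta\tau)$ that the paper itself invokes), the evaluation-gap term $(\gamma+2)C_2 \propto \delta$, the horizon-truncation term $\gamma^H\widehat{r}_{\text{max}}/(1-\gamma)$, and the entropy bias $2\tau\log|\mathcal{A}|/(1-\gamma)$ — each bounded by $\epsilon$ and summed over $K-1$ episodes. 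Your parenthetical observation that the $\tau$ condition must really act as an upper bound ($\tau \lesssim (1-\gamma)\epsilon/\log|\mathcal{A}|$) to control the entropy bias is also consistent with what the paper's proof actually uses.
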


\javad{As a by-product of Proposition \ref{proposition1}, the sublinearity of} $\mathfrak{R}_{II}$ can be realized \javad{by choosing} $\epsilon = \mathcal{O}((K-1)^{\alpha-1})$ for any $\alpha \in [0,1)$, which suggests that a tighter upper bound \javad{on} $\mathfrak{R}_{II}$ requires a smaller $\epsilon$ \javad{and subsequently} a larger $\Delta_\pi \in \mathbb{N}_{II}$. The hyperparameter conditions in Proposition \ref{proposition1} can be found in Lemma \ref{lemma:optHyp_without_entropy} and \ref{lemma:optHyp_with_entropy} in Appendix \ref{appendix:proof}. 

\subsubsection[RI analysis]{Analysis of $\mathfrak{R}_{I}$}

We now \javad{relate $\mathfrak{R}_{I}$ to} the environment tempo $B(\Delta_\pi)$ using the well-established non-stationary adaptation technique \javad{of} Sliding Window regularized Least\javad{-}Squares Estimator (\texttt{SW-LSE}) as \javad{the} MDP forecaster \cite{cheung2019learning,cheung2022hedging,cheung2020reinforcement}. The tractability of \javad{the} \texttt{SW-LSE} algorithm allows to upper\javad{-}bound the model predictions errors $\iota^K_H$ \javad{and} $\bar{\iota}^K_\infty$ by \javad{the} environment tempo extracted from the past $w$ observed trajectories, leading to \javad{a} sublinear $\mathfrak{R}_{I}$ as demonstrated in the following theorem.

\begin{theorem}[Dynamic regret $\mathfrak{R}_{I}$ \javad{with} $f=$ \texttt{SW-LSE}]
\label{theorem2}
For \javad{a} given $p\in (0,1)$, if the exploration bonus constant $\beta$ and regularization parameter $\lambda$ satisfy $
        \beta = \Omega(|\mathcal{S}| H \sqrt{\log \left(  H / p \right) } )$ and $\lambda \geq 1$, then $\mathfrak{R}_I$ is bounded with probability \javad{at least} $1-p$ \javad{as follows:}
\begin{align*}
    \mathfrak{R}_{I} \leq  C_{I}[B(\Delta_\pi)]\cdot w + C_k \cdot \sqrt{\frac{1}{w}\log{\left( 1+ \frac{H}{\lambda}w \right)}} + \hyunin{C_p} \cdot \sqrt{K-1}\label{eqn:R_I_upper}
\end{align*}
where $C_{I}[B(\Delta_\pi)] = \left(1/(1-\gamma) + H \right) \cdot B_r(\Delta_\pi) + (1+H\hat{r}_{\text{max}})\gamma / (1-\gamma) \cdot B_p(\Delta_\pi)$, and $C_k$ is a constant on the order of $\mathcal{O}(K)$.
\end{theorem}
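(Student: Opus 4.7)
The plan is to expose the additive structure of the model prediction error and then apply a bias--variance analysis tailored to the SW-LSE forecaster. By the Bellman equations of $\mathcal{M}_{(k+1)}$ and $\widehat{\mathcal{M}}_{(k+1)}$, and noting that the exploration bonus $\Gamma_w^{(k)}$ is folded into $\widehat{R}_{(k+1)}$, one obtains
\begin{equation*}
\iota_h^{(k+1)}(s,a) = \bigl(R_{(k+1)}-\widehat R_{(k+1)}\bigr)(s,a) - \Gamma_w^{(k)}(s,a) + \gamma \bigl(P_{(k+1)}-\widehat P_{(k+1)}\bigr)\widehat V_{h+1}^{(k+1)}(s,a).
\end{equation*}
I would bound each piece separately and then reassemble the two quantities $\|\bar\iota_\infty^{k+1}\|_\infty$ and $\iota_H^K$ that enter $\mathfrak{R}_I$.

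Next, for both the reward and transition errors I would perform the standard bias--variance split $\widehat X-X_{(k+1)} = (\widehat X-\bar X^w)+(\bar X^w-X_{(k+1)})$, where $\bar X^w$ denotes the sliding-window empirical mean over the last $w$ trajectories. The bias term $\bar X^w-X_{(k+1)}$ is a drift contribution bounded by the time-elapsing variation budget: telescoping over a window of width $w$, each one-step change is dominated by a summand of $B_p(\Delta_\pi)$ or $B_r(\Delta_\pi)$, and because each such summand is charged to at most $w$ windows when aggregating over $k=1,\dots,K-1$, the cumulative contribution is proportional to $w\cdot B_r(\Delta_\pi)$ for the reward part and, after multiplying by $\gamma\|\widehat V\|_\infty \leq \gamma H \hat r_\text{max}$, to $w\cdot B_p(\Delta_\pi)$ for the transition part. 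Propagated through the $1/(1-\gamma)$ prefactor in $\bar\iota_\infty^K/(1-\gamma)$ and the per-step summation up to $h=H-1$ in $\iota_H^K$, these assemble precisely into $C_I[B(\Delta_\pi)]\cdot w$.

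Third, the variance term $\widehat X-\bar X^w$ is pure SW-LSE noise. A self-normalized concentration bound over the cumulative feature matrix, together with the choices $\beta=\Omega(|\mathcal S| H\sqrt{\log(H/p)})$ and $\lambda\geq 1$, yields with probability at least $1-p/2$ that this noise is pointwise dominated by $\Gamma_w^{(k)}(s,a)$ for the reward and by $(1+H\hat r_\text{max})\Gamma_w^{(k)}(s,a)$ for the transition term contracted against $\widehat V$. Under this event, the negative $-\Gamma_w^{(k)}(s_h,a_h)$ that already appears in the trajectory sum $\iota_H^K$ offsets the positive statistical contribution (the optimism-under-uncertainty step), while in the worst-case quantity $\|\bar\iota_\infty^{k+1}\|_\infty$ the bonus uniformly upper-bounds the noise. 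Summing $\Gamma_w^{(k)}(s_h,a_h)\propto\sum_{\tau=k-w+1}^k n_{(\tau)}(s,a)^{-1/2}$ over $k\in[K-1]$ and $h\in[H-1]$ via an elliptical-potential/pigeonhole argument produces the $C_k\sqrt{(1/w)\log(1+Hw/\lambda)}$ factor. Combining this with the $C_p\sqrt{K-1}$ term inherited directly from the definition of $\mathfrak{R}_I$ in Theorem~\ref{theorem1}, and taking a union bound over the two high-probability events, yields the stated bound with probability at least $1-p$.

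The main obstacle I anticipate lies in step three: ensuring uniform-in-$(s,a,h,k)$ domination of the SW-LSE noise by the exploration bonus, because $\widehat V_{h+1}^{(k+1)}$ itself depends on the estimated model and thus on the very random trajectories that define the estimator. Decoupling this circularity typically requires either a covering-number argument over the induced value function class or the inductive optimism framework used in LSVI-style analyses, and carrying the log-covering cost cleanly is what drives the exact form of the second term. A secondary subtlety is propagating the factor $1+H\hat r_\text{max}$ through the H{\"o}lder-type bound on $(P-\widehat P)\widehat V$ without inflating the pigeonhole summation, which is what ultimately pins down the exact coefficient appearing in $C_I[B(\Delta_\pi)]$.
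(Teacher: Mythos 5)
Your overall architecture matches the paper's proof: Bellman-equation decomposition of the prediction errors into a reward gap and a transition gap, a drift term controlled by the local variation budgets and charged to at most $w$ windows (giving $C_I[B(\Delta_\pi)]\cdot w$), a noise term controlled by self-normalized concentration together with the choice of $\beta$, an elliptical-potential summation producing $\sqrt{(H/w)\log(1+wH/\lambda)}$, and the martingale term $C_p\sqrt{K-1}$ carried over from Theorem \ref{theorem1}. However, you have assigned the optimism step to the wrong error term, and as written this breaks the argument. Since $\widehat R_{(k+1)}=\widetilde R_{(k+1)}+2\Gamma^{(k)}_w$ and $\mathfrak{R}_I$ contains $-\iota^K_H$ (not $+\iota^K_H$), the bonus enters the trajectory sum with a \emph{positive} sign: $-\iota^{(k+1)}_h \le |R-\widetilde R| + 2\Gamma^{(k)}_w + \gamma\|P-\widehat P\|_1\|\widehat V\|_\infty$, so nothing cancels there; instead both the bonus and the concentration noise are summed along the visited trajectory via the elliptical-potential lemma, and that is the sole source of the $C_k\sqrt{(1/w)\log(1+Hw/\lambda)}$ term. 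The cancellation must happen in $\bar\iota^{(k+1)}_\infty = (R-\widetilde R) - 2\Gamma^{(k)}_w + \gamma(P-\widehat P)\widehat V^{\bigast}$, where the bonus appears negatively and, under the stated condition on $\beta$, absorbs the concentration noise pointwise, leaving only $B_r^{(k-w+1:k)}+\gamma B_p^{(k-w+1:k)}$. Your reversed assignment is not a cosmetic issue: $\|\bar\iota^{(k+1)}_\infty\|_\infty$ is a supremum over $(s,a)$, including pairs that are never visited, for which $\Gamma^{(k)}_w(s,a)=\beta/\sqrt{\lambda}$ for every $k$; bounding the noise there by the bonus and summing yields a term of order $(K-1)\beta/\sqrt{\lambda}$ with no $1/\sqrt{w}$ decay, which cannot be folded into $C_k\sqrt{(1/w)\log(\cdot)}$ and would destroy the sublinearity obtained later in Proposition \ref{proposition2}. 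The elliptical-potential argument is simply unavailable for a sup-norm quantity, which is precisely why the paper forces the bonus to kill the noise in $\bar\iota_\infty$ rather than sum it.

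Two smaller points. First, your anticipated ``main obstacle'' (the dependence of $\widehat V^{(k+1)}_{h+1}$ on the data, requiring a covering argument) does not arise here: the paper applies the H{\"o}lder split $(P-\widehat P)\widehat V \le \|P-\widehat P\|_1\,\|\widehat V\|_\infty$ that you yourself mention, and then concentrates $\|P_{(k+1)}-\widehat P_{(k+1)}\|_1$ entrywise over $s'$ --- a value-function-free quantity --- at the cost of the $|\mathcal{S}|$ factor that appears in the condition on $\beta$. Second, your displayed decomposition of $\iota^{(k+1)}_h$ with a single $-\Gamma^{(k)}_w$ double-counts (or mis-scales) the bonus once $\widehat R$ is taken to be bonus-inclusive; the correct bookkeeping is $(R-\widetilde R)-2\Gamma^{(k)}_w+\gamma(P-\widehat P)\widehat V$.
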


For a brief sketch of how \texttt{SW-LSE} \javad{makes the} environment tempo \javad{appear in the} upper bound, we outline that \javad{the} model prediction errors are \javad{upper-bounded} by two forecaster errors\javad{, namely} $P_{(k+1)}$ - $\widehat{P}_{(k+1)}$ and $R_{(k+1)} - \widehat{R}_{(k+1)}$, along with the visitation count $n_{(k)}(s,a)$. Then\javad{, the} \texttt{SW-LSE} algorithm provides a solution  $\left(\widehat{P}_{(k+1)},\widehat{R}_{(k+1)} \right)$ as a closed form of linear combinations of \javad{past $w$} estimated values $\{\widetilde{P},\widetilde{R}\}_{(k-w+1:w)}$. Finally, employing the Cauchy inequality and triangle inequality, we derive two forecasting errors \javad{that are upper-bounded} by the environment tempo. For the final step before obtaining \javad{a sub}optimal $\Delta^*_\pi$, we compute $\mathbb{N}_{I}$ that guarantees the \javad{sublinearity of}  $\mathfrak{R}_I$.

\begin{proposition}[$\Delta_\pi$ bounds for sublinear $\mathfrak{R}_{I}$] Denote $B(1)$ as \javad{the} environment tempo when $\Delta_\pi=1$, which is a summation over \javad{all} time steps. \javad{Assume that the} environment satisfies $B_r(1) +  B_p(1) \hat{r}_{\text{max}}/(1-\gamma) = o(K)$ and we choose  $w = \mathcal{O}((K-1)^{2/3}/( C_{I}[B(\Delta_\pi)])^{2/3})$. \javad{Define the set} $\mathbb{N}_{I}$ to be $\{ n~|~n < K ,~n \in \mathbb{N} \}$. \javad{Then} $\mathfrak{R}_{I}$ is \javad{upper-bounded} as  $\mathfrak{R}_I = \mathcal{O} \left(  C_{I}[B(\Delta_\pi)]^{1/3} \left(K-1\right)^{2/3} \sqrt{\log{\left((K-1)/ C_{I}[B(\Delta_\pi)] \right)}}\right) \nonumber$ and also satisfies \javad{a} sublinear upper bound\javad{, provided that $\Delta_\pi \in \mathbb{N}_I$}\jj{.}
\label{proposition2}
\end{proposition}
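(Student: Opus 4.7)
The plan is to start directly from the upper bound in Theorem~\ref{theorem2} and optimize the sliding-window length $w$, treating the residual term $\hyunin{C_p}\sqrt{K-1}=\mathcal{O}(\sqrt{K})$ as a lower-order correction. Writing the first two terms as $\varphi(w):=C_{I}[B(\Delta_\pi)]\,w + C_k\sqrt{w^{-1}\log(1+(H/\lambda)w)}$, I would first ignore the slowly-varying logarithm and balance the leading-order terms $C_I\cdot w \asymp C_k/\sqrt{w}$. Since $C_k=\mathcal{O}(K)$, this balance yields $w^{\star}\asymp (K/C_{I}[B(\Delta_\pi)])^{2/3}$, which coincides with the choice stated in the proposition.

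Substituting $w^\star$ back into $\varphi$, the first term becomes exactly of order $C_{I}[B(\Delta_\pi)]^{1/3}K^{2/3}$, and after simplifying the logarithm's argument the second term becomes of order $C_{I}[B(\Delta_\pi)]^{1/3}K^{2/3}\sqrt{\log((K-1)/C_{I}[B(\Delta_\pi)])}$. Adding the $\mathcal{O}(\sqrt{K})$ residual, which is dominated by this rate for large $K$, gives the stated big-$\mathcal{O}$ bound. The care needed here is mostly bookkeeping: verifying that plugging $w^\star$ into the log argument produces exactly the stated form, and that the hypothesis $\Delta_\pi<K$ combined with Assumption~\ref{assum:tevb} keeps $C_{I}[B(\Delta_\pi)]$ below $K$ so that the logarithm remains positive.

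For the sublinearity claim, I would reduce it to showing $C_{I}[B(\Delta_\pi)]^{1/3}(K-1)^{2/3}\sqrt{\log K}=o(K)$, equivalently $C_{I}[B(\Delta_\pi)]=o(K/\log^{3/2}K)$. Applying Assumption~\ref{assum:tevb} with $c=\Delta_\pi$ (iterating the inequality if needed to reach arbitrary integer scalings) yields $B_p(\Delta_\pi)\leq \Delta_\pi^{\alpha_p}B_p(1)$ and $B_r(\Delta_\pi)\leq \Delta_\pi^{\alpha_r}B_r(1)$, so
\begin{equation*}
C_{I}[B(\Delta_\pi)] \leq \Delta_\pi^{\max(\alpha_p,\alpha_r)}\Bigl[\bigl(1/(1-\gamma)+H\bigr)B_r(1) + (1+H\hat{r}_{\max})\gamma/(1-\gamma)\cdot B_p(1)\Bigr].
\end{equation*}
Since $\Delta_\pi\in\mathbb{N}_{I}$ forces $\Delta_\pi<K$ and the hypothesis $B_r(1)+B_p(1)\hat{r}_{\max}/(1-\gamma)=o(K)$ controls the bracketed factor, the product is $o(K)$ under the implicit mild requirement that the drifting exponents $\alpha_p,\alpha_r$ are small enough relative to the sublinearity exponent of $B(1)$.

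The main obstacle I anticipate is the interaction between the logarithm and the minimization of $\varphi$: a clean closed-form first-order optimality condition is not available, so the cleanest path is a plug-in verification that the stated $w^\star$ achieves the claimed rate, treating the log as slowly varying. A secondary subtlety is that the sublinearity bound must hold \emph{uniformly} over $\Delta_\pi\in\mathbb{N}_{I}$; this implicitly relies on the drifting exponents not being too large, a point the proof should make explicit even though the statement of the proposition leaves it tacit.
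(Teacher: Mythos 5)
Your proposal is correct and follows essentially the same route as the paper: the paper likewise sets $w=(C_k/C_I[B(\Delta_\pi)])^{2/3}$ (with $C_k=\mathcal{O}(K)$), plugs it into the Theorem~\ref{theorem2} bound to obtain the stated $C_I^{1/3}(K-1)^{2/3}\sqrt{\log(\cdot)}$ rate, and then invokes Assumption~\ref{assum:tevb} to bound $C_I[B(\Delta_\pi)]\leq(C_{B_r}+C_{B_p})\Delta_\pi^{\max(\alpha_r,\alpha_p)}$ and conclude sublinearity from $B_r(1)+B_p(1)\hat{r}_{\max}/(1-\gamma)=o(K)$ with $\Delta_\pi<K$. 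The caveat you flag about the drifting exponents needing to be small enough for the bound to hold uniformly over $\Delta_\pi\in\mathbb{N}_I$ is a real looseness that the paper's own proof shares rather than resolves.
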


The upper bound on \javad{the} environment tempo $B(1)$ in proposition \ref{proposition2} \javad{is aligned} with our expectation that dedicating an excessively long time to a single iteration \javad{may} not allow for \javad{an} effective policy approximation, thereby hindering the achievement of a sublinear dynamic regret. Furthermore, our insight that a larger environment tempo prompts the MDP forecaster to consider a shorter past reference length, aiming to mitigate forecasting uncertainty, is consistent with the condition involving $w$ stated in Proposition \ref{proposition2}.

\subsubsection[Optimal tempo Delta*pi]{\add{Sub}optimal tempo $\Delta^*_\pi$}

So far, we \javad{have shown} that \javad{an} upper bound \javad{on the} \texttt{ProST} dynamic regret is composed of two \javad{terms} $\mathfrak{R}_{I}$ and $\mathfrak{R}_{II}$\javad{, which} are characterized by \javad{the} environment tempo and the agent tempo, respectively. Now, we claim that \javad{a} suboptimal tempo that minimizes \texttt{ProST}'s dynamic regret could be obtained by the optimal solution $\Delta^*_\pi=\argmin_{\Delta_\pi \in \mathbb{N}_{I} \cap \mathbb{N}_{II}} \left(\mathfrak{R}^{\text{max}}_I+\mathfrak{R}^{\text{max}}_{II}\right)$\javad{,} where $\mathfrak{R}^{\text{max}}_I$ \javad{and} $\mathfrak{R}^{\text{max}}_{II}$ \jj{denote} the upper bound\jj{s on} $\mathfrak{R}_I$  \jj{and} $\mathfrak{R}_{II}$. We show \javad{that} $\Delta^*_\pi$ strikes a balance between the environment tempo and the agent tempo since $\mathfrak{R}^{\text{max}}_{I}$ is \javad{a} non-decreasing function of $\Delta_\pi$ and $\mathfrak{R}^{\text{max}}_{II}$ is \javad{a} non-increasing function of $\Delta_\pi$. Theorem \ref{theorem3} shows that the optimal tempo $\Delta^*_\pi$ depends on the environment's drifting constants \javad{introduced in} Assumption \ref{assum:tevb}. 

\begin{theorem}[Suboptimal tempo $\Delta^*_\pi$]
Let $k_{\texttt{Env}} =  \left( \alpha_r \vee \alpha_p \right)^2 C_I[B(1)]$, $k_{\texttt{Agent}} = \log{\left(1/(1-\eta \tau)\right)} C_1 (K-1)(\gamma+2)$. \jj{Consider three cases:} \textbf{case1}: $\alpha_r \vee \alpha_p=0$, \textbf{case2}: $\alpha_r \vee \alpha_p=1$, \textbf{case3}: $0<\alpha_r \vee \alpha_p<1$ \hyunin{or $\alpha_r \vee \alpha_p>1$}. Then $\Delta_\pi^*$ depends on the environment's drifting constants \jj{as follows:}
\begin{itemize}
    \item Case1: $\Delta^{*}_{\pi} =\add{T}$.
    \item Case2: $\Delta^{*}_{\pi} = \log_{1-\eta\gamma}{({k_{\texttt{Env}}/k_{\texttt{Agent}})}}+1$.
    \item Case3: $\Delta^{*}_{\pi}  = \exp{\left(-W \left[-  \frac{ \log{(1-\eta\tau)}}{\max{(\alpha_r,\alpha_p)}-1} \right]\right)}$, provided that the parameters are chosen so that $k_{\texttt{Agent}} = (1-\eta\tau)k_{\texttt{Env}}$.
\end{itemize}
    \label{theorem3}
\end{theorem}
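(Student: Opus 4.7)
The plan is to reduce the constrained optimization
$\min_{\Delta_\pi\in\mathbb{N}_I\cap\mathbb{N}_{II}}(\mathfrak{R}^{\text{max}}_I+\mathfrak{R}^{\text{max}}_{II})$
to a one-dimensional calculus problem by making the $\Delta_\pi$-dependence of both bounds explicit, then read off the three cases by specializing in $\alpha:=\alpha_r\vee\alpha_p$, and finally round the continuous minimizer back into the admissible integer set.

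First, I would extract the explicit $\Delta_\pi$-dependence of each summand. Iterating the drifting-constant Assumption \ref{assum:tevb} from $\Delta_\pi=1$ yields $B_p(\Delta_\pi)\leq \Delta_\pi^{\alpha_p}B_p(1)$ and $B_r(\Delta_\pi)\leq \Delta_\pi^{\alpha_r}B_r(1)$, so the coefficient in Theorem \ref{theorem2} satisfies $C_I[B(\Delta_\pi)]\leq \Delta_\pi^{\alpha}\,C_I[B(1)]$, and the leading part of $\mathfrak{R}^{\text{max}}_I$ becomes proportional to $C_I[B(1)]\,\Delta_\pi^{\alpha}$, consistent with $k_{\texttt{Env}}=\alpha^{2}C_I[B(1)]$. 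For $\mathfrak{R}^{\text{max}}_{II}$, I would saturate the feasibility constraint in Proposition \ref{proposition1} by taking the tightest admissible $\epsilon(\Delta_\pi)=C_1(\gamma+2)(1-\eta\tau)^{\Delta_\pi}$, using the per-iteration contraction factor $1-\eta\tau$ of entropy-regularized NPG. This gives $\mathfrak{R}^{\text{max}}_{II}\propto (K-1)(1-\eta\tau)^{\Delta_\pi}$, which matches the factor $\log(1/(1-\eta\tau))$ in $k_{\texttt{Agent}}$ after differentiation.

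With these reductions, the smooth relaxation is
$f(\Delta_\pi)=C_I[B(1)]\,\Delta_\pi^{\alpha} + C_1(\gamma+2)(K-1)(1-\eta\tau)^{\Delta_\pi}$,
and its first-order condition reads $\alpha\,C_I[B(1)]\,\Delta_\pi^{\alpha-1} = k_{\texttt{Agent}}(1-\eta\tau)^{\Delta_\pi}$. I would then dispatch the three cases. In Case 1 ($\alpha=0$), the polynomial term is constant and $f$ is strictly decreasing, so the constrained optimum saturates at the largest feasible value $\Delta_\pi^*=T$. In Case 2 ($\alpha=1$), the left side loses its $\Delta_\pi$-dependence and the FOC inverts directly to $\Delta_\pi^*=\log_{1-\eta\tau}(k_{\texttt{Env}}/k_{\texttt{Agent}})$, with an additive $+1$ rounding up into $\mathbb{N}_{II}$. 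In Case 3 ($\alpha\neq 0,1$), I would take logs of the FOC, apply the tuning relation $k_{\texttt{Agent}}=(1-\eta\tau)k_{\texttt{Env}}$ to cancel the residual multiplicative constants, substitute $v=-\log\Delta_\pi$, and obtain the canonical form $v\,e^{v}=-\log(1-\eta\tau)/(\alpha-1)$, whose Lambert-$W$ inverse gives $v=W(\cdot)$ and hence $\Delta_\pi^*=\exp(-W(\cdot))$.

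The hard part will be Case 3. Two things need careful handling: algebraically, one must verify that the prescribed tuning condition $k_{\texttt{Agent}}=(1-\eta\tau)k_{\texttt{Env}}$ really does collapse the transcendental FOC into the exact form $v e^v=x$ (absorbing the factor $\alpha$ and the $+1$ shift in the exponent), and analytically, one must check that the argument $-\log(1-\eta\tau)/(\alpha-1)$ lies in the principal branch's domain ($\geq -1/e$) so that $W$ returns a real positive value, together with the second-order check confirming that the critical point is a minimum rather than a maximum. A secondary technical point shared across all cases is the discreteness constraint $\Delta_\pi\in\mathbb{N}_I\cap\mathbb{N}_{II}$: the continuous minimizer must be rounded while respecting both the sublinearity thresholds from Propositions \ref{proposition1}--\ref{proposition2}, which is what produces the $+1$ in Case 2 and forces the saturation at $T$ in Case 1.
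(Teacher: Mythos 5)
Your proposal follows essentially the same route as the paper: iterate the drifting-constant assumption to get $\mathfrak{R}^{\text{max}}_I \propto C_I[B(1)]\,\Delta_\pi^{\alpha_r\vee\alpha_p}$, take the geometric decay $(1-\eta\tau)^{\Delta_\pi-1}$ of entropy-regularized NPG for $\mathfrak{R}^{\text{max}}_{II}$, set the derivative of the sum to zero, and dispatch the three cases (saturation at $T$ when the polynomial term is constant, direct inversion when $\alpha=1$, and Lambert-$W$ under the tuning condition $k_{\texttt{Agent}}=(1-\eta\tau)k_{\texttt{Env}}$ otherwise). The only cosmetic divergence is that the $+1$ in Case 2 arises in the paper from the exponent $\Delta_\pi-1$ in the decay term rather than from integer rounding, but the resulting formula is identical.
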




\subsection{\javad{Improving} MDP forecaster}
\label{imporve_mdp_forecaster}
Determining \javad{a} suboptimal tempo by \javad{minimizing an} upper bound \jj{on} $\mathfrak{R}_{I}+\mathfrak{R}_{II}$ \javad{can be improved by using a} tighter upper bound. \javad{In} Proposition \ref{proposition1}, we \javad{focused} on the $Q$ approximation gap $\delta$ to provide \javad{a justifiable upper bound on} $\mathfrak{R}_{I} + \mathfrak{R}_{II}$. It is important to note that the factor $\delta$ arises not only from the finite sample trajectories as discussed in \cite{cen2022fast}, but also \javad{from the} forecasting error between $\mathcal{M}_{(k+1)}$ and $\widehat{\mathcal{M}}_{(k+1)}$. It is clear that the MDP forecaster establishes a lower bound \javad{on} $\delta$ denoted as $\delta_{\text{min}}$, which in turn sets a lower bound on $\epsilon$ and consequently on $\mathfrak{R}_I$. This inspection highlights that the MDP forecaster serves as a common factor that controls both $\mathfrak{R}_I$ and $\mathfrak{R}_{II}$, and \javad{a} further investigation to improve the accuracy of the forecaster is necessary for \javad{a better bounding \jj{on}} $\mathfrak{R}_{I}+\mathfrak{R}_{II}$.

Our approach to devising a precise MDP forecaster is \javad{that}, instead of \textit{selecting} the past reference length $w$ as indicated in Proposition \ref{proposition2}, \javad{we} set $w=k$, implying the utilization of all past observations. However, we address this by solving an additional optimization problem, resulting in a tighter \javad{bound on} $\mathfrak{R}_{I}$. We propose a method that adaptively assigns different weights $q \in \mathbb{R}_+^{k}$ to \javad{the} previously observed non-stationary \javad{parameters} up to time $t_k$, which reduces the burden of choosing $w$. Hence, we \javad{further analyze} $\mathfrak{R}_I$ through the utilization of the Weighted regularized Least-Squares Estimator (\texttt{W-LSE}) \cite{kuznetsov2018theory}. Unlike \texttt{SW-LSE}, \texttt{W-LSE} does not necessitate a predefined selection of $w$, \javad{but it instead} engages in a joint optimization procedure involving \javad{the} data weights $q$ and \javad{the} future model $\left(\widehat{P}_{(k+1)},\widehat{R}_{(k+1)}\right)$. \javad{To this end}, we define the forecasting reward model error as $\Delta^{r}_{k}(s,a)=\left\vert \left(R_{(k+1)} - \widehat{R}_{(k+1)} \right) (s,a) \right\vert $ \javad{and the} forecasting transition probability model error as $\Delta^{p}_{k}(s,a)=\left\vert \left\vert \left( P_{(k+1)} - \widehat{P}_{(k+1)} \right)(\cdot~|~s,a) \right\vert \right\vert_1$.
\begin{theorem}[$\mathfrak{R}_{I}$ upper bound \javad{with} $f$=\texttt{W-LSE}]
    \javad{By setting the} exploration bonus $\Gamma_{(k)}(s,a) = \frac{1}{2} \Delta^{r}_{k}(s,a) + \frac{\gamma \tilde{r}_{\text{max}}}{2(1-\gamma)}  \Delta^{p}_{k}(s,a)$, \javad{it holds that}
    \begin{align*}
         \mathfrak{R}_{I} \leq \left( 4H + \frac{2\gamma \left| \mathcal{S} \right| }{1-\gamma}  \left( \frac{1}{1-\gamma} + H\right)  \right) \left(  \frac{1}{2} \sum_{k=1}^{K-1} \Delta^{r}_{k}(s,a)  + \frac{\gamma \tilde{r}_{\text{max}}}{2(1-\gamma)}  \sum_{k=1}^{K-1} \Delta^{p}_{k}(s,a) \right).
    \end{align*}
    \label{thm:3}
\end{theorem}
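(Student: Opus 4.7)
The plan is to start from the regret decomposition in Theorem \ref{theorem1}, namely $\mathfrak{R}_I \leq \bar{\iota}^{K}_\infty/(1-\gamma) - \iota^{K}_H + C_p\sqrt{K-1}$, and then control each model prediction error $\iota^{(k+1)}_h(s,a)$ by exploiting the optimism built into the exploration bonus. The first step is to unfold $\widehat{Q}^{(k+1)}_h$ via the Bellman equation in the forecasted MDP with the bonus folded into the reward, $\widehat{Q}^{(k+1)}_h(s,a) = \widehat{R}_{(k+1)}(s,a) + \Gamma_{(k)}(s,a) + \gamma\widehat{P}_{(k+1)}\widehat{V}^{(k+1)}_{h+1}(s,a)$, and to substitute this into the definition of $\iota^{(k+1)}_h$ to obtain
\begin{equation*}
\iota^{(k+1)}_h(s,a) \;=\; \bigl(R_{(k+1)}-\widehat{R}_{(k+1)}\bigr)(s,a) \;+\; \gamma\bigl(P_{(k+1)}-\widehat{P}_{(k+1)}\bigr)\widehat{V}^{(k+1)}_{h+1}(s,a) \;-\; \Gamma_{(k)}(s,a).
\end{equation*}

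Next I would bound each of the first two terms by the forecasting errors introduced just before the theorem: the reward term is at most $\Delta^{r}_{k}(s,a)$ by definition, and H\"older's inequality plus the standard value bound $\|\widehat{V}^{(k+1)}_{h+1}\|_\infty \le \tilde{r}_{\max}/(1-\gamma)$ yields $\bigl|\gamma(P_{(k+1)}-\widehat{P}_{(k+1)})\widehat{V}^{(k+1)}_{h+1}(s,a)\bigr|\le \gamma\tilde{r}_{\max}/(1-\gamma)\cdot\Delta^{p}_{k}(s,a)$. Substituting the prescribed bonus $\Gamma_{(k)}=\tfrac{1}{2}\Delta^{r}_{k}+\tfrac{\gamma\tilde{r}_{\max}}{2(1-\gamma)}\Delta^{p}_{k}$ then gives a two-sided pointwise control of the form $|\iota^{(k+1)}_h(s,a)|\le c\,\Gamma_{(k)}(s,a)$ with a small absolute constant $c$: the upper direction follows directly from subtracting $\Gamma_{(k)}$, and the lower direction follows from a reverse triangle inequality together with the choice of $\Gamma_{(k)}$ matching exactly half of the combined error budget.

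The third step is to feed this pointwise control back into the two pieces of $\mathfrak{R}_I$. For $-\iota^{K}_H = -\sum_{k}\sum_{h}\iota^{(k+1)}_h(s^{(k+1)}_h,a^{(k+1)}_h)$, summing across the $H$ steps per episode and aggregating over $k$ produces the $4H$-type factor multiplying $\sum_{k}\Gamma_{(k)}$ evaluated along the trajectory. For $\bar{\iota}^{K}_\infty/(1-\gamma)$, I would propagate the pointwise bound through the Bellman operator: the $\ell_\infty$ aggregation forces $\|(P_{(k+1)}-\widehat{P}_{(k+1)})\widehat{V}\|_\infty \le |\mathcal{S}|\cdot\|\cdots\|_{\infty}$ pickup, which together with a geometric sum in $\gamma$ and a residual $H$ from the finite-horizon unrolling produces exactly the $\tfrac{2\gamma|\mathcal{S}|}{1-\gamma}\bigl(\tfrac{1}{1-\gamma}+H\bigr)$ coefficient. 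Because \texttt{W-LSE} delivers deterministic forecasting errors $\Delta^{r}_{k},\Delta^{p}_{k}$ rather than high-probability estimates, the stochastic concentration term $C_p\sqrt{K-1}$ from Theorem \ref{theorem1} disappears in this deterministic analysis, and the two contributions combine into the announced product.

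The main obstacle, I expect, is the handling of $\bar{\iota}^{K}_\infty/(1-\gamma)$. Whereas $-\iota^{K}_H$ only sees errors along the actually-visited trajectory, the $\ell_\infty$ quantity requires propagating the one-step bound uniformly over $(s,a)$ through the Bellman recursion while keeping track of how the $|\mathcal{S}|$ and $H$ factors interact with the $1/(1-\gamma)$ geometric sum. This bookkeeping must reproduce exactly the split $\frac{1}{1-\gamma}+H$ inside the parenthesis, matching the structure already observed in $C_I[B(\Delta_\pi)]$ from Theorem \ref{theorem2}; doing so cleanly is where the work is. The remaining calculations — substituting $\Gamma_{(k)}$ back to recognize the factor $\tfrac{1}{2}\Delta^{r}_{k}+\tfrac{\gamma\tilde{r}_{\max}}{2(1-\gamma)}\Delta^{p}_{k}$ and collecting terms — are routine algebra.
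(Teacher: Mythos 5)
Your overall route is the same as the paper's: start from the decomposition $\mathfrak{R}_I \le \bar{\iota}^K_\infty/(1-\gamma) - \iota^K_H + C_p\sqrt{K-1}$, expand $\iota^{(k+1)}_h$ and $\bar{\iota}^{(k+1)}_\infty$ through the forecasted Bellman equation into $(R-\widehat{R}) + \gamma(P-\widehat{P})\widehat{V} \mp \Gamma$-type expressions (this is exactly what Lemmas \ref{corollary2} and \ref{corollary3} do), bound those by $\Delta^r_{k}$ and $\Delta^p_{k}$, substitute the prescribed bonus, and collect the $4H$ and $(\frac{1}{1-\gamma}+H)$ factors from the $H$-fold sum in $-\iota^K_H$ and the $\frac{1}{1-\gamma}$ prefactor of $\bar{\iota}^K_\infty$ respectively. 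That skeleton is right.

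The genuine gap is in your value-function bound $\|\widehat{V}^{(k+1)}_{h+1}\|_\infty \le \tilde{r}_{\max}/(1-\gamma)$. The forecasted MDP's reward is $\widehat{R}_{(k+1)} = \widetilde{R}_{(k+1)} + 2\Gamma_{(k)}$, so the correct bound is $\hat{r}_{\max}/(1-\gamma)$ with $\hat{r}_{\max} = \tilde{r}_{\max} + 2\sup_{s,a}\Gamma_{(k)}(s,a)$. The excess $2\sup\Gamma$ produces a cross term of the form $\frac{\gamma}{1-\gamma}\,\Gamma_{(k)}\cdot\Delta^p_{k}$ in both $-\iota^K_H$ and $\bar{\iota}^K_\infty$, and it is precisely this term — linearized via $\Delta^p_{k}(s,a) \le |\mathcal{S}|$ — that generates the $\frac{2\gamma|\mathcal{S}|}{1-\gamma}\bigl(\frac{1}{1-\gamma}+H\bigr)$ coefficient in the statement. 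Your alternative explanation for the $|\mathcal{S}|$ factor (an $\ell_\infty$-to-$\ell_1$ pickup when propagating the bound for $\bar{\iota}^K_\infty$ through the Bellman recursion) is not the actual mechanism; with your bound $\tilde{r}_{\max}/(1-\gamma)$ the bonus would cancel the error budget exactly and no $|\mathcal{S}|$-dependent term would ever appear, so the argument as written cannot reproduce the stated constant. A secondary issue: your reason for discarding $C_p\sqrt{K-1}$ (that \texttt{W-LSE} is ``deterministic'') is not right — that term is the Azuma--Hoeffding concentration of the trajectory-sampling martingale from term $\circled{3}$ in Theorem \ref{theorem1} and has nothing to do with which forecaster is used; the paper's own proof also lets it vanish without comment, so this is a shared blemish, but your justification would not survive scrutiny.
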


\begin{remark}[Tighter $\mathfrak{R}_{\mathcal{I}}$ upper bound \javad{with} $f=$ \texttt{W-LSE}]
\label{rmk:W-LSE_modelerror}
If the \hyunin{optimization problem} of \texttt{W-LSE} is feasible, then the optimal data weight $q^{*}$ provides \javad{tighter bounds} for $\Delta^{r}_{k}$ \javad{and} $\Delta^{p}_{k}$ in comparison to \texttt{SW-LSE}, consequently leading to a tighter upper bound \javad{for  $\mathfrak{R}_{\mathcal{I}}$}. We \javad{prove in Lemmas \ref{corollary1} and \ref{corollary3} in Appendix \ref{appendix:proof} that $\bar{\iota}^K_{\infty}$ and $-\iota^K_{H}$ are upper-bounded in terms of $\Delta^{r}_{k}$ and $\Delta^{p}_{k}$.}
\end{remark}

\subsection{ProST-G}

The theoretical analysis outlined above serves as \javad{a} motivation to empirically investigate two key points: firstly, the existence of an optimal training time; secondly, the role of the MDP forecaster's contribution to the \texttt{ProST} framework's overall performance. To address these questions, we propose a practical instance, \javad{named} \texttt{ProST-G}, which particularly extends the investigation in Section \ref{imporve_mdp_forecaster}. \texttt{ProST-G} optimizes a policy with \javad{the} soft actor-critic (SAC) algorithm \cite{haarnoja2018soft}, utilizes \javad{the} integrated autoregressive integrated moving average (ARIMA) model for the proactive forecaster $f$, and uses a bootstrap ensemble of dynamic models where each model is a probabilistic neural network for the model predictor $g$. We further discuss specific details of \texttt{ProST-G} in Appendix \ref{section:F-MBPO} and in Algorithm \ref{alg:F-MBPO}.
\section{Experiments}
\label{experiments}
We evaluate \texttt{ProST-G} with four baselines in three Mujoco environments each with five different non-stationary speeds and two non-stationary datasets.

\textbf{(1) Environments: Non-stationary desired posture. }
We \javad{make} the rewards in the three environments non-stationary by altering the agent's desired directions. \javad{The} forward reward $R^{f}_t$ changes as $R^{f}_{t} = o_{t} \cdot \widebar{R}^{f}_{t}$, where $\widebar{R}_f$ is the original reward from the Mujoco environment. \javad{The} non-stationary \javad{parameter} $o_k$ is generated from \javad{the sine} function with five different speeds and from the real data $A$ and $B$. We then measure the time-elapsing variation budget by $\sum_{k=1}^{K-1}|o_{k+1}-o_k|$. Further details of the environment settings can be found in Appendix \ref{appendix:Environment_setting}. 

\textbf{(2) Benchmark methods. }
Four baselines are chosen to empirically support our second question: the significance of the forecaster. \textbf{MBPO} is the state-of-the-art model-based policy optimization \cite{janner2019trust}. \textbf{Pro-OLS} is a policy gradient algorithm that predicts the future performance and optimizes the predicted performance of the future episode \cite{chandak2020optimizing}. \textbf{ONPG} is an adaptive algorithm that performs a purely online optimization by fine-tuning the existing policy using only the trajectory observed online \cite{al2017continuous}. \textbf{FTRL} is an adaptive algorithm that performs follow-the-regularized-leader optimization by maximizing the performance on all previous trajectories \cite{finn2019online}.


\section{Discussions}

\subsection{Performance compare}

The \javad{outcomes} of the experimental results \javad{are} presented in Table \ref{table1:rewards}. The table summarizes the average return over the last 10 episodes during the training procedure. We have illustrated the complete training results in Appendix \ref{subsection:Full results}. In most cases, \texttt{ProST-G} outperforms MBPO in terms of rewards, highlighting the adaptability of \javad{the} \texttt{ProST} framework to dynamic environments. Furthermore, except for data $A$ and $B$, \texttt{ProST-G} consistently outperforms the other three baselines. This supports our motivation of using the proactive model-based method for \javad{a} higher adaptability in non-stationary environments compared to state-of-the-art model-free algorithms (Pro-OLS, ONPG, FTRL). We elaborate on the training details in Appendix \ref{appendix:hyperparameters and implementation details}.

\begin{table}[ht]
  \centering
  \caption{Average reward returns}
  \resizebox{\columnwidth}{!}{%
  \begin{tabular}{ccccccccccccccccc}
    \toprule
    \textbf{Speed} & $B(G)$ & \multicolumn{5}{c}{\textbf{Swimmer-v2}} & \multicolumn{5}{c}{\textbf{Halfcheetah-v2}} & \multicolumn{5}{c}{\textbf{Hopper-v2}} \\
    \cmidrule(rl){3-7} \cmidrule(rl){8-12} \cmidrule(rl){13-17}
    & & \textbf{Pro-OLS} & \textbf{ONPG} & \textbf{FTML} & \textbf{MBPO} & \textbf{\texttt{ProST-G}} & \textbf{Pro-OLS} & \textbf{ONPG} & \textbf{FTML} & \textbf{MBPO} & \textbf{\texttt{ProST-G}} & \textbf{Pro-OLS} & \textbf{ONPG} & \textbf{FTML} & \textbf{MBPO} & \textbf{\texttt{ProST-G}} \\
    \midrule
    1 & 16.14 & -0.40 & -0.26 & -0.08 & -0.08 & \textbf{0.57} & -83.79 & -85.33 & -85.17 & -24.89 & \textbf{-19.69} & \textbf{98.38} & 95.39 & 97.18 & 92.88 & 92.77 \\
    2 & 32.15 & 0.20 & -0.12 & 0.14 & -0.01 & \textbf{1.04} & -83.79 & -85.63 & -86.46 & -22.19 & \textbf{-20.21} & 98.78 & 97.34 & \textbf{99.02} & 96.55 & 98.13 \\
    3 & 47.86 & -0.13 & 0.05 & -0.15 & -0.64 & \textbf{1.52} & -83.27 & -85.97 & -86.26 & -21.65 & \textbf{-21.04} & 97.70 & 98.18 & 98.60 & 95.08 & \textbf{100.42} \\
    4 & 63.14 & -0.22 & -0.09 & -0.11 & -0.04 & \textbf{2.01} & -82.92 & -84.37 & -85.11 & -21.40 & \textbf{-19.55} & 98.89 & 97.43 & 97.94 & 97.86 & \textbf{100.68} \\
    5 & 77.88 & -0.23 & -0.42 & -0.27 & 0.10 & \textbf{2.81} & -84.73 & -85.42 & -87.02 & \textbf{-20.50} & -20.52 & 97.63 & 99.64 & 99.40 & 96.86 & \textbf{102.48} \\
    A & 8.34 & 1.46 & 2.10 & \textbf{2.37} & -0.08 & 0.57 & -76.67 & -85.38 & -83.83 & -40.67 & \textbf{83.74} & 104.72 & \textbf{118.97} & 115.21 & 100.29 & 111.36 \\
    B & 4.68 & \textbf{1.79} & -0.72 & -1.20 & 0.19 & 0.20 & -80.46 & -86.96 & -85.59 & -29.28 & \textbf{76.56} & 80.83 & \textbf{131.23} & 110.09 & 100.29 & 127.74 \\
    \bottomrule
  \end{tabular}%
  }
  \label{table1:rewards}
\end{table}

\subsection{Ablation study}
An ablation study was conducted on the two aforementioned questions. The following results support our inspection of \javad{Section} \ref{imporve_mdp_forecaster} and provide strong grounds for Theorem \ref{theorem3}.

\begin{figure}[h]
    \centering
    \includegraphics[width=\textwidth]{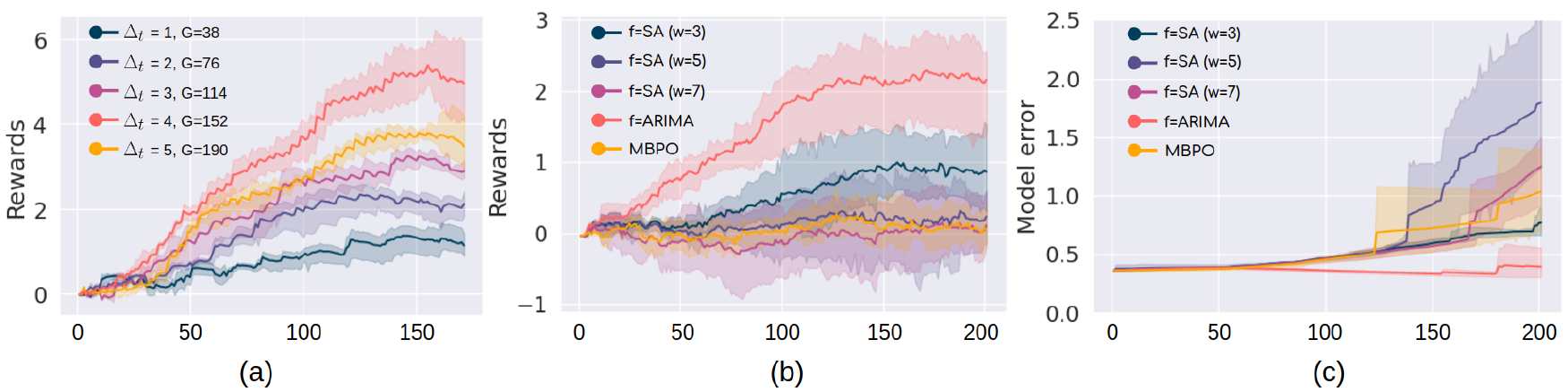}
    \caption{(a) Optimal $\Delta_\pi^*$\javad{;} (b) Different forecaster $f$ (ARIMA, SA)\javad{;} (c) The Mean squared Error (MSE) model loss of \texttt{ProST-G} with \javad{four} different forecasters (ARIMA and three SA) and the MBPO. \javad{The $x$-axis in each figure shows the episodes.}}
    \label{fig:ablation}
\end{figure}

\textbf{\add{Sub}optimal $\Delta^*_\pi$. }  The experiments are performed over five different policy training times \javad{$\Delta_\pi \in \{1,2,3,4,5\}$}, aligned with SAC's \hyunin{number of gradient steps $ G \in \{38, 76, 114, 152, 190\}$}, under a fixed environment speed. Different from \javad{our} theoretical analysis, we \javad{set} $\Delta_t=1$ with \hyunin{$G=38$}. We generate $o_k = sin(2 \pi \Delta_\pi k/ 37 )$\javad{,} which satisfies Assumption \ref{assum:tevb} (see Appendix \ref{appendix:environment setting details}). The shaded area\javad{s} of Figure\javad{s} \ref{fig:ablation} \javad{(a), (b) and (c)} are 95 \% confidence area among three different noise bounds \javad{of 0.01,0.02 and 0.03} in $o_k$. Figure \ref{fig:ablation}\javad{(a)} shows $\Delta_t=4$ is close to the optimal $G^*$ among five different choices.

\textbf{Function\javad{s} $f,g$. } We investigate the effect of \javad{the} forecaster $f$'s accuracy on the framework using two distinct functions: ARIMA and a simple average (SA) model, each tested with three different \javad{the values of $w$}. Figure \ref{fig:ablation}\javad{(b)} shows the average rewards of the SA model with \javad{$w \in \{3,5,7\}$} and ARIMA model (four solid lines). The shaded area is 95 \% the confidence area among 4 different speeds \javad{$\{1,2,3,4\}$}. Figure \ref{fig:ablation}\javad{(c)} shows the corresponding model error. Also, we investigate the effect of \javad{the} different model \javad{predictor} $g$ by comparing MBPO (reactive-model) and \texttt{ProST-G} with $f=$ARIMA (proactive-model) in Figure \ref{fig:ablation}\javad{(c)}. The high returns from \texttt{ProST-G} with $f=$ ARIMA, compared to those from MBPO\javad{, empirically} support that the forecasting component of \javad{the} $\textbf{ProST}$ framework can provide \javad{a satisfactory} adaptability to the baseline algorithm that is equipped with. Also, Figure\javad{s} \ref{fig:ablation}\javad{(b)} and \ref{fig:ablation}\javad{(c)} provide empirical evidence that \javad{the accuracy of $f$} is contingent on the sliding window size, thereby impacting \javad{the} model accuracy and subsequently influencing the agent's performance.

\section{Conclusion}

\javad{This work offers the first study on the important issue of} time synchronization \javad{for} non-stationary RL\javad{. To this end, we introduce the concept of} the tempo of adaptation in a non-stationary RL, \javad{and obtain a } suboptimal training time. \javad{We} propose a Proactively Synchronizing Tempo (ProST) framework, \javad{together with} two specific instances  \texttt{ProST-T}  and \texttt{ProST-G}. \javad{The} proposed method adjusts an agent's tempo to match the tempo of the environment to \javad{handle} non-stationarity through \javad{both} theoretical analysis and empirical evidence. \javad{The} ProST framework provides a new avenue to implement reinforcement learning in the real world by incorporating the concept of adaptation tempo\javad{.}

\javad{As a future work, it is important to generalize the proposed framework} to learn a safe guarantee policy in a non-stationary RL by considering the adaptation tempo of constraint violations \cite{jin2020stability,pfrommer2022safe}. Another \javad{generalization} is to introduce \javad{an alternative} dynamic regret metric, \hyunin{enabling a fair performance comparison among agents, even when they have varying numbers of total episodes.} Another \javad{future work} is to find an optimal tempo of the distribution correction in offline non-stationary RL, specifically how to adjust the relabeling function to offline data in a time-varying environment that is dependent on the tempo of the environment \cite{lee2022coptidice,lee2021optidice}. 

\bibliography{nips} 

\appendix

\newpage

\section{Details on Introduction}
\subsection{Experimental motivation}
\label{appendix: 2D reacher environment setting}
\textbf{1. Environment details of 2D goal reacher}
\begin{itemize}
    \item State space: $\mathcal{S} = \mathbb{R}^2$. For $(x,y) \in \mathcal{S}, |x|\leq 1, |y| \leq 1.$
    \item Action space: $\mathcal{A} = \{(\cos{(\pi/4 \jj{\times} k)},\sin{(\pi/4 \jj{\times} k)}) ~|~k = 0,1\jj{,...,}7 \}$ ($|\mathcal{A}|=8$)
    \item Reward function: if \jj{the} agent's state is in the Goal box, then \jj{it} receives +6. Otherwise, \jj{it} receives -0.5 rewards for every step.
    \item Transition probability: $s_{h+1} = s_h + a_h \cdot \epsilon$\jj{, where} $s_{h+1}$ is \jj{the} next state, $s_h$ is \jj{the} current state, $a_h$ is the current action, \jj{and} $\epsilon \in \mathbb{R}^2$ \jj{with} $ ||\epsilon||_2=1$ provides a stocasticity to the environment. 
    \item Horizon length: $H=13$
    \item Discounting factor: $\gamma=0.99$
    \item Grid size: 10
    \item Goal \jj{box:} The \jj{coordinates of the center} of the time-varying goal box \jj{are} $(x_g,y_g) $$=  (0.9\cos{(2\pi \jj{\times} k/2500)}, 0.9\sin{(2\pi \jj{\times} k/2500)})$\jj{, which} changes for episode $k \in [5000]$. The width and height \jj{of the box are equal to} 0.05\jj{.}
\end{itemize}

\textbf{2. Experiment details}

To motivate our proposed meta-framework \jj{via} a simple experiment, we used Q-learning as a component $A$ of our meta-algorithm to update the policy. The three baselines (ProOLS, ONPG, FTML) of Figure \ref{fig:fig1}\jj{(c)} were trained with four learning rates $\eta \in \{0.001, 0.003, 0.005, 0.007\}$ \jj{and the} entropy regularized parameter $ \tau = 0.1$, \jj{where} the shaded area of the three baselines is 95 \% confidence area among 4 different learning rates.  The \texttt{PTM-T} was trained with \jj{the} model rollout length $\widehat{H} \in \{ 50,60\}$, policy update iteration number $G \in \{10,50\}$, entropy regularized parameter $ \tau = 0.1$, Q\jj{-}learning update parameter $\alpha_Q \in \{0.7,0.9,0.99\}$\jj{, and} the learning rate $\eta=0.001$. The shaded area of \texttt{PTM-T} is 95 \% confidence area among the 12 different cases above. All four algorithms \hh{share} the same agent's policy network \hh{structure}. 

\section{Related \jj{W}orks}
\label{related_works}

Existing methods for non-stationary environments can be grouped into three \jj{categories}: 1) shoehorning: directly using established frameworks for stationary MDPs \jj{by} assuming no extra mechanisms are needed since non-stationarity already exists in standard RL due to policy updates; 2) model-based policy updates: updating models with new data, using short rollouts to prevent model exploitation \cite{janner2019trust,hafner2023mastering}, online model updates, \jj{or} latent factor identification \cite{zintgraf2019varibad,chen2022adaptive,huang2021adarl,feng2022factored,kwon2021rl}; and 3) anticipating future changes by forecasting policy gradients or value functions \cite{chandak2020optimizing, mao2020model, cheung2020reinforcement, ding2022non, ding2022provably}.

The advantage of the model-free method is its computational efficiency, allowing for direct learning of complex policies from raw data \cite{Mnih2015HumanlevelCT,lillicrap2015continuous}, while the advantage of the model-based method is its data efficiency, allowing one to learn fast by learning how the environment works \cite{sun2018dual,luo2018algorithmic}. However, both advantages are weakened in non-stationary environments since the optimizing non-stationary loss function induced by time-varying data distribution makes the model-free method challenging to adaptively obtain the optimal policy \cite{fei2020dynamic,chandak2020towards} and the model-based method challenging to estimate accurate non-stationary models \cite{cheung2020reinforcement,ding2022non}.

\textbf{Model-free method in non-stationary RL. }
\cite{al2017continuous} uses meta-learning among the training tasks to find initial hyperparameters of the policy networks that can be quickly fine-tuned when facing testing tasks that have not been encountered before. However, access to a prior distribution of training tasks is not available in real-world problems. To mitigate this issue, \cite{finn2019online} proposed the Follow-The-Meta-Leader (FTML) algorithm that continuously improves an initialization of parameters for non-stationary input data. However, it internally entails a lag when tracking optimal policy as it maximizes the current performance over all the past samples uniformly. To alleviate the lag problem, \cite{chandak2020optimizing,chandak2020towards} \jj{focused} on directly forecasting the non-stationary performance gradient to adapt the time-varying optimal policies. However, it still has problems of showing empirical analysis on bandit settings or a low-dimensional environment and lack of theoretical analysis which provides a bound on the adapted policy's performance. \cite{mao2020model} proposed adaptive Q-learning with a restart strategy and established its near-optimal dynamic regret bound. In addition, \cite{fei2020dynamic} proposed two model-free policy optimization algorithms based on the restart strategy and showed that dynamic regret satisfies polynomial space and time complexities. However, \jj{the} provable model-free methods \jj{in} \cite{mao2020model,fei2020dynamic} still lack empirical evidence and adaptability in complex environments. Furthermore, since the agent \jj{can} execute a policy in a fixed environment \jj{only} once due to the non-\jj{stationarity} of the environment, most existing model-free methods only update the policy once for each environment, which prevents the tracking of the time-varying optimal policies.

\textbf{Model-based method in non-stationary RL. }
\jj{The work} \cite{huang2021adarl} \jj{learned} the model change factors and their representation in heterogeneous domains with varying reward functions and dynamics. However, it \jj{has restrictions for use} in non-stationary environments, \jj{meaning that it is} applicable only for constant change factors or the domain adaptation setting. \cite{zintgraf2019varibad} proposed a Bayesian optimal learning policy algorithm by conditioning the action on both states and latent vectors that capture the agent's uncertainty in the environment. Also, \cite{feng2022factored} \jj{brought} insights from recent causality research to model non-stationarity as latent change factors across different environments, and learn policy conditioning on latent factors of the causal graphs. However, learning an optimal policy conditioning on the latent states \cite{zintgraf2019varibad,chen2022adaptive,huang2021adarl,feng2022factored,kwon2021rl} makes the theoretical analysis \hh{intractable.} \jj{The} recent works \cite{cheung2020reinforcement,ding2022non,ding2022provably} proposed model-based algorithms with a provable guarantee, but their algorithms are not scalable for complex environments and lack empirical evaluation for complex environments.

\section{Details on Problem \jj{S}tatement and Notations}
\subsection{Details on Notations}
\label{appendix:notations}

\textbf{Environment Interaction. }
 First, we denote \jj{the} state and action at \jj{the} wall-clock time $\mathfrak{t}_{k}$ of step $h$ as $s^{\mathfrak{t}_k}_h$ \jj{and} $a^{\mathfrak{t}_k}_h$, respectively. As mentioned in the main paper, we \jj{interchangeably use} the \jj{symbols} $s^{(k)}_h$ \jj{and} $a^{(k)}_h$ for $s^{\mathfrak{t}_k}_h$ \jj{and} $a^{\mathfrak{t}_k}_h$. At \jj{the} wall-clock time $\mathfrak{t}_k$, the agent starts from an initial state $s_0^{\mathfrak{t}_k} \sim \rho$. At step $h \in [H]$ of the episode $k$, the agent takes \jj{the} action $a_h^{\mathfrak{t}_k}=\pi^{\mathfrak{t}_k}(\cdot | s_h^{\mathfrak{t}_k})$ from the current state $s_h^{\mathfrak{t}_k}$. The agent then receives the reward $r_{h}^{\mathfrak{t}_k} \sim R_{\mathfrak{t}_k} (s_{h}^{\mathfrak{t}_k},a_{h}^{\mathfrak{t}_k})$ and moves to the next state $s^{\mathfrak{t}_k}_{h+1} \sim P_{\mathfrak{t}_k}( s_{h+1}^{\mathfrak{t}_k}| s_{h}^{\mathfrak{t}_k},a_{h}^{\mathfrak{t}_k})$. The trajectory ends when the agent reaches $s^{\mathfrak{t}_k}_{H}$.

\textbf{Future MDP $\widehat{\mathcal{M}}_{t_{k+1}}$. }
Our work \jj{creates a} one-episode\jj{-}ahead MDP $\widehat{\mathcal{M}}_{t_{k+1}}$ based on the observed data from \jj{the} $p$ lastest MDPs $\{ \mathcal{M}_{t_{k-p+1}},...,\mathcal{M}_{t_k} \}$ when the agent is stated in episode $k$. We define $\widehat{\mathcal{M}}_{t_{k+1}} \coloneqq \langle \mathcal{S}, \mathcal{A}, H, \widehat{P}_{t_{k+1}},\widehat{R}_{t_{k+1}}, \gamma \rangle $\jj{, where} $\widehat{P}_{\mathfrak{t}_{k+1}}$ \jj{and} $\widehat{R}_{t_{k+1}}$ are \jj{the} \textit{forecasted} future transition probability and reward function, respectively. As mentioned in the main paper, \hh{the agent also} interacts with the created future MDP $\widehat{\mathcal{M}}_{t_{k+1}}$ in the same way as it did with the original MDP $\mathcal{M}_{t_k}$. We denote \jj{the} state, action, and policy in $\widehat{\mathcal{M}}_{t_{k+1}}$ as $\widehat{s}^{t_{k+1}}_h,\widehat{a}^{t_{k+1}}_h,\widehat{\pi}^{t_{k+1}}$, \jj{or equivalently} $\widehat{s}^{(k+1)}_h,\widehat{a}^{(k+1)}_h,\widehat{\pi}^{(k+1)}$, respectively. We elaborate our main methodology in Section \ref{method}\jj{.}

\textbf{State value \jj{and} state-action value function\jj{s}. } For any given policy $\pi$ and the MDP $\mathcal{M}_{\mathfrak{t}_k}$, \jj{we} denote the state value function at \jj{the} \add{wall-clock time $\mathfrak{t}_k$}(episode $k$) as $V^{\pi,\mathfrak{t}_k} :\mathcal{S} \rightarrow \mathbb{R}$ and the state-action value function $k$ as $Q^{\pi,\mathfrak{t}_k} :  \mathcal{S} \times \mathcal{A} \rightarrow \mathbb{R}$. We define 
\begin{align*}
    V^{\pi,\mathfrak{t}_k}(s)  &\coloneqq \mathbb{E}_{\mathcal{M}_{\mathfrak{t}_k},\pi}\left[ \sum_{h=0}^{H-1} \gamma^{h} r_{h}^{\mathfrak{t}_k}~\Big\vert~s^{\mathfrak{t}_k}_0=s \right], \\ 
    Q^{\pi,\mathfrak{t}_k}(s,a) &\coloneqq \mathbb{E}_{\mathcal{M}_{\mathfrak{t}_k},\pi}\left[ \sum_{h=0}^{H-1} \gamma^h r_{h}^{\mathfrak{t}_k}~\Big\vert~s^{\mathfrak{t}_k}_0=s,~a^{\mathfrak{t}_k}_0=a \right].
\end{align*}
Also, given the future MDP $\widehat{\mathcal{M}}_{\mathfrak{t}_{k+1}}$, we denote the \textit{forecasted} state value as $\widehat{V}^{\pi,\mathfrak{t}_{k+1}}(s) : \mathcal{S} \rightarrow \mathbb{R}$ and \textit{forecasted} state-action value as $\widehat{Q}^{\pi,\mathfrak{t}_{k+1}} : \mathcal{S} \times \mathcal{A} \rightarrow \mathbb{R}$. We define 
\begin{align*}
    \widehat{V}^{\pi,\mathfrak{t}_{k+1}}(s)  &\coloneqq \mathbb{E}_{\widehat{\mathcal{M}}_{\mathfrak{t}_{k+1}},\pi}\left[ \sum_{h=0}^{H-1} \gamma^{h} \widehat{r}_{h}^{\mathfrak{t}_{k+1}}~\Big\vert~ \widehat{s}^{\mathfrak{t}_{k+1}}_{0} = s \right], \\
    \widehat{Q}^{\pi,\mathfrak{t}_{k+1}}(s,a) &\coloneqq \mathbb{E}_{\widehat{\mathcal{M}}_{\mathfrak{t}_{k+1}},\pi}\left[ \sum_{h=0}^{H-1} \gamma^{h} \widehat{r}_{h}^{\mathfrak{t}_{k+1}}~\Big\vert~ \widehat{s}^{\mathfrak{t}_{k+1}}_{0}=s,~\widehat{a}^{\mathfrak{t}_{k+1}}_{0}=a \right].
\end{align*}

As mentioned in the main paper, we \jj{simplify the symbols} $ V^{\pi,\mathfrak{t}_k},Q^{\pi,\mathfrak{t}_k},\widehat{V}^{\pi,\mathfrak{t}_{k+1}},\widehat{Q}^{\pi,\mathfrak{t}_{k+1}}$ \jj{as} $ V^{\pi,(k)},Q^{\pi,(k)},\widehat{V}^{\pi,(k+1)},\widehat{Q}^{\pi,(k+1)}$.

\textbf{Dynamic regret. } Aside from stationary MDPs, the agent aims to maximize the cumulative expected reward throughout the $K$ episodes by adopting a sequence of policies $\{ \pi^{\mathfrak{t}_k} \}_{1:K}$.  In non-stationary MDPs, the optimality of the \jj{policies} is evaluated in terms of \jj{the} dynamic regret $\mathfrak{R}\left( \{ \pi^{\mathfrak{t}_k}\}_{1:K},K \right)$ \jj{defined as}
\begin{equation}
    \mathfrak{R} \left( \{ \pi^{\mathfrak{t}_k} \}_{1:K},K \right) \coloneqq \sum_{k=1}^{K}\left( V^{\bigast,\mathfrak{t}_k}(\rho)-V^{\pi^{\mathfrak{t}_k},\mathfrak{t}_k}(\rho) \right)
    \label{def:dynamicregret}
\end{equation}

where $V^{\bigast,\mathfrak{t}_k} (=V^{\pi^{\bigast,\mathfrak{t}_k},\mathfrak{t}_k})$ \jj{denotes} the optimal state value function \jj{under} the optimal policy $\pi^{\bigast,\mathfrak{t}_k}$ at the wall-clock time $\mathfrak{t}_k$ (episode $k$) and $V^{\pi^{\mathfrak{t}_k},\mathfrak{t}_k}$ \jj{denotes} the state value with agent's $k^{th}$ episode's policy $\pi^k$. Dynamic regret is a stronger evaluation than the standard static regret \jj{that} considers the optimality of \jj{a} single policy over \jj{all} episodes.

\textbf{State value \jj{and} state-action value \jj{functions at} step $h$. }
We denote the state value function \jj{and the} state-action value function for any policy $\pi$ at \textit{step $h$} of the wall-clock time $\mathfrak{t}_k$ as $V^{\pi,\mathfrak{t}_k}_h$ \jj{and} $Q^{\pi,\mathfrak{t}_k}_h$\jj{,} respectively. We define
\begin{align*}
    V^{\pi,\mathfrak{t}_k}_h(s)  &\coloneqq \mathbb{E}_{\mathcal{M}_{\mathfrak{t}_k},\pi}\Big[ \sum_{i=h}^{H-1} \gamma^{i-h} r_i^{\mathfrak{t}_k}~|~s^{\mathfrak{t}_k}_h=s \Big], \\
    Q^{\pi,\mathfrak{t}_k}_h(s,a)  &\coloneqq \mathbb{E}_{\mathcal{M}_{\mathfrak{t}_k},\pi}\Big[ \sum_{i=h}^{H-1} \gamma^{i-h} r_i^{\mathfrak{t}_k}~|~s^{\mathfrak{t}_k}_h=s, a^{\mathfrak{t}_k}_h = a \Big].
\end{align*}
Then, the corresponding Bellman equation is  
\begin{equation}
    \label{eqn:bellman_step}
    Q^{\pi,\mathfrak{t}_k}_h(s,a) = \big( R_{\mathfrak{t}_k} + \gamma P_{\mathfrak{t}_k} V_{h+1}^{\pi,\mathfrak{t}_k}\big) (s,a),~~
    V^{\pi,\mathfrak{t}_k}_h(s) = \big< Q^{\pi,\mathfrak{t}_k}_h (s,\cdot),\pi(\cdot|s) \big>_\mathcal{A}, ~~ V_{H}^{\mathfrak{t}_k,\pi}(s) = 0 ~\forall s \in \mathcal{S}
\end{equation}
where $\left(P_{\mathfrak{t}_k} f \right)(s,a) := \mathbb{E}_{s^\prime \sim P^{\mathfrak{t}_k}(\cdot | s,a))}[f(s^\prime)]$ for \jj{every} function $f: \mathcal{S} \rightarrow \mathbb{R}$.

We denote $V^{\bigast,\mathfrak{t}_k}_h(s) =V^{\pi^{\bigast,\mathfrak{t}_k},\mathfrak{t}_k}_h(s)$ as the optimal state value function \jj{at} step $h$ of episode $k$. We omit the \jj{subscript} $h$ when $h=0$, that is, $V^{\pi,k}=V^{\pi,k}_0,~Q^{\pi,k}= Q^{\pi,k}_0$. Then, the corresponding \jj{Bellman equation} is 
\begin{align}
    \label{eqn:bellmanOptimal_step}
    Q^{\bigast,\mathfrak{t}_k}_h(s,a) = \big(R_{\mathfrak{t}_k} + \gamma &P_{\mathfrak{t}_k} V_{h+1}^{\bigast,\mathfrak{t}_k}\big) (s,a),~~
    V^{\bigast,\mathfrak{t}_k}_h(s) = \big< Q^{\bigast,\mathfrak{t}_k}_h(s,\cdot),\pi^{\bigast,\mathfrak{t}_k}(\cdot | s) \big>_{\mathcal{A}}, \\ \nonumber 
    &\pi^{\bigast,\mathfrak{t}_k}(s) = \max_{a} Q_h^{\bigast,\mathfrak{t}_k}(s,a).
\end{align}
We also denote \jj{the} \textit{forecasted} state value at \jj{the} wall-clock time $\mathfrak{t}_{k+1}$ of step $h$ when the agent \hh{is stated} at time $\mathfrak{t}_k$ as $\widehat{V}^{\pi,\mathfrak{t}_{k+1}}_h$ and \jj{the} $\textit{forecasted}$ state-action value as $\widehat{Q}^{\pi,\mathfrak{t}_{k+1}}_h$  in a forecasted MDP $\widehat{\mathcal{M}}_{\mathfrak{t}_{k+1}}$. We define 
\begin{align}
    \widehat{V}^{\pi,\mathfrak{t}_{k+1}}_h(s) &\coloneqq \mathbb{E}_{\widehat{\mathcal{M}}_{\mathfrak{t}_{k+1}},\pi}\Big[ \sum_{i=h}^{H-1} \gamma^{i-h} \widehat{r}_{i}^{\mathfrak{t}_{k+1}}~|~\widehat{s}^{\mathfrak{t}_{k+1}}_h=s \Big] ,\\
    \widehat{Q}^{\pi,\mathfrak{t}_{k+1}}_h(s,a) &\coloneqq \mathbb{E}_{\widehat{\mathcal{M}}_{\mathfrak{t}_{k+1}},\pi}\Big[ \sum_{i=h}^{H-1} \gamma^{i-h} \widehat{r}_{i}^{\mathfrak{t}_{k+1}}~|~ \widehat{s}^{\mathfrak{t}_{k+1}}_h=s, \widehat{a}^{\mathfrak{t}_{k+1}}_h = a \Big].
\end{align}
Then, the \jj{B}ellman equation is given by  
\begin{align}
    \label{eqn:futurebellman_step}
    \widehat{Q}^{\pi,\mathfrak{t}_{k+1}}_h(s,a) = \big( \widehat{R}_{\mathfrak{t}_{k+1}}+ &\gamma \widehat{P}_{\mathfrak{t}_{k+1}} \widehat{V}_{h+1}^{\pi,\mathfrak{t}_{k+1}}\big) (s,a),~~
    \widehat{V}^{\pi,\mathfrak{t}_{k+1}}_h(s) = \big< \widehat{Q}^{\pi,\mathfrak{t}_{k+1}}_h(s,\cdot),\pi(\cdot | s) \big>_\mathcal{A} \nonumber, \\ &\widehat{V}_{H}^{\pi, \mathfrak{t}_{k+1}}(s) = 0 ~\forall s \in \mathcal{S}. 
\end{align}
\jj{We} denote the \textit{future} optimal policy of the \textit{future} value function $\widehat{V}^{\pi,\mathfrak{t}_{k+1}}$ as $\widehat{\pi}^{\bigast,\mathfrak{t}_{k+1}}$. Then the \jj{B}ellman equation also holds for $\widehat{Q}^{\pi,\mathfrak{t}_{k+1}}_h(s)$ \jj{and} $\widehat{V}^{\pi,\mathfrak{t}_{k+1}}_h(s)$ as follows\jj{:}
\begin{align}
    \widehat{Q}^{\bigast,\mathfrak{t}_{k+1}}_h(s,a) =  \big(\widehat{R}_{\mathfrak{t}_{k+1}} +&\gamma\widehat{P}_{\mathfrak{t}_{k+1}} \widehat{V}_{h+1}^{\bigast,\mathfrak{t}_{k+1}}\big) (s,a),~~
    \widehat{V}^{\bigast,\mathfrak{t}_{k+1}}_h(s) = \big< \widehat{Q}^{\bigast,\mathfrak{t}_{k+1}}_h(s,\cdot),\widehat{\pi}^{\bigast,\mathfrak{t}_{k+1}}(\cdot | s) \big>_{\mathcal{A}}, \nonumber \\
    &\widehat{\pi}^{\bigast,\mathfrak{t}_{k+1}}(s) = \max_{a} \widehat{Q}_h^{\bigast,\mathfrak{t}_{k+1}}(s,a).
    \label{eqn:futurebellmanOptimal_step}
\end{align}

As mentioned in the main paper, we \jj{simplify the notations} $V^{\pi,\mathfrak{t}_k}_h,Q^{\pi,\mathfrak{t}_k}_h,\widehat{V}^{\pi,\mathfrak{t}_{k+1}}_h,\widehat{Q}^{\pi,\mathfrak{t}_{k+1}}_h$ \jj{as} $V^{\pi,(k)}_h,Q^{\pi,(k)}_h,\widehat{V}^{\pi,(k+1)}_h,\widehat{Q}^{\pi,(k+1)}_h$.

\textbf{Unnormalized (discounted) occupancy measure. }
We define the unnormalized (discounted) occupancy measure $\nu^{\pi,\mathfrak{t}_k}_{s_0,a_0} \in \Delta_{1/(1-\gamma)}(\mathcal{S} \times \mathcal{A})$ at wall-clock time $\mathfrak{t}_k$ (episode $k$) for \jj{a} given policy $\pi$ \jj{together with} an initial state \jj{$s_0$} and \jj{the} action \jj{$a_0$} as  
\begin{align}
    \nu^{\pi,\mathfrak{t}_k}_{s_0,a_0}(s,a) &\coloneqq \sum_{h=0}^{\infty}\gamma^h \mathbb{P} (s_h=s, a_h=a~|~s_0,a_0~;~\pi,P_{\mathfrak{t}_k})~~ ,~~\forall (s,a) \in \mathcal{S} \times \mathcal{A} \label{def:Nu}
\end{align}
where $\mathbb{P} (s_h=s, a_h=a~|~s_0,a_0~;~\pi,P^{\mathfrak{t}_k})$ is the probability of visiting $(s,a)$ at step $h$ when following policy $\pi$ from $(s_0,a_0)$ with \jj{the} transition probability $P_{\mathfrak{t}_{k+1}}$. 

We also define the unnormalized non-stationary (discounted) \textit{forecasted} occupancy measure $\widehat{\nu}^{\pi,\mathfrak{t}_{k+1}}_{s_0} \in \Delta_{1/(1-\gamma)}(\mathcal{S} \times \mathcal{A})$  for \jj{a} given policy $\pi$, an initial state \jj{$s_0$}\jj{, an} action $a_0$, and \jj{a} forecasted future transition probability $\widehat{P}_{\mathfrak{t}_{k+1}}$\jj{:}
\begin{align}
    \widehat{\nu}^{\pi,\mathfrak{t}_{k+1}}_{s_0,a_0}(s,a) &\coloneqq \sum_{h=0}^{\infty}\gamma^h \mathbb{P} (s_h=s, a_h=a~|~s_0,a_0,\pi, \widehat{P}_{\mathfrak{t}_{k+1}})~~ , \forall (s,a) \in \mathcal{S} \times \mathcal{A} \label{def:Nu_future}
\end{align}
where \jj{the} probability is defined in a forecasted environment with $\widehat{P}_{\mathfrak{t}_{k+1}}$.

\textbf{Model prediction error. } 
To measure how well our meta-function predicts the future environment, we define two different \textit{model prediction error}s $\iota^{\mathfrak{t}_{k+1}}_{\infty},\iota^{\mathfrak{t}_{k+1}}_{h} : \mathcal{S} \times \mathcal{A} \rightarrow \mathbb{R}$, which denote the \jj{Bellman} equation error when using $\widehat{V}$ \jj{and} $\widehat{Q}$ estimated in the future MDP instead of the true $V$ \jj{and} $Q$ function\jj{s}:
\begin{align}
    \bar{\iota}^{\mathfrak{t}_{k+1}}_\infty(s,a) &\coloneqq \left( R_{\mathfrak{t}_{k+1}} + \gamma P_{\mathfrak{t}_{k+1}} \widehat{V}^{\bigast,\mathfrak{t}_{k+1}}_{\infty} - \widehat{Q}^{\bigast,\mathfrak{t}_{k+1}}_{\infty} \right) (s,a),
    \label{def:optimalModelPredictionError} \\
    \iota^{\mathfrak{t}_{k+1}}_h(s,a) &\coloneqq \left( R_{\mathfrak{t}_{k+1}} + \gamma P_{\mathfrak{t}_{k+1}}\widehat{V}^{\widehat{\pi}^{\mathfrak{t}_{k+1}},\mathfrak{t}_{k+1}}_{h+1} - \widehat{Q}^{\widehat{\pi}^{\mathfrak{t}_{k+1}},\mathfrak{t}_{k+1}}_{h} \right) (s,a).
    \label{def:estimatedModelPredictionError}
\end{align}

\jj{As mentioned} in the main paper, we allow $\bar{\iota}^{\mathfrak{t}_{k+1}}_\infty(s,a)$ \jj{and} $\iota^{\mathfrak{t}_{k+1}}_h(s,a)$ \jj{to be interchangeably expressed by the symbols} $\bar{\iota}^{(k+1)}_\infty(s,a)$ \jj{and} $\iota^{(k+1)}_h(s,a)$.

\textbf{Local time-elapsing variation budget. }
\jj{Aside from the} time\jj{-}elapsing  variation budget, we define \jj{the} \emph{local} time\jj{-}elapsing variation budget\jj{s} $B^{(k-w:k)}_p$ \jj{and} $B^{(k-w:k)}_r$ that quantifie how fast the environment changes \jj{over} wall-clock time\jj{s} $\{ \mathfrak{t}_{k-w+1}, \mathfrak{t}_{k+1},...,\mathfrak{t}_{k}\}$ where $k-w,k \in [K]$\jj{:}

\begin{align*}
    B^{(k-w+1:k)}_{p}(\Delta_\pi) &\coloneqq \sum_{\tau=k-w+1}^{k} \sup_{s,a}|| P_{\mathfrak{t}_{\tau+1}}(\cdot~|s,a) - P_{\mathfrak{t}_\tau}(\cdot~|s,a) ||_{1}, \\
    B^{(k-w+1:k)}_{r}(\Delta_\pi) &\coloneqq \sum_{\tau=k-w+1}^{k} \sup_{s,a} |  R_{\mathfrak{t}_{k+1}}(s,a) - R_{\mathfrak{t}_{k}}(s,a)  |.
\end{align*}
\section{Proof of \jj{T}heoretical \jj{A}nalysis}


\subsection{Preliminary for ProST-T and theoretical analysis}
In this subsection, we elaborate \jj{on} the \texttt{ProST-T}'s environment setting and \hh{its components} $f,g$.
\subsubsection{Environment setting}
\label{appendix:Environment_setting}

We consider the tabular environment \jj{have the following properties:} 
\begin{enumerate}
    \item First,  $P_{(k)}$ \jj{and} $R_{(k)}$ are represented by the inner products of the feature functions $\psi : \mathcal{S}\times\mathcal{S}\times\mathcal{A}\rightarrow \mathbb{R}^{|\mathcal{S}|^2|\mathcal{A}|},\varphi : \mathcal{S}\times\mathcal{A}\rightarrow \mathbb{R}^{|\mathcal{S}||\mathcal{A}|}$ and the non-stationary variables $o^p_{(k)},o^r_{(k)} \in \mathcal{O}$, respectively, where $o^p_{(k)} : \mathcal{S} \times \mathcal{S} \times \mathcal{A} \rightarrow \mathbb{R}^{|\mathcal{S}|^2|\mathcal{A}|}$ \jj{and} $o^r_{(k)} : \mathcal{S} \times \mathcal{A} \rightarrow \mathbb{R}^{|\mathcal{S}||\mathcal{A}|}$. That is, $P_{(k)} = <\psi, o^p_{(k)}>$ and $R_{(k)} = <\varphi, o^r_{(k)}>$.
    \item Second, the agent estimates $o^p_{(k)}$ \jj{and} $o^r_{(k)}$ rather than \jj{observing them}. More specifically, we consider the non-stationary variable set $\mathcal{O}$ to be the set $\{P_{(k)}\}_{1:K},\{R_{(k)}\}_{1:K}$. The agent then \jj{attempts} to \textit{estimate} $o_k$ (denote $P_{(k)}$ as $o^p_{(k)}$ and $R_{(k)}$ as $o^r_{(k)}$) through its $w$ lastest trajectories, where Assumption \ref{assume:observable_nonstatinoaryvar} does not \jj{need} to be satisfied in this setting. That is, the agent estimates $P_{(k)}$ by $\hat{o}^p_k$ and $R_{(k)}$ by $\hat{o}^r_k$ from observations of last $w$ trajectories, \jj{i.e.,} $\tau_{k-(w-1):k}$.
\end{enumerate}
We elaborate on the above two \jj{settings below:} 

\textbf{1. $P_{(k)},R_{(k)}$ are inner products of $\psi,\varphi$ and $o^p_{(k)},o^r_{(k)}$\jj{.}}
    
    Let us define a set of \hh{one-}hot reward vectors over all states and the action space\jj{, namely} $\mathbbm{1} _r \coloneqq \{ \varphi^{y} \in \{0,1\}^{|\mathcal{S}||\mathcal{A}|}~|~ \sum_{i=1}^{|\mathcal{S}| |\mathcal{A}|}\varphi^{y}_{i}=1 \}$\jj{, and} similarly define a set of \hh{one-}hot transition probability vectors\jj{, namely} $\mathbbm{1}_p \coloneqq \{ \psi^{y} \in \{0,1\}^{|\mathcal{S}|^2|\mathcal{A}|}~|~ \sum_{i=1}^{|\mathcal{S}|^2|\mathcal{A}|}  \psi^{y}_i=1 \}$. \jj{We then define} one-to-one function\jj{s} $\varphi$ \jj{and} $\psi$ \jj{such that} $\varphi :\mathcal{S} \times \mathcal{A} \rightarrow \mathbbm{1}_r$ \jj{and} $\psi :\mathcal{S} \times \mathcal{S} \times \mathcal{A} \rightarrow \mathbbm{1}_p$. Namely, $\varphi(s,a) (\psi(s^\prime,s,a))$  is a \hyunin{one-hot} vector \jj{such} that the $(i)^{th}$ entry equals 1. We \jj{use} the notation $\varphi^k_h = \varphi(s^{(k)}_h,a^{(k)}_h)$ for the observed $(s^{(k)}_h,a^{(k)}_h)$ on the trajectory $\tau_k$, and \jj{similarly} $\psi^k_h = \psi(s^{(k)}_{h+1},s^{(k)}_h,a^{(k)}_h)$.

    Then, we set $\mathcal{O} = \{ P_{(k)}, R_{(k)} \}_{k=1}^{\infty}$ in \texttt{ProST-T}. Also, we set $o_k$ to consist of two \jj{parameters as} $o_k = (o^p_{(k)},o^r_{(k)}) $. We define a \hh{function} $o^p_{(k)} \coloneqq \{o : \mathcal{S} \times  \mathcal{S} \times \mathcal{A} \rightarrow \mathbb{R}^{|\mathcal{S}|^2|\mathcal{A}|}~|~o(s^\prime,s,a)=P_{(k)}(s^\prime|s,a),~\forall(s^\prime,s,a) \}$ \jj{and} a \hh{function} $o^r_{(k)} \coloneqq \{o : \mathcal{S}\times\mathcal{A}\rightarrow \mathbb{R}^{|\mathcal{S}||\mathcal{A}|} ~|~ o(s,a) = R_{(k)}(s,a),~\forall(s,a) \}$.
    Then, the transition probability and reward value $P_{(k)}$ \jj{and} $R_{(k)}$ can be constructed by the inner product\jj{s} of the stationary function\jj{s $\varphi$ and $\psi$} and the unknown non-stationary \jj{parameters} $o^p_{(k)}$ and $o^r_{(k)}$ as follows, 
    \begin{align}
        P_{(k)}(s^\prime~|~s,a) &~ \coloneqq~ < \psi(s^\prime,s,a), o^p_{(k)}(s^\prime,s,a)> \text{ for } \forall (s^\prime,s,a),\label{eqn:lemmaeqn15}\\ 
        R_{(k)}(s,a) & ~\coloneqq ~ < \varphi(s,a), o^r_{(k)}(s,a)> \text{ for } \forall (s,a). \label{eqn:lemmaeqn16}
    \end{align}
    For \jj{notational} simplicity, we \jj{use} $<\psi,o^p_{(k)}>$ \jj{and} $<\varphi,o^r_{(k)}>$ \jj{to show the inner products of the} functions $\psi,o^p_{(k)}$ and $\varphi,o^r_{(k)}$\jj{,} respectively. \jj{Therefore,} $P_{(k)}= <\psi,o^p_{(k)}>$ \jj{and} $R_{(k)}=<\varphi,o^r_{(k)}>$.
    
    \jj{To give an} intuitive explanation, \jj{note that} $o^p_{(k)}$ contains all transition probabilities for all $(s^\prime,s,a)$ \jj{in} a vector form with size $\mathbb{R}^{|\mathcal{S}|^2|\mathcal{A}|}$ and $o^r_{(k)}$ contains all rewards for all $(s,a)$ \jj{in} a vector form with size $\mathbb{R}^{|\mathcal{S}||\mathcal{A}|}$.

\textbf{2. The agent estimates $o^r_{(k)}$ \jj{and} $o^p_{(k)}$ rather than \jj{observing} them}

    We have defined the \jj{functions} $o^p_{(k)}$ \jj{and} $o^r_{(k)}$ as the transition probability and reward \jj{functions} at episode $k$\jj{,} respectively. Now\jj{,} the agent \jj{strives} to estimate $o^p_{(k)}$ \jj{and} $o^r_{(k)}$\jj{, denoted as} $\widehat{o}^p_{(k)}$ \jj{and} $\widehat{o}^r_{(k)}$\jj{, from} the current trajectory $\tau_k$\jj{:}
    \begin{align*}
        \widehat{o}^p_{(k)}(s^\prime,s,a) &= \frac{n_{(k)}(s^\prime,s,a)}{\lambda + n_{(k)}(s,a)},~~\forall (s^\prime,s,a) \in \mathcal{S} \times \mathcal{S} \times \mathcal{A}, \\ 
        \widehat{o}^r_{(k)}(s,a) &= \frac{ \sum_{h=0}^{H-1} \mathbbm{1} \left[ (s,a)=(s^{(k)}_h,a^{(k)}_h) \right] \cdot r^{(k)}_h}{n_{k}(s,a)},~~\forall (s,a) \in  \mathcal{S} \times \mathcal{A}
    \end{align*}
    where $n_{(k)}(s,a)$ denotes visitation count of state $s$ \jj{under} action $a$ through trajectory $\tau_{(k)}$ \hh{and $n_{(k)}(s,a,s^\prime)$ denotes visitation count of state $s$ under action $a$ and subsequent next state $s^\prime$ through trajectory}. We denote $\widehat{o}_{k,h}^p =  \widehat{o}^p_{(k)} (s^{(k)}_{h+1},s^{(k)}_{h},a^{(k)}_{h})$ and $\widehat{o}_{k,h}^r = \widehat{r}^k_h (s^{(k)}_{h},a^{(k)}_{h})$\jj{.} 

\jj{It can be verified that the following relations hold at episode $k$} for \jj{the state and action} pairs from the $k^{\text{th}}$ trajectory $\{ s^{(k)}_0,a^{(k)}_0,s^{(k)}_1,a^{(k)}_1,..,s^{(k)}_{H-1},a^{(k)}_{H-1},s^{(k)}_{H}\}$\jj{:}
    \begin{align}
        P_{(k)}(s^{(k)}_{h+1}~|~s^{(k)}_{h},a^{(k)}_{h}) &~ =~ < \psi(s^{(k)}_{h+1},s^{(k)}_{h},a^{(k)}_{h}), o^p_{(k)}(s^{(k)}_{h+1},s^{(k)}_{h},a^{(k)}_{h})> ~~~~ ,\forall h\in[H], \\ 
        R_{(k)}(s^{(k)}_{h},a^{(k)}_{h}) & ~= ~ < \varphi(s^{(k)}_{h},a^{(k)}_{h}), o^r_{(k)}(s^{(k)}_{h},a^{(k)}_{h})> ~~~~ ,\forall h\in[H]. \
    \end{align}

    \jj{Note that} the observed non-stationary \jj{parameters} $\widehat{o}^p_{(k)}$ \jj{and} $\widehat{o}^r_{(k)}$ can be interpreted  partially observed \jj{vectors}.

\subsubsection[Functions f,g]{Functions $f$ and $g$}
\label{appendix:Meta_function_fg}
The function $f$ estimates and \jj{the function} $g$ predicts as follows\jj{
:}
\begin{enumerate}
    \item \textbf{Function $f$}: $f$ forecasts one-episode\jj{-}ahead non-stationary \jj{parameters} $\hat{o}^p_{(k+1)}$ \jj{and} $\hat{o}^r_{(k+1)}$ by minimizing the following loss function $\mathcal{L}_{f^\diamond}$ with \jj{the} regularization parameter $\lambda \in \mathbb{R}_{+}$\jj{:}
    \begin{align*}
        \mathcal{L}_{f^\diamond}(\phi~;~\widehat{o}^\diamond_{(k-w+1:k)}) =  \lambda ||\phi||^2 + \sum_{s=k-w+1}^{k} \sum_{h=0}^{H-1} \left( (\square^s_h)^{\top}\phi- \widehat{o}^{\diamond}_{s,h}\right)
    \end{align*}
    where $\diamond = r,p$ and $\square = \varphi$ if $\diamond = r$. We set $\square = \psi$ if $\diamond = p$. We let $\phi^k_{f^\diamond} = \text{argmin}_{\phi} \mathcal{L}_{f^\diamond}(\widehat{o}^\diamond_{k-(w-1):k})$. We use $\phi^k_{f^\diamond}$ as $\widehat{o}^\diamond_{k+1}$ .
    \item  \textbf{Function $g$}: Then $g$ predicts the function\jj{s} $\widehat{P}_{(k+1)}$ \jj{and} $\widehat{R}_{(k+1)}$\jj{, denoted as} $\widehat{g}^P_{(k+1)}$ \jj{and} $\widehat{g}^R_{(k+1)}$\jj{,} as $\widehat{P}_{(k+1)} = \widehat{g}^P_{(k+1)} \coloneqq < \varphi, \widehat{o}^p_{(k+1)}>$ and $\widehat{R}_{(k+1)} = \widehat{g}^R_{(k+1)} \coloneqq < \varphi, \widehat{o}^r_{k+1}>  +   2\Gamma^{(k)}_w$\jj{, where} $\Gamma^{(k)}_w(s,a) : \mathcal{S} \times \mathcal{A} \rightarrow \mathbb{R}$ is the exploration bonus term that adapts the counter-based bonus terms in the literature.
\end{enumerate}

    We elaborate on above two procedures \jj{below:}

\textbf{1. The function $f$ solves \jj{an} optimization problem to \jj{obtain} the future $\widehat{o}_{(k+1)}$.}
    
    The function $g \circ f$ forecasts the $k+1^{th}$ episode's non-stationary \jj{parameters} as $(\widehat{o}^p_{(k+1)},\widehat{o}^r_{(k+1)}) $ from $\widehat{o}_{(k-w+1:k)}$\jj{, where} $w$ is the sliding window length (past reference length). The function $f$ forecasts $o^p_{(k+1)}$ \jj{and} $o^r_{(k+1)}$ by minimizing the following two regularized least\jj{-}squares optimization problems \cite{cheung2019learning}.
    \begin{align}
        \widehat{o}^{p}_{(k+1)} &= \argmin_{o \in \mathbb{R}^{|\mathcal{S}|^2|\mathcal{A}|}} \left(\lambda ||o||^2 + \sum_{s=k-w+1,h=0}^{k,H} \left( (\psi^s_h)^{\top}o- \widehat{o}^{p}_{s,h}\right) \right)
        \label{eqn:SWRLS_predictP} \\
        \widetilde{o}^{r}_{(k+1)} &= \argmin_{o \in \mathbb{R}^{|\mathcal{S}||\mathcal{A}|}} \left(\lambda ||o||^2 + \sum_{s=k-w+1,h=0}^{k,H-1} \left( (\varphi^s_h)^{\top}o- \widehat{o}^r_{s,h}\right) \right)
        \label{eqn:SWRLS_predictR}
    \end{align}

\textbf{2. \jj{The} function $g$ predicts $\widehat{P}_{(k+1)}$ \jj{and} $\widehat{R}_{(k+1)}$ from $\widehat{o}_{k+1}$.}
    
    From the equations (17a) and (17b) of the paper \cite{ding2022provably}, the explicit solutions of \eqref{eqn:SWRLS_predictP} and \eqref{eqn:SWRLS_predictR} \jj{are given as}
    \begin{equation}
        \widehat{o}^p_{(k+1)}(s^\prime,s,a) = \frac{\sum_{t=k-w+1}^{k} n_t(s^\prime,s,a)}{\lambda + \sum_{t=k-w+1}^{k} n_t(s,a)},~\widetilde{o}^r_{(k+1)}(s,a) = \frac{\sum_{t=k-w+1}^{k} \sum_{h=0}^{H-1} \mathbbm{1} \left[ (s,a)=(s^t_h,a^t_h) \right] \cdot r^t_h}{\lambda + \sum_{t=k-w+1}^{k} n_{t}(s,a)}. \label{eqn:r_p_estimation}
    \end{equation}
    
    Then, the \texttt{ProST-T} predicts the future model using the function\jj{s} $\widehat{g}^P_{k+1}$ \jj{and} $\widehat{g}^R_{k+1}$ as follows\jj{:}
    \begin{align*}
        \widehat{g}^P_{k+1}(s^\prime,s,a) &\coloneqq < \varphi(s^\prime,s,a), \widehat{o}^p_{(k+1)}(s^\prime,s,a)>, \\
        \widetilde{g}^R_{k+1}(s,a) &\coloneqq < \varphi(s,a), \widetilde{o}^r_{(k+1)}(s,a)>, \\ 
        \widehat{g}^R_{k+1}(s,a) &\coloneqq \widetilde{g}^R_{k+1}(s,a) + 2\Gamma^{(k)}_w(s,a).
    \end{align*}
    We utilize the exploration bonus $\Gamma^{(k)}_w(s,a) : \mathcal{S} \times \mathcal{A} \rightarrow \mathbb{R}$ \jj{to} explore \jj{those state and action pairs} that \jj{are} less visited. We define it as $\Gamma^{(k)}_w(s,a) = \beta \left( \sum_{t=k-w+1}^{k}n_t(s,a)  + \lambda \right)^{-1/2}$ \jj{with} $\beta >0$.
    Then, we use $\widehat{g}^P_{k+1}$ \jj{and} $\widehat{g}^R_{k+1}$ \jj{to denote} the future MDP's $\widehat{P}_{(k+1)}$ \jj{and} $\widehat{R}_{(k+1)}$,   respectively. 
    From the following analysis, we \jj{write} $\widehat{P}_{(k+1)} = \widehat{g}^P_{(k+1)}$, $\widetilde{R}_{(k+1)} = \widetilde{g}^R_{(k+1)}$, \jj{and} $\widehat{R}_{(k+1)} = \widehat{g}^R_{(k+1)}$.

\subsubsection[Baseline algorithm Alg]{Baseline algorithm\jj{s} $\texttt{Alg}$ \jj{and} $\texttt{Alg}_\tau$}
\label{appendix:baseline_algorithm}

The \texttt{ProST-T} utilizes softmax parameterization that naturally ensures that the policy lies in the probability simplex. For any function that satisfies $\theta : \mathcal{S} \times \mathcal{A} \rightarrow \mathbb{R}$, the policy $\pi^{(k)}$ is generated by the softmax transformation of $\theta^{(k)}$ at \jj{the} wall\jj{-}clock time $\mathfrak{t}_k$. Furthermore, to promote exploration and discourage premature convergence to suboptimal policies in a non-stationary environment, we \jj{implement} a widely used strategy known as entropy regularization. We augment the future state value function with an additional $\pi^{(k)}(s)$ entropy term, denoted by $\tau \mathcal{H}(s,\pi^{(k)})$, where $\tau > 0$. We \jj{perform} a theoretical analysis with two baseline algorithms : Natural Policy Gradient (NPG) \texttt{Alg} and Natural Policy Gradient (NPG) with entropy regularization $\texttt{Alg}_{\tau}$

\textbf{Softmax parameterization. }
For any function that satisfies $\theta : \mathcal{S} \times \mathcal{A} \rightarrow \mathbb{R}$, the policy $\pi^{(k)}$ is generated by the softmax transformation of $\theta^{(k)}$ at the wall\jj{-}clock time $\mathfrak{t}_k$. \jj{Using} the notation $\pi^{(k)} = \pi_{\theta^{(k)}}$\jj{, the} soft parameterization is defined as
\begin{equation*}
    \pi_{\theta^{(k)}}(a|s) \coloneqq \frac{\exp{(\theta^{(k)}(s,a))}}{\sum_{a^\prime \in \mathcal{A}}{\exp{(\theta^{(k)} (s,a^\prime))}}}~~,\forall (s,a) \in \mathcal{S} \times \mathcal{A}.
\end{equation*}
Under \jj{the} softmax parameterization, the NPG update rule admits a simple form of update rule \jj{given in} line 17 of Algorithm \ref{alg:FT-MBPO} in Appendix \ref{section:meta_algorithm}. This is elaborated in \cite{cen2022fast}.

\textbf{Entropy regularized value maximization. }
 For any policy $\pi$, we define \jj{the} \textit{forecasted} entropy-regularized state value function $\widehat{V}^{\pi,\mathfrak{t}_{\mathfrak{t}_k+1}}_{\tau}(s)$ as
\begin{equation*}
     \widehat{V}^{\pi,\mathfrak{t}_{k+1}}_{\tau}(s) \coloneqq  \widehat{V}^{\pi,\mathfrak{t}_k+1}(s)+\tau \mathcal{H}(s,\pi)
\end{equation*}
where $\tau \geq 0$ is a regularization parameter and $\mathcal{H}(s,\pi)$ is a discounted entropy defined as
\begin{equation*}
    \mathcal{H}(s,\pi) \coloneqq \mathbb{E}_{\widehat{\mathcal{M}}_{(k+1)}} \left[ \sum^{H-1}_{h=0} -\gamma^h \log\pi(\hat{a}^{(k+1)}_h|\hat{s}^{(k+1)}_h) \vert \hat{s}^{(k+1)}_{0} =s \right].
\end{equation*}
Also, we define the \textit{forecasted} regularized Q-function $\widehat{Q}^{\pi,(k+1)}_{\tau}$ as
\begin{align*}
    \begin{split}
            &\widehat{Q}^{\pi,\mathfrak{t}_{k+1}}_{\tau}(s,a) = \hat{r}^{\mathfrak{t}_{k+1}}_h + \gamma \mathbb{E}_{s^\prime \sim \widehat{P}_{\mathfrak{t}_{k+1}}( \cdot| s,a)}\left[  \widehat{V}^{\pi,\mathfrak{t}_{k+1}}_{\tau}(s^\prime) \right] \\
        &\text{where   } (s^\prime,s,a) = (\hat{s}^{(k+1)}_{h+1}, \hat{s}^{(k+1)}_h,\hat{a}^{(k+1)}_h).
    \end{split}
\end{align*}

\subsection{Notation for theoretical analysis}
\label{appendix:Notation for theoretical analysis}
This subsection introduces some notations that we \jj{will} use in the proofs.

    At \jj{the} wall-clock time $\mathfrak{t}_{k}$, \jj{we} define the \textit{forecasting model error} $ \Delta^{r}_{\mathfrak{t}_{k}}(s,a)$ and \textit{forecasting transition probability model error} $\Delta^{p}_{\mathfrak{t}_{k}}(s,a)$ below: 
    \begin{align}
        \Delta^{r}_{\mathfrak{t}_{k}}(s,a) &\coloneqq  \left\vert \left(R_{(k+1)} - \widetilde{R}_{(k+1)} \right) (s,a) \right\vert, \label{eqn:forecasting reward model error}\\ 
        \Delta^{p}_{\mathfrak{t}_{k}}(s,a) & \coloneqq  \left\vert \left\vert \left( P_{(k+1)} - \widehat{P}_{(k+1)} \right)(\cdot~|~s,a) \right\vert \right\vert_1. \label{eqn:forecasting transition probability model error}
    \end{align}
    Recall that $\widetilde{R}_{(k+1)}$ \jj{and} $\widehat{P}_{(k+1)}$ \jj{estimate the} future reward and transition probability by solving the optimization \jj{problems} \eqref{eqn:SWRLS_predictP} \jj{and} \eqref{eqn:SWRLS_predictR}\jj{.}

    \jj{We define a} model error that considers the bonus term \jj{as}
    \begin{align*}
        \Delta^{Bonus,r}_{\mathfrak{t}_{k}}(s,a) \coloneqq  \left\vert \left(R_{(k+1)} - \widehat{R}_{(k+1)} \right) (s,a) \right\vert
    \end{align*}
    where $\widehat{R}_{(k+1)}(s,a) = \widetilde{R}_{(k+1)}(s,a) + 2 \Gamma^{(k)}_w(s,a)$.
    
     We also define \jj{the} \textit{empirical} forecasting reward model error  $\bar{\Delta}^{r}_{\mathfrak{t}_{k},h}$ \jj{and the} \textit{empirical} forecasting transition probability model error $\bar{\Delta}^{p}_{\mathfrak{t}_{k},h}$\jj{:}
    \begin{align*}
        \bar{\Delta}^{r}_{\mathfrak{t}_{k},h} &\coloneqq  \left\vert \left(R_{(k+1)} - \widetilde{R}_{(k+1)} \right) (s^{(k+1)}_h,a^{(k+1)}_h) \right\vert, \\ 
        \bar{\Delta}^{p}_{\mathfrak{t}_{k},h} & \coloneqq  \left\vert \left\vert \left( P_{(k+1)} - \widehat{P}_{(k+1)} \right)(\cdot~|~s^{(k+1)}_h,a^{(k+1)}_h) \right\vert \right\vert_1
    \end{align*}
    \jj{as well as} the \emph{empirical} bonus \jj{based on the} reward model error\jj{:} 
    \begin{equation*}
        \bar{\Delta}^{Bonus,r}_{\mathfrak{t}_{k},h} \coloneqq  \left\vert \left(R_{(k+1)} - \widehat{R}_{(k+1)} \right) (s^{(k+1)}_{h},a^{(k+1)}_{h}) \right\vert.
    \end{equation*}
    \jj{Likewise, we define} \textit{total empirical} forecasting reward model error $\bar{\Delta}^{r}_{K}$ \jj{and the} \textit{total empirical} forecasting transition probability model error  $\bar{\Delta}^{p}_{K}$\jj{:}
    \begin{align}
        \bar{\Delta}^{r}_{K} \coloneqq \sum_{k=1}^{K-1} \sum_{h=0}^{H-1}  \bar{\Delta}^{r}_{\mathfrak{t}_{k},h}, \\
        \bar{\Delta}^{p}_{K} \coloneqq \sum_{k=1}^{K-1} \sum_{h=0}^{H-1}  \bar{\Delta}^{p}_{\mathfrak{t}_{k},h}.
    \end{align}
    \jj{We simplify the symbols} $\Delta^{r}_{\mathfrak{t}_{k}}(s,a),\Delta^{p}_{\mathfrak{t}_{k}}(s,a),\Delta^{Bonus,r}_{\mathfrak{t}_{k}}(s,a),\bar{\Delta}^{r}_{\mathfrak{t}_{k},h},\bar{\Delta}^{p}_{\mathfrak{t}_{k},h},\bar{\Delta}^{Bonus,r}_{\mathfrak{t}_{k},h}$ \jj{as} $\Delta^{r}_{(k)}(s,a),\Delta^{p}_{(k)}(s,a),\Delta^{Bonus,r}_{(k)}(s,a),\bar{\Delta}^{r}_{(k),h},\bar{\Delta}^{p}_{(k),h},\bar{\Delta}^{Bonus,r}_{(k),h}$, respectively. 
    
    We also define \jj{a} variable $\Lambda^{\mathfrak{t}_{k}}_{w}(s,a)$ that quantifies the visitation\jj{:}
    \begin{align}
        \Lambda^{\mathfrak{t}_{k}}_{w}(s,a) = \left[ \lambda + \sum_{t=(1 \wedge k-w+1)}^{k} n_{t}(s,a) \right]^{-1}. \label{def:Biggamma}
    \end{align}
    \jj{It can be verified that}
    \begin{equation}
        \Gamma^{\mathfrak{t}_{k}}_w(s,a) = \beta \sqrt{\Lambda^{\mathfrak{t}_{k}}_{w}(s,a)}. \label{eqn:gamma_Lambda_relationship}
    \end{equation}
    \jj{As before, we simplify the notations} $\Lambda^{\mathfrak{t}_{k}}_{w}(s,a)$ \jj{and} $\Gamma^{\mathfrak{t}_{k}}_w(s,a)$ \jj{as} $\Lambda^{(k)}_{w}(s,a)$ \jj{and} $\Gamma^{(k)}_w(s,a)$. We define $r_{\text{max}},\widetilde{r}_{\text{max}},R_{(k+1)}^{\text{max}}$\jj{, and }$\widetilde{R}_{(k+1)}^{\text{max}}$ as follows\jj{:}
    \begin{align*}
        R_{(k+1)}^{\text{max}} &\coloneqq \max_{(s,a)}|R_{(k+1)}(s,a)|, \\
        r_{\text{max}} &\coloneqq \max_{1\leq k \leq K-1} R_{(k+1)}^{\text{max}} ,\\
        \widetilde{R}_{(k+1)}^{\text{max}} &\coloneqq \max_{(s,a)}|\widetilde{R}_{(k+1)}(s,a)|,\\
        \widetilde{r}_{\text{max}} &\coloneqq \max_{1\leq k \leq K-1} \widetilde{R}_{(k+1)}^{\text{max}}
    \end{align*}\
    and since $||\widehat{R}_{(k+1)}(s,a)||_{\infty} \leq ||\widetilde{R}_{(k+1)}(s,a)||_{\infty} + ||2\Gamma^{(k)}_w(s,a)||_{\infty} = \widetilde{R}_{(k+1)}^{\text{max}} + \frac{2\beta}{\sqrt{\lambda}}$, we define $\hat{r}^{k+1}_{\text{max}}$ as
    \begin{align*}
        \hat{r}_{(k+1)}^{\text{max}} &\coloneqq \widetilde{R}_{(k+1)}^{\text{max}} + \frac{2\beta}{\sqrt{\lambda}}.
    \end{align*}
    Also, since $\beta$ \jj{and} $\lambda$ are hyperparameters independent of $k$, \jj{we have that} 
        \begin{align}
            \hat{r}_{\text{max}} &= \widetilde{r}_{\text{max}} + \frac{2\beta}{\sqrt{\lambda}}.
        \label{eqn:r_max_and_r_tilde}
    \end{align}

\subsection{Proofs}
\label{appendix:proof}

\begin{proof}[\textbf{Proof of Theorem \ref{theorem1}}]
Following the definition of the dynamic regret (Definition \ref{def:dynamicregret})\jj{, it} can be separated into three terms\jj{:}
\begin{flalign*}
    &\mathfrak{R} \left( \{ \widehat{\pi}^{(k+1)}\}_{1:K-1},K) \right) \\  
    &= \sum_{k=1}^{K-1}\left( V^{\bigast,(k+1)}(s_0)- V^{\widehat{\pi}^{(k+1)},(k+1)}(s_0) \right) \\
    &= \underbrace{ \sum_{k=1}^{K-1} \left(V^{\bigast,(k+1)}(s_0)-\widehat{V}^{\bigast,(k+1)}(s_0) \right)}_{ \circled{1}} 
    + \underbrace{ \sum_{k=1}^{K-1} \left(\widehat{V}^{\bigast,(k+1)}(s_0)-\widehat{V}^{\widehat{\pi}^{(k+1)},(k+1)}(s_0) \right)}_{ \circled{2}} \\
    &+ \underbrace{ \sum_{k=1}^{K-1}\left( \widehat{V}^{\widehat{\pi}^{(k+1)},(k+1)}(s_0)-V^{\widehat{\pi}^{(k+1)},(k+1)}(s_0) \right)}_{ \circled{3}}
\end{flalign*}

\textbf{1. Upper bound on \circled{1}. }
The gap between $V^{\pi^{\bigast,(k+1)},(k+1)}(s_0)$ and $\widehat{V}^{\widehat{\pi}^{\bigast,(k+1)},(k+1)}(s_0)$ comes from the gap \jj{between} two optimal value functions evaluated \jj{for} two different MDPs: $\mathcal{M}_{(k+1)}$ \jj{and} $\widehat{\mathcal{M}}_{(k+1)}$. 

We \jj{will} first come up with \jj{an} upper bound \jj{on} the difference between $ Q^{\bigast,(k+1)}_{h}(s,a)$ and $\widehat{Q}^{\bigast,(k+1)}_{h}(s,a)$ for any $(s,a) \in \mathcal{S} \times \mathcal{A}$. The difference can be separated into three terms as follows:

\begin{align*}
    Q^{\bigast,(k+1)}_{h}(s,a) - \widehat{Q}^{\bigast,(k+1)}_{h}(s,a) \leq& \underbrace{\vert\vert Q^{\bigast,(k+1)}_{h}(s,a) - Q^{\bigast,(k+1)}_{\infty}(s,a) \vert\vert_{\infty}}_{\circled{1.1}} \\
    &+ \underbrace{\left( Q^{\bigast,(k+1)}_{\infty}(s,a) - \widehat{Q}^{\bigast,(k+1)}_{\infty}(s,a) \right)}_{\circled{1.2}}\\
    & + \underbrace{\vert\vert \widehat{Q}^{\bigast,(k+1)}_{h}(s,a) - \widehat{Q}^{\bigast,(k+1)}_{\infty}(s,a) \vert\vert_{\infty}}_{\circled{1.3}} 
\end{align*}

\textbf{1.1. Term\jj{s} $\circled{1.1}$ and $\circled{1.3}$. }

First, the term $\circled{1.1}$ can be bounded as follows\jj{:}
\begin{align*}
    \circled{1.1} &= \left|\left|\mathbb{E}_{\mathcal{M}_{(k+1)},\pi^\bigast}\Big[ \sum_{i=0}^{H-h-1} \gamma^{i} r_{i+h}^{(k+1)} - \sum_{i=0}^{\infty}\gamma^i r_i^{(k+1)}~|~s^{(k+1)}_h=s, a^{(k+1)}_h = a \Big]  \right|\right|_{\infty} \\ 
    & \leq \left|\sum_{i=H-h}^{\infty} \gamma^i r_{\text{max}} \right| \\
    & = \frac{\gamma^{H-h}}{1-\gamma} r_{\text{max}}
\end{align*}
Through a similar process, we can also obtain the upper bound\jj{:} $\circled{1.3} \leq \gamma^{H-h}/(1-\gamma) \hat{r}_{\text{max}}$. 

\textbf{1.2. Term $\circled{1.2}$. }

\jj{An} upper bound \jj{on} the term $\circled{1.2}$ can be \jj{obtained by} utilizing $\bar{\iota}^{(k+1)}_\infty(s,a)$ (Def \eqref{def:optimalModelPredictionError}). Then, the Q-function gap between $Q^{\bigast,(k+1)}_{\infty}$ \jj{and} $\widehat{Q}^{\bigast,(k+1)}_{\infty}$ can be represented using the \jj{Bellman} equation as follows\jj{:}
\begin{align}
    \circled{1.2} &= \left(Q^{\bigast,(k+1)}_{\infty} - \widehat{Q}^{\bigast,(k+1)}_{\infty}\right) (s,a) \label{eqn:up1_0}\\
    &=  \big(R_{(k+1)} + \gamma P_{(k+1)} V_{\infty}^{\bigast,(k+1)}\big) (s,a) - \widehat{Q}^{\bigast,(k+1)}_{\infty}(s,a) \label{eqn:up1_1} \\
    &=  \big( R_{(k+1)} + \gamma P_{(k+1)} \widehat{V}_{\infty}^{\bigast,(k+1)} - \widehat{Q}^{\bigast,(k+1)}_{\infty}\big)(s,a) + \gamma P_{(k+1)} \left(  V_{\infty}^{\bigast,(k+1)} - \widehat{ V}_{\infty}^{\bigast,(k+1)}\right)(s,a) \nonumber \\ 
    & \leq \bar{\iota}^{k+1}_\infty (s,a)+ \gamma  P_{(k+1)} \left(  V_{\infty}^{\bigast,(k+1)} - \widehat{ V}_{\infty}^{\bigast,(k+1)}\right)(s,a) \nonumber \\
    &=  \bar{\iota}^{k+1}_h(s,a)  + \gamma  P_{(k+1)} \left( \langle Q_{\infty}^{\bigast,(k+1)} , \pi^{\bigast,(k+1)} \rangle_{\mathcal{A}} -  \langle \widehat{Q}_{\infty}^{\bigast,(k+1)} , \widehat{\pi}^{\bigast,(k+1)} \rangle_{\mathcal{A}} \right)(s,a)\label{eqn:up1_2}\\ 
    &=  \bar{\iota}^{k+1}_\infty (s,a) + \gamma P_{(k+1)} \big( \langle Q_{\infty}^{\bigast,(k+1)} - \widehat{Q}_{\infty}^{\bigast,(k+1)} , \pi^{\bigast,(k+1)} \rangle_{\mathcal{A}} \nonumber \\
    &+ \langle \widehat{Q}_{\infty}^{\bigast,(k+1)} , \pi^{\bigast,(k+1)} - \widehat{\pi}^{\bigast,(k+1)} \rangle_{\mathcal{A}} \big)(s,a) \nonumber \\
    &\leq   \bar{\iota}^{k+1}_\infty(s,a) + \gamma P_{(k+1)} \left( \langle  Q_{\infty}^{\bigast,(k+1)} - \widehat{Q}_{\infty}^{\bigast,(k+1)}, \pi^{\bigast,(k+1)} \rangle_{\mathcal{A}} \right)(s,a) \label{eqn:up1_3}
\end{align}
where \eqref{eqn:up1_1} and \eqref{eqn:up1_2} hold by the definition of \jj{Bellman} equation (\eqref{eqn:bellmanOptimal_step} and \eqref{eqn:futurebellmanOptimal_step}). Equation \eqref{eqn:up1_3} holds by $\langle \widehat{Q}_{\infty}^{\bigast,(k+1)} , \pi^{\bigast,(k+1)} - \widehat{\pi}^{\bigast,(k+1)} \rangle_{\mathcal{A}} (s,a) \leq 0$ since $\widehat{\pi}^{\bigast,(k+1)}$ is the optimal policy of $\widehat{Q}^{\bigast,(k+1)}_{\infty}$.
We now define the matrix operator $(\mathbb{P}\circ \pi)(s,a) : \mathbb{R}^{|\mathcal{S}||\mathcal{A}|} \rightarrow \mathbb{R}^{|\mathcal{S}||\mathcal{A}|}$ as the transition matrix \jj{that captures how the} state\jj{-}action pair transition\jj{s} from $(s,a)$ to $(s^\prime,a^\prime)$ when following the policy $\pi$ in an environment with \jj{the} transition probability $\mathbb{P}$. Also, define the one-vector $\mathbbm{1}_{(s,a)} \in \mathbb{R}^{|\mathcal{S}|\mathcal{A}|}$ \jj{such} that the $(s,a)^{\text{th}}$ entity is one and \jj{the remaining entries are} zero. Then\jj{,} the equation \eqref{eqn:up1_0} \jj{becomes} the same as the $(s,a)^{\text{th}}$ entity of the vector $\mathbbm{1}_{(s,a)} \cdot (Q_{\infty}^{\bigast,(k+1)} - \widehat{Q}_{\infty}^{\bigast,(k+1)})(s,a)$. Also, the \jj{right-hand side} of equation \eqref{eqn:up1_3} can be represented as
\begin{align*}
    P_{(k+1)} \left( \langle  Q_{\infty}^{\bigast,(k+1)} - \widehat{Q}_{\infty}^{\bigast,(k+1)}, \pi^{\bigast,(k+1)} \rangle_{\mathcal{A}} \right)(s,a) &= (P_{(k+1)} \circ \pi^{\bigast,(k+1)}) \\
    &\cdot \left(\mathbbm{1}_{(s,a)} \cdot (Q_{\infty}^{\bigast,(k+1)} - \widehat{Q}_{\infty}^{\bigast,(k+1)})\right)(s,a)  \\
    &= (\mathbb{P}^{k+1}_{\pi^{\bigast}}) \left(\mathbbm{1}_{(s,a)} \cdot(Q_{\infty}^{\bigast,(k+1)} - \widehat{Q}_{\infty}^{\bigast,(k+1)}) \right) (s,a)
\end{align*}
where we denote $P_{(k+1)} \circ \pi^{\bigast,(k+1)} \coloneqq \mathbb{P}^{(k+1)}_{\pi^{\bigast}}$ for \jj{notational} simplicity.

Then, we can reformulate the inequality (between \eqref{eqn:up1_0} and \eqref{eqn:up1_3}) into a vector form \jj{which} holds element-wise for all $s,a$:  
\begin{align*}
    \left(\mathbbm{1}_{(s,a)} \cdot \left(Q^{\bigast,(k+1)}_{\infty} - \widehat{Q}^{\bigast,(k+1)}_{\infty}\right) \right)(s,a) \leq& \mathbbm{1}_{(s,a)} \cdot 
 \bar{\iota}^{(k+1)}_\infty (s,a) \\
 &+ \gamma(\mathbb{P}^{(k+1)}_{\pi^{\bigast}}) \left( \mathbbm{1}_{(s,a)} \cdot(Q_{\infty}^{\bigast,(k+1)} - \widehat{Q}_{\infty}^{\bigast,(k+1)}) \right) (s,a)
\end{align*}
Then, rearranging the above inequality yields \jj{that}
\begin{align}
    \mathbbm{1}_{(s,a)} \cdot \left(Q^{\bigast,(k+1)}_{\infty} - \widehat{Q}^{\bigast,(k+1)}_{\infty} \right) (s,a) 
    &\leq (\mathbb{I}- \gamma  \mathbb{P}^{k+1}_{\pi^{\bigast}})^{-1} \mathbbm{1}_{(s,a)} \cdot \bar{\iota}^{k+1}_\infty  (s,a) \label{eqn:up1_4} \\
    &= \frac{1}{1-\gamma} \bar{\iota}^{k+1}_\infty  (s,a) \nonumber
\end{align}
Now, note that $(\mathbb{I}- \gamma  \mathbb{P}^{(k+1)}_{\pi^{\bigast}})^{-1}$ can be expanded with \jj{an} infinite summation of \jj{the} matrix operator $P_{(k+1)} \circ \pi^{\bigast,(k+1)}$ as $ (\mathbb{I}- \gamma  \mathbb{P}^{(k+1)}_{\pi^{\bigast}})^{-1} = \mathbb{I} + \gamma  \mathbb{P}^{(k+1)}_{\pi^{\bigast}} + (\gamma  \mathbb{P}^{(k+1)}_{\pi^{\bigast}})^2 + ...$\jj{s. Since}, $\mathbbm{1}_{(s,a)}$ can be viewed as the Dirac delta state\jj{-}action distribution that always yields $(s,a)$, \jj{it holds that} $\nu_{(s,a)}^{\pi^{\bigast,(k+1)},(k+1)} = (\mathbb{I}- \gamma  \mathbb{P}^{(k+1)}_{\pi^{\bigast}})^{-1} \mathbbm{1}_{(s,a)}$, where $\nu$ is the unnormalized occupancy measure of $(s,a)$ \jj{in light of} Definition \eqref{def:Nu}.
Then taking the $l_1$ norm over the inequality \eqref{eqn:up1_4} yields the \jj{that}
\begin{align}
     \left| \left| \mathbbm{1}_{(s,a)} \cdot \left(Q^{\bigast,(k+1)}_{\infty} - \widehat{Q}^{\bigast,(k+1)}_{\infty}\right) (s,a) \right| \right|_1
    &\leq  \left| \left| (\mathbb{I}- \gamma  \mathbb{P}^{k+1}_{\pi^{\bigast}})^{-1} \mathbbm{1}_{(s,a)} \cdot \bar{\iota}^{k+1}_\infty  (s,a) \right| \right|_1  \nonumber\\
    &=  \left| \left| (\mathbb{I}- \gamma  \mathbb{P}^{k+1}_{\pi^{\bigast}})^{-1} \mathbbm{1}_{(s,a)} \right| \right|_1  \cdot \left| \bar{\iota}^{k+1}_\infty  (s,a) \right| \nonumber \\
    &= \frac{1}{1-\gamma} \left| \bar{\iota}^{k+1}_\infty (s,a) \right| \label{eqn:up1_5}
\end{align}
Equation \eqref{eqn:up1_5} holds since $\nu_{(s,a)}^{\pi^{\bigast,(k+1)},(k+1)}$ is \jj{an} unnormalized probability distribution.

Then, for \jj{every} $(s,a,h) \in \mathcal{S} \times \mathcal{A} \times [H]$, \jj{it follows from} combining the terms $\circled{1.1},\circled{1.2}$ and $\circled{1.3}$ \jj{that}
\begin{equation}
    Q^{\bigast,(k+1)}_{h}(s,a) - \widehat{Q}^{\bigast,(k+1)}_{h}(s,a)  \leq \frac{\gamma^{H-h}}{1-\gamma}(r_{\text{max}} + \hat{r}_{\text{max}}) + \frac{1}{1-\gamma} \left| \bar{\iota}^{(k+1)}_\infty  (s,a) \right| \nonumber
\end{equation}

\textbf{1.3. Combining the terms $\circled{1.1},\circled{1.2}$ and $\circled{1.3}$. }

Finally, \jj{an} upper bound \jj{on} $\circled{1}$ is \jj{derived as}
\begin{align}
    \circled{1}&=\sum_{k=1}^{K-1} \left(V^{\pi^{\bigast,(k+1)},(k+1)}(s_0)-\widehat{V}^{\widehat{\pi}^{\bigast,(k+1)},(k+1)}(s_0) \right) \nonumber \\
    & \leq \sum_{k=1}^{K-1} \left\vert \left\vert Q^{\bigast,(k+1)} - \widehat{Q}^{\bigast,(k+1)} \right\vert \right\vert_{\infty} \nonumber \\
    & =\sum_{k=1}^{K-1} \cdot \frac{\gamma^{H}}{1-\gamma}(r_{\text{max}} + \hat{r}_{\text{max}}) + \frac{1}{1-\gamma} \sum_{k=1}^{K-1} || \bar{\iota}_{\infty}^{k+1} ||_{\infty} \nonumber \\ 
    & = (K-1)\cdot \frac{\gamma^{H}}{1-\gamma}(r_{\text{max}} + \hat{r}_{\text{max}}) + \frac{1}{1-\gamma}  \bar{\iota}_{\infty}^{K} \label{eqn:up1_final}
\end{align}
where we have defined $\bar{\iota}_{\infty}^{K} \coloneqq \sum_{k=1}^{K-1} \left| \left| \bar{\iota}_{\infty}^{(k+1)} \right| \right|_\infty$ in \jj{Theorem} \ref{theorem1}.

\textbf{2. Upper bound \jj{on} \circled{2}. }

The gap between $\widehat{V}^{\bigast,(k+1)}(s_0)$ and $\widehat{V}^{\widehat{\pi}^{(k+1)},(k+1)}(s_0)$ comes from \jj{the} optimization error between \jj{the} optimal policy $\widehat{\pi}^{\bigast,(k+1)}$ and \jj{the} policy $\widehat{\pi}^{(k+1)}$\jj{, which} are both driven from \jj{the} same MDP $\widehat{\mathcal{M}}_{(k+1)}$ . We also separate \jj{this} gap into three terms\jj{:}
\begin{align}
    \circled{2}\text{'s}~(k)^{\text{th}}~\text{term} &=  \widehat{V}^{\bigast,(k+1)}(s_0) - \widehat{V}^{\widehat{\pi}^{(k+1)},(k+1)}(s_0) \nonumber \\
    &= \left(  \widehat{V}^{\bigast,(k+1)}(s_0) -  \widehat{V}_{\infty}^{\bigast,(k+1)}(s_0) \right)  + \left( \widehat{V}_{\infty}^{\bigast,(k+1)}(s_0)-\widehat{V}_{\infty}^{\widehat{\pi}^{(k+1)},(k+1)}(s_0) \right) + \nonumber \\
    &+ \left(   \widehat{V}_{\infty}^{\widehat{\pi}^{(k+1)},(k+1)}(s_0) - \widehat{V}^{\widehat{\pi}^{(k+1)},(k+1)}(s_0) \right) \label{eqn:lemmaeqn6} \\ 
    & \leq \underbrace{\left( \widehat{V}_{\infty}^{\bigast,(k+1)}(s_0)-\widehat{V}_{\infty}^{\widehat{\pi}^{(k+1)},(k+1)}(s_0) \right)}_{\circled{2.1}} + \frac{2\gamma^{H} \widehat{r}_{\text{max}}}{1-\gamma} \label{eqn:lemmaeqn7}
\end{align}
where \jj{the subscript} $\infty$ in the \jj{notations} $\widehat{V}_{\infty}^{\pi,(k+1)}(s_0)$ \jj{and} $\widehat{V}_{\infty,\tau}^{\pi,(k+1)}(s_0)$ \jj{indicate the} forecasted value function \jj{and the} forecasted entropy-regularized value function when $H=\infty$ (infinite horizon MDPs). Equation (\ref{eqn:lemmaeqn6}) holds since $\widehat{V}^{\pi,(k+1)}(s) - \widehat{V}_{\infty}^{\pi,(k+1)}(s) = \mathbb{E}_{\widehat{\mathcal{M}}_{(k+1)},\pi}\left[ \sum_{h=H}^{\infty} \gamma^{h} \widehat{r}_{h}^{(k+1)}~\Big\vert~ s = \widehat{s}^{(k+1)}_{0} \right] \leq \frac{\gamma^H}{1-\gamma}\widehat{r}_{\text{max}}$ holds for \jj{all} $\pi \in \Pi$.

\textbf{2.1. Upper bound \jj{on} \circled{2} - NPG without entropy regularization (\texttt{Alg}). }
The term $\circled{2.1}$ in \eqref{eqn:lemmaeqn7} can be bounded \jj{as} 
\begin{align}
    \circled{2.1} =& \widehat{V}_{\infty}^{\bigast,(k+1)}(s_0)-\widehat{V}_{\infty}^{\widehat{\pi}^{(k+1)},(k+1)}(s_0) \nonumber \\
     \leq& \frac{\log{|\mathcal{A}|}}{\eta G} + \frac{1}{(1-\gamma)^2 G} \label{eqn:withoutentropy}
\end{align}
\jj{due to} Theorem 5.3 \jj{in} \cite{agarwal2021theory}. Now, combining \ref{eqn:lemmaeqn7} and \ref{eqn:withoutentropy} offers \jj{an} upper bound of the term $\circled{2}$'s $(k)^{\text{th}}$ term as follows\jj{:}
\begin{align*}
    \circled{2}\text{'s}~(k)^{\text{th}}~\text{term} &= \widehat{V}^{\widehat{\pi}^{\bigast,(k+1)},(k+1)}(s_0)-\widehat{V}^{\widehat{\pi}^{(k+1)},(k+1)}(s_0)  \\ 
    &\leq \frac{1}{(1-\gamma)^2 G}+ \frac{\log{|\mathcal{A}|}}{\eta G}  + \frac{2\gamma^{H} \widehat{r}_{\text{max}}}{1-\gamma} 
\end{align*}
\jj{Hence,}
\begin{align}
    \circled{2} &= \sum_{k=1}^{K-1}\left( \widehat{V}^{\widehat{\pi}^{\bigast,(k+1)},(k+1)}(s_0)-\widehat{V}^{\widehat{\pi}^{(k+1)},(k+1)}(s_0) \right) \nonumber \\ 
    &\leq (K-1) \left( \frac{1}{(1-\gamma)^2 G}+ \frac{\log{|\mathcal{A}|}}{\eta G}  + \frac{2\gamma^{H} \widehat{r}_{\text{max}}}{1-\gamma} \right) \label{eqn:up2_withoutregularzation}
\end{align}

\textbf{2.2. Upper bound \jj{on} \circled{2} - NPG with entropy regularization ($\texttt{Alg}_\tau$). }

\jj{The} term $\circled{2.1}$ in \eqref{eqn:lemmaeqn7} can be further bounded as follows\jj{:} 
\begin{align}
    \circled{2.1} =& \widehat{V}_{\infty}^{\bigast,(k+1)}(s_0)-\widehat{V}_{\infty}^{\widehat{\pi}^{(k+1)},(k+1)}(s_0) \nonumber \\
    = &  \left( \widehat{V}_{\infty}^{\bigast,(k+1)}(s_0)-\widehat{V}_{\infty,\tau}^{\bigast,(k+1)}(s_0) \right) + \left( \widehat{V}_{\infty,\tau}^{\bigast,(k+1)}(s_0)-\widehat{V}_{\infty,\tau}^{\widehat{\pi}^{(k+1)},(k+1)}(s_0) \right) \nonumber \\
     + & \left( \widehat{V}_{\infty,\tau}^{\widehat{\pi}^{(k+1)},(k+1)}(s_0)-\widehat{V}_{\infty}^{\widehat{\pi}^{(k+1)},(k+1)}(s_0) \right) \nonumber \\
  \leq & \left\vert \left\vert \widehat{V}_{\infty}^{\bigast,(k+1)}(s_0)-\widehat{V}_{\infty,\tau}^{\bigast,(k+1)}(s_0) \right\vert \right\vert_{\infty} + \left\vert \left\vert \widehat{V}_{\infty,\tau}^{\bigast,(k+1)}(s_0)-\widehat{V}_{\infty,\tau}^{\widehat{\pi}^{(k+1)},(k+1)}(s_0) \right\vert \right\vert_{\infty} \nonumber \\
    + & \left\vert \left\vert \widehat{V}_{\infty,\tau}^{\widehat{\pi}^{(k+1)},(k+1)}(s_0)-\widehat{V}_{\infty}^{\widehat{\pi}^{(k+1)},(k+1)}(s_0) \right\vert \right\vert_{\infty} \nonumber \\
    \leq & \underbrace{\left\vert \left\vert \widehat{V}_{\infty,\tau}^{\bigast,(k+1)}(s_0)-\widehat{V}_{\infty,\tau}^{\widehat{\pi}^{(k+1)},(k+1)}(s_0) \right\vert \right\vert_{\infty}}_{\circled{2.2}} + \frac{2\tau \log \vert \mathcal{A} \vert}{1-\gamma}  \label{eqn:lemmaeqn8}
\end{align}
where \eqref{eqn:lemmaeqn8} holds since $\left\vert \left\vert \widehat{V}_{\infty}^{\pi,(k+1)}(s_0)-\widehat{V}_{\infty,\tau}^{\pi,(k+1)}(s_0) \right\vert \right\vert_{\infty} = \tau \max_s \left\vert \mathcal{H}(s,\pi) \right\vert \leq \tau  \frac{\log \left\vert \mathcal{A} \right\vert}{1-\gamma}$ holds for \jj{all} $\pi$.

\jj{We now bound} the term $\circled{2.2}$ \jj{in} (\ref{eqn:lemmaeqn8}). With the policy-update rule of \texttt{ProST-T} (Algorithm \ref{alg:FT-MBPO} in Appendix \ref{section:FT-MBPO}), suppose \jj{that} for \jj{a} given $g \in [\Delta_\pi]$, we \jj{have obtained an} \textit{inexact} soft $Q$-function value of the policy $\widehat{\pi}_{(g)}$ as $\widetilde{Q}^{\widehat{\pi}_{(g)}}_{\tau}$\jj{, where} $\widehat{Q}^{\widehat{\pi}_{(g)}}_{\tau}$ denotes \jj{an} \textit{exact} soft forecated $Q$-function value \jj{and} $g$ is the iteration index. The approximation gap $|\widetilde{Q}^{\widehat{\pi}_{(g)}}_{\tau} - \widehat{Q}^{\widehat{\pi}_{(g)}}_{\tau} |$ \jj{results from} computing $Q$ using \jj{a finite number of samples}. For \jj{a} hyperparameter $\delta$, \jj{let} the maximum of \jj{the} approximation gap over $(s,a)$ is smaller than $\delta$, namely $||\widetilde{Q}_{\tau}^{\widehat{\pi}_{(g)}}- \widehat{Q}_{\tau}^{\widehat{\pi}_{(g)}} ||_\infty \leq \delta$ holds. Then, for iteration $g=1,2,..,\Delta_\pi$, the policy-update rule of \texttt{ProST-T} can be written \jj{as} 
\begin{align*}
    &\widehat{\pi}_{(g+1)}(\cdot|s) =\frac{1}{Z_{(g)}} \cdot  \left( \widehat{\pi}_{(g)} (\cdot|s) \right)^{1-\frac{\eta\tau}{1-\gamma}} \exp{\left( \frac{\eta\widetilde{Q}_{\tau}^{\widehat{\pi}_{(g)}}(s,a)}{1-\gamma} \right)} \\ 
    & \text{where}~~ ||\widetilde{Q}_{\tau}^{\widehat{\pi}_{(g)}}(s,a)- \widehat{Q}_{\tau}^{\widehat{\pi}_{(g)}}(s,a) ||_\infty \leq \delta~~\text{for}~~ \forall (s,a) \in \mathcal{S} \times \mathcal{A}
\end{align*}
 where $Z_{(g)}(s) = \sum_{a\in \mathcal{A}} \left( \widehat{\pi}_{(g)}(a|s) \right)^{1-\frac{\eta\tau}{1-\gamma}} \exp{\left( (\eta\widehat{Q}_{\tau}^{\widehat{\pi}_{(g)}}(s,a)) / (1-\gamma) \right)}$\jj{.}

\jj{In light of} Theorem 2 \jj{in} \cite{cen2022fast}, when the learning rate \jj{is such that} $0 \leq \eta \leq (1-\gamma)/\tau$, then the approximate entropy-regularized NPG method satisfies the linear convergence theorem for \jj{every} $g\in [\Delta_\pi]$:
\begin{equation}
    \label{eqn:1}
    \vert\vert \widehat{Q}^{\bigast,(k+1)}_{\tau} - \widehat{Q}^{\widehat{\pi}_{(g)}}_{\tau} \vert\vert_{\infty} \leq \gamma \left[ (1-\eta\tau)^{g-1} C_1 + C_2\right]
\end{equation}
\begin{equation}
    \label{eqn:2}
    \vert\vert \log \widehat{\pi}^{\bigast,(k+1)} - \log \widehat{\pi}_{(g)} \vert\vert_{\infty} \leq 2\tau^{-1} \left[ (1-\eta\tau)^{g-1} C_1 + C_2 \right]
\end{equation}
where
\begin{align}
    C_{1} & := ||\widehat{Q}^{\bigast,(k+1)}_{\tau} - \widehat{Q}^{\widehat{\pi}_{(0)}}_{\tau}||_{\infty} + 2\tau \left( 1- \frac{\eta\tau}{1-\gamma}\right) \vert\vert \log \widehat{\pi}^{\bigast,(k+1)} - \log \widehat{\pi}_{(0)} \vert\vert_{\infty} \nonumber \\
     &= ||\widehat{Q}^{\bigast,(k+1)}_{\tau} - \widehat{Q}^{\pi^{(k)}}_{\tau}||_{\infty} + 2\tau \left( 1- \frac{\eta\tau}{1-\gamma}\right) \vert\vert \log \widehat{\pi}^{\bigast,(k+1)} - \log \widehat{\pi}^{(k)} \vert\vert_{\infty} \label{eqn:3} \\ 
    C_{2} & := \frac{2\delta}{1-\gamma} \left( 1+\frac{\gamma}{\eta\tau} \right)     \label{eqn:4}
\end{align}
The equation (\ref{eqn:3}) holds since the policy that the agent executes at the wall-clock time $\mathfrak{t}_k$ (episode $k$), i.e.\jj{,} $\pi^{(k)}$, is same as the initial policy of the policy iteration, i.e.\jj{,} $\widehat{\pi}_{(0)}$\jj{,} at the wall-clock time $\mathfrak{t}_k$. Also, the policy that the agent executes at the wall-clock time $\mathfrak{t}_{k+1}$, i.e.\jj{,} $\widehat{\pi}^{(k+1)}$, is same as the policy after $\Delta_\pi$ steps of \jj{the} soft policy iteration, i.e.\jj{,} $\widehat{\pi}_{(\Delta_\pi)}$ at the wall-clock time $\mathfrak{t}_{k+1}$.

\jj{Now}, the term $\circled{2.2}$ can be bounded as follows\jj{:}

\begin{align}
    \circled{2.2}& = ||\widehat{V}^{\bigast,(k+1)}_{\tau} - \widehat{V}^{\widehat{\pi}^{(k+1)}}_{\tau}||_{\infty} \nonumber \\
    &= ||\widehat{V}^{\bigast,(k+1)}_{\tau} - \widehat{V}^{\widehat{\pi}_{(\Delta_\pi)}}_{\tau}||_{\infty} \nonumber \\
    & \leq ||\widehat{Q}^{\bigast,(k+1)}_{\tau} - \widehat{Q}^{\widehat{\pi}_{(\Delta_\pi)}}_{\tau}||_{\infty} + \tau \vert\vert \log \widehat{\pi}^{\bigast,(k+1)} - \log \widehat{\pi}_{(g)} \vert\vert_{\infty} \nonumber \\
    & \leq \left( \gamma + 2 \right) \left[ (1-\eta\tau)^{\Delta_\pi-1} C_1 + C_2\right]
    \label{eqn:lemmaeqn10}
\end{align}

Combining (\ref{eqn:lemmaeqn7},\ref{eqn:lemmaeqn8} and \ref{eqn:lemmaeqn10}) offers \jj{an} upper bound \jj{on} the term $\circled{2}$'s $k^{(th)}$ term as follows, 
\begin{align}
    \circled{2}\text{'s}~(k)^{th}~\text{term} &= \widehat{V}^{\widehat{\pi}^{\bigast,(k+1)},(k+1)}(s_0)-\widehat{V}^{\widehat{\pi}^{(k+1)},(k+1)}(s_0)  \nonumber \\ 
    &\leq \left( \gamma + 2 \right) \left[ (1-\eta\tau)^{\Delta_\pi-1} C_1 + C_2\right] +  \frac{2\gamma^{H} \widehat{r}_{\text{max}}}{1-\gamma} + \frac{2\tau \log \vert \mathcal{A} \vert}{1-\gamma}
    \label{eqn:bound2}
\end{align}
\jj{Hence,}
\begin{align}
    \circled{2} &= \sum_{k=1}^{K-1}\left( \widehat{V}^{\widehat{\pi}^{\bigast,(k+1)},(k+1)}(s_0)-\widehat{V}^{\widehat{\pi}^{(k+1)},(k+1)}(s_0) \right) \nonumber \\ 
    &\leq (K-1) \left( \left( \gamma + 2 \right) \left[ (1-\eta\tau)^{\Delta_\pi-1} C_1 + C_2\right] +  \frac{2\gamma^{H} \widehat{r}_{\text{max}}}{1-\gamma} + \frac{2\tau \log \vert \mathcal{A} \vert}{1-\gamma} \right)
    \label{eqn:up2_final}
\end{align}
where \eqref{eqn:bound2} and \eqref{eqn:up2_final} \jj{hold} when $0 \leq \eta \leq (1-\gamma)/\tau$

\textbf{3. Upper bound \jj{on} \circled{3}. }

\jj{By recalling} Definition \eqref{def:estimatedModelPredictionError}, note that $\iota^{(k+1)}_h(\widehat{s}^{(k+1)}_h,\widehat{a}^{(k+1)}_h)$ is \jj{an} \textit{empirical} estimated model prediction error, \jj{measuring} the gap between $\mathcal{M}_{(k+1)}$ and $\widehat{\mathcal{M}}_{(k+1)}$. Specifically, at episode $k$, the \texttt{ProST} algorithm creates the future MDP $\widehat{\mathcal{M}}_{(k+1)}$ and evaluates $\widehat{V}$ \jj{and} $\widehat{Q}$ using $\widehat{\pi}^{(k+1)}$\jj{. Subsequently} at episode $k+1$, the agent uses $\widehat{\pi}^{(k+1)}$ to rollout a trajectory $\{ s^{(k+1)}_0,a^{(k+1)}_0,s^{(k+1)}_1,a^{(k+1)}_1,...,s^{(k+1)}_{H-1},a^{(k+1)}_{H-1},s^{(k+1)}_{H}\}$. \jj{Based on this observation, one can write} 
\begin{align}
    \iota^{(k+1)}_h(s^{(k+1)}_h,a^{(k+1)}_h) =&  R_{(k+1)}(s^{(k+1)}_h,a^{(k+1)}_h) + \gamma (P_{(k+1)} \widehat{V}^{\widehat{\pi}^{(k+1)},(k+1)}_{h+1})(s^{(k+1)}_h,a^{(k+1)}_h) \nonumber\\
    -&\widehat{Q}^{\widehat{\pi}^{(k+1)},(k+1)}_{h}(s^{(k+1)}_h,a^{(k+1)}_h) \nonumber \\ 
    =&  R_{(k+1)}(s^{(k+1)}_h,a^{(k+1)}_h) + \gamma (P_{(k+1)} \widehat{V}^{\widehat{\pi}^{(k+1)},(k+1)}_{h+1})(s^{(k+1)}_h,a^{(k+1)}_h) \nonumber \\ 
    &- Q^{\widehat{\pi}^{(k+1)},(k+1)}_{h}(s^{(k+1)}_h,a^{(k+1)}_h) + Q^{\widehat{\pi}^{(k+1)},(k+1)}_{h}(s^{(k+1)}_h,a^{(k+1)}_h) \nonumber \\
    &- \widehat{Q}^{\widehat{\pi}^{(k+1)},(k+1)}_{h}(s^{(k+1)}_h,a^{(k+1)}_h) \nonumber \\ 
    =&  \gamma P_{(k+1)} (\widehat{V}^{\widehat{\pi}^{(k+1)},(k+1)}_{h+1}-V^{\widehat{\pi}^{(k+1)},(k+1)}_{h+1})(s^{(k+1)}_h,a^{(k+1)}_h) \nonumber\\ 
    &+ Q^{\widehat{\pi}^{(k+1)},(k+1)}_{h}(s^{(k+1)}_h,a^{(k+1)}_h) - \widehat{Q}^{\widehat{\pi}^{(k+1)},(k+1)}_{h}(s^{(k+1)}_h,a^{(k+1)}_h) \label{eqn:ddd}
\end{align}
Equation \eqref{eqn:ddd} holds \jj{due to} \eqref{eqn:futurebellman_step}. Now, \jj{we} define the operator $\widehat{\mathfrak{I}}^{(k+1)}$ for \jj{a} function $f:\mathcal{S}\times\mathcal{A} \rightarrow \mathbb{R}$ as follows\jj{:}
\begin{align*}
    (\widehat{\mathfrak{I}}^{(k+1)}f)(s) \coloneqq \langle f(s,\cdot), \widehat{\pi}^{(k+1)}(\cdot|s) \rangle_{\mathcal{A}}
\end{align*}

Recall that $\widehat{V}^{\widehat{\pi}^{(k+1)},(k+1)}_h(s) = \big< \widehat{Q}^{\widehat{\pi}^{(k+1)},(k+1)}_h,\widehat{\pi}^{(k+1)} \big>_\mathcal{A}$ and $V^{\widehat{\pi}^{(k+1)},(k+1)}_h(s) = \big< Q^{\widehat{\pi}^{(k+1)},(k+1)}_h,\widehat{\pi}^{(k+1)} \big>_\mathcal{A}$ \jj{in light of} \eqref{eqn:futurebellman_step} and \eqref{eqn:bellman_step}. Then, the gap between $\widehat{V}_{h+1}^{\widehat{\pi}^{(k+1)},(k+1)}(s^{(k+1)}_h)$ and $V^{\widehat{\pi}^{(k+1)},(k+1)}_{h+1}(s^{(k+1)}_h)$ can be expanded as
\begin{align*}
    & \widehat{V}_{h}^{\widehat{\pi}^{(k+1)},(k+1)}(s^{(k+1)}_h) - V^{\widehat{\pi}^{(k+1)},(k+1)}_{h}(s^{(k+1)}_h) \\
    & = \left( \widehat{\mathfrak{I}}^{(k+1)}\left(  \widehat{Q}_{h}^{\widehat{\pi}^{(k+1)},(k+1)} - Q^{\widehat{\pi}^{(k+1)},(k+1)}_{h} \right) \right)(s^{(k+1)}_h) \\ 
    & = \left( \widehat{\mathfrak{I}}^{(k+1)}\left(  \widehat{Q}_{h}^{\widehat{\pi}^{(k+1)},(k+1)} - Q^{\widehat{\pi}^{(k+1)},(k+1)}_{h} \right) \right)(s^{(k+1)}_h) - \iota^{(k+1)}_h(s^{(k+1)}_h,a^{(k+1)}_h)  \\ 
    &\text{  }+ \gamma P_{(k+1)} (\widehat{V}^{\widehat{\pi}^{(k+1)},(k+1)}_{h+1}-V^{\widehat{\pi}^{(k+1)},(k+1)}_{h+1})(s^{(k+1)}_h,a^{(k+1)}_h) \\  
    &\text{  }+ \left( Q^{\widehat{\pi}^{(k+1)},(k+1)}_{h} - \widehat{Q}^{\widehat{\pi}^{(k+1)},(k+1)}_{h}\right) (s^{(k+1)}_h,a^{(k+1)}_h) \\ 
\end{align*}
Now, we define two sequences $\{D^{(k+1)}_{h,1}\}$ \jj{and} $\{D^{(k+1)}_{h,1}\}$ \jj{, where} $(k,h) = (0,0), (0,1),...,(K-1,H)$. We define $D^{(k+1)}_{h,1}$ \jj{and} $D^{(k+1)}_{h,2}$ as
\begin{align*}
    D^{(k+1)}_{h,1} \coloneqq& \gamma^h \left( \widehat{\mathfrak{I}}^{(k+1)}\left(  \widehat{Q}_{h}^{\widehat{\pi}^{(k+1)},(k+1)} - Q^{\widehat{\pi}^{(k+1)},(k+1)}_{h} \right) \right)(s^{(k+1)}_h)  \\ 
    &- \gamma^h \left( \widehat{Q}^{\widehat{\pi}^{(k+1)},(k+1)}_{h} - Q^{\widehat{\pi}^{(k+1)},(k+1)}_{h} \right) (s^{(k+1)}_h,a^{(k+1)}_h) \\
    D^{(k+1)}_{h,2} \coloneqq& \gamma^{h+1} P_{(k+1)} (\widehat{V}^{\widehat{\pi}^{(k+1)},(k+1)}_{h+1}-V^{\widehat{\pi}^{(k+1)},(k+1)}_{h+1})(s^{(k+1)}_h,a^{(k+1)}_h) \\
    &- \gamma^{h+1}  \left( \widehat{V}^{\widehat{\pi}^{(k+1)},(k+1)}_{h+1}-V^{\widehat{\pi}^{(k+1)},(k+1)}_{h+1} \right) (s^{(k+1)}_{h+1})
\end{align*}

Therefore, we have the following recursive formula over $h$\jj{:}
\begin{align*}
    &\gamma^h\left(\widehat{V}_{h}^{\widehat{\pi}^{(k+1)},(k+1)} - V^{\widehat{\pi}^{(k+1)},(k+1)}_{h}\right)(s^{(k+1)}_h) \\
    &= D^{(k+1)}_{h,1} + D^{(k+1)}_{h,2} + \gamma^{h+1} \left( \widehat{V}^{\widehat{\pi}^{(k+1)},(k+1)}_{h+1}-V^{\widehat{\pi}^{(k+1)},(k+1)}_{h+1} \right)\left( s^{(k+1)}_{h+1}\right) - \gamma^h \iota^{(k+1)}_h(s^{(k+1)}_h,a^{(k+1)}_h)
\end{align*}
\jj{The} summation over $h=0,1,..,H-1$ yields that
\begin{align*}
     &\widehat{V}_{0}^{\widehat{\pi}^{(k+1)},(k+1)}(s^{(k+1)}_0) - V^{\widehat{\pi}^{(k+1)},(k+1)}_{0}(s^{(k+1)}_0) \\
    &= \sum_{h=0}^{H-1} \left(D^{(k+1)}_{h,1} + D^{(k+1)}_{h,2} \right) -  \sum_{h=0}^{H-1} \gamma^h \iota^{(k+1)}_h(s^{(k+1)}_h,a^{(k+1)}_h).
\end{align*}
Now, for every $(k,h) \in [K] \times [H]$, we define $\mathcal{F}^{(k)}_{h,1}$ as a $\sigma-$algebra generated by state-action sequences $\{(s^{\tau}_i,a^{\tau}_i)\}_{(\tau,i) \in [k-1]\times[H]} \cup \{ (s_i^k,a_i^k) \}_{i \in [h]}$  and define $\mathcal{F}^{(k)}_{h,2}$ as \jj{a} $\sigma$-algebra generated by $\{(s^{\tau}_i,a^{\tau}_i)\}_{(\tau,i) \in [k-1]\times[H]} \cup \{ (s_i^k,a_i^k) \}_{i \in [h]} \cup \{ s^{(k)}_{h+1}\}$. A filtration  $\{\mathcal{F}^{(k)}_{h,m}\}_{(k,h,m) \in [K] \times [H] \times [2]}$ is a sequence of $\sigma$- algebras in terms of the time index $t(k,h,m) = 2(k-1)H + 2h + m$ such that $\mathcal{F}^{(k)}_{h,m} \subset \mathcal{F}^{k^\prime}_{h^\prime,m^\prime}$ for  every $t(k,h,m) \leq t((k^\prime),h^\prime,m^\prime)$. The \jj{estimates} $\widehat{V}^{\pi,(k+1)}_h$ \jj{and} $\widehat{Q}^{\pi,(k+1)}_h$ are $\mathcal{F}^{(k+1)}_{1,1}$ measurable since they are forecasted from \jj{the} past $k$ historical trajectories. Now, since $D^{(k+1)}_{h,1} \in \mathcal{F}^{(k+1)}_{h,1}$ and $D^{(k+1)}_{h,2} \in \mathcal{F}^{(k+1)}_{h,2}$ \jj{hold,} $\mathbb{E}[D^{(k+1)}_{h,1} | \mathcal{F}^{(k+1)}_{h-1,2}]=0$ and $\mathbb{E}[D^{(k+1)}_{h,2} | \mathcal{F}^{(k+1)}_{h,1}]=0$. Notice that $t(k,0,2) = t(k-1,H,2)$ and $\mathcal{F}^{(k)}_{0,2}=\mathcal{F}^{(k-1)}_{H,2}$ for $\forall k \geq 2$. Therefore, \jj{one} can define \jj{a} martingale sequence adapted to the filtration $\{\mathcal{F}^{(k)}_{h,m}\}_{(k,h,m) \in [K] \times [H] \times [2]}$\jj{:}
\begin{align*}
    s^{(k+1)}_{h,j} = \sum_{k^\prime=1}^{k}\sum_{h^\prime=0}^{H-1} \left(D^{k^\prime}_{h^\prime,1} + D^{k^\prime}_{h^\prime,2} \right) + \sum_{h^\prime=0}^{h} \left(D^{(k+1)}_{h^\prime,1} + D^{(k+1)}_{h^\prime,2} \right) + \sum_{(k^\prime,h^\prime,j)\in[K]\times[H]\times[2]}D^{k^\prime}_{h^\prime,j}
\end{align*}
\jj{Let}
\begin{align*}
    \sum_{k=1}^{K-1}\sum_{h=0}^{H-1}  \left(  D^{(k+1)}_{h,1} + D^{(k+1)}_{h,2} \right) = S^{K-1}_{H,2}
\end{align*}
Since $\gamma^h\widehat{Q}_h^{\widehat{\pi}^{(k+1)},(k+1)},\gamma^{h+1}\widehat{V}_{h+1}^{\widehat{\pi}^{(k+1)},(k+1)} \in [0,\hat{r}_{\text{max}}/(1-\gamma)]$ and $\gamma^h Q_h^{\widehat{\pi}^{(k+1)},(k+1)},\gamma^{h+1} V_{h+1}^{\widehat{\pi}^{(k+1)},(k+1)} \in [0,r_{\text{max}}/(1-\gamma)]$\jj{, it holds that} $|D^{(k+1)}_{h,1}|,~|D^{(k+1)}_{h,s}| \leq (r_{\text{max}} \vee \hat{r}_{\text{max}})/(1-\gamma)$ for $\forall (k,h) \in [K-1] \times [H]$. Then, by the Azuma-Hoeffding inequlaity, the following inequality holds\jj{:}
\begin{equation*}
    \mathbb{P} \left( |S^{K-1}_{H,2}| \leq s \right) \geq 2 \exp \left( \frac{-s^2}{16 \left( \frac{r_{\text{max}} \vee \hat{r}_{\text{max}}}{1-\gamma}\right)^2 \cdot (K-1)H }\right)
\end{equation*}
For any $p \in (0,1)$, if we set $s = 4( r_{\text{max}} \vee \hat{r}_{\text{max}})(1-\gamma)^{-1}\sqrt{ (K-1)H \log(4/p)}$, then the inequality holds with probability at least $1-p/2$. \jj{The} term $\circled{3}$ can be bounded as 
\begin{align}
    \circled{3} &= \sum_{k=1}^{K-1} \sum_{h=0}^{H-1} \left(D^{(k+1)}_{h,1} + D^{(k+1)}_{h,2} \right) - \sum_{k=1}^{K-1} \sum_{h=0}^{H-1} \gamma^h \iota^{(k+1)}_h(s^{(k+1)}_h,a^{(k+1)}_h) \nonumber \\
    &\leq  \frac{4( r_{\text{max}} \vee \hat{r}_{\text{max}})}{1-\gamma} \sqrt{ (K-1)H \log(4/p)} - \iota^{K}_H \label{eqn:up3_final}
\end{align}

\textbf{4. Upper bound \jj{on} dynamic regret. }

\textbf{4.1. Upper bound \jj{on} dynamic regret - without entropy regularization. }

\jj{For without} entropy-regularized case, combining the equations \eqref{eqn:up1_final}, \eqref{eqn:up2_withoutregularzation} \jj{and} \eqref{eqn:up3_final} \jj{leads to the following} upper bound \jj{on the} dynamic regret \jj{for a} future policy $\{ \widehat{\pi}\}$ that holds with probability \jj{at least} $1-p/2$\jj{:}
\begin{align*}
    &\mathfrak{R} \left( \{ \widehat{\pi}^{(k+1)}\}_{1:K-1},K) \right) \nonumber \\
    &= \circled{1} +\circled{2} +\circled{3} \\ 
    &\leq (K-1)\cdot \frac{\gamma^{H}}{1-\gamma}(r_{\text{max}} + \hat{r}_{\text{max}}) + \frac{1}{1-\gamma}  \bar{\iota}_{\infty}^{K} \\
    & + (K-1) \left( \frac{1}{(1-\gamma)^2 \Delta_\pi}+ \frac{\log{|\mathcal{A}|}}{\eta \Delta_\pi}  + \frac{2\gamma^{H} \widehat{r}_{\text{max}}}{1-\gamma}  \right) \\
    & + \frac{4( r_{\text{max}} \vee \hat{r}_{\text{max}})}{1-\gamma} \sqrt{(K-1)H\log(4/p)} - \iota^{K}_H 
\end{align*}

Taking \jj{an} upper bound \jj{on} $r_{\text{max}}$ \jj{and} $\hat{r}_{\text{max}}$ using $(r_{\text{max}} \vee \hat{r}_{\text{max}})$ yields the following upper bound that holds with probability \jj{at least} $1-p/2$\jj{:} 
\begin{align*}
    &\mathfrak{R} \left( \{ \widehat{\pi}^{(k+1)}\}_{1:K-1},K) \right) \nonumber \\
    &\leq (K-1) \bigg( \frac{1}{(1-\gamma)^2 \Delta_\pi}+ \frac{\log{|\mathcal{A}|}}{\eta \Delta_\pi}  +  \frac{4\gamma^{H} (\widehat{r}_{\text{max}} \vee r_{\text{max}})}{1-\gamma} \\ 
    &+ \frac{4( r_{\text{max}} \vee \hat{r}_{\text{max}})}{1-\gamma} \sqrt{\frac{H\log(4/p)}{K-1}}\bigg) + \frac{1}{1-\gamma} \bar{\iota}_{\infty}^{K} - \iota^{K}_H 
\end{align*}

\textbf{4.2. Upper bound \jj{on} dynamic regret - with entropy regularization. }

\jj{For} the entropy-regularized case, combining the equations \eqref{eqn:up1_final}, \eqref{eqn:up2_final}, \eqref{eqn:up3_final} \jj{leads to the following} upper bound \jj{on the} dynamic regret \jj{for a} future policy $\{ \widehat{\pi}\}$ that holds with probability at least $1-p/2$:

\begin{align*}
    &\mathfrak{R} \left( \{ \widehat{\pi}^{(k+1)}\}_{1:K-1},K) \right) \nonumber \\
    &= \circled{1} +\circled{2} +\circled{3} \\ 
    &\leq (K-1)\cdot \frac{\gamma^{H}}{1-\gamma}(r_{\text{max}} + \hat{r}_{\text{max}}) + \frac{1}{1-\gamma}  \bar{\iota}_{\infty}^{K} \\
    & + (K-1) \left( \left( \gamma + 2 \right) \left[ (1-\eta\tau)^{\Delta_\pi-1} C_1 + C_2\right] +  \frac{2\gamma^{H} \widehat{r}_{\text{max}}}{1-\gamma} + \frac{2\tau \log \vert \mathcal{A} \vert}{1-\gamma} \right) \\
    & + \frac{4( r_{\text{max}} \vee \hat{r}_{\text{max}})}{1-\gamma} \sqrt{(K-1)H\log(4/p)} - \iota^{K}_H 
\end{align*}

\hyunin{Then, the following holds} with probability at least $1-p/2$: 
\begin{align*}
    &\mathfrak{R} \left( \{ \widehat{\pi}^{(k+1)}\}_{1:K-1},K) \right) \nonumber \\
    &\leq (K-1) \bigg( \left( \gamma + 2 \right) \left[ (1-\eta\tau)^{\Delta_\pi-1} C_1 + C_2\right] +  \frac{4\gamma^{H} (\widehat{r}_{\text{max}} \vee r_{\text{max}})}{1-\gamma} + \frac{2\tau \log \vert \mathcal{A} \vert}{1-\gamma} \\ 
    &+ \frac{4( r_{\text{max}} \vee \hat{r}_{\text{max}})}{1-\gamma} \sqrt{\frac{H\log(4/p)}{K-1}}\bigg) + \frac{1}{1-\gamma} \bar{\iota}_{\infty}^{K} - \iota^{K}_H 
\end{align*}

\textbf{4.3. Upper bound of Theorem \ref{theorem1}.}

Then, combining \textbf{4.1}, \textbf{4.2} provides the expression, 
 \begin{equation*}
     \mathfrak{R} \left( \{ \widehat{\pi}^{(k+1)}\}_{1:K-1},K) \right) \leq \mathfrak{R}_{I} + \mathfrak{R}_{II}
 \end{equation*}
\jj{where $\mathfrak{R}_{II}=\mathfrak{R}_{\texttt{Alg}}$ if we use $\texttt{Alg}$ as the baseline algorithm and $\mathfrak{R}_{II}=\texttt{Alg}_\tau$ if we use $\mathfrak{R}_{\texttt{Alg}_{\tau}}$ as the baseline algorithm:}
    \begin{align*}
        &\mathfrak{R}_{I} = \frac{1}{1-\gamma} \bar{\iota}_{\infty}^{K} - \iota^{(k)}_H + C_p \sqrt{K-1} \\
        &\mathfrak{R}_ {\texttt{Alg}} = C_{\texttt{Alg}}(\Delta_\pi) \cdot (K-1)  \\ 
        &\mathfrak{R}_{\texttt{Alg}_{\tau}} = C_{\texttt{Alg}_{\tau}}(\Delta_\pi) \cdot (K-1)
    \end{align*}
\jj{where the corresponding} constants are 
    \begin{align*}
        &C_p = \frac{4( r_{\text{max}} \vee \hat{r}_{\text{max}})}{1-\gamma} \sqrt{H\log(4/p)},~~ C_{\texttt{Alg}}(\Delta_\pi) = \left(\frac{1}{(1-\gamma)^2}+ \frac{\log{|\mathcal{A}|}}{\eta}\right)\cdot \frac{1}{\Delta_\pi}  +  \frac{4\gamma^{H} (\widehat{r}_{\text{max}} \vee r_{\text{max}})}{1-\gamma} \\
        &C_{\texttt{Alg}_{\tau}} (\Delta_\pi) = \left( \gamma + 2 \right) \left[ (1-\eta\tau)^{\Delta_\pi-1} C_1 + C_2\right] +  \frac{4\gamma^{H} (\widehat{r}_{\text{max}} \vee r_{\text{max}})}{1-\gamma} + \frac{2\tau \log \vert \mathcal{A} \vert}{1-\gamma} 
    \end{align*}

\end{proof} 

\begin{lemma}[\textbf{Conditions on $\Delta_\pi$ \jj{and} $H$ to guarantee the optimal threshold $2\epsilon$ of \circled{2} without entropy regularization}] \jj{We decompose the term \circled{2} as} 
\begin{align*}
     \circled{2}\text{'s}~(k)^{th}~\text{term} =   \underbrace{\frac{1}{(1-\gamma)^2 \Delta_\pi}+ \frac{\log{|\mathcal{A}|}}{\eta \Delta_\pi}}_{\circled{2}-\circled{a}~\leq~\epsilon}  + \underbrace{\frac{2\gamma^{H} \widehat{r}_{\text{max}}}{1-\gamma}}_{\circled{2}-\circled{b}~\leq~\epsilon} 
\end{align*}
\jj{To guarantee that the terms $\circled{2}-\circled{a}$ and $\circled{2}-\circled{b}$ are each less than or equal to $\epsilon$, it suffices to satisfy the following conditions for $\tau, \eta, \Delta_\pi$ and $H$:}
\begin{align*}
    \circled{2} - \circled{a} &:  \Delta_\pi \geq \left( \frac{1}{(1-\gamma)^2}+ \frac{\log{|\mathcal{A}|}} {\eta} \right) \cdot \frac{1}{\epsilon} \\
    \circled{2} -\circled{b} &: H \geq \frac{\log (\frac{1-\gamma}{2\widehat{r}_{\text{max}}}\epsilon)}{\log (\gamma)} \quad \text{or} \quad H \geq \frac{1}{1-\gamma} \log \left( \frac{2\widehat{r}_{\text{max}}}{(1-\gamma)\epsilon} \right)  
\end{align*}
\label{lemma:optHyp_without_entropy}
\end{lemma}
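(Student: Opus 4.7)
The plan is to note that the lemma is essentially a pair of elementary algebraic rearrangements, one for each of the two summands in \circled{2}, and to write out the two separate sufficient conditions on $\Delta_\pi$ and $H$ by inverting the corresponding inequality.

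First I would handle $\circled{2}-\circled{a}\leq\epsilon$. Factor $1/\Delta_\pi$ out of the sum and obtain
\begin{equation*}
\frac{1}{\Delta_\pi}\left(\frac{1}{(1-\gamma)^2}+\frac{\log|\mathcal{A}|}{\eta}\right)\leq\epsilon,
\end{equation*}
which, since the parenthetical factor is positive, is equivalent to the stated lower bound on $\Delta_\pi$. This step is essentially one line and introduces no genuine obstacle; it only requires $\eta>0$ and $\gamma\in[0,1)$, both of which are standing assumptions of the setup.

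Next I would handle $\circled{2}-\circled{b}\leq\epsilon$. Starting from $2\gamma^H\widehat{r}_{\max}/(1-\gamma)\leq\epsilon$ and isolating $\gamma^H$, take logarithms and divide by $\log\gamma<0$ (flipping the inequality) to obtain the first claimed form
\begin{equation*}
H\geq\frac{\log\bigl(\tfrac{1-\gamma}{2\widehat{r}_{\max}}\epsilon\bigr)}{\log\gamma}=\frac{\log\bigl(\tfrac{2\widehat{r}_{\max}}{(1-\gamma)\epsilon}\bigr)}{-\log\gamma}.
\end{equation*}
For the second (simpler and strictly stronger) form, I would invoke the standard inequality $-\log\gamma\geq 1-\gamma$ for $\gamma\in(0,1)$ (which follows from $\log(1+x)\leq x$ applied to $x=\gamma-1$). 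Replacing $-\log\gamma$ by $1-\gamma$ in the denominator only enlarges the bound, so the condition $H\geq \tfrac{1}{1-\gamma}\log\bigl(\tfrac{2\widehat{r}_{\max}}{(1-\gamma)\epsilon}\bigr)$ implies the exact inequality $\gamma^H\leq (1-\gamma)\epsilon/(2\widehat{r}_{\max})$.

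I do not anticipate any real obstacle; the only subtlety to be careful about is the sign flip when dividing by $\log\gamma$ and the use of the $-\log\gamma\geq 1-\gamma$ bound to justify that the second (cleaner) form of the condition on $H$ is a \emph{sufficient} but not necessary restatement of the first. The lemma follows by combining the two sufficient conditions, since both $\circled{2}-\circled{a}\leq\epsilon$ and $\circled{2}-\circled{b}\leq\epsilon$ hold simultaneously under the stated hypotheses, yielding $\circled{2}$'s $(k)$th term $\leq 2\epsilon$ as claimed in the heading of the lemma.
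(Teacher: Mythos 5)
Your proposal is correct and matches the paper's (largely implicit) justification: the paper states this lemma without a separate proof, and its remark for the entropy-regularized analogue attributes the second form of the $H$-condition to the first-order bound on $\log\gamma$, which is exactly your use of $-\log\gamma \geq 1-\gamma$ to show that the cleaner condition is sufficient. The only point worth making explicit is that replacing $-\log\gamma$ by $1-\gamma$ in the denominator enlarges the bound only when the numerator $\log\bigl(2\widehat{r}_{\max}/((1-\gamma)\epsilon)\bigr)$ is non-negative, i.e.\ $\epsilon \leq 2\widehat{r}_{\max}/(1-\gamma)$, which holds in the regime of interest.
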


\begin{lemma}[\textbf{Conditions on $\tau, \Delta_\pi, H$ to guarantee the optimal threshold $4\epsilon$ of \circled{2} with entropy regularization}] \jj{We decompose the term \circled{2} as} 
\begin{align*}
     \circled{2}\text{'s}~(k)^{th}~\text{term} =  \underbrace{\left( \gamma +2 \right) \left[ (1-\eta\tau)^{\Delta_\pi-1} C_1 \right]}_{\circled{2}-\circled{a}~\leq~\epsilon} +  \underbrace{\left( \gamma + 2 \right) C_2}_{\circled{2}-\circled{b}~\leq~\epsilon } +\underbrace{ \frac{2\gamma^{H} \widehat{r}_{\text{max}}}{1-\gamma}}_{\circled{2}-\circled{c}~\leq~\epsilon } + \underbrace{\frac{2\tau \log \vert \mathcal{A} \vert}{1-\gamma}}_{\circled{2}-\circled{d}~\leq ~\epsilon }
\end{align*}
\jj{To guarantee that the terms $\circled{2}-\circled{b},\circled{2}-\circled{c}$ and $\circled{2}-\circled{d}$ are each less than or equal to $\epsilon$, it suffices to satisfy the following conditions for $\tau, \eta, \Delta_\pi$ and $H$:}

\begin{align}
    \circled{2} - \circled{b} &: \delta \leq \frac{\epsilon}{(\gamma+2) \cdot \frac{2}{1-\gamma} \cdot (1+\frac{\gamma}{\eta \tau})} \label{eqn:DeltaBound}\\ 
    \circled{2} -\circled{c} &: H \geq \frac{\log (\frac{1-\gamma}{2\widehat{r}_{\text{max}}}\epsilon)}{\log (\gamma)} \quad \text{or} \quad H \geq \frac{1}{1-\gamma} \log \left( \frac{2\widehat{r}_{\text{max}}}{(1-\gamma)\epsilon} \right) \label{eqn:Hbound}\\ 
    \circled{2} - \circled{d} &: \tau \leq \frac{1-\gamma}{2 \log |\mathcal{A}|}\epsilon \label{eqn:TauBound}
\end{align}

and the term $\circled{2}- \circled{a}$ offers the lower bound of iteration $\Delta_\pi$ as follows.
\begin{align}
    \circled{2}-\circled{a} &: \Delta_\pi \geq \frac{\log \left( \frac{\epsilon}{C_1(\gamma+2)} \right)}{\log (1-\eta \tau)} +1 \quad \text{or} \quad \Delta_\pi \geq \frac{1}{\eta \tau} \log \left( \frac{C_1(\gamma+2)}{\epsilon} \right) +1  \label{eqn:Gbound}
\end{align}

The inequalities (\ref{eqn:Hbound}) \jj{and} (\ref{eqn:Gbound}) \jj{results} from applying \jj{the first-order} Taylor series on $\log(\gamma)$ \jj{and} $\log(1-\eta\tau)$ since $\gamma \in (0,1]$ \jj{and} $\eta \in (0,(1-\gamma)/\tau]$. The \jj{inequalities} (\ref{eqn:DeltaBound}) \jj{and} (\ref{eqn:Gbound}) implies that if the learning rate $\eta$ is fixed in the admissible range, then the iteration complexity scales inversely proportional \jj{to} $\tau$, and the upper bound \jj{on} $\delta$, which we will denote it as $\delta_{\text{max}}$, also scales proportional \jj{to} $\tau$.

Now, the \jj{best guaranteed convergence can be} achieved when $\eta^* = (1-\gamma)/\tau$ \jj{(associated with the value of}  $\eta$ that minimizes the equation (\ref{eqn:3}))\jj{, for which conditions} of hyperparameters $\Delta_{\pi,\eta^*}$ and $\delta_{\eta^*}$ are 

\begin{align*}
    \circled{2}-\circled{a} &: \Delta_{\pi,\eta^*} \geq \frac{1}{1-\gamma} \log \left( \frac{||\widehat{Q}^{\bigast,(k+1)}_{\tau} - \widehat{Q}^{\widehat{\pi}^{(0)}}_{\tau}||_{\infty} (\gamma+2)}{\epsilon} \right) +1 \\
    \circled{2}-\circled{b} &: \delta_{\eta^*} \leq \frac{\epsilon (1-\gamma)^2}{2(\gamma+2)}.
\end{align*}

When $\eta^* = (1-\gamma)/\tau$, the iteration complexity is now proportional to the effective horizon $1/(1-\gamma)$ modulo some log factor, \jj{where the} iteration complexity and $\delta_{\text{max}}$ \jj{are} now independent of the choice of the regularization parameter $\tau$\jj{.}

\label{lemma:optHyp_with_entropy}

\end{lemma}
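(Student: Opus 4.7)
The plan is to reduce the four-way decomposition of $\circled{2}$'s $(k)^{\text{th}}$ term to four independent scalar inequalities, each of the form (sub-term) $\leq \epsilon$, and then invert each one to obtain a sufficient condition on a single hyperparameter. Because the sub-terms depend on (almost) disjoint parameters --- $\circled{2}-\circled{a}$ on $\Delta_\pi$, $\circled{2}-\circled{b}$ on $\delta$, $\circled{2}-\circled{c}$ on $H$, and $\circled{2}-\circled{d}$ on $\tau$ --- the separation is clean, and summing the four bounds gives $\circled{2}\leq 4\epsilon$ as claimed.

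For the three ``easy'' sub-terms I would proceed as follows. For $\circled{2}-\circled{d}$, the bound $2\tau\log|\mathcal{A}|/(1-\gamma) \leq \epsilon$ is linear in $\tau$ and rearranges directly to \eqref{eqn:TauBound}. For $\circled{2}-\circled{c}$, solving $2\gamma^{H}\widehat{r}_{\text{max}}/(1-\gamma)\leq \epsilon$ for $H$ gives $H \geq \log\bigl((1-\gamma)\epsilon/(2\widehat{r}_{\text{max}})\bigr)/\log\gamma$; the cleaner alternative form in \eqref{eqn:Hbound} follows from the first-order bound $\log\gamma = \log(1-(1-\gamma))\leq -(1-\gamma)$, valid for $\gamma\in(0,1)$, which promotes $|\log\gamma|\geq 1-\gamma$ in the denominator. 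For $\circled{2}-\circled{b}$, I substitute the closed form $C_2=\tfrac{2\delta}{1-\gamma}\bigl(1+\gamma/(\eta\tau)\bigr)$ from \eqref{eqn:4}, impose $(\gamma+2)C_2\leq \epsilon$, and solve for $\delta$, which is exactly \eqref{eqn:DeltaBound}.

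The more involved sub-term is $\circled{2}-\circled{a}$, since it couples $\Delta_\pi$, $\eta$, and $\tau$. Under the standing hypothesis $\eta\leq (1-\gamma)/\tau$ we have $1-\eta\tau\in(0,1)$, so taking logarithms of $(\gamma+2)(1-\eta\tau)^{\Delta_\pi-1}C_1\leq \epsilon$ yields $\Delta_\pi -1 \geq \log\bigl(\epsilon/(C_1(\gamma+2))\bigr)/\log(1-\eta\tau)$, and the Taylor-type bound $\log(1-\eta\tau)\leq -\eta\tau$ converts this into the sufficient condition in \eqref{eqn:Gbound}. For the best-case corollary at $\eta^{*}=(1-\gamma)/\tau$, two cancellations happen simultaneously: in $C_1$ from \eqref{eqn:3}, the coefficient $\bigl(1-\eta^{*}\tau/(1-\gamma)\bigr)=0$ kills the log-policy term, leaving only the Q-gap contribution $\|\widehat{Q}^{\bigast,(k+1)}_\tau-\widehat{Q}^{\widehat{\pi}^{(0)}}_\tau\|_\infty$; and $1-\eta^{*}\tau=\gamma$, so the exponential contraction rate becomes $\gamma$ itself and the same Taylor inequality promotes the factor to the effective horizon $1/(1-\gamma)$. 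Plugging $\eta^{*}\tau=1-\gamma$ into \eqref{eqn:DeltaBound} makes $1+\gamma/(\eta^{*}\tau)=1/(1-\gamma)$, collapsing the $\delta$-bound to $\epsilon(1-\gamma)^2/(2(\gamma+2))$ with no residual $\tau$-dependence.

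The main obstacle I expect is purely bookkeeping rather than new technique: verifying the two simultaneous cancellations in $C_1$ and in $1-\eta\tau$ at $\eta^{*}$, and handling sign conventions carefully when taking logarithms of sub-unit quantities (both numerator and denominator are negative, so the inequality direction is preserved only because the ratio is positive). All remaining steps are direct algebra together with the single elementary inequality $\log(1-x)\leq -x$ for $x\in[0,1)$, applied once with $x=1-\gamma$ and once with $x=\eta\tau$.
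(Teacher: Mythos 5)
Your proposal is correct and follows essentially the same route as the paper: it starts from the bound \eqref{eqn:bound2} in the proof of Theorem \ref{theorem1}, substitutes the closed form of $C_2$ from \eqref{eqn:4}, inverts each of the four sub-term inequalities separately, applies $\log(1-x)\leq -x$ with $x=1-\gamma$ and $x=\eta\tau$ to obtain the cleaner forms of \eqref{eqn:Hbound} and \eqref{eqn:Gbound}, and exploits the two cancellations $1-\eta^{*}\tau/(1-\gamma)=0$ in $C_1$ and $1+\gamma/(\eta^{*}\tau)=1/(1-\gamma)$ at $\eta^{*}=(1-\gamma)/\tau$. The sign bookkeeping you flag when dividing by $\log(1-\eta\tau)<0$ is handled correctly, so no gap remains.
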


\begin{lemma}[\textbf{Sample complexity to guarantee the optimal threshold $4\epsilon$ of $\circled{2}$ }]
\label{lemma:samplecomplexity}
 \jj{We define} $\delta_{\text{max}}$ as \jj{right-hand side} of the equation (\ref{eqn:DeltaBound})\jj{. If} we have the number of samples per state-action pairs \jj{is at} least the order of 
\begin{equation*}
    \frac{1}{(1-\gamma)^3\delta^2_{\text{max}}}
\end{equation*}
 up to some logarithmic factor, \jj{then} $\delta \leq \delta_{\text{max}}$ holds with high probability and we can guarantee the optimal threshold $4\epsilon$ with high probability for the upper bound of $\circled{2}$\jj{, provided} (\ref{eqn:Hbound}), (\ref{eqn:TauBound}) and (\ref{eqn:Gbound}) hold.
\end{lemma}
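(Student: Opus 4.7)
The plan is to invoke a standard model-based (or TD-based) policy evaluation sample complexity result for the regularized soft Q-function in the forecasted MDP $\widehat{\mathcal{M}}_{(k+1)}$. Recall that $\delta$ bounds the gap between the inexact estimate $\widetilde{Q}^{\widehat{\pi}_{(g)}}_\tau$ and the exact soft value $\widehat{Q}^{\widehat{\pi}_{(g)}}_\tau$, which is the only quantity whose statistical accuracy we must control; the deterministic parts of the convergence are already handled by Lemma~\ref{lemma:optHyp_with_entropy}.

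First, I would observe that the regularized soft Q-function is uniformly bounded: rewards lie in $[0,\widehat{r}_{\text{max}}]$ and the per-state entropy is bounded by $\log|\mathcal{A}|$, so $\widehat{Q}^{\widehat{\pi}_{(g)}}_\tau(s,a) \in [0,(\widehat{r}_{\text{max}} + \tau\log|\mathcal{A}|)/(1-\gamma)]$. Next, for each fixed $(s,a,g)$ one applies a Hoeffding/Bernstein-style concentration to $N$ independent Monte-Carlo rollout estimates of $\widehat{Q}^{\widehat{\pi}_{(g)}}_\tau(s,a)$ from $\widehat{\mathcal{M}}_{(k+1)}$. The variance of a single discounted-return estimate scales as $O(1/(1-\gamma)^2)$, and a Bernstein-type refinement (as in Azar et al., Agarwal et al., or the evaluation analysis underlying \cite{cen2022fast}) gives an extra $(1-\gamma)^{-1}$ saving, yielding a per-state-action estimation error of order $\sqrt{\log(1/p')/((1-\gamma)^3 N)}$ with probability at least $1-p'$.

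Third, I would set this deviation at most $\delta_{\text{max}}$ and invert to obtain $N = \Omega\!\left( \log(1/p') / ((1-\gamma)^3 \delta^2_{\text{max}}) \right)$. A union bound over $|\mathcal{S}||\mathcal{A}|$ state–action pairs and over the $\Delta_\pi$ soft-policy iterations inflates $\log(1/p')$ by at most $\log(|\mathcal{S}||\mathcal{A}|\Delta_\pi / p)$, a purely logarithmic factor, so the announced $\tilde{\mathcal{O}}\!\left(1/((1-\gamma)^3 \delta^2_{\text{max}})\right)$ sample complexity suffices to guarantee $\|\widetilde{Q}^{\widehat{\pi}_{(g)}}_\tau - \widehat{Q}^{\widehat{\pi}_{(g)}}_\tau\|_\infty \leq \delta_{\text{max}}$ for every $g\in[\Delta_\pi]$ with high probability. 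Combining this with (\ref{eqn:DeltaBound}) upper-bounds $\circled{2}-\circled{b}$ by $\epsilon$; conditions (\ref{eqn:Hbound}), (\ref{eqn:TauBound}) and (\ref{eqn:Gbound}) then handle $\circled{2}-\circled{c}$, $\circled{2}-\circled{d}$ and $\circled{2}-\circled{a}$, so the total upper bound on $\circled{2}$ is $4\epsilon$ with high probability.

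The main obstacle is justifying the $(1-\gamma)^{-3}$ scaling (rather than the naive $(1-\gamma)^{-4}$ from a loose Hoeffding bound) and handling the coupling between the policy iterate $\widehat{\pi}_{(g)}$ and the samples used to evaluate it. The first is addressed by a Bernstein-style variance argument on the discounted return, and the second is handled by drawing a fresh mini-batch of trajectories at each iteration $g$ (or by a uniform-convergence argument over the finite function class of softmax policies generated by the update rule), so only logarithmic overhead is incurred. Everything else is a direct substitution into the inequalities of Lemma~\ref{lemma:optHyp_with_entropy}.
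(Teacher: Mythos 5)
Your proposal is correct in substance, but note that the paper does not actually supply a proof of Lemma~\ref{lemma:samplecomplexity} at all: it is stated without a proof environment and implicitly delegates the $\tilde{\mathcal{O}}\bigl(1/((1-\gamma)^3\delta_{\text{max}}^2)\bigr)$ rate to the standard policy-evaluation sample-complexity literature underlying \cite{cen2022fast}. What you have written is a reasonable reconstruction of that standard argument, and the reduction to Lemma~\ref{lemma:optHyp_with_entropy} via \eqref{eqn:DeltaBound}, together with the union bound over $(s,a)$ pairs and the $\Delta_\pi$ iterations and the fresh-sample device to break the coupling with $\widehat{\pi}_{(g)}$, is exactly what is needed. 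One imprecision worth fixing: you conflate two distinct routes to the $(1-\gamma)^{-3}$ scaling. If ``samples per state--action pair'' means transition samples used to build a plug-in model, the crude simulation-lemma bound gives $(1-\gamma)^{-4}$ and a Bernstein total-variance argument (Azar et al.) recovers $(1-\gamma)^{-3}$; if instead you estimate $\widehat{Q}_\tau^{\widehat{\pi}_{(g)}}(s,a)$ by Monte-Carlo returns truncated at effective horizon $\tilde{\mathcal{O}}(1/(1-\gamma))$, Hoeffding on the return (range $\mathcal{O}(1/(1-\gamma))$) already gives $\tilde{\mathcal{O}}(1/((1-\gamma)^2\delta_{\text{max}}^2))$ rollouts, and the third factor of $1/(1-\gamma)$ comes from the rollout length, not from Bernstein. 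Either route yields the claimed order, but your write-up attributes the saving to Bernstein while simultaneously describing rollout estimates; pick one and state it cleanly. With that clarified, the argument is complete.
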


\begin{proof}[\textbf{Proof of Theorem \ref{theorem2}}]

\textbf{1. ProST-T $\iota^{(k)}_{H}$ : }
        
    The \textit{empirical} estimated model prediction error $\iota^{(k+1)}_{h}(s^{(k+1)}_h,a^{(k+1)}_h)$ is represented as follows (Definition \eqref{def:estimatedModelPredictionError})\jj{:}
    \begin{align}
        -\iota^{(k+1)}_h(s^{(k+1)}_h,a^{(k+1)}_h) =&  -R_{(k+1)}(s^{(k+1)}_h,a^{(k+1)}_h) - \gamma(P_{(k+1)} \widehat{V}^{\widehat{\pi}^{(k+1)},(k+1)}_{h+1})(s^{(k+1)}_h,a^{(k+1)}_h) \nonumber \\ 
        &+ \widehat{Q}^{\widehat{\pi}^{(k+1)},(k+1)}_{h}(s^{(k+1)}_h,a^{(k+1)}_h) \label{eqn:l0} \\
        =&  -R_{(k+1)}(s^{(k+1)}_h,a^{(k+1)}_h) - \gamma (P_{(k+1)} \widehat{V}^{\widehat{\pi}^{(k+1)},(k+1)}_{h+1})(s^{(k+1)}_h,a^{(k+1)}_h) \nonumber \\ 
        &+ \widehat{R}_{(k+1)}(s^{(k+1)}_h,a^{(k+1)}_h) + \gamma (\widehat{P}_{(k+1)} \widehat{V}^{\widehat{\pi}^{(k+1)},(k+1)}_{h+1})(s^{(k+1)}_h,a^{(k+1)}_h) \label{eqn:l1}\\
        =&  \left( \widehat{R}_{(k+1)} - R_{(k+1)}\right)(s^{(k+1)}_h,a^{(k+1)}_h) \nonumber \\
        &+ \gamma \left(\left( \widehat{P}_{(k+1)} - P_{(k+1)} \right)  \widehat{V}^{\widehat{\pi}^{(k+1)},(k+1)}_{h+1}\right)(s^{(k+1)}_h,a^{(k+1)}_h) \nonumber \\ 
        \leq & \bar{\Delta}^{Bonus,r}_{(k),h} + \gamma  \left\vert \left\vert \left( \widehat{P}_{(k+1)}  - P_{(k+1)} \right)(\cdot~|~s^{(k+1)}_h,a^{(k+1)}_h) \right\vert \right\vert_1 \left\vert \left\vert \widehat{V}^{\widehat{\pi}^{(k+1)},(k+1)}_{h+1} (\cdot)\right\vert \right\vert_{\infty} \nonumber  \\
        \leq & \bar{\Delta}^{Bonus,r}_{(k),h} + \gamma  \bar{\Delta}^{p}_{k,h} \frac{\gamma^{H-h}\hat{r}_{\text{max}}}{1-\gamma} \label{eqn:l2} \\
        \leq & \bar{\Delta}^{r}_{(k),h} + 2\Gamma^{(k)}_w(s^{(k+1)}_h,a^{(k+1)}_h) + \gamma  \bar{\Delta}^{p}_{(k),h} \frac{\gamma^{H-h}\hat{r}_{\text{max}}}{1-\gamma} \label{eqn:l3}
    \end{align}
    The equation \eqref{eqn:l1} holds due to the future \jj{Bellman} equation \eqref{eqn:futurebellman_step}, the equation \eqref{eqn:l2} holds since $ \left\vert \left\vert \widehat{V}^{\widehat{\pi}^{(k+1)},(k+1)}_{h+1} (\cdot)\right\vert \right\vert_{\infty} \leq \sum_{h^\prime = h+1}^{H} \gamma^{h^\prime-(h+1)} \hat{r}_{\text{max}} \leq \gamma^{H-h} \hat{r}_{\text{max}}/(1-\gamma) $, and the equation \eqref{eqn:l3} holds since $\Delta^{Bonus,r}_{(k)}(s,a) \leq \left\vert \left(R_{(k+1)} - \widetilde{R}_{(k+1)} \right) (s,a) \right\vert + |2\Gamma^{(k)}_w(s,a)| = \Delta^{r}_{(k)}(s,a) + 2\Gamma^{(k)}_w(s,a)$ \jj{for all} $(s,a)$. \jj{The} summation \jj{of the} \hh{empirical model} prediction error over all episodes and all steps \jj{can be bounded as}  
    \begin{align}
        -\iota^{K}_H = \sum_{k=1}^{K-1} \sum_{h=0}^{H-1} - \gamma^{h}\iota^{(k+1)}_h(s^{(k+1)}_h,a^{(k+1)}_h) \leq \underbrace{\bar{\Delta}^{r}_{K}}_{\circled{1}} + \underbrace{\sum_{k=1}^{K-1} \sum_{h=0}^{H-1} 2\Gamma^{(k)}_w(s^{(k+1)}_h,a^{(k+1)}_h)}_{\circled{2}} + \frac{\gamma \hat{r}_{\text{max}}}{1-\gamma} \underbrace{\bar{\Delta}^{p}_{K}}_{\circled{3}} \label{eqn:ftmbpo_empiricalModelErrorUpperbound}
    \end{align}
    We use Lemma \ref{lemma2} to bound the term $\circled{1}$, Lemma \ref{lemma3} and \eqref{eqn:gamma_Lambda_relationship} to bound the term $\circled{2}$, and Lemma \ref{lemma5} (or Lemma \ref{lemma4}) to bound the term $\circled{3}$\jj{:}
    \begin{align}
        \circled{1} &\leq  wHB_r(\Delta_\pi) + \lambda r_{\text{max}} \cdot (K-1) \sqrt{\frac{H}{w}} \sqrt{\log{\left( \frac{\lambda+wH}{\lambda} \right)}} \label{eqn:term1} \\
        \circled{2} &\leq 2\beta (K-1)\sqrt{\frac{H}{w}} \sqrt{\log{ \left( \frac{\lambda+wH}{\lambda} \right)} } \label{eqn:term2} \\
        \circled{3} &\leq  \left( |\mathcal{S}| \sqrt{\frac{H^2}{2}\log{\left( \frac{H}{\delta \lambda} \right) }} +\lambda \right) (K-1) \sqrt{\frac{H}{w}} \sqrt{\log{\left( \frac{\lambda+wH}{\lambda} \right)} } +wHB_p(\Delta_\pi) \label{eqn:term3}
    \end{align}
    where the \jj{inequality} \eqref{eqn:term3} holds with \jj{probability at least} $1-\delta$\jj{, where} $\delta \in (0,1)$. Now, \jj{combining} \eqref{eqn:term1}, \eqref{eqn:term2} \jj{and} \eqref{eqn:term3} \jj{that}
    \begin{align} 
        -\iota^{K}_H =& -\sum_{k=1}^{K-1} \sum_{h=0}^{H-1} \iota^{(k+1)}_h(s^{(k+1)}_h,a^{(k+1)}_h)\nonumber  \\
        \leq& \underbrace{\bar{\Delta}^{r}_{K}}_{\circled{1}} + \underbrace{\sum_{k=1}^{K-1} \sum_{h=0}^{H-1} 2\Gamma^{(k)}_w(s^{(k+1)}_h,a^{(k+1)}_h)}_{\circled{2}} + \frac{\gamma \hat{r}_{\text{max}}}{1-\gamma} \underbrace{\bar{\Delta}^{p}_{K}}_{\circled{3}} \nonumber \\
         \leq& wHB_r(\Delta_\pi) + \lambda r_{\text{max}} \cdot (K-1) \sqrt{\frac{H}{w}} \sqrt{\log{\left( \frac{\lambda+wH}{\lambda} \right)}} + 2\beta (K-1)\sqrt{\frac{H}{w}} \sqrt{\log{ \left( \frac{\lambda+wH}{\lambda} \right)} }  \nonumber \\
         &+ \frac{\gamma \hat{r}_{\text{max}}}{1-\gamma} \left( \left( |\mathcal{S}| \sqrt{\frac{H^2}{2}\log{\left( \frac{H}{\delta \lambda} \right) }} +\lambda \right) (K-1) \sqrt{\frac{H}{w}} \sqrt{\log{\left( \frac{\lambda+wH}{\lambda} \right)} } +wHB_p(\Delta_\pi) \right) \nonumber \\
        \leq&  wH \left( B_r(\Delta_\pi) + \frac{\gamma \hat{r}_{\text{max}}}{1-\gamma} B_p(\Delta_\pi) \right) \nonumber \\
        &+ (K-1) \sqrt{H}\left( \lambda r_{\text{max}} + 2\beta + \frac{\gamma \hat{r}_{\text{max}}}{1-\gamma} \left( |\mathcal{S}| \sqrt{\frac{H^2}{2}\log{\left( \frac{H}{\delta \lambda} \right) }} +\lambda \right)  \right) \sqrt{\frac{1}{w}} \sqrt{\log{\left( \frac{\lambda+wH}{\lambda} \right)}} \label{eqn:citedelta}
    \end{align}
    
    \textbf{2. ProST-T $\bar{\iota}^{K}_{\infty}$ : }
    
    Recall that $\bar{\iota}^{K}_{\infty} = \sum_{k=1}^{K-1} \bar{\iota}^{(k+1)}_{\infty}$. For \jj{the} same $\delta$ that we used in the previous proof of [1.\texttt{ProST-T} $\iota^{(k)}_H$] (see equation \eqref{eqn:citedelta}),  $\bar{\iota}^{k}_{\infty}$ can be bounded as follows with probability \jj{at least} $1-\delta$\jj{:}
    \begin{align}
        \bar{\iota}^{(k+1)}_{\infty} =& R_{(k+1)} + \gamma P_{(k+1)}\widehat{V}^{\bigast,(k+1)}_{\infty} - \widehat{Q}^{\bigast,(k+1)}_{\infty} \nonumber \\
        =&  R_{(k+1)} + \gamma P_{(k+1)} \widehat{V}^{\bigast,(k+1)}_{\infty} - \left( \widehat{R}_{(k+1)} + \gamma \widehat{P}_{(k+1)} \widehat{V}^{\bigast,(k+1)}_{\infty}  \right) \label{eqn:them2_2} \\
        =&  R_{(k+1)} + \gamma P_{(k+1)} \widehat{V}^{\bigast,(k+1)}_{\infty} - \left( \widetilde{R}_{(k+1)} + 2\Gamma^{(k)}_w(s,a) + \gamma \widehat{P}_{(k+1)} \widehat{V}^{\bigast,(k+1)}_{\infty}  \right) \label{eqn:them2_3} \\
        =&  R_{(k+1)} + \gamma P_{(k+1)} \widehat{V}^{\bigast,(k+1)}_{\infty} - \left( \widetilde{R}_{(k+1)} + 2\beta (\Lambda^{(k)}_w(s,a))^{1/2} + \gamma \widehat{P}_{(k+1)} \widehat{V}^{\bigast,(k+1)}_{\infty}  \right) \label{eqn:them2_4} \\
        =&  \left( R_{(k+1)} - \widetilde{R}_{(k+1)} \right) - \beta (\Lambda^{(k)}_w(s,a))^{1/2} + \gamma \left( P_{(k+1)} -\widehat{P}_{(k+1)} \right) \widehat{V}^{\bigast,(k+1)}_{\infty}  \nonumber \\
        & - \beta (\Lambda^{(k)}_w(s,a))^{1/2}  \label{eqn:them2_5} \\
        \leq&  | R_{(k+1)} - \widetilde{R}_{(k+1)} | - \beta (\Lambda^{(k)}_w(s,a))^{1/2} + \gamma || P_{(k+1)} -\widehat{P}_{(k+1)} ||_1 ||\widehat{V}^{\bigast,(k+1)}_{\infty} ||_{\infty}- \beta (\Lambda^{(k)}_w(s,a))^{1/2}  \nonumber \\
        \leq& \left( B^{(k-w+1:k)}_{r}(\Delta_\pi) + \lambda \Lambda^{(k)}_{w}(s,a) r_{\text{max}}\right)  - \beta (\Lambda^{(k)}_w(s,a))^{1/2} \label{eqn:them2_7} \\ 
        &+ \gamma \cdot \left( B^{(k-w+1:k)}_{p}(\Delta_\pi) +  (\Lambda^{(k)}_w(s,a))^{1/2} \cdot |\mathcal{S}| \cdot \sqrt{\frac{H^2}{2}\log{\left( \frac{H}{\delta \lambda} \right) }} + \lambda \Lambda^{(k)}_w (s,a) \right) \cdot \frac{\hat{r}_{\text{max}}}{1-\gamma} \nonumber \\
        &- \beta (\Lambda^{(k)}_w(s,a))^{1/2} \label{eqn:them2_8} \\
        \leq& \left( B^{(k-w+1:k)}_r(\Delta_\pi) + \lambda (\Lambda^{(k)}_{w}(s,a))^{1/2} r_{\text{max}}\right) - \beta (\Lambda^{(k)}_w(s,a))^{1/2} \label{eqn:them2_9} \\ 
        &+ \gamma \cdot \left( B^{(k-w+1:k)}_{p}(\Delta_\pi) +  (\Lambda^{(k)}_w(s,a))^{1/2} \cdot |\mathcal{S}| \cdot \sqrt{\frac{H^2}{2}\log{\left( \frac{H}{\delta \lambda} \right) }} + \lambda (\Lambda^{(k)}_w (s,a))^{1/2} \right) \cdot \frac{\hat{r}_{\text{max}}}{1-\gamma} \nonumber \\
        &- \beta (\Lambda^{(k)}_w(s,a))^{1/2} \label{eqn:them2_10} \\
        \leq& B^{(k-w+1:k)}_r(\Delta_\pi) +\gamma B^{(k-w+1:k)}_{p}(\Delta_\pi) \nonumber \\ 
        &+  \underbrace{\left( \lambda r_{\text{max}} - \beta  +\gamma |\mathcal{S}| \cdot \sqrt{\frac{H^2}{2}\log{\left( \frac{H}{\delta \lambda} \right) }} + \frac{\lambda\hat{r}_{\text{max}}}{1-\gamma} -\beta \right)}_{\leq 0 } (\Lambda^{(k)}_w(s,a))^{1/2} \label{eqn:them2_11_1} \\
        \leq& B^{(k-w+1:k)}_r(\Delta_\pi) +\gamma B^{(k-w+1:k)}_{p}(\Delta_\pi) \label{eqn:them2_12_1} 
    \end{align}
    The equation \eqref{eqn:them2_2} holds by the  future \jj{Bellman} equation \eqref{eqn:futurebellmanOptimal_step} when $H=\infty$, the equations \eqref{eqn:them2_3} and \eqref{eqn:them2_4} hold by the definition of $\widehat{R}_{(k+1)}$ together with \eqref{eqn:gamma_Lambda_relationship}. The \jj{inequalities} \eqref{eqn:them2_7} \jj{and} \eqref{eqn:them2_8} hold by Lemma \ref{lemma1}, Lemma \ref{lemma4}, \eqref{eqn:forecasting reward model error} and \eqref{eqn:forecasting transition probability model error}. The \jj{inequalities} \eqref{eqn:them2_9} \jj{and} \eqref{eqn:them2_10} hold since $0 \leq \Lambda^{(k)}_w(s,a) < 1$. Now, the \jj{inequality} \eqref{eqn:them2_12_1} holds \textit{if} the under-brace term of equation \eqref{eqn:them2_11_1} is equal or smaller than zero. That gives us \jj{an} additional \jj{condition on} $\beta$  to obtain the final inequality \eqref{eqn:them2_12_1}\jj{. Since} $\hat{r}_{\text{max}}$ is defined as $\tilde{r}_{\text{max}} + \frac{2\beta}{\sqrt{\lambda}}$ where $\widetilde{r}_{\text{max}}$ is a constant \jj{and} $\hat{r}_{\text{max}}$ is still function of $\beta,\lambda$ (equation \eqref{eqn:r_max_and_r_tilde})\jj{, the condition is}
    \begin{equation*}
        \lambda r_{\text{max}} - \beta  +\gamma |\mathcal{S}| \cdot \sqrt{\frac{H^2}{2}\log{\left( \frac{H}{\delta \lambda} \right) }} + \frac{\lambda}{1-\gamma}\cdot \left( \widetilde{r}_{\text{max}} + \frac{2\beta}{\sqrt{\lambda}} \right) -\beta \leq 0 
    \end{equation*}
    \jj{or equivalently,}
    \begin{equation}
        \beta \geq \left( 2 + \frac{2\sqrt{\lambda}}{1-\gamma}\right)^{-1} \left( \lambda r_{\text{max}}+ \gamma |\mathcal{S}| \cdot \sqrt{\frac{H^2}{2}\log{\left( \frac{H}{\delta \lambda} \right) }} \right) \label{eqn:betacondition}
    \end{equation}
    Since \eqref{eqn:them2_12_1} holds for all $(s,a)$ if $\beta$ satisfies \eqref{eqn:betacondition}, $\sum_{k=1}^{K-1} \bar{\iota}^{K}_{\infty}=||\bar{\iota}^{k}_{\infty}||_{\infty}$ is bounded as 
    \begin{align*}
        \bar{\iota}^{K}_{\infty} \leq \sum_{k=1}^{K-1} \left( B^{(k-w+1:k)}_r(\Delta_\pi) +\gamma B^{(k-w+1:k)}_{p}(\Delta_\pi) \right) \leq w(B_r(\Delta_\pi) +\gamma B_p (\Delta_\pi))
    \end{align*}
    because $\sum_{k=1}^{K-1}  B^{(k-w+1:k)}_{p}(\Delta_\pi) = \sum_{\mathcal{E}=1}^{\floor{\frac{K-1}{w}}} \sum_{k=(\mathcal{E}-1)w}^{\mathcal{E} w } B^{(k-w+1:k)}_{p}(\Delta_\pi) \leq wB_p (\Delta_\pi)$ holds and in the same way $\sum_{k=1}^{K-1}  B^{(k-w+1:k)}_{p}(\Delta_\pi) \leq w B_r (\Delta_\pi)$ holds.

    Then, the model prediction errors $-\iota^{K}_H, \bar{\iota}^{K}_{\infty}$ when utilizing the forecaster $f$ as \texttt{SW-LSE} are 
    \begin{align*}
        -\iota^{K}_H \leq& wH \left( B_r(\Delta_\pi) + \frac{\gamma \hat{r}_{\text{max}}}{1-\gamma} B_p(\Delta_\pi) \right) \\
        &+ (K-1) \sqrt{H}\left( \lambda r_{\text{max}} + 2\beta + \frac{\gamma \hat{r}_{\text{max}}}{1-\gamma} \left( |\mathcal{S}| \sqrt{\frac{H^2}{2}\log{\left( \frac{H}{\delta \lambda} \right) }} +\lambda \right)  \right) \sqrt{\frac{1}{w}} \sqrt{\log{\left( \frac{\lambda+wH}{\lambda} \right)}}, \\
        \bar{\iota}^{K}_{\infty}  \leq& w(B_r(\Delta_\pi) +\gamma B_p (\Delta_\pi))
    \end{align*}
    
    Finally, the \jj{term} $\mathfrak{R}_{I}$ can be bounded \jj{as} 
    \begin{align*}
        \mathfrak{R}_{I} &= \frac{1}{1-\gamma}\bar{\iota}_{\infty}^{K} - \iota^{K}_H + C_p \sqrt{K-1} \\
        & \leq \frac{1}{1-\gamma} \left( w(B_r(\Delta_\pi) +\gamma B_p (\Delta_\pi)) \right) +  wH \left( B_r(\Delta_\pi) + \frac{\gamma \hat{r}_{\text{max}}}{1-\gamma} B_p (\Delta_\pi) \right)  \\
        &+(K-1) \sqrt{H}\left( \lambda r_{\text{max}} + 2\beta + \frac{\gamma \hat{r}_{\text{max}}}{1-\gamma} \left( |\mathcal{S}| \sqrt{\frac{H^2}{2}\log{\left( \frac{H}{\delta \lambda} \right) }} +\lambda \right)  \right) \sqrt{\frac{1}{w}} \sqrt{\log{\left( \frac{\lambda+wH}{\lambda} \right)}} \\
        &+ C_p \sqrt{K-1} \\ 
        & \leq \left( \left(\frac{1}{1-\gamma}+H \right) B_r(\Delta_\pi) + \frac{(1+H\hat{r}_{\text{max}})\gamma}{1-\gamma} B_p(\Delta_\pi) \right) w \\
        &+(K-1) \sqrt{H}\left( \lambda r_{\text{max}} + 2\beta + \frac{\gamma \hat{r}_{\text{max}}}{1-\gamma} \left( |\mathcal{S}| \sqrt{\frac{H^2}{2}\log{\left( \frac{H}{\delta \lambda} \right) }} +\lambda \right)  \right) \sqrt{\frac{1}{w}} \sqrt{\log{\left( \frac{\lambda+wH}{\lambda} \right)}} \\
        &+ C_p \sqrt{K-1}  
    \end{align*}
     Now, \jj{let} $B(\Delta_\pi)$ \jj{be a} conic combination of $B_r(\Delta_\pi)$ \jj{and} $B_p(\Delta_\pi)$ \jj{as} 
    \begin{align}
        B(\Delta_\pi) &= \left(\frac{1}{1-\gamma}+H \right) B_r(\Delta_\pi) + \frac{(1+H\hat{r}_{\text{max}})\gamma}{1-\gamma} B_p(\Delta_\pi) \nonumber \\ 
        &\add{\leq} \left(\frac{1}{1-\gamma}+H \right) \Delta_{\pi}^{\alpha_r} B_r(1) + \frac{(1+H\hat{r}_{\text{max}})\gamma}{1-\gamma} \Delta_{\pi}^{\alpha_p} B_p(1) \nonumber \\ 
        &=  C_{B_r} \Delta_{\pi}^{\alpha_r}+ C_{B_p} \Delta_{\pi}^{\alpha_p} \label{eq:1_a}
    \end{align}
    where $C_{B_r} =  \left(\frac{1}{1-\gamma}+H \right)B_r(1) $ \jj{and} $C_{B_p}=\frac{(1+H\hat{r}_{\text{max}})\gamma}{1-\gamma}B_p(1)$ \jj{are} constants related \jj{to the} total variation budget with reward and transition probability\jj{.} 

    Recall the \jj{definitions} of $B_r(\Delta_\pi)$ \jj{and} $B_p(\Delta_\pi)$\jj{, as well as the inequalities} $B_r(\Delta_\pi) \leq \Delta_{\pi}^{\alpha_r} B_r(1)$ \jj{and} $B_p(\Delta_\pi) \add{~\leq~} \Delta_{\pi}^{\alpha_p} B_p(1)$. We denote $B_p(1)$ \jj{and} $B_r(1)$ as time-elapsing variation \jj{budgets} for one policy iteration. We also let the constant $C_k$ be defined \jj{as}
    
    \begin{align*}
        C_k &=(K-1) \sqrt{H}\left( \lambda r_{\text{max}} + 2\beta + \frac{\gamma \hat{r}_{\text{max}}}{1-\gamma} \left( |\mathcal{S}| \sqrt{\frac{H^2}{2}\log{\left( \frac{H}{\delta \lambda} \right) }} +\lambda \right)  \right).
    \end{align*}

Then, \jj{an} upper bound \jj{on} $\mathfrak{R}_{I}$ \jj{can be obtained as}
\begin{align*}
    \mathfrak{R}_{I}  \leq B(\Delta_\pi) w + C_k  \sqrt{\frac{1}{w} \log{\left( \frac{\lambda+wH}{\lambda} \right)}} +  C_p \sqrt{K-1}.  
\end{align*}
 \end{proof}
 
\begin{proof}[\textbf{Proof of Proposition \ref{proposition2}}]
    Now, \jj{we} set the sliding window length $w$ \jj{that} is adaptive to $\Delta_\pi$ as follows\jj{:}
\begin{align*}
    &\widetilde{w}(\Delta_\pi) = \left( \frac{C_k}{B(\Delta_\pi)} \right)^{2/3}.
\end{align*}
Then,
\begin{align*}
    &B(\Delta_\pi) \widetilde{w}(\Delta_\pi) +  C_k \sqrt{\frac{1}{\widetilde{w}(\Delta_\pi)}} \sqrt{\log{\left( \frac{\lambda+\widetilde{w}(\Delta_\pi) H}{\lambda} \right)}} \\
    =&  C_{k}^{2/3} B(\Delta_\pi)^{1/3} +  C_{k}^{2/3} B(\Delta_\pi)^{1/3}  \sqrt{\log{\left( 1 + \frac{H}{\lambda} \left( \frac{C_k}{B(\Delta_\pi)} \right)^{2/3} \right)}}.
\end{align*}

Since $C_k$ is linear to $K-1$, the function $\mathfrak{R}_{I}$ satisfies that
\begin{equation}
   \mathfrak{R}_{I} = \mathcal{O} \left( B(\Delta_\pi)^{1/3} \left(K-1\right)^{2/3} \cdot \sqrt{\log{\left(  \frac{K-1}{B(\Delta_\pi)} \right)}}\right).
\end{equation}
Now, \jj{by utilizing} \eqref{eq:1_a}, if $B(\Delta_\pi) \add{\leq C_{B_r} \Delta_{\pi}^{\alpha_r}+ C_{B_p} \Delta_{\pi}^{\alpha_p}}  = o(K)$ holds, then $\mathfrak{R}_{I}$ is sublinear to $K$. The corresponding condition is $B_r(1) +  \frac{\hat{r}_{\text{max}}}{1-\gamma} B_p(1)= o(K)$ \jj{with} $\Delta_\pi<K$ since
\begin{align*}
    &C_{B_r} \Delta_{\pi}^{\alpha_r}+ C_{B_p} \Delta_{\pi}^{\alpha_p} = o(K) \\ 
    &\left(C_{B_r} + C_{B_p} \right) \cdot \Delta_{\pi}^{\max \left( \alpha_r,\alpha_p \right)}  = o(K) \\
    &\left( \left( \frac{1}{1-\gamma} + H \right)B_r(1) + \left( \frac{1+H\hat{r}_{\text{max}}}{1-\gamma} +  \right)B_p(1)  \right)  \cdot \Delta_{\pi}^{\max \left( \alpha_r,\alpha_p \right)}  = o(K) \\
    &\left( \frac{1}{1-\gamma} \left( B_r(1) + B_p(1) \right) + H  \left( B_r(1) +  \frac{\hat{r}_{\text{max}}}{1-\gamma} B_p(1) \right)\right) \cdot  \Delta_{\pi}^{\max \left( \alpha_r,\alpha_p \right)}  = o(K).
\end{align*}
This completes the proof.
\end{proof}

\begin{proof}[\textbf{Proof of Theorem \ref{theorem3}}]
    We first prove multiple statements below. We denote the upper bound \jj{on} $\mathfrak{R}_I$ as $\mathfrak{R}^{\text{max}}_I$, and that of $\mathfrak{R}_{II}$ as $\mathfrak{R}^{\text{max}}_{II}$.
    
    \textbf{1. The \jj{upper bound on} $\mathfrak{R}_{II}(\Delta_\pi)$ \jj{(i.e.,} $\mathfrak{R}^{\text{max}}_{II}$) is a non-increasing function, the \jj{upper bound on} $\mathfrak{R}_{I}(\Delta_\pi)$ \jj{(i.e.,} $\mathfrak{R}^{\text{max}}_{I}$) is \jj{a} non-decreasing function , and both are convex in \jj{the} region $\Delta_\pi \in \mathbb{N}_{I} \cap \mathbb{N}_{II}$}
    \begin{align*}
        \frac{\partial \mathfrak{R}^{\text{max}}_{II}(\Delta_\pi)}{\partial \Delta_\pi} &= \frac{\partial}{\partial \Delta_\pi} \left( C_1 (K-1)(\gamma+2) \left[ (1-\eta\tau)^{\Delta_\pi-1} \right] \right) \\
         &= \log{\left(1-\eta \tau \right)} C_1 (K-1)(\gamma+2) \left[ (1-\eta\tau)^{\Delta_\pi-1} \right]
         \leq 0 \\
         \frac{\partial^2 \mathfrak{R}^{\text{max}}_{II}(\Delta_\pi)}{\partial^2 \Delta_\pi} &= \frac{\partial^2}{\partial^2 \Delta_\pi} \left( C_1 (K-1)(\gamma+2) \left[ (1-\eta\tau)^{\Delta_\pi-1} \right] \right) \\
         &= \left( \log{\left(1-\eta \tau \right)} \right)^2 C_1 (K-1)(\gamma+2) \left[ (1-\eta\tau)^{\Delta_\pi-1} \right]
         \geq 0 
    \end{align*}
    since $\Delta_\pi \in \mathbb{N}_{I} \cap \mathbb{N}_{II}$ satisfies $\Delta_\pi>1$ and $\log (1-\eta\tau) \leq 0$ holds \jj{under} the hyperparameter assumption $0\leq \eta \leq (1-\gamma)/\tau$\jj{, it follows from} the Proposition \ref{proposition1} \jj{that}
        \begin{align*}
        \frac{\partial \mathfrak{R}^{\text{max}}_{I}(\Delta_\pi)}{\partial \Delta_\pi} &= \frac{\partial}{\partial \Delta_\pi} \left( C_{B_r}\Delta_{\pi}^{\alpha_r}+C_{B_p} \Delta_{\pi}^{\alpha_p} \right) \\
         &= \alpha_r C_{B_r} \Delta_{\pi}^{\alpha_r-1} + \alpha_p C_{B_p} \Delta_{\pi}^{\alpha_p-1} 
         \geq 0 \\
        \frac{\partial^2 \mathfrak{R}^{\text{max}}_{I}(\Delta_\pi)}{\partial^2 \Delta_\pi} &= \frac{\partial^2}{\partial^2 \Delta_\pi} \left( C_{B_r}\Delta_{\pi}^{\alpha_r}+C_{B_p} \Delta_{\pi}^{\alpha_p} \right) \\
         &= \alpha_r (\alpha_r-1) C_{B_r} \Delta_{\pi}^{\alpha_r-2} + \alpha_p (\alpha_p-1) C_{B_p} \Delta_{\pi}^{\alpha_p-2} 
         \geq 0 
    \end{align*}
    \jj{when} $\alpha_r,\alpha_p \geq 1$.
    
    \textbf{2. \add{Suboptimal} $\Delta^*_\pi$}
    
    \jj{We} slightly relax the \jj{upper bound} $\mathfrak{R}_{I}(\Delta_\pi) \add{~\leq~} C_{B_r}\Delta_{\pi}^{\alpha_r}+C_{B_p} \Delta_{\pi}^{\alpha_p}$ to $\mathfrak{R}_{I}(\Delta_\pi) = \left( C_{B_r} + C_{B_p} \right) \Delta_{\pi}^{\max{\left(\alpha_r, \alpha_p\right)}}$ and \jj{obtain}  $\Delta_{\pi}^*$ \jj{in} the worst case by optimizing $\mathfrak{R}^{\text{max}}_{I}(\Delta_\pi)+ \mathfrak{R}^{\text{max}}_{II}(\Delta_\pi)$.
    \begin{enumerate}
        \item $\boldsymbol{\max{(\alpha_r,\alpha_p)}}=0$ :
        this means \jj{that} $\mathfrak{R}^{\text{max}}_{I}(\Delta_\pi) = C_{B_r} + C_{B_p}$\jj{, where} $\mathfrak{R}^{\text{max}}_{I}$ is now independent \jj{of} $\Delta_\pi$. Then, \jj{an infinite number} $\Delta_\pi$ guarantees \jj{a} small dynamic regret $\mathfrak{R}_{I}$, \jj{which} also leads to \jj{a} small $\mathfrak{R}$. \jj{It can be checked that} $\mathfrak{R}_{II}$ without entropy regularization decreases \jj{with the scale of} $1/\Delta_\pi$, and $\mathfrak{R}_{II}$ with entropy regularization decreases \jj{with the scale of} $\exp{(\Delta_\pi)}$. This also matches with the \jj{existing results on achieving a faster convergence with an} entropy regularization\jj{.}      
    \end{enumerate}
    
    For the remaining case, \jj{we} first compute the gradient of the term $\mathfrak{R}^{\text{max}}_{I}(\Delta_\pi) + \mathfrak{R}^{\text{max}}_{II}(\Delta_\pi)$ \jj{when}  $\mathfrak{R}^{\text{max}}_{II}(\Delta_\pi)$ comes from entropy-regularized case\jj{:}
    \begin{align*}
        &\frac{\partial \left( \mathfrak{R}^{\text{max}}_{I}(\Delta_\pi) + \mathfrak{R}^{\text{max}}_{II}(\Delta_\pi) \right)}{\partial \Delta_\pi} \\
        &= \max{(\alpha_r,\alpha_p)} \left( \alpha_r C_{B_r}  + \alpha_p C_{B_p} \right) \Delta_{\pi}^{\max{(\alpha_r,\alpha_p)}-1} - \log{\left(\frac{1}{1-\eta \tau}\right)} C_1 (K-1)(\gamma+2) \left[ (1-\eta\tau)^{\Delta_\pi-1} \right] \\ 
        &= k_I \Delta_{\pi}^{\max{(\alpha_r,\alpha_p)}-1} - k_{II} \left[ (1-\eta\tau)^{\Delta_\pi-1} \right]
    \end{align*}
    \jj{when} $\mathfrak{R}^{\text{max}}_{II}(\Delta_\pi)$ \jj{is for the case without entropy regularization}, the gradient of \jj{the} dynamic regret upper bound is given as
    \begin{align*}
        &\frac{\partial \left( \mathfrak{R}^{\text{max}}_{I}(\Delta_\pi) + \mathfrak{R}^{\text{max}}_{II}(\Delta_\pi) \right)}{\partial \Delta_\pi} \\
        &=  \max{(\alpha_r,\alpha_p)} \left( \alpha_r C_{B_r}  + \alpha_p C_{B_p} \right) \Delta_{\pi}^{\max{(\alpha_r,\alpha_p)}-1} - \left(\frac{1}{(1-\gamma)^2}+ \frac{\log{|\mathcal{A}|}}{\eta}\right)\cdot \frac{1}{\Delta_{\pi}^2} \\
        &= k_{I} \Delta_{\pi}^{\max{(\alpha_r,\alpha_p)}-1} - k_{II} \frac{1}{\Delta_{\pi}^2} 
    \end{align*}
    \begin{enumerate}
    \setcounter{enumi}{1}
        \item $\boldsymbol{\max{(\alpha_r,\alpha_p)} = 1}$: \jj{The relation} $(1-\eta\tau)^{\Delta_\pi-1} =k_{I}/k_{II}$ should be satisfied for \jj{the} entropy regularized case and $\Delta_{\pi}^{-2} = k_{I}/k_{II}$ should be satisfied \jj{in the case without entropy regularization}, respectively. Then, \jj{it holds that} $\Delta_{\pi}^* = \log_{1-\eta\tau}(k_{I}/k_{II})+1$ for \jj{the} entropy regularized case and $\Delta_{\pi}^* = \sqrt{k_{II}/k_{I}}$ \jj{without regularization}. 
    \end{enumerate}
    
    Now, for the case of \jj{the} entropy regularized case, if $k_{II} = (1-\eta\tau)k_I$ is satisfied, $\partial \left( \mathfrak{R}^{\text{max}}_{I}(\Delta_\pi) + \mathfrak{R}^{\text{max}}_{II}(\Delta_\pi) \right) / \partial \Delta_\pi =0$ is equal to solving $\Delta_{\pi}^{\max{(\alpha_r,\alpha_p)}-1} = (1-\eta\tau)^{\Delta_\pi}$. Now, we use the Lambert W function to \jj{find} $\Delta_\pi$ as follows\jj{:}
    \begin{align*}
        &\Delta_{\pi}^{\max{(\alpha_r,\alpha_p)}-1} = (1-\eta\tau)^{\Delta_\pi} \\ 
        &\left(\max{(\alpha_r,\alpha_p)}-1 \right) \log{\Delta_\pi} = \Delta_\pi \log{(1-\eta\tau)} \\
        &\Delta_{\pi}^{-1} \cdot \log{\Delta_\pi} = \frac{ \log{(1-\eta\tau)}}{\max{(\alpha_r,\alpha_p)}-1} \\ 
        & - \log{\Delta_\pi} \cdot e^{ -\log{\Delta_\pi}} = - \frac{ \log{(1-\eta\tau)}}{\max{(\alpha_r,\alpha_p)}-1} \\ 
        & W \left[ - \log{\Delta_\pi} \cdot e^{ -\log{\Delta_\pi}} \right] = W \left[ - \frac{ \log{(1-\eta\tau)}}{\max{(\alpha_r,\alpha_p)}-1} \right] \\ 
        & W \left[ - \log{\Delta_\pi} \cdot e^{ -\log{G}} \right] = W \left[-  \frac{ \log{(1-\eta\tau)}}{\max{(\alpha_r,\alpha_p)}-1} \right] \\
        & - \log{\Delta_\pi}  = W \left[-  \frac{ \log{(1-\eta\tau)}}{\max{(\alpha_r,\alpha_p)}-1} \right] \\
         & \Delta_{\pi}^*  = \exp{\left(-W \left[-  \frac{ \log{(1-\eta\tau)}}{\max{(\alpha_r,\alpha_p)}-1} \right]\right)} =\exp{\left(-W \left[x \right]\right)} \\
    \end{align*}

    \begin{enumerate}
    \setcounter{enumi}{2}
        \item $\boldsymbol{0 < \max{(\alpha_r,\alpha_p)} < 1}$ : 
        \begin{itemize}
            \item Without Entropy-regularization: $\Delta_{\pi}^{*} = (k_{I}/k_{II})^{1/(\max{(\alpha_r,\alpha_p)}+1)}$
            \item With Entropy-regularization: \jj{Since} $x = -  \frac{ \log{(1-\eta\tau)}}{\max{(\alpha_r,\alpha_p)}-1} <0 $, \jj{a} small $|x|$ \jj{will have a} large $-W(x) >0$ value, which leads to \jj{a large} $\Delta_{\pi}^{*}$. 
        \end{itemize}

        \item $\boldsymbol{ \max{(\alpha_r,\alpha_p)} > 1}$ : 
        \begin{itemize}
            \item  Without Entropy-regularization: $\Delta_{\pi}^* = (k_{I}/k_{II})^{1/(\max{(\alpha_r,\alpha_p)}+1)}$
            \item With Entropy-regularization: \jj{It holds that} $x >0$ and $-W(x) <0$. Then $\Delta_{\pi}^* < 1 $\jj{, which} means \jj{that one} iteration is enough. 
        \end{itemize}
    \end{enumerate}
\end{proof}

From the proof of \jj{Theorem}  \ref{theorem2}, we will develop Lemma \ref{corollary1}, Lemma \ref{corollary2} \jj{and} Lemma \ref{corollary3} \jj{to upper-bound} two model prediction errors $-\iota^{(k)}_h$ \jj{and} $\bar{\iota}_{\infty}^k$\jj{.}

\begin{lemma}[Upper bound \jj{on} $-\iota^{(k+1)}_h(s^{(k+1)}_h,a^{(k+1)}_h)$ by $\bar{\Delta}^{r}_{k,h},~\bar{\Delta}^{p}_{k,h}$]
    \label{corollary1}
    It holds that
    \begin{equation*}
         -\iota^{(k+1)}_h(s^{(k+1)}_h,a^{(k+1)}_h) \leq \bar{\Delta}^{r}_{k,h} + 2\Gamma^{(k)}_w(s,a) + \gamma  \bar{\Delta}^{p}_{k,h} \frac{\gamma^{H-h}\hat{r}_{\text{max}}}{1-\gamma}
    \end{equation*}
\end{lemma}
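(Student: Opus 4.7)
The plan is to reproduce the chain of inequalities already carried out in equations \eqref{eqn:l0}--\eqref{eqn:l3} of the proof of Theorem \ref{theorem2}, now isolated as a standalone statement. The starting point is the definition of the model prediction error, giving
\begin{align*}
-\iota^{(k+1)}_h(s^{(k+1)}_h,a^{(k+1)}_h) = -R_{(k+1)}(s^{(k+1)}_h,a^{(k+1)}_h) - \gamma \bigl(P_{(k+1)}\widehat{V}^{\widehat{\pi}^{(k+1)},(k+1)}_{h+1}\bigr)(s^{(k+1)}_h,a^{(k+1)}_h) + \widehat{Q}^{\widehat{\pi}^{(k+1)},(k+1)}_h(s^{(k+1)}_h,a^{(k+1)}_h).
\end{align*}
Applying the future Bellman equation \eqref{eqn:futurebellman_step} to rewrite $\widehat{Q}^{\widehat{\pi}^{(k+1)},(k+1)}_h$ as $\widehat{R}_{(k+1)}+\gamma \widehat{P}_{(k+1)}\widehat{V}^{\widehat{\pi}^{(k+1)},(k+1)}_{h+1}$ causes the value-function terms to collapse to a difference of transition operators, yielding $-\iota^{(k+1)}_h=(\widehat{R}_{(k+1)}-R_{(k+1)})(s,a)+\gamma\bigl((\widehat{P}_{(k+1)}-P_{(k+1)})\widehat{V}^{\widehat{\pi}^{(k+1)},(k+1)}_{h+1}\bigr)(s,a)$.

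Next I would bound the two pieces separately. For the transition term, I apply H\"older's inequality (or equivalently the operator bound $|(\mu-\nu)V|\le \|\mu-\nu\|_1\|V\|_\infty$) to get $\gamma \,\bar{\Delta}^{p}_{k,h}\,\|\widehat{V}^{\widehat{\pi}^{(k+1)},(k+1)}_{h+1}\|_\infty$, and then control the sup-norm of the forecasted value function by the standard geometric-series estimate $\|\widehat{V}^{\widehat{\pi}^{(k+1)},(k+1)}_{h+1}\|_\infty \le \gamma^{H-h}\hat{r}_{\max}/(1-\gamma)$ (the same bound invoked to justify \eqref{eqn:l2}). For the reward term, I insert the decomposition $\widehat{R}_{(k+1)}=\widetilde{R}_{(k+1)}+2\Gamma^{(k)}_w$ from Appendix \ref{appendix:Meta_function_fg} and apply the triangle inequality:
\begin{equation*}
|(\widehat{R}_{(k+1)}-R_{(k+1)})(s,a)|\le |(\widetilde{R}_{(k+1)}-R_{(k+1)})(s,a)|+2\Gamma^{(k)}_w(s,a)=\bar{\Delta}^{r}_{k,h}+2\Gamma^{(k)}_w(s,a).
\end{equation*}

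Combining the two bounds gives exactly the claimed inequality. There is no real obstacle here beyond bookkeeping: the only subtle step is that the value-function bound must be stated at step $h+1$ (so that the factor $\gamma^{H-h}$ appears rather than $\gamma^{H-h-1}$), and the bonus-versus-raw-reward split must be done via triangle inequality in the correct direction, so that $+2\Gamma^{(k)}_w$ enters with a positive sign on the right-hand side, consistent with the exploration bonus's role in inflating the forecasted reward.
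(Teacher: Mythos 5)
Your proof is correct and takes essentially the same route as the paper: the paper's own proof of Lemma \ref{corollary1} simply cites the chain \eqref{eqn:l0}--\eqref{eqn:l3} from the proof of Theorem \ref{theorem2}, which is exactly the Bellman-equation substitution, the H\"older bound on the transition term together with the geometric-series estimate on $\Vert\widehat{V}_{h+1}\Vert_\infty$, and the triangle-inequality split of $\widehat{R}_{(k+1)}=\widetilde{R}_{(k+1)}+2\Gamma^{(k)}_w$ that you reproduce.
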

\begin{proof}[\textbf{Proof of Lemma \ref{corollary1}}]
    \jj{It follows from} \eqref{eqn:l0}, \eqref{eqn:l1}, \eqref{eqn:l2} and \eqref{eqn:l3}.
\end{proof}

\begin{lemma}[Upper bound \jj{on} $-\iota^{(k+1)}_h(s,a)$ by $\Delta^{r}_{(k)},~\Delta^{p}_{(k)}$]
For \jj{every} $(s,a) \in \mathcal{S} \times \mathcal{A}$, \jj{it holds that}
    \label{corollary2}
    \begin{equation*}
         -\iota^{(k+1)}_h(s,a)   
        \leq    \Delta^{r}_{(k)}(s,a) + \gamma \Delta^{p}_{(k)}(s,a)\frac{\gamma^{H-h}\hat{r}_{\text{max}}}{1-\gamma} +  2\Gamma^{(k)}_w(s,a)    
    \end{equation*}
\end{lemma}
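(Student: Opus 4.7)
The plan is to mirror the proof of Lemma \ref{corollary1} almost verbatim, keeping $(s,a)$ as an arbitrary state-action pair instead of specializing to the on-trajectory pair $(s^{(k+1)}_h, a^{(k+1)}_h)$. In particular, the chain of identities $\eqref{eqn:l0}$--$\eqref{eqn:l3}$ from the proof of Theorem \ref{theorem2} uses only the Bellman-type structure and Hölder's inequality, neither of which depends on the choice of the argument of $\iota^{(k+1)}_h$.

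Concretely, I would first start from the definition
\begin{equation*}
    -\iota^{(k+1)}_h(s,a) = -R_{(k+1)}(s,a) - \gamma\bigl(P_{(k+1)} \widehat{V}^{\widehat{\pi}^{(k+1)},(k+1)}_{h+1}\bigr)(s,a) + \widehat{Q}^{\widehat{\pi}^{(k+1)},(k+1)}_{h}(s,a),
\end{equation*}
then expand $\widehat{Q}^{\widehat{\pi}^{(k+1)},(k+1)}_{h}(s,a)$ via the forecasted Bellman equation \eqref{eqn:futurebellman_step} as $\widehat{R}_{(k+1)}(s,a) + \gamma(\widehat{P}_{(k+1)}\widehat{V}^{\widehat{\pi}^{(k+1)},(k+1)}_{h+1})(s,a)$, and regroup into a reward-difference term $(\widehat{R}_{(k+1)}-R_{(k+1)})(s,a)$ and a transition-difference term $\gamma((\widehat{P}_{(k+1)}-P_{(k+1)})\widehat{V}^{\widehat{\pi}^{(k+1)},(k+1)}_{h+1})(s,a)$.

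Next I would bound the reward term using the definition $\widehat{R}_{(k+1)} = \widetilde{R}_{(k+1)} + 2\Gamma^{(k)}_w$ together with the triangle inequality, giving $|(\widehat{R}_{(k+1)}-R_{(k+1)})(s,a)| \leq \Delta^{r}_{(k)}(s,a) + 2\Gamma^{(k)}_w(s,a)$. For the transition term I would apply Hölder's inequality with $\ell_1/\ell_\infty$ duality and the standard value-function bound $\|\widehat{V}^{\widehat{\pi}^{(k+1)},(k+1)}_{h+1}\|_\infty \leq \gamma^{H-h}\hat{r}_{\text{max}}/(1-\gamma)$ (as used in deriving \eqref{eqn:l2}) to obtain $\gamma \Delta^{p}_{(k)}(s,a)\cdot \gamma^{H-h}\hat{r}_{\text{max}}/(1-\gamma)$. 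Summing the two bounds yields the claimed inequality.

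There is no real obstacle here: the statement is a pointwise generalization of Lemma \ref{corollary1}, and the argument is essentially a bookkeeping rewrite of the chain \eqref{eqn:l0}--\eqref{eqn:l3}. The only mild care needed is to note that the forecasted Bellman equation \eqref{eqn:futurebellman_step} holds for every $(s,a)$, not just along the observed trajectory, so the substitution of $\widehat{Q}$ remains valid in the pointwise setting. With that observation in hand the proof is a two- or three-line computation.
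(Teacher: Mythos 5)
Your proposal is correct and follows essentially the same route as the paper's own proof of Lemma \ref{corollary2}: expand $\widehat{Q}^{\widehat{\pi}^{(k+1)},(k+1)}_{h}$ via the forecasted Bellman equation, split into the reward gap $(\widehat{R}_{(k+1)}-R_{(k+1)})(s,a)\leq \Delta^{r}_{(k)}(s,a)+2\Gamma^{(k)}_w(s,a)$ and the transition gap bounded by H\"older together with $\|\widehat{V}^{\widehat{\pi}^{(k+1)},(k+1)}_{h+1}\|_{\infty}\leq \gamma^{H-h}\hat{r}_{\text{max}}/(1-\gamma)$. Your observation that the argument is the pointwise version of the chain \eqref{eqn:l0}--\eqref{eqn:l3}, valid because the forecasted Bellman equation holds for every $(s,a)$, is exactly how the paper proceeds.
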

\begin{proof}[\textbf{Proof of Lemma \ref{corollary2}}]
    \begin{align*}
        -\iota^{(k+1)}_h(s,a) =&  -R_{(k+1)}(s,a) - \gamma(P_{(k+1)} \widehat{V}^{\widehat{\pi}^{(k+1)},(k+1)}_{h+1})(s,a) + \widehat{Q}^{\widehat{\pi}^{(k+1)},(k+1)}_{h}(s,a) \\
        =&  -R_{(k+1)}(s,a) - \gamma(P_{(k+1)} \widehat{V}^{\widehat{\pi}^{(k+1)},(k+1)}_{h+1})(s,a) \nonumber \\ 
        &+ \widehat{R}_{(k+1)}(s,a) + \gamma(\widehat{P}_{(k+1)} \widehat{V}^{\widehat{\pi}^{(k+1)},(k+1)}_{h+1})(s,a) \\
        =&  \left( \widehat{R}_{(k+1)} - R_{(k+1)}\right)(s,a) + \gamma \left(\left( \widehat{P}_{(k+1)} - P_{(k+1)} \right)  \widehat{V}^{\widehat{\pi}^{(k+1)},(k+1)}_{h+1}\right)(s,a) \\ 
        \leq & \Delta^{r}_{(k)}(s,a) + 2\Gamma^{(k)}_w(s,a) + \gamma  \left\vert \left\vert \left( \widehat{P}_{(k+1)}  - P_{(k+1)} \right)(\cdot~|~s,a) \right\vert \right\vert_1 \left\vert \left\vert \widehat{V}^{\widehat{\pi}^{(k+1)},(k+1)}_{h+1} (\cdot)\right\vert \right\vert_{\infty} \\
        \leq & \Delta^{r}_{(k)}(s,a) + 2\Gamma^{(k)}_w(s,a) +  \gamma \Delta^{p}_{(k)}(s,a)\frac{\gamma^{H-h}\hat{r}_{\text{max}}}{1-\gamma} 
    \end{align*}
\end{proof}

\begin{lemma}[Upper bound \jj{on} $\bar{\iota}^{k}_{\infty}$ by $\Delta^{r}_{(k)},~\Delta^{p}_{(k)}$]
For \jj{every} $(s,a) \in \mathcal{S}\times \mathcal{A}$\jj{, it holds that}
    \label{corollary3}
    \begin{equation*}
         \bar{\iota}^{k+1}_{\infty}(s,a)   
        \leq   \Delta^{r}_{(k)}(s,a) +  \Delta^{p}_{(k)}(s,a) \frac{\gamma \hat{r}_{max}}{1-\gamma} -  2\Gamma^{(k)}_w(s,a) 
    \end{equation*}
\end{lemma}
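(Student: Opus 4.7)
The plan is to prove Lemma \ref{corollary3} by a direct algebraic expansion of $\bar{\iota}^{k+1}_{\infty}(s,a)$, closely mirroring the chain of manipulations already carried out in the proof of Theorem \ref{theorem2} (specifically the steps between \eqref{eqn:them2_2} and \eqref{eqn:them2_10}), but stopping at a cleaner, pointwise form expressed in terms of the forecasting errors $\Delta^{r}_{(k)}(s,a)$, $\Delta^{p}_{(k)}(s,a)$, and the bonus $\Gamma^{(k)}_w(s,a)$, rather than further bounding these quantities by the local variation budgets.

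First, I would start from the definition of the optimal model prediction error at the infinite horizon,
\[
\bar{\iota}^{k+1}_{\infty}(s,a) = \bigl(R_{(k+1)} + \gamma P_{(k+1)} \widehat{V}^{\bigast,(k+1)}_{\infty} - \widehat{Q}^{\bigast,(k+1)}_{\infty}\bigr)(s,a),
\]
and use the forecasted Bellman optimality equation \eqref{eqn:futurebellmanOptimal_step} to rewrite $\widehat{Q}^{\bigast,(k+1)}_{\infty}(s,a) = \bigl(\widehat{R}_{(k+1)} + \gamma \widehat{P}_{(k+1)} \widehat{V}^{\bigast,(k+1)}_{\infty}\bigr)(s,a)$. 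Substituting the decomposition $\widehat{R}_{(k+1)} = \widetilde{R}_{(k+1)} + 2\Gamma^{(k)}_w$ from Appendix \ref{appendix:Meta_function_fg} yields
\[
\bar{\iota}^{k+1}_{\infty}(s,a) = \bigl(R_{(k+1)} - \widetilde{R}_{(k+1)}\bigr)(s,a) \;-\; 2\Gamma^{(k)}_w(s,a) \;+\; \gamma \bigl((P_{(k+1)} - \widehat{P}_{(k+1)})\widehat{V}^{\bigast,(k+1)}_{\infty}\bigr)(s,a).
\]

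Next, I would upper bound the first term by $\Delta^{r}_{(k)}(s,a)$ directly from definition \eqref{eqn:forecasting reward model error}, apply H\"older's inequality to the transition term,
\[
\gamma \bigl((P_{(k+1)} - \widehat{P}_{(k+1)})\widehat{V}^{\bigast,(k+1)}_{\infty}\bigr)(s,a) \;\leq\; \gamma \,\bigl\lVert (P_{(k+1)} - \widehat{P}_{(k+1)})(\cdot\,|\,s,a)\bigr\rVert_1 \,\bigl\lVert\widehat{V}^{\bigast,(k+1)}_{\infty}\bigr\rVert_\infty,
\]
and use \eqref{eqn:forecasting transition probability model error} together with the standard bound $\bigl\lVert\widehat{V}^{\bigast,(k+1)}_{\infty}\bigr\rVert_\infty \leq \hat{r}_{\max}/(1-\gamma)$ (since rewards in $\widehat{\mathcal{M}}_{(k+1)}$ are bounded by $\hat{r}_{\max}$ and the horizon is infinite with discount $\gamma$). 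Combining the three estimates gives exactly the claimed inequality, with the $-2\Gamma^{(k)}_w(s,a)$ term carried through unchanged.

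The main obstacle is essentially bookkeeping rather than a genuine mathematical difficulty: one must be careful about signs, since the bound is a one-sided upper bound (not on $|\bar{\iota}^{k+1}_{\infty}|$), and this allows the negative $-2\Gamma^{(k)}_w$ contribution to be preserved rather than absorbed into absolute values. All other steps are immediate consequences of the Bellman identity and H\"older's inequality, and no probabilistic concentration is required since the statement is pointwise in $(s,a)$ and does not depend on the particular forecaster used to construct $\widehat{P}_{(k+1)}, \widetilde{R}_{(k+1)}$.
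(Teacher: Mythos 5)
Your proposal is correct and follows essentially the same route as the paper: the paper's proof of Lemma \ref{corollary3} likewise starts from the decomposition obtained via the forecasted Bellman optimality equation and the split $\widehat{R}_{(k+1)} = \widetilde{R}_{(k+1)} + 2\Gamma^{(k)}_w$ (this is exactly equation \eqref{eqn:them2_5} in the proof of Theorem \ref{theorem2}), then bounds the reward gap by $\Delta^{r}_{(k)}(s,a)$ and the transition term via H\"older's inequality with $\lVert\widehat{V}^{\bigast,(k+1)}_{\infty}\rVert_\infty \leq \hat{r}_{\max}/(1-\gamma)$, carrying the $-2\Gamma^{(k)}_w(s,a)$ term through unchanged. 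Your observation that the one-sidedness of the bound is what preserves the negative bonus term is exactly the point of the argument.
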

\begin{proof}[\textbf{Proof of Lemma \ref{corollary3}}]
    \jj{It results from} \eqref{eqn:them2_5},
    \begin{align*}
          \bar{\iota}^{k+1}_{\infty} &=  
          \left( R_{(k+1)} - \widetilde{R}_{(k+1)} \right) - \beta (\Lambda^{(k)}_w(s,a))^{1/2} + \gamma \left( P_{(k+1)} -\widehat{P}_{(k+1)} \right) \widehat{V}^{\bigast,(k+1)}_{\infty} - \beta (\Lambda^{(k)}_w(s,a))^{1/2} \\
            & \leq 
          \left| R_{(k+1)} - \widetilde{R}_{(k+1)} \right| - \beta (\Lambda^{(k)}_w(s,a))^{1/2} + \gamma \left|\left| P_{(k+1)} -\widehat{P}_{(k+1)} \right|\right|_1  \left|\left| \widehat{V}^{\bigast,(k+1)}_{\infty}\right|\right|_{\infty} - \beta (\Lambda^{(k)}_w(s,a))^{1/2} \\ 
            & \leq 
          \Delta^{r}_{(k)}(s,a) - \beta (\Lambda^{(k)}_w(s,a))^{1/2} + \gamma \Delta^{p}_{(k)}(s,a) \frac{\hat{r}_{max}}{1-\gamma} - \beta (\Lambda^{(k)}_w(s,a))^{1/2} \\
            & =
          \Delta^{r}_{(k)}(s,a) +  \Delta^{p}_{(k)}(s,a) \frac{\gamma \hat{r}_{max}}{1-\gamma} -  2\Gamma^{(k)}_w(s,a)
    \end{align*}
\end{proof}

\begin{lemma}[Upper bound \jj{on} $\Delta^{r}_{(k)}(s,a)$]
\label{lemma1}
For \jj{every} $(s,a) \in \mathcal{S} \times \mathcal{A}$, \jj{it holds that}
\begin{equation*}
    \Delta^{r}_{(k)}(s,a) \leq B^{(k-w:k)}_{r}(\Delta_\pi) + \lambda \Lambda^{(k)}_{w}(s,a) r_{\text{max}}
\end{equation*}
\end{lemma}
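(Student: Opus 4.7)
\textbf{Proof proposal for Lemma \ref{lemma1}.} The plan is to exploit the closed-form of the \texttt{SW-LSE} estimator given in \eqref{eqn:r_p_estimation} to write $\widetilde{R}_{(k+1)}(s,a)$ as a weighted average of the per-episode rewards over the sliding window, and then split the gap $R_{(k+1)}(s,a) - \widetilde{R}_{(k+1)}(s,a)$ into a \emph{regularization bias} (controlled by $\lambda$) and a \emph{drift} term (controlled by the local time-elapsing variation budget).

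Concretely, I would first note that at every visit $(s^t_h, a^t_h) = (s,a)$ the agent receives $r^t_h = R_{(t)}(s,a)$ (using the deterministic-reward convention already implicit in the tabular setting of Appendix \ref{appendix:Environment_setting}), so that the closed-form in \eqref{eqn:r_p_estimation} collapses to
\begin{equation*}
\widetilde{R}_{(k+1)}(s,a) \;=\; \frac{\sum_{t=k-w+1}^{k} n_t(s,a)\, R_{(t)}(s,a)}{\lambda + \sum_{t=k-w+1}^{k} n_t(s,a)}.
\end{equation*}
Writing $N := \sum_{t=k-w+1}^{k} n_t(s,a)$ and adding and subtracting $(\lambda + N) R_{(k+1)}(s,a)$ in the numerator yields the decomposition
\begin{equation*}
R_{(k+1)}(s,a) - \widetilde{R}_{(k+1)}(s,a) \;=\; \frac{\lambda\, R_{(k+1)}(s,a)}{\lambda + N} \;+\; \frac{\sum_{t=k-w+1}^{k} n_t(s,a)\bigl(R_{(k+1)}(s,a) - R_{(t)}(s,a)\bigr)}{\lambda + N}.
\end{equation*}

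The first term, after taking absolute values and using $|R_{(k+1)}(s,a)| \leq r_{\max}$ together with the identity $(\lambda + N)^{-1} = \Lambda^{(k)}_w(s,a)$ from \eqref{def:Biggamma}, contributes exactly $\lambda \Lambda^{(k)}_w(s,a)\, r_{\max}$. For the second term, I would apply the triangle inequality across consecutive episodes, $|R_{(k+1)}(s,a) - R_{(t)}(s,a)| \leq \sum_{\tau=t}^{k} |R_{(\tau+1)}(s,a) - R_{(\tau)}(s,a)|$, take the supremum over $(s,a)$, and recognize the resulting bound as $B^{(k-w+1:k)}_r(\Delta_\pi)$. Since this uniform upper bound is independent of $t$, pulling it out of the weighted sum leaves the factor $N/(\lambda + N) \leq 1$, producing the second summand of the claim.

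The argument is essentially algebraic, so the main obstacle is not technical difficulty but bookkeeping: one must keep the deterministic-reward interpretation consistent with the estimator in \eqref{eqn:r_p_estimation} (otherwise a stochastic concentration term would appear and have to be absorbed either into $r_{\max}$ or into the high-probability event handled by Lemmas \ref{lemma4}--\ref{lemma5}), and one must make sure the telescoping bound uses the correct index range so that the local variation budget $B^{(k-w+1:k)}_r(\Delta_\pi)$ appears rather than the global $B_r(\Delta_\pi)$. With those two conventions in place the two pieces combine to give exactly the stated inequality.
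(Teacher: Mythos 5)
Your proposal is correct and follows essentially the same route as the paper's proof: both start from the closed-form \texttt{SW-LSE} solution, split the gap into a $\lambda$-regularization bias term bounded by $\lambda \Lambda^{(k)}_{w}(s,a)\, r_{\text{max}}$ and a drift term bounded via a telescoping triangle inequality by the local variation budget, using $N/(\lambda+N)\le 1$ to discharge the weights. The bookkeeping caveats you flag (deterministic-reward convention, correct index range for the local budget) are exactly the ones the paper's proof also relies on implicitly.
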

\begin{proof}[\textbf{Proof of Lemma \ref{lemma1}}]
    We directly utilize the proof of \jj{Lemma 35 in} \cite{ding2022provably}. \jj{For every} $(s,a) \in \mathcal{S} \times \mathcal{A}$, $\Delta^{r}_{(k)}(s,a)$ can be represented as
    \begin{align}
        &  \Delta^{r}_{(k)}(s,a) \\
        &= |R_{(k+1)}(s,a) - \widetilde{R}_{(k+1)}(s,a)| \label{eqn:lemma1_eq1} \\
        &= |o^r_{(k+1)}(s,a) - \widetilde{o}^r_{(k+1)}(s,a)| \label{eqn:lemma1_eq2} \\
        &= \left|\frac{\sum_{t=(1 \wedge k-w+1)}^{k} \sum_{h=0}^{H-1} \mathbbm{1} \left[ (s,a)=(s^t_h,a^t_h) \right] \cdot r^t_h}{\lambda + \sum_{t=(1 \wedge k-w+1)}^{k} n_{t}(s,a)} - o^r_{(k+1)}(s,a) \right|  \label{eqn:lemma1_eq3} \\
        &= \Lambda^{(k)}_{w}(s,a) \left| \sum_{t=(1 \wedge k-w+1)}^{k} \sum_{h=0}^{H-1} \mathbbm{1} \left[ (s,a)=(s^t_h,a^t_h) \right] \cdot r^t_h - \left( \lambda + \sum_{t=(1 \wedge k-w+1)}^{k} n_{t}(s,a) \right)o^r_{(k+1)}(s,a) \right|  \label{eqn:lemma1_eq4} \\
        &= \Lambda^{(k)}_{w}(s,a) \left| \sum_{t=(1 \wedge k-w+1)}^{k} \sum_{h=0}^{H-1} \left( \mathbbm{1} \left[ (s,a)=(s^t_h,a^t_h) \right] \left( r^t_h - o^r_{(k+1)}(s,a)\right) \right) -  \lambda \cdot o^r_{(k+1)}(s,a) \right| \label{eqn:lemma1_eq5} \\ 
        & \leq \Lambda^{(k)}_{w}(s,a) \left( \sum_{t=(1 \wedge k-w+1)}^{k} \sum_{h=0}^{H-1} \mathbbm{1} \left[ (s,a)=(s^t_h,a^t_h) \right] \cdot \left| r^t_h - o^r_{(k+1)}(s,a) \right| \right)+ \lambda \Lambda^{(k)}_w(s,a) \left| o^r_{(k+1)}(s,a) \right| \label{eqn:lemma1_eq6} \\ 
        & \leq \Lambda^{(k)}_{w}(s,a) \left( \sum_{t=(1 \wedge k-w+1)}^{k} n_t(s,a) \left(  \left| r^t(s,a) - o^r_{(k+1)}(s,a)\right| \right) \right) + \lambda \Lambda^{(k)}_{w}(s,a) r_{\text{max}} \label{eqn:lemma1_eq7} \\
        &\leq \max_{(1 \wedge k-w+1) \leq t\leq k}\left(  \left| r^t(s,a) - o^r_{(k+1)}(s,a)\right| \right) \Lambda^{(k)}_{w}(s,a) \left( \sum_{t=(1 \wedge k-w+1)}^{k} n_t(s,a) \right) + \lambda \Lambda^{(k)}_{w}(s,a) r_{\text{max}} \nonumber  \\
        & \leq \max_{(1 \wedge k-w+1) \leq t\leq k}\left(  \left| r^t(s,a) - o^r_{(k+1)}(s,a)\right| \right) +   \lambda \Lambda^{(k)}_{w}(s,a) r_{\text{max}} \nonumber \\ 
        & \leq B^{(k-w:k)}_{r}(\Delta_\pi) + \lambda \Lambda^{(k)}_{w}(s,a) r_{\text{max}}  \label{eqn:lemma1_eq8}
    \end{align}
    Equations \eqref{eqn:lemma1_eq2} and \eqref{eqn:lemma1_eq3} hold by the definition of $o^{r}_{k+1},\widetilde{o}^{r}_{k+1}$ (definition \eqref{eqn:r_p_estimation}), equation \eqref{eqn:lemma1_eq4} holds by the definition \eqref{def:Biggamma}, equation \eqref{eqn:lemma1_eq5} holds since $n_t(s,a) \coloneqq \sum_{h=0}^{H-1} \mathbbm{1} \left[ (s,a) = (s_h^t,a_h^t) \right]$, \jj{and inequality} \eqref{eqn:lemma1_eq8} holds since $\max_{(1 \wedge k-w+1) \leq t\leq k} \left(  \left| r^t(s,a) - o^r_{(k+1)}(s,a) \right| \right) \leq |r^{(1 \wedge k-w+1)}(s,a) - r^{(1 \wedge k-w+1)+1}(s,a) | + \dots +|r^{k}(s,a)-r^{k+1}(s,a)| = B^{(k-w:k)}_{r}(\Delta_\pi)$.
\end{proof}

\begin{lemma}[Upper bound \jj{on} $\bar{\Delta}^{r}_{K}$]
    \label{lemma2}
    For \jj{every} $(s,a) \in \mathcal{S} \times \mathcal{A}$, \jj{it holds that}
    \begin{align*}
         \bar{\Delta}^{r}_{K} \leq wHB_r(\Delta_\pi) + \lambda r_{\text{max}} \cdot (K-1) \sqrt{\frac{H}{w}} \sqrt{\log{\left( \frac{\lambda+wH}{\lambda} \right)}}
    \end{align*}
\end{lemma}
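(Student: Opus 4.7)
The plan is to apply Lemma~\ref{lemma1} pointwise at the state--action pairs along each rolled-out trajectory and then sum. Since by definition $\bar{\Delta}^{r}_{(k),h} = \Delta^r_{(k)}(s^{(k+1)}_h,a^{(k+1)}_h)$, Lemma~\ref{lemma1} immediately yields
\[
\bar{\Delta}^{r}_{(k),h} \;\leq\; B^{(k-w:k)}_{r}(\Delta_\pi) \;+\; \lambda r_{\text{max}}\,\Lambda^{(k)}_w\!\bigl(s^{(k+1)}_h,a^{(k+1)}_h\bigr),
\]
so that $\bar{\Delta}^{r}_{K}$ splits into the two summands $H\sum_{k=1}^{K-1} B^{(k-w:k)}_r(\Delta_\pi)$ and $\lambda r_{\text{max}}\sum_{k,h}\Lambda^{(k)}_w(s^{(k+1)}_h,a^{(k+1)}_h)$, which I would bound separately.

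For the first summand, the key observation is a double-counting argument: each increment $\sup_{s,a}|R_{\mathfrak{t}_{\tau+1}}-R_{\mathfrak{t}_\tau}|$ that contributes to the global budget $B_r(\Delta_\pi)$ appears inside the local budget $B^{(k-w:k)}_r(\Delta_\pi)$ only when $\tau \in \{k-w+1,\dots,k\}$, i.e.\ for at most $w$ consecutive indices $k$. Hence $\sum_{k=1}^{K-1} B^{(k-w:k)}_r(\Delta_\pi) \leq w\, B_r(\Delta_\pi)$, and multiplying by $H$ produces exactly the term $wHB_r(\Delta_\pi)$. For the second summand, I plan to apply Cauchy--Schwarz in $h$ for each fixed $k$,
\[
\sum_{h=0}^{H-1}\Lambda^{(k)}_w\!\bigl(s^{(k+1)}_h,a^{(k+1)}_h\bigr) \;\leq\; \sqrt{H}\,\sqrt{\sum_{h=0}^{H-1}\!\bigl(\Lambda^{(k)}_w(s^{(k+1)}_h,a^{(k+1)}_h)\bigr)^{\!2}},
\]
and then bound the inner sum by a sliding-window analogue of the elliptical-potential (log-determinant) lemma, exploiting the fact that the total visitation mass inside any $w$-episode window is at most $wH$; this gives $\sum_h(\Lambda^{(k)}_w)^2 \lesssim \tfrac{1}{w}\log\!\bigl((\lambda+wH)/\lambda\bigr)$. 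Summing the resulting per-episode estimate over $k=1,\dots,K-1$ yields the required factor $(K-1)\sqrt{H/w}\sqrt{\log((\lambda+wH)/\lambda)}$.

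The main technical obstacle is the log-determinant-type bound in the sliding-window tabular setting: unlike the standard elliptical-potential argument where the design matrix is monotonically growing, here the window-restricted visitation matrix $V^{(k)}_w := \lambda I + \sum_{\tau=1\vee k-w+1}^{k} N_\tau$ both gains and loses rank-one terms as $k$ advances, so the telescoping identity used in the classical proof does not apply directly. I would handle this by partitioning the $K-1$ episodes into $\lceil (K-1)/w\rceil$ blocks within which the denominator $\Lambda^{(k)}_w$ is monotone, applying the standard log-determinant inequality inside each block, and paying a factor of $1/w$ from the block count; this is the step that mirrors the machinery of \cite{cheung2019learning,ding2022provably}. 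Everything else (splitting via Lemma~\ref{lemma1}, the counting argument for the variation budget, and the Cauchy--Schwarz step) is routine.
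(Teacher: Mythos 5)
Your decomposition via Lemma \ref{lemma1} and your treatment of the first summand are exactly the paper's: the double-counting bound $\sum_{k=1}^{K-1}B^{(k-w:k)}_r(\Delta_\pi)\le w\,B_r(\Delta_\pi)$ is the same as the paper's block identity $\sum_{\mathcal{E}}\sum_{k=(\mathcal{E}-1)w}^{\mathcal{E}w}B^{(k-w:k)}_r(\Delta_\pi)\le wB_r(\Delta_\pi)$, and multiplying by $H$ gives the term $wHB_r(\Delta_\pi)$.

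The gap is in the second summand. Your per-episode claim
\[
\sum_{h=0}^{H-1}\bigl(\Lambda^{(k)}_w(s^{(k+1)}_h,a^{(k+1)}_h)\bigr)^2\;\lesssim\;\tfrac{1}{w}\log\bigl((\lambda+wH)/\lambda\bigr)
\]
is false for a fixed $k$: in the tabular setting $\Lambda^{(k)}_w(s,a)=(\lambda+\sum_{t}n_t(s,a))^{-1}$, so if episode $k+1$ visits pairs that appear at most once in the window, each summand is of order $(\lambda+1)^{-2}$ and the left-hand side is $\Omega(H)$, independent of $w$. The elliptical-potential/log-determinant bound is intrinsically a statement about the sum over an entire block of $w$ consecutive episodes; it cannot be localized to a single episode, and the factor $1/w$ you want does not come from a per-episode estimate but from the block count. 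The paper's route avoids this: it first uses $\Lambda^{(k)}_w<1$ to write $\Lambda^{(k)}_w\le\sqrt{\Lambda^{(k)}_w}$, then (Lemma \ref{lemma3}) partitions the episodes into $\floor{(K-1)/w}$ blocks, applies Cauchy--Schwarz over the full $w\times H$ index set of each block to get $\sum_{k,h}\sqrt{\Lambda^{(k)}_w}\le\sqrt{wH}\,\bigl(\sum_{k,h}\Lambda^{(k)}_w\bigr)^{1/2}$, and only then invokes the log-determinant bound on the block sum $\sum_{k,h}\Lambda^{(k)}_w\le\log((\lambda+wH)/\lambda)$; multiplying by the number of blocks yields $(K-1)\sqrt{H/w}\,\sqrt{\log((\lambda+wH)/\lambda)}$. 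Your final expression is correct and you rightly identify the non-monotonicity of the windowed design matrix as the obstacle, but to close the argument you must move the Cauchy--Schwarz step from a single episode to a whole block (equivalently, work with $\Lambda^{(k)}_w$ itself after the $\Lambda\le\sqrt{\Lambda}$ step rather than with its square), so that the log-determinant lemma is applied to the quantity it actually controls.
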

\begin{proof}[\textbf{Proof of Lemma \ref{lemma2}}]
    The total empirical forecasting model error up to $K-1$ is given as  
    \begin{align}
        \bar{\Delta}^{r}_{K} &= \sum_{k=1}^{K-1} \sum_{h=0}^{H-1} \bar{\Delta}^{r}_{k,h} \nonumber \\
        &= \sum_{k=1}^{K-1} \sum_{h=0}^{H-1} \Delta^{r}_{(k)}(s^{(k+1)}_{h},a^{(k+1)}_{h}) \nonumber \\
        & \leq \sum_{k=1}^{K-1}\sum_{h=0}^{H-1} \left( B^{(k-w:k)}_{r}(\Delta_\pi) + \lambda \Lambda^{(k)}_{w}(s^{(k+1)}_{h},a^{(k+1)}_{h}) r_{\text{max}}  \right) \label{eqn:lemma2_eq3} \\
        & = wHB_r(\Delta_\pi) + \lambda r_{\text{max}} \cdot \sum_{k=1}^{K-1} \sum_{h=0}^{H-1} \left(  \Lambda^{(k)}_w(s^{(k+1)}_{h},a^{(k+1)}_{h})  \right) \label{eqn:lemma2_eq4} \\
       &\leq  wHB_r(\Delta_\pi) + \lambda r_{\text{max}} \cdot \sum_{k=1}^{K-1} \sum_{h=0}^{H-1} \left( \sqrt{ \Lambda^{(k)}_w(s^{(k+1)}_{h},a^{(k+1)}_{h})}  \right) \nonumber \\
        & \leq wHB_r(\Delta_\pi) + \lambda r_{\text{max}} \cdot (K-1) \sqrt{\frac{H}{w}} \sqrt{\log{\left( \frac{\lambda+wH}{\lambda} \right)}} \label{eqn:lemma2_eq6}
    \end{align}
    The \jj{inequality} \eqref{eqn:lemma2_eq3} holds by \jj{Lemma} \ref{lemma1}, the equation \eqref{eqn:lemma2_eq4} holds since $\sum_{k=1}^{K-1}  B^{(k-w:k)}_{r}(\Delta_\pi)=\sum_{\mathcal{E}=1}^{{\floor{\frac{K-1}{w}}}} \sum_{k=(\mathcal{E}-1)w}^{\mathcal{E} w } B^{(k-w:k)}_{r}(\Delta_\pi) \leq w B_r(\Delta_\pi) $, and the \jj{inequality} \eqref{eqn:lemma2_eq6} holds by \jj{Lemma} \ref{lemma3}.
\end{proof}

\begin{lemma}[Upper bound \jj{on} the term $\sum_{k=1}^{K-1} \sum_{h=0}^{H-1} \sqrt{  \Lambda^{(k)}_w(s^{(k+1)}_{h},a^{(k+1)}_{h})}$]
    \label{lemma3}
    \jj{It holds that}
    \begin{equation*}
        \sum_{k=1}^{K-1} \sum_{h=0}^{H-1} \left(  \sqrt{\Lambda^{(k)}_w(s^{(k+1)}_{h},a^{(k+1)}_{h})}  \right) \leq (K-1)\sqrt{\frac{H}{w}} \sqrt{\log{ \left( \frac{\lambda+wH}{\lambda} \right)} }
    \end{equation*}
\end{lemma}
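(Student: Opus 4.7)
The plan is to reduce the lemma to a sum-of-reciprocals bound in two steps: a Cauchy–Schwarz reduction to eliminate the square roots, followed by a blockwise log-potential argument. First I would apply Cauchy–Schwarz across the $(K-1)H$ summands,
\begin{equation*}
\sum_{k=1}^{K-1}\sum_{h=0}^{H-1}\sqrt{\Lambda^{(k)}_w(s^{(k+1)}_h,a^{(k+1)}_h)} \;\leq\; \sqrt{(K-1)H}\cdot\sqrt{\sum_{k=1}^{K-1}\sum_{h=0}^{H-1}\Lambda^{(k)}_w(s^{(k+1)}_h,a^{(k+1)}_h)},
\end{equation*}
so that it suffices to show $\sum_{k,h}\Lambda^{(k)}_w(s^{(k+1)}_h,a^{(k+1)}_h) \leq \tfrac{K-1}{w}\log\!\bigl(\tfrac{\lambda+wH}{\lambda}\bigr)$. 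Squaring the target bound confirms this is the correct reduction.

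Next I would partition $\{1,\dots,K-1\}$ into $\lceil (K-1)/w\rceil$ consecutive blocks of size $w$. For a block $\mathcal{E}$ with $k\in[(\mathcal{E}-1)w+1,\,\mathcal{E}w]$, the sliding window $[k-w+1,k]$ contains $[(\mathcal{E}-1)w+1,k]$, so
\begin{equation*}
\Lambda^{(k)}_w(s,a) \;\leq\; \Bigl[\lambda + \sum_{t=(\mathcal{E}-1)w+1}^{k} n_t(s,a)\Bigr]^{-1} \;=\; \bigl[\lambda + \widetilde N_k(s,a)\bigr]^{-1}.
\end{equation*}
Inside each block, I group the summands by $(s,a)$ and apply the classical integral estimate $\sum_{j=0}^{J-1} \tfrac{1}{\lambda+j} \leq \log\!\bigl(\tfrac{\lambda+J}{\lambda}\bigr)$ to the visits of each $(s,a)$. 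Combining these across $(s,a)$ by the concavity of $\log$ and the fact that $\sum_{(s,a)}\widetilde N_{\mathcal{E}w}(s,a) \leq wH$ yields the per-block bound $\log\!\bigl(\tfrac{\lambda+wH}{\lambda}\bigr)$. Summing over the $\lceil (K-1)/w\rceil$ blocks delivers the desired $\tfrac{K-1}{w}\log\!\bigl(\tfrac{\lambda+wH}{\lambda}\bigr)$, which plugged into the Cauchy–Schwarz step gives exactly the stated upper bound.

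The main obstacle is the index offset: the visit indicator is at trajectory $k+1$ while the count $\widetilde N_k$ in the denominator only aggregates trajectories up to $k$, so a visit need not count itself. I would resolve this by enumerating the within-block visits of each $(s,a)$ chronologically: the $j$-th such visit is bounded above by $1/(\lambda + (\text{previous trajectories' visits of }(s,a)))$, and the integral estimate absorbs the potentially missing self-count automatically (the inequality already starts the sum at $j=0$, i.e., denominator $\lambda$). A secondary issue is the ceiling in $\lceil (K-1)/w\rceil$ versus the clean $(K-1)/w$ in the statement; this is handled by absorbing any partial block into the same log term, since a shorter block only shrinks $\widetilde N_{\mathcal{E}w}$ and therefore the per-block log factor.
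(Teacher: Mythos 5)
Your overall architecture --- Cauchy--Schwarz to strip the square roots, then a blockwise potential bound on $\sum_{k,h}\Lambda^{(k)}_w$ --- is the same as the paper's; the paper applies Cauchy--Schwarz block-by-block rather than globally (which yields the same final bound) and invokes the log-determinant elliptical potential lemma of Jin et al.\ where you run an explicit scalar harmonic-sum argument, which is a legitimate substitution in this tabular setting since the Gram matrix is diagonal in the one-hot features. The breaking point is the step where you merge the per-$(s,a)$ harmonic sums into a single per-block $\log\bigl(\tfrac{\lambda+wH}{\lambda}\bigr)$. The inequality $\sum_{(s,a)}\log\tfrac{\lambda+J_{s,a}}{\lambda}\leq\log\tfrac{\lambda+\sum_{(s,a)}J_{s,a}}{\lambda}$ is false: with $\lambda=1$ and two pairs each visited once, the left side is $2\log 2$ and the right side is $\log 3$. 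What Jensen actually gives is $\sum_{(s,a)}\log\tfrac{\lambda+J_{s,a}}{\lambda}\leq n\log\tfrac{\lambda+wH/n}{\lambda}$ with $n$ the number of distinct pairs visited in the block, so a multiplicative factor of up to $|\mathcal{S}||\mathcal{A}|\wedge wH$ survives. This is not incidental: in the extreme case where the $wH$ steps of a block hit $wH$ distinct pairs never seen before in the window, every summand equals $1/\lambda$ and the block sum is $wH/\lambda$, far above $\log(1+wH/\lambda)$. (The same dimension factor is what the elliptical potential lemma produces --- the log-det ratio in the paper's own proof is precisely a sum of these per-coordinate logs --- so you have not found a way around it; you have asserted it away with a false inequality.)

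Second, your resolution of the index offset does not work. Because $\Lambda^{(k)}_w$ is frozen over the whole of trajectory $k+1$, every visit to the same $(s,a)$ within that trajectory carries the identical denominator $\lambda+\widetilde N_k(s,a)$; the denominators do not increment visit-by-visit, so the sum does not embed into $\sum_{j}1/(\lambda+j)$. Starting the harmonic sum at $j=0$ only accounts for the very first visit seeing denominator $\lambda$; it does nothing for $m\leq H$ repeated visits inside one trajectory, which contribute $m/\lambda$ against a harmonic budget of only $\log\tfrac{\lambda+m}{\lambda}$. The correct comparison, $\tfrac{m}{\lambda+N}\leq\bigl(1+\tfrac{m}{\lambda}\bigr)\log\tfrac{\lambda+N+m}{\lambda+N}$, costs an extra factor of order $1+H/\lambda$ that your bound does not carry. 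Finally, your ceiling-versus-floor remark does not close that gap either: a partial block still contributes a full $\log$ term, so you obtain $\lceil (K-1)/w\rceil$ of them rather than $(K-1)/w$ --- a minor slack, but one more place where the claimed constants do not follow from the argument as written.
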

\begin{proof}[\textbf{Proof of lemma \ref{lemma3}}]
    We denote $\bar{\Lambda}_{w}^{k} = \lambda \mathbb{I} + \sum_{t=(1 \wedge k-w+1)}^{k} \sum_{h=0}^{H-1} \varphi(s_h^t,a_h^t)\varphi(s_h^t,a_h^t)^\top$. Also, we denote $(\bar{\Lambda}_{w}^{k})^{(1)}= \lambda \mathbb{I} +  \varphi(s_h^{(1 \wedge k-w+1)},a_h^{(1 \wedge k-w+1)})\varphi(s_h^{(1 \wedge k-w+1)},a_h^{(1 \wedge k-w+1)})^\top$ Then, for \jj{every} $(s,a) \in \mathcal{S} \times \mathcal{A} $, $\Lambda^{(k)}_w(s,a) = \varphi(s,a)(\bar{\Lambda}^k_w)^{-1}\varphi(s,a)^\top$ holds. Now, the following term can be bounded \jj{as}
    \begin{align}
        &\sum_{k=1}^{K-1} \sum_{h=0}^{H-1} \sqrt{\Lambda^{(k)}_w (s^{(k+1)}_{h},a^{(k+1)}_{h})} \nonumber\\
        &=  \sum_{k=1}^{K-1} \sum_{h=0}^{H-1} \sqrt{\varphi(s^{(k+1)}_{h},a^{(k+1)}_{h}) (\bar{\Lambda}^k_w)^{-1}  \varphi(s^{(k+1)}_{h},a^{(k+1)}_{h})^\top} \nonumber \\   
        &= \sum_{\mathcal{E}=1}^{\floor{\frac{K-1}{w}}} \sum_{k=(\mathcal{E}-1)w+1}^{\mathcal{E} w} \sum_{h=0}^{H-1} \sqrt{\varphi(s^{(k+1)}_{h},a^{(k+1)}_{h}) (\bar{\Lambda}^k_w)^{-1}  \varphi(s^{(k+1)}_{h},a^{(k+1)}_{h})^\top} \nonumber \\
        &\leq \sum_{\mathcal{E}=1}^{\floor{\frac{K-1}{w}}} \sqrt{Hw} \sqrt{\sum_{k=(\mathcal{E}-1)w+1}^{\mathcal{E} w} \sum_{h=0}^{H-1} \varphi(s^{(k+1)}_{h},a^{(k+1)}_{h}) (\bar{\Lambda}^k_w)^{-1}  \varphi(s^{(k+1)}_{h},a^{(k+1)}_{h})^\top}   \label{eqn:lemma3_eq2_1}\\
        & \leq \sum_{\mathcal{E}=1}^{\floor{\frac{K-1}{w}}}  \sqrt{Hw} \sqrt{\log{ \left( \frac{\text{det}\left(\Lambda^{\mathcal{E}w+1}_w \right)}{\text{det} \left( (\Lambda^{(\mathcal{E}-1)w+2}_w)^{(1)} \right) }\right)}} \label{eqn:lemma3_eq3}\\
        &\leq  \floor{\frac{K-1}{w}}\sqrt{Hw} \sqrt{\log{ \left( \frac{\lambda+wH}{\lambda} \right)} }\label{eqn:lemma3_eq5} \\
        &\leq  (K-1)\sqrt{\frac{H}{w}} \sqrt{\log{ \left( \frac{\lambda+wH}{\lambda} \right)} } \nonumber
    \end{align}
\jj{The inequality} \eqref{eqn:lemma3_eq2_1} holds by \jj{the} Cauchy–Schwarz inequality,  \eqref{eqn:lemma3_eq3} holds by \jj{Lemmas} (D.1) and (D.2) \jj{in} \cite{jin2020provably}, \jj{and} \eqref{eqn:lemma3_eq5} holds since $ (\Lambda^{(\mathcal{E}-1)w+2}_w)^{(1)}  \geq \lambda$ \jj{and} $ \Lambda^{\mathcal{E}w+1}_w \leq \lambda + wH$. 
\end{proof}

\begin{lemma}[Upper bound \jj{on} $\Delta^{p}_{(k)}(s,a)$]
    \label{lemma4}
    For \jj{every} $(s,a) \in \mathcal{S} \times \mathcal{A}$ and given $\delta \in (0,1)$, the following holds with probability \jj{at least} $1-\delta$\jj{:}
    \begin{equation*}
        \Delta^{p}_{(k)}(s,a) \leq B^{(k-w+1:k)}_{p} +  (\Lambda^{(k)}_w(s,a))^{1/2} \cdot |\mathcal{S}| \cdot \sqrt{\frac{H^2}{2}\log{\left( \frac{H}{\delta \lambda} \right) }} + \lambda \Lambda^{(k)}_w (s,a)
    \end{equation*}
\end{lemma}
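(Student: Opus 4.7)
The plan is to mirror the proof strategy of Lemma \ref{lemma1}, but with an extra concentration step because $\Delta^{p}_{(k)}(s,a)$ involves random visitation counts to triples $(s',s,a)$ rather than deterministic reward observations. First, I would expand $\Delta^{p}_{(k)}(s,a)=\sum_{s'}\bigl|\widehat{P}_{(k+1)}(s'|s,a)-P_{(k+1)}(s'|s,a)\bigr|$ by plugging in the closed form of $\widehat{P}_{(k+1)}$ from \eqref{eqn:r_p_estimation}. Pulling the common denominator $\lambda+\sum_{t=k-w+1}^{k} n_t(s,a)=1/\Lambda^{(k)}_w(s,a)$ out front rewrites each summand as
\[
\bigl|\widehat{P}_{(k+1)}(s'|s,a)-P_{(k+1)}(s'|s,a)\bigr|=\Lambda^{(k)}_w(s,a)\cdot\Bigl|\sum_{t=k-w+1}^{k}n_t(s',s,a)-\bigl(\lambda+\sum_{t=k-w+1}^{k}n_t(s,a)\bigr)P_{(k+1)}(s'|s,a)\Bigr|.
\]

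Second, I would insert the pivot $\sum_{t}n_t(s,a)P_t(s'|s,a)$ and split the numerator into three pieces: a stochastic part $M_{k,w}(s,a,s'):=\sum_{t=k-w+1}^{k}\bigl[n_t(s',s,a)-n_t(s,a)P_t(s'|s,a)\bigr]$, a non-stationarity bias $\sum_{t=k-w+1}^{k}n_t(s,a)\bigl[P_t(s'|s,a)-P_{(k+1)}(s'|s,a)\bigr]$, and a regularization residual $\lambda P_{(k+1)}(s'|s,a)$. Summing over $s'$ and using $\sum_{s'}P_{(k+1)}(s'|s,a)=1$, I obtain
\[
\Delta^{p}_{(k)}(s,a)\le\Lambda^{(k)}_w(s,a)\!\sum_{s'}|M_{k,w}(s,a,s')|+\Lambda^{(k)}_w(s,a)\!\sum_{t=k-w+1}^{k}\!n_t(s,a)\!\sum_{s'}\bigl|P_t(s'|s,a)-P_{(k+1)}(s'|s,a)\bigr|+\lambda\Lambda^{(k)}_w(s,a).
\]
The bias term telescopes through the triangle inequality into $\sum_{t=k-w+1}^{k}n_t(s,a)\cdot B^{(k-w+1:k)}_{p}$, and since $\Lambda^{(k)}_w(s,a)\sum_{t}n_t(s,a)\le1$, this collapses to $B^{(k-w+1:k)}_{p}$, matching the first term in the claimed bound. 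The regularization piece is already the $\lambda\Lambda^{(k)}_w(s,a)$ term.

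Third, I would bound the stochastic piece by Azuma--Hoeffding. The key observation is that, with respect to the natural filtration generated by the historical trajectories, each episode-level summand $n_t(s',s,a)-n_t(s,a)P_t(s'|s,a)$ is a martingale difference bounded in absolute value by $H$, since both counts lie in $[0,H]$. Applying Azuma over the $w$ episodes in the window and a union bound over the $|\mathcal{S}|$ choices of $s'$ and over the admissible range of realized visit counts (which contributes the $\log(H/(\delta\lambda))$ factor) yields $|M_{k,w}(s,a,s')|\le\sqrt{(H^2/2)\log(H/(\delta\lambda))}$ uniformly in $s'$ with probability at least $1-\delta$. Multiplying by $\Lambda^{(k)}_w(s,a)$ and summing the $|\mathcal{S}|$ terms, and then bounding $\Lambda^{(k)}_w(s,a)\le(\Lambda^{(k)}_w(s,a))^{1/2}$ (valid because $\Lambda^{(k)}_w(s,a)\le1/\lambda\le1$ for $\lambda\ge1$), gives the middle term in the statement.

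The main obstacle will be the concentration step: unlike Lemma \ref{lemma1}, where rewards are directly observed, here the randomness of $n_t(s',s,a)$ is entangled with the adaptively generated visitation counts $n_t(s,a)$, so care is needed to condition on the right $\sigma$-algebra to validate the martingale structure and to execute the union bound over the random realizations of $\sum_t n_t(s,a)$ so as to produce exactly the $\log(H/(\delta\lambda))$ dependence. The remaining pieces follow by routine triangle-inequality and telescoping arguments.
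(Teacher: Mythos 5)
Your decomposition is exactly the paper's: pull out $\Lambda^{(k)}_w(s,a)$, insert the pivot $\sum_t n_t(s,a)\,o^p_t(s'|s,a)$, and split into a martingale part, a non-stationarity bias bounded by $B^{(k-w+1:k)}_p$, and the $\lambda\Lambda^{(k)}_w(s,a)$ residual. The bias and regularization terms are handled the same way and correctly.

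The gap is in the concentration step, which you yourself flag as the obstacle. Your stated intermediate claim, $|M_{k,w}(s,a,s')|\le\sqrt{(H^2/2)\log(H/(\delta\lambda))}$ uniformly in $s'$, is false: $M_{k,w}$ is a raw (unnormalized) martingale sum over up to $N=\sum_{t=k-w+1}^k n_t(s,a)$ increments, so any Azuma/Hoeffding bound on it necessarily carries a $\sqrt{N}$ (or $\sqrt{w}$) factor; no union bound over realized visit counts removes that growth. The paper instead invokes a \emph{self-normalized} concentration inequality (Lemma 43 of \cite{ding2022provably}, Abbasi-Yadkori type, applied to the episode-level $H/2$-sub-Gaussian increments $\eta^t(s')$), which bounds the product $(\Lambda^{(k)}_w(s,a))^{1/2}\cdot|M_{k,w}(s,a,s')|$ by $\sqrt{(H^2/2)\log(H/(\delta\lambda))}$ directly — the division by $\sqrt{\lambda+N}$ is what absorbs the $\sqrt{N}$ and makes the bound window-independent, and it is also where the $(\Lambda^{(k)}_w)^{1/2}$ in the final middle term comes from. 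Your subsequent step $\Lambda^{(k)}_w\le(\Lambda^{(k)}_w)^{1/2}$ discards precisely the half-power of $\Lambda^{(k)}_w$ that is needed to cancel the $\sqrt{N}$. The plan is repairable along your own lines — apply Azuma at the step level to get $|M_{k,w}|\lesssim\sqrt{N\log(\cdot)}$ after a peeling/union bound over the random value of $N$, and only then observe $\Lambda^{(k)}_w\sqrt{N}\le(\Lambda^{(k)}_w)^{1/2}$ — but as written the order of operations is wrong and the displayed bound on $|M_{k,w}|$ does not hold.
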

\begin{proof}[\textbf{Proof of lemma \ref{lemma4}}]
        For \jj{every} $(s,a) \in  \mathcal{S} \times \mathcal{A}$, \jj{one can write:}
\begin{align}
    & \Delta^{p}_{(k)}(s,a) \nonumber\\
    & = ||P_{(k+1)}(\cdot|s,a) - \widehat{P}_{(k+1)}(\cdot|s,a)||_1 \nonumber \\
    &= ||o^p_{(k+1)}(\cdot,s,a) - \widehat{o}^p_{(k+1)}(\cdot,s,a)||_1 \nonumber \\
    &= \sum_{s^\prime \in \mathcal{S}}\left|\frac{\sum_{t=k-w+1}^{k} n_t(s^\prime,s,a)}{\lambda + \sum_{t=k-w+1}^{k} n_{t}(s,a)} - o^p_{(k+1)}(s^\prime,s,a) \right|  \nonumber \\
    &= \Lambda^{(k)}_{w}(s,a)  \sum_{s^\prime \in \mathcal{S}} \left| \sum_{t=k-w+1}^{k} n_{t}(s^\prime,s,a) - \left( \lambda + \sum_{t=k-w+1}^{k} n_{t}(s,a) \right)o^p_{(k+1)}(s^\prime,s,a) \right| \nonumber \\
    & \leq  \Lambda^{(k)}_{w}(s,a) \sum_{s^\prime \in \mathcal{S}} \left( \left| \sum_{t=k-w+1}^{k} \left( n_{t}(s^\prime,s,a) -  n_{t}(s,a) o^p_{(k+1)}(s^\prime,s,a) \right) \right| + \left| \lambda o^p_{(k+1)}(s^\prime,s,a) \right| \right) \nonumber \\
    & \leq \Lambda^{(k)}_{w}(s,a)  \sum_{s^\prime \in \mathcal{S}} \left| \sum_{t=k-w+1}^{k} \left( n_{t}(s^\prime,s,a) -  n_{t}(s,a) o^p_{(k+1)}(s^\prime,s,a) \right) \right| + \lambda \Lambda^{(k)}_{w}(s,a) \label{eqn:up2_1}
\end{align}
Recall that $n_t(s^\prime,s,a),~n_t(s,a)$ is defined as
\begin{align}
    n_t(s^\prime,s,a) &= \sum_{h=0}^{H-1} \mathbbm{1}\left[ (s^\prime,s,a) = (s^t_{h+1},s^t_{h},a^t_{h})\right] \nonumber \\ 
    &=  \sum_{h=0}^{H-1} \mathbbm{1}\left[ (s,a) = (s^t_{h},a^t_{h})\right] \cdot \mathbbm{1} \left[ s^\prime = s^t_{h+1}\right] \label{eqn:n_tdef}\\ 
    n_t(s,a) &= \sum_{h=0}^{H-1} \mathbbm{1}\left[ (s,a) = (s^t_{h},a^t_{h})\right] \label{eqn:n_t2def}
\end{align}
where $\mathbbm{1}[\cdot]$ is an indicator function. \jj{Substituting}  \eqref{eqn:n_tdef} \jj{and} \eqref{eqn:n_t2def} into \eqref{eqn:up2_1} \jj{yields that}
\begin{align*}
    & \Lambda^{(k)}_w(s,a) \sum_{s^\prime \in \mathcal{S}} \left| \sum_{t=k-w+1}^{k} \left( n_{t}(s^\prime,s,a) -  n_{t}(s,a) o^p_{(k+1)}(s^\prime,s,a) \right) \right| \\
    & = \Lambda^{(k)}_w(s,a) \sum_{s^\prime \in \mathcal{S}} \left| \sum_{t=k-w+1}^{k} \left( \sum_{h=0}^{H-1} \mathbbm{1}\left[ (s,a) = (s^t_{h},a^t_{h})\right] \cdot \mathbbm{1} \left[ s^\prime = s^t_{h+1}\right] \right.\right.\\
    & \left.\left.-  \sum_{h=0}^{H-1} \mathbbm{1}\left[ (s,a) = (s^t_{h},a^t_{h})\right] \cdot o^p_{(k+1)}(s^\prime,s,a) \right) \right| \\
    & = \Lambda^{(k)}_w(s,a) \sum_{s^\prime \in \mathcal{S}} \left| \sum_{t=k-w+1}^{k} \left( \sum_{h=0}^{H-1} \mathbbm{1}\left[ (s,a) = (s^t_{h},a^t_{h})\right] \left(\mathbbm{1} \left[ s^\prime = s^t_{h+1}\right]- o^p_{(k+1)}(s^\prime,s,a) \right) \right) \right| \\
    & \leq \underbrace{\Lambda^{(k)}_w(s,a) \sum_{s^\prime \in \mathcal{S}} \bigg| \sum_{t=k-w+1}^{k}  \left( \sum_{h=0}^{H-1} \mathbbm{1}\left[ (s,a) = (s^t_{h},a^t_{h})\right] \left(  \mathbbm{1} \left[ s^\prime = s^t_{h+1}\right] - o^p_{t}(s^\prime,s,a) \right) \right) \bigg|}_{\circled{2.1}} \\ 
    & + \underbrace{\Lambda^{(k)}_w(s,a) \sum_{s^\prime \in \mathcal{S}}  \bigg| \sum_{t=k-w+1}^{k}  \left( \sum_{h=0}^{H-1} \mathbbm{1}\left[ (s,a) = (s^t_{h},a^t_{h})\right] \left(  o^p_{t}(s^\prime,s,a) - o^p_{(k+1)}(s^\prime,s,a) \right) \right) \bigg|}_{\circled{2.2}}
\end{align*}

The term \circled{2.1} can be upperbounded by utilizing the \jj{Lemmas (34) and (43)} in \cite{ding2022provably}. \jj{For every} $t \in [K]$ \jj{and} $s^\prime \in \mathcal{S}$, \jj{we define the} random variable $\eta^t(s^\prime) \coloneqq \sum_{h=0}^{H-1} \left( \mathbbm{1} \left[ s^\prime = s^t_{h+1}\right] - o^p_{t}(s^\prime,s^t_h,a^t_h) \right)$. 
\jj{Given} $s^\prime \in \mathcal{S}$\jj{, the} sequence $\{ \eta^\tau(s^\prime) \}_{\tau=1}^{\infty}$ is \jj{a} zero-mean and $H/2$-sub Gaussian random variable.
From the \jj{Lemma 43 in} \cite{ding2022provably}, \jj{we set} $Y = \lambda \mathbb{I}$ \jj{and} $X_t = \sum_{h=0}^{H-1} \mathbbm{1}\left[(s,a) = (s^t_h,a^t_h) \right]$. Then, for \jj{a} given $\delta \in (0,1)$, the following holds with probability \jj{at least}  $1-\delta$ for \jj{all} $(s,a) \in \mathcal{S} \times \mathcal{A}$\jj{:}
\begin{align}
    &\left\vert (\Lambda^{(k)}_w(s,a))^{1/2} \sum_{t=k-w+1}^k \left( \sum_{h=0}^{H-1} \mathbbm{1} \left[ (s,a) = (s^t_h,a^t_h) \right] \cdot \sum_{h=0}^{H-1} \mathbbm{1}[s^\prime=s^t_{h+1}] - o_t^p(s^\prime,s,a) \right) \right\vert \nonumber \nonumber \\
    &\leq \sqrt{\frac{H^2}{2}\log{\left( \frac{(\Lambda^{(k)}_w(s,a))^{-1/2} \cdot \lambda^{-1/2}}{\delta/H} \right)}} \nonumber \\ 
    &=\sqrt{\frac{H^2}{2}\log{\left( \frac{H}{\delta} \cdot \frac{1}{(\Lambda^{(k)}_w(s,a))^{1/2} \cdot \lambda^{1/2}} \right) }}  \nonumber\\
    & \leq \sqrt{\frac{H^2}{2}\log{\left( \frac{H}{\delta} \cdot \frac{1}{\lambda} \right) }} \label{eqn:up2_2}
\end{align}

\jj{As a result, the following inequality} holds with \jj{probability at least} $1-\delta$\jj{:}
\begin{align*}
    \circled{2.1}& \\
    =& (\Lambda^{(k)}_w(s,a))^{1/2} \sum_{s^\prime \in \mathcal{S} }\bigg\vert (\Lambda^{(k)}_w(s,a))^{1/2} \sum_{t=k-w+1}^k \bigg( \sum_{h=0}^{H-1} \mathbbm{1} \left[ (s,a) = (s^t_h,a^t_h) \right] \cdot \sum_{h=0}^{H-1} \mathbbm{1}[s^\prime=s^t_{h+1}] \\
    & - o_t^p(s^\prime,s,a) \bigg) \bigg\vert \\
    \leq& (\Lambda^{(k)}_w(s,a))^{1/2} \cdot |\mathcal{S}| \cdot \sqrt{\frac{H^2}{2}\log{\left( \frac{H}{\delta \lambda} \right) }} 
\end{align*}
The term \circled{2.2} can be bounded \jj{as}
\begin{align}
    \circled{2.2} & \leq \Lambda^{(k)}_w(s,a) \sum_{s^\prime \in \mathcal{S}} \sum_{t=k-w+1}^{k}  \sum_{h=0}^{H-1} \mathbbm{1}\left[ (s,a) = (s^t_{h},a^t_{h})\right] \bigg|  o^p_{t}(s^\prime,s,a) - o^p_{(k+1)}(s^\prime,s,a) \bigg| \nonumber \\ 
     & = \Lambda^{(k)}_w(s,a)  \sum_{t=k-w+1}^{k}  \sum_{h=0}^{H-1} \mathbbm{1}\left[ (s,a) = (s^t_{h},a^t_{h})\right] \sum_{s^\prime \in \mathcal{S}} \bigg|  o^p_{t}(s^\prime,s,a) - o^p_{(k+1)}(s^\prime,s,a) \bigg|  \nonumber\\ 
     & = \Lambda^{(k)}_w(s,a)  \sum_{t=k-w+1}^{k}  \sum_{h=0}^{H-1} \mathbbm{1}\left[ (s,a) = (s^t_{h},a^t_{h})\right] \bigg|\bigg|  o^p_{t}(\cdot,s,a) - o^p_{(k+1)}(\cdot,s,a) \bigg|\bigg|_{1}  \nonumber\\ 
     & \leq \max_{t\in[k-w+1,k]} \left( \bigg|\bigg|  o^p_{t}(\cdot,s,a) - o^p_{(k+1)}(\cdot,s,a) \bigg|\bigg|_{1} \right) \cdot \left( \Lambda^{(k)}_w(s,a)  \sum_{t=k-w+1}^{k}  \sum_{h=0}^{H-1} \mathbbm{1}\left[ (s,a) = (s^t_{h},a^t_{h})\right] \right)  \nonumber\\
     & \leq \max_{t\in[k-w+1,k]} \left( \bigg|\bigg|  o^p_{t}(\cdot,s,a) - o^p_{(k+1)}(\cdot,s,a) \bigg|\bigg|_{1} \right) \cdot 1  \nonumber \\ 
     & \leq B^{(k-w+1:k)}_{p}(\Delta_\pi) \label{eqn:up2_3}
\end{align}
Then, \jj{by combining} \eqref{eqn:up2_1}, \eqref{eqn:up2_2} \jj{and} \eqref{eqn:up2_3}, the term $\Delta^{p}_{(k)}(s,a)$ \jj{can be expressed as}
\begin{align*}
    \Delta^{p}_{(k)}(s,a) \leq B^{(k-w+1:k)}_{p}(\Delta_\pi) +  (\Lambda^{(k)}_w(s,a))^{1/2} \cdot |\mathcal{S}| \cdot \sqrt{\frac{H^2}{2}\log{\left( \frac{H}{\delta \lambda} \right) }} + \lambda \Lambda^{(k)}_w (s,a).
\end{align*}
\end{proof}

\begin{lemma}[Upper bound \jj{on} $\bar{\Delta}^{p}_{K}$]
    \label{lemma5}
    \jj{Given} $\delta \in (0,1)$, the following \jj{inequality holds} with \jj{probability at least} $1-\delta$\jj{:}
    \begin{equation*}
        \bar{\Delta}^{p}_{K} \leq  \left( |\mathcal{S}| \sqrt{\frac{H^2}{2}\log{\left( \frac{H}{\delta \lambda} \right) }} +\lambda \right) (K-1) \sqrt{\frac{H}{w}} \sqrt{\log{\left( \frac{\lambda+wH}{\lambda} \right)} } +wHB_p(\Delta_\pi) 
    \end{equation*}
\end{lemma}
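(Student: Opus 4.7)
The plan is to mirror the proof of Lemma \ref{lemma2} but now for the transition-probability model error, substituting Lemma \ref{lemma4} in place of Lemma \ref{lemma1}. Concretely, I would start by unpacking the definition
\begin{equation*}
\bar{\Delta}^{p}_{K} = \sum_{k=1}^{K-1}\sum_{h=0}^{H-1} \bar{\Delta}^{p}_{(k),h} = \sum_{k=1}^{K-1}\sum_{h=0}^{H-1} \Delta^{p}_{(k)}(s^{(k+1)}_{h},a^{(k+1)}_{h}),
\end{equation*}
and then apply Lemma \ref{lemma4} pointwise at every pair $(s^{(k+1)}_{h},a^{(k+1)}_{h})$. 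This yields, with probability at least $1-\delta$, a decomposition of $\bar{\Delta}^{p}_{K}$ into three summations: one involving the local variation budgets $B^{(k-w+1:k)}_{p}(\Delta_\pi)$, one involving $(\Lambda^{(k)}_{w})^{1/2}$, and one involving $\Lambda^{(k)}_{w}$.

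Next, I would handle each of the three summations in turn. For the variation-budget term, the same block decomposition used in Lemma \ref{lemma2} gives $\sum_{k=1}^{K-1} B^{(k-w+1:k)}_{p}(\Delta_\pi)\leq w B_{p}(\Delta_\pi)$, so summing over $h$ produces the additive term $wHB_{p}(\Delta_\pi)$. For the $(\Lambda^{(k)}_{w})^{1/2}$ term, Lemma \ref{lemma3} directly gives
\begin{equation*}
\sum_{k=1}^{K-1}\sum_{h=0}^{H-1}\sqrt{\Lambda^{(k)}_{w}(s^{(k+1)}_{h},a^{(k+1)}_{h})} \leq (K-1)\sqrt{\tfrac{H}{w}}\sqrt{\log\!\left(\tfrac{\lambda+wH}{\lambda}\right)}.
\end{equation*}
For the $\Lambda^{(k)}_{w}$ term, I would use $\Lambda^{(k)}_{w}(s,a)\leq\sqrt{\Lambda^{(k)}_{w}(s,a)}$, which is valid because $\Lambda^{(k)}_{w}(s,a)\in(0,1]$ whenever $\lambda\geq 1$ (the standing regularization regime used in Theorem \ref{theorem2}), and then invoke Lemma \ref{lemma3} again.

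Combining the three bounds and collecting the common factor $(K-1)\sqrt{H/w}\sqrt{\log((\lambda+wH)/\lambda)}$ in front of the $|\mathcal{S}|\sqrt{H^{2}/2\cdot\log(H/(\delta\lambda))}$ and $\lambda$ contributions then yields the claimed inequality. The routine block of the argument is the summation bookkeeping; the only subtle point is the probability accounting, since Lemma \ref{lemma4} is a high-probability pointwise bound, so I would make sure the $1-\delta$ event is uniform over all $(s,a)$ used along the sampled trajectories (Lemma \ref{lemma4} already provides this uniformity, so no union bound is needed). I expect no serious obstacle beyond this clean substitution; the main care is to invoke the comparison $\Lambda\leq\sqrt{\Lambda}$ under the assumption $\lambda\geq 1$ to avoid an extra $\sqrt{\cdot}$ factor that would spoil the final form.
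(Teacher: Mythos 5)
Your proposal matches the paper's proof essentially step for step: unpack $\bar{\Delta}^{p}_{K}$ as a double sum, apply Lemma \ref{lemma4} pointwise along the trajectory, bound the local-budget sum by $wHB_p(\Delta_\pi)$ via the same block decomposition as in Lemma \ref{lemma2}, and absorb the $\lambda\Lambda^{(k)}_w$ term into the $(\Lambda^{(k)}_w)^{1/2}$ term (using $0\leq\Lambda^{(k)}_w<1$) before invoking Lemma \ref{lemma3}. The probability accounting you flag is handled the same way in the paper, since Lemma \ref{lemma4} is already uniform over $(s,a)$ on a single $1-\delta$ event.
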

\begin{proof}[\textbf{Proof of lemma \ref{lemma5}}]
    The total empirical forecasting transition probability model error $\bar{\Delta}^{p}_{K}$ can be represented as follows, 
    \begin{align*}
        \bar{\Delta}^{p}_{K} =& \sum_{k=1}^{K-1} \sum_{h=0}^{H-1} \bar{\Delta}^{p}_{k,h} \\
        =& \sum_{k=1}^{K-1} \sum_{h=0}^{H-1} \Delta^{p}_{(k)} (s^{(k+1)}_{h},a^{(k+1)}_{h}) \\
        \leq & \sum_{k=1}^{K-1}\sum_{h=0}^{H-1}  \left( (\Lambda^{(k)}_w(s^{(k+1)}_{h},a^{(k+1)}_{h}))^{1/2} |\mathcal{S}| \sqrt{\frac{H^2}{2}\log{\left( \frac{H}{\delta \lambda} \right) }}  \right) \\
        &+ \sum_{k=1}^{K-1} \sum_{h=0}^{H-1}  \left( \max_{t\in[k-w+1,k]} \bigg|\bigg|  o^p_{t}(\cdot,s^{(k+1)}_{h},a^{(k+1)}_{h}) - o^p_{(k+1)}(\cdot,s^{(k+1)}_{h},a^{(k+1)}_{h}) \bigg|\bigg|_{1}\right)  \\
        &+ \sum_{k=1}^{K-1} \sum_{h=0}^{H-1} \left( \lambda \Lambda^{(k)}_w(s^{(k+1)}_{h},a^{(k+1)}_{h}) \right) \\ 
        \leq & \left( |\mathcal{S}| \sqrt{\frac{H^2}{2}\log{\left( \frac{H}{\delta \lambda} \right)}} + \lambda \right) \sum_{k=1}^{K-1}\sum_{h=0}^{H-1}  \left( (\Lambda^{(k)}_w(s^{(k+1)}_{h},a^{(k+1)}_{h}))^{1/2}   \right) \\
        &+ \sum_{k=1}^{K-1} \sum_{h=0}^{H-1}  \left( \max_{t\in[k-w+1,k]} \bigg|\bigg|  o^p_{t}(\cdot,s^{(k+1)}_{h},a^{(k+1)}_{h}) - o^p_{(k+1)}(\cdot,s^{(k+1)}_{h},a^{(k+1)}_{h}) \bigg|\bigg|_{1}\right)  \\
        \leq& \left( |\mathcal{S}| \sqrt{\frac{H^2}{2}\log{\left( \frac{H}{\delta \lambda} \right) }} +\lambda \right) (K-1) \sqrt{\frac{H}{w}} \sqrt{\log{\left( \frac{\lambda+wH}{\lambda} \right)} } +wHB_p(\Delta_\pi)\label{eqn:ftmbpo_up3}
    \end{align*}
\end{proof}


\begin{proof}[\textbf{Proof of Theorem \ref{thm:3}} ]

\hh{Before introduing the proof, we first go over some details about Theorem \ref{thm:3} in the following paragraph.} 

The \texttt{W-LSE} involves solving \jj{the} following joint optimization problem over $\phi^r_f \in \mathbb{R}^{|\mathcal{S}||\mathcal{A}|},\phi^p_f\in \mathbb{R}^{|\mathcal{S}|^2|\mathcal{A}|}$ \jj{and} $q \in \mathbb{R}^N$ to obtain a minimum upper bound \jj{on} the dynamic regret: 
\begin{gather}
     \min_{\phi^\diamond_f,q} \mathcal{L}\left(\phi^\diamond_f,q~;~\square_{1:N} \right) 
     \text{ where }\mathcal{L} \left( \phi^\diamond_f,q~;~\square_{1:N} \right) = \sum_{t=1}^{N} q_{t}  \left( \widehat{\square}_{\phi^\diamond_f}^{k+1}-\square_{t} \right)^2 + \text{disc}(q) + \frac{1}{wH} \cdot \lambda ||\phi^\diamond_f||_2  
\end{gather}
where $\diamond = r\text{ or }p$. If $\diamond=r$, then $\square=R(s,a)$ and if $\diamond=p$, then $\square=P(s^\prime,s,a)$. \jj{Moreover,} ${\square}_{ \phi^\diamond_f}$ means that $\square$ is parameterized by $ \phi^\diamond_f$, and $\square_{1:N}$ are observed data of $\square$, and the disc$(q) \coloneqq \sup_{f\in\mathcal{F}}\left( \mathbb{E}[f(\widehat{\square}^{k+1}~|~\square_{1:N}] - \sum_{t=1}^{N}q_t \mathbb{E}[\widehat{\square}^{t}|\square_{1:t-1}] \right)$ measures the non-stationarity of the environment. disc$(q)$ could be measured and \jj{upper-bounded} by \jj{the} observed data. For example, if $\diamond=r$ and $\square=R$, then
$\phi^r_{f}$ parameterizes the future reward function $\widehat{R}_{\phi^r_{f}}^{k+1}$, $N$ is the total number of visits of $(s,a)$ up to episode $k$, $R_{1:N}(s,a)$ is the set of reward values $\{R_{1}(s,a),R_{1}(s,a),\dots,R_{N}(s,a) \} $ that the agent has received when visiting $(s,a)$. We demonstrate \jj{a} modified upper bound \jj{on} $\mathfrak{R}_{I}$ when utilizing $\texttt{W-LSE}$. \jj{To do so}, we define \jj{the} forecasting reward model error $\Delta_{r,k}^1(s,a)=\left\vert \left(R_{(k+1)} - \widetilde{R}_{(k+1)} \right) (s,a) \right\vert $ \jj{and the} forecasting transition probability model error as $\Delta^{p}_{(k)}(s,a)=\left\vert \left\vert \left( P_{(k+1)} - \widehat{P}_{(k+1)} \right)(\cdot~|~s,a) \right\vert \right\vert_1$ where $\widetilde{R}_{(k+1)}$ and $\widehat{P}_{(k+1)}$ are predicted reward,transition probability from function $g \circ f$  (Appendix \ref{appendix:Notation for theoretical analysis}).

    We now \jj{brought} the \jj{Theorem} 7 of \cite{kuznetsov2018theory} to offer \jj{an} upper bound \jj{on} the $l_2$-norm of the reward gap between $R_{(k+1)}(s,a)$ and $\widetilde{R}_{(k+1)}(s,a)$ as follows. \jj{To this end, we} denote $X_{k,h} = (s^{(k)}_h,a^{(k)}_h) \in \mathcal{S} \times \mathcal{A},~Y_{k,h} = R_{(k)}(s^{(k)}_h,a^{(k)}_h) \in \mathbb{R}$ and assume \jj{that the} environment \jj{provides the agent with a} noisy reward $\widehat{Y}_{k,h} = Y_{k,h}  + \eta$\hh{, where $\eta$ is sampled from a zero-mean Gaussian.} Define the kernel $\Psi(x) = \varphi(x) \in \mathbb{R}^{|\mathcal{S}| |\mathcal{A}|}$\jj{, where} $ \varphi(x)$ is the one-hot vector that we have defined \jj{in Section} \ref{appendix:Environment_setting}. Now, \jj{we set} $r(x)= c^\top \varphi(x)$ where the vector $c \in \mathbb{R}^{|\mathcal{S}| |\mathcal{A}|}$ is \jj{the} same as the estimated future reward vector $\widetilde{R}_{(k+1)}  \in \mathbb{R}^{|\mathcal{S}| |\mathcal{A}|}$ and $r(x)$ is \jj{the} same as the estimated future reward when $x=(s,a)$, namely $\widetilde{R}_{(k+1)}(s,a)$. Then, for data until episode $k$, i.e., $\mathcal{D}_{data} =  \{ (X_{1,0},\widehat{Y}_{1,0}),(X_{1,1},\widehat{Y}_{1,1}),..,(X_{k,H-1},\widehat{Y}_{k,H-1}) \} $, we denote $\mathcal{D}^{(s,a)}_{data} \coloneqq \{(X_{k,h},\widehat{Y}_{k,h})~|~X_{k,h}=(s,a) \text{ such that }  (X_{k,h},\widehat{Y}_{k,h}) \in \mathcal{D}_{data}\}$. We relabel $\mathcal{D}^{(s,a)}_{data}$ as $\{\left((s,a),\widehat{Y}_1\right),\left((s,a),\widehat{Y}_2\right),...,\left((s,a),\widehat{Y}_N \right)\}$ \jj{such that} $N(s,a) = \sum_{t=1}^{k}n_t(s,a)$ is the total number of visitations of $(s,a)$ until episode $k$ (Definition \eqref{eqn:n_t2def}). We \jj{use the shorthand notation} $N$ as $N(s,a)$, and $\sum_{t=1}^{N} q_t =1 $. \jj{For every} $(s,a) \in \mathcal{S} \times \mathcal{A}$, the following \jj{inequalities hold} with probability at least $1-\delta$ for all functions $r \in \{ x \rightarrow c^\top \Psi(x) : ||c||_2 \leq \Lambda \}$\jj{:}  
    
    \begin{align}
        \mathbb{E}[\left( r(s,a)-\widehat{Y}_{N+1} \right)^2|\mathcal{D}^{(s,a)}_{data}] \leq \sum_{t=1}^{N}   q_{t} \left(r(s,a)-\widehat{Y}_{t} \right)^2 + \text{disc}(q) + \frac{1}{wH} \cdot \lambda ||\bar{r}||_2 \label{eqn:thm3}
    \end{align}
    \jj{Take the expectation} over $\eta$ on both inequailty. 
    \begin{align*}
         \mathbb{E}_{\eta} \left[ \mathbb{E} \left[ \left( r(s,a)-\widehat{Y}_{N+1} \right)^2|\mathcal{D}^{(s,a)}_{data} \right] \right] & \leq \mathbb{E}_{\eta} \left[ \sum_{t=1}^{N}   q_{t} \left(r(s,a)-\widehat{Y}_{t} \right)^2 + \text{disc}(q) + \frac{1}{wH} \cdot \lambda ||\bar{r}||_2 \right], \\
         \mathbb{E} \left[ \left( r(s,a)-\widehat{Y}_{N+1} \right)^2|\mathcal{D}^{(s,a)}_{data} \right] & \leq \sum_{t=1}^{N}  \mathbb{E}_{\eta} \left[ q_{t} \left(r(s,a)-\widehat{Y}_{t} \right)^2 \right] + \text{disc}(q) + \frac{1}{wH} \cdot \lambda ||\bar{r}||_2 .
    \end{align*}
     The \jj{left-hand side} of \eqref{eqn:thm3} can be expressed \jj{as}
    \begin{align}
        \mathbb{E} \left[ (r(s,a) - \widehat{Y}_{N+1} -\eta)^2 \right] 
        &= \mathbb{E}_{\eta} \left[ (r(s,a) - Y_{N+1})^2\right] + \mathbb{E}_{\eta} \left[ \eta^2 \right] \nonumber \\
        & = (r(s,a) - Y_{N+1})^2 + \mathbb{E} \left[ \eta^2 \right]  \label{eq:thm3_3}
    \end{align}
    \jj{Also, the} term $ \sum_{t=1}^{N}  \mathbb{E}_{\eta}\left[ q_{t} \left(r(s,a)-\widehat{Y}_{t} \right)^2\right]$ of the \jj{right-hand side} of equation \eqref{eqn:thm3} \jj{can be written as} 
    \begin{align*}
        \sum_{t=1}^{N}  \mathbb{E}_{\eta}[ q_{t} \left(r(s,a)-\widehat{Y}_{t} \right)^2] &=  \sum_{t=1}^{N}  \mathbb{E}_{\eta}[ q_{t} \left(\left(r(s,a)-Y_{t} \right)^2 + \eta^2 \right)] \\
        &=  \sum_{t=1}^{N}  \mathbb{E}_{\eta}\left[ q_{t} \left(\left(r(s,a)-Y_{t} \right)^2 \right)\right] + \sum_{t=1}^{N} \mathbb{E}_{\eta} \left[  q_t \eta^2 \right] \\
        &=  \sum_{t=1}^{N}  q_{t} \left(\left(r(s,a)-Y_{t} \right)^2 \right) + \mathbb{E}_{\eta} \left[ \eta^2 \right] 
    \end{align*}
    \jj{By eliminating} $\mathbb{E}_{\eta}[\eta^2]$ \jj{from both sides, we obtain that}
    \begin{align}
       (r(s,a) - Y_{N+1})^2  \leq  \sum_{t=1}^{N}  q_{t} \left(\left(r(s,a)-Y_{t} \right)^2 \right) + \text{disc}(q) + \frac{1}{wH} \cdot \lambda ||\bar{r}||_2 \label{eqn:startfromhere}
    \end{align}
   Recall the definition of $r(s,a) = \widetilde{R}_{(k+1)}(s,a)$, $Y_t = R_t(s,a)$. Since $t$ \jj{matches} one of $(k,h)\in [K] \times [H]$ pairs, we can rewrite 
   \begin{align*}
        \sum_{t=1}^{N} q_{t} \left(r(s,a)-\widehat{Y}_{t} \right)^2 &=  \sum_{k^\prime=1}^{k-1} \sum_{h=0}^{H-1} q_{(k^\prime,h)}\left(r(s,a)-Y_{(k,h)}  \right)^2 \\ 
        &=  \sum_{k^\prime=1}^{k-1} \sum_{h=0}^{H-1} q_{(k^\prime,h)}\left(\widetilde{R}_{(k+1)}(s,a)-R^{k^\prime}_h(s,a) \right)^2 \\ 
        &= \sum_{k^\prime=1}^{k-1} \sum_{h=0}^{H-1} q_{(k^\prime,h)}\left(\widetilde{R}_{(k+1)}(s,a)-R^{k^\prime}(s,a) \right)^2
   \end{align*}
    where if $(s,a)$ is not visited at step $h$ of episode $k$, then \jj{the} corresponding $q_{(k^\prime,h)}$ is zero. \jj{As a result,}  
    \begin{align*}
        \Delta^{r}_{(k)}(s,a) \leq&  \sqrt{ \min_{q,\bar{r}} \left(\sum_{k^\prime=1}^{k-1} \sum_{h=0}^{H-1} q_{(k^\prime,h)}\left(\widetilde{R}_{(k+1)}(s,a)-R^{k^\prime}(s,a) \right)^2 + \text{disc}(q) + \frac{1}{wH} \cdot \lambda ||\bar{r}||_2 \right)} \\
        \leq&  \sqrt{ \min_{q,\bar{r}} \left( \left( \max_{1\leq k^\prime \leq k} \left( \widetilde{R}_{(k+1)}(s,a)-R^{k^\prime}(s,a)\right) \right)^2 \left( \sum_{k^\prime=1}^{k-1} \sum_{h=0}^{H-1} q_{(k^\prime,h)} \right) + \text{disc}(q) + \frac{1}{wH} \cdot \lambda ||\bar{r}||_2 \right)} 
    \end{align*}
    \jj{A similar analysis for} $\Delta^{p}_{(k)}$ \jj{leads to the following inequality for all} $(s^\prime,s,a) \in \mathcal{S} \times \mathcal{S} \times \mathcal{A}$\jj{:}
    \begin{align*}
         &\left| P_{(k+1)}(s^\prime~|~s,a) - \widehat{P}_{(k+1)}(s^\prime~|~s,a) \right| \\ &\leq  \sqrt{ \min_{q,\bar{p}} \left(\sum_{k^\prime=1}^{k-1} \sum_{h=0}^{H-1} q_{(k^\prime,h)}\left(\widehat{P}_{(k+1)}(s^\prime|s,a)-P^{k^\prime}(s^\prime|s,a) \right)^2 + \text{disc}(q) + \frac{1}{wH} \cdot \lambda ||\bar{p}||_2 \right)} 
    \end{align*}
    \jj{On the other hand,} 
    \begin{align*}
         \Delta^{p}_{(k)}(s,a) &\leq  \sum_{s^\prime \in \mathcal{S}} \sqrt{ \min_{q,\bar{p}} \left(\sum_{k^\prime=1}^{k-1} \sum_{h=0}^{H-1} q_{(k^\prime,h)}\left(\widehat{P}_{(k+1)}(s^\prime|s,a)-P^{k^\prime}(s^\prime|s,a) \right)^2 + \text{disc}(q) + \frac{1}{wH} \cdot \lambda ||\bar{p}||_2 \right)}  \\
        &\leq  |\mathcal{S}| \sqrt{ \min_{q,\bar{p}} \left(\sum_{k^\prime=1}^{k-1} \sum_{h=0}^{H-1} q_{(k^\prime,h)}\left|\left|\widehat{P}_{(k+1)}(\cdot|s,a)-P^{k^\prime}(\cdot|s,a) \right|\right|_{\infty}^2 + \text{disc}(q) + \frac{1}{wH} \cdot \lambda ||\bar{p}||_2 \right)} 
    \end{align*}
        Recall the Corollary \ref{corollary2}, Corollary\ref{corollary3} and $\mathfrak{R}_{I}$ definition. \jj{Aftering fixing} $(s,a)$, \jj{the term} $\mathfrak{R}_{I}(s,a)$ \jj{can be expressed as} 
    \begin{align*}
        \mathfrak{R}_{I} =& \frac{1}{1-\gamma}\sum_{k=1}^{K-1} \bar{\iota}_{\infty}^{k+1} + \sum_{k=1}^{K-1} \sum_{h=0}^{H-1} - \iota^{(k+1)}_{h} + C_p \sqrt{K-1} \\
         \leq& \frac{1}{1-\gamma} \sum_{k=1}^{K-1}  \left( \Delta^{r}_{(k)}(s,a) +  \Delta^{p}_{(k)}(s,a) \frac{\gamma \hat{r}_{max}}{1-\gamma} -  2\Gamma^{(k)}_w(s,a) \right) \\
        & + \sum_{k=1}^{K-1} \sum_{h=0}^{H-1} \left( \Delta^{r}_{(k)}(s,a) + \Delta^{p}_{(k)}(s,a)\frac{\gamma  \hat{r}_{max} }{1-\gamma} +  2\Gamma^{(k)}_w(s,a) \right) \\ 
        & +  C_p \sqrt{K-1} \\
         \leq& \frac{1}{1-\gamma} \sum_{k=1}^{K-1}  \left( \Delta^{r}_{(k)}(s,a) +  \Delta^{p}_{(k)}(s,a) \frac{\gamma}{1-\gamma}\left( \tilde{r}_{\text{max}} + \max(2\Gamma^{(k)}_w(s,a)) \right)  -  2\Gamma^{(k)}_w(s,a) \right) \\
        & + H \sum_{k=1}^{K-1} \left( \Delta^{r}_{(k)}(s,a) + \Delta^{p}_{(k)}(s,a)\frac{\gamma}{1-\gamma}\left( \tilde{r}_{\text{max}} + \max(2\Gamma^{(k)}_w(s,a)) \right) +  2\Gamma^{(k)}_w(s,a) \right) \\ 
        & +  C_p \sqrt{K-1} \\
         \leq& \sum_{k=1}^{K-1}  \bigg( \underbrace{\left( \frac{1}{1-\gamma} + H\right) \Delta^{r}_{(k)}(s,a) +  \frac{\gamma \tilde{r}_{\text{max}}}{1-\gamma}  \left( \frac{1}{1-\gamma} + H\right) \Delta^{p}_{(k)}(s,a)}_{\circled{1}} + \\
        & + \frac{\gamma}{1-\gamma}  \left( \frac{1}{1-\gamma} + H\right) \max(2\Gamma^{(k)}_w(s,a)) \Delta^{p}_{(k)}(s,a) \bigg) \\
        & + \sum_{k=1}^{K-1}  2 \left( -\frac{1}{1-\gamma} + H\right) \Gamma^{(k)}_w(s,a)  \\ 
        & +  C_p \sqrt{K-1}
    \end{align*}
    \jj{We set} the term $\circled{1}$ \jj{to be} $2(\frac{1}{1-\gamma} +H)\Gamma^{(k)}_w(s,a)$\jj{, which requires redefining} the exploration bonus term \jj{as}
    \begin{align*}
        \Gamma^{(k)}_w(s,a) = \frac{1}{2} \Delta^{r}_{(k)}(s,a) + \frac{\gamma \tilde{r}_{\text{max}}}{2(1-\gamma)}  \Delta^{p}_{(k)}(s,a).
    \end{align*}
    Also, note that $\Delta^{p}_{(k)}(s,a) = \sum_{s^\prime \in \mathcal{S}}\left| \left( \widehat{P}_{(k+1)}- P_{(k+1)} \right) (s^\prime|s,a)\right| \leq \left| \mathcal{S} \right|$. \jj{Therefore,} 
    \begin{align*}
        \mathfrak{R}_{I} 
        &\leq \sum_{k=1}^{K-1} \left( 4H \Gamma^{(k)}_w(s,a)+ \frac{2\gamma}{1-\gamma}  \left( \frac{1}{1-\gamma} + H\right) \max(\Gamma^{(k)}_w(s,a)) \left| \mathcal{S} \right|   \right) \\
        &\leq \sum_{k=1}^{K-1} \left( 4H + \frac{2\gamma}{1-\gamma}  \left( \frac{1}{1-\gamma} + H\right)  \right) \max(\Gamma^{(k)}_w(s,a))  \\
        &= \left( 4H + \frac{2\gamma \left| \mathcal{S} \right| }{1-\gamma}  \left( \frac{1}{1-\gamma} + H\right)  \right) \sum_{k=1}^{K-1}  \max(\Gamma^{(k)}_w(s,a))  \\
        & \leq \left( 4H + \frac{2 \gamma \left| \mathcal{S} \right|}{1-\gamma}  \left( \frac{1}{1-\gamma} + H\right)  \right) \sum_{k=1}^{K-1} \left(  \frac{1}{2} \max (\Delta^{r}_{(k)}(s,a) ) + \frac{\gamma \tilde{r}_{\text{max}}}{2(1-\gamma)}  \max( \Delta^{p}_{(k)}(s,a) ) \right)  \\
        & = \left( 4H + \frac{2\gamma \left| \mathcal{S} \right|}{1-\gamma}  \left( \frac{1}{1-\gamma} + H\right)  \right) \sum_{k=1}^{K-1} \left(  \frac{1}{2} \Delta^{r}_{(k)}(s,a)  + \frac{\gamma \tilde{r}_{\text{max}}}{2(1-\gamma)}  \Delta^{p}_{(k)}(s,a) \right)  \\
        & = \left( 4H + \frac{2\gamma \left| \mathcal{S} \right| }{1-\gamma}  \left( \frac{1}{1-\gamma} + H\right)  \right) \left(  \frac{1}{2} \sum_{k=1}^{K-1} \Delta^{r}_{(k)}(s,a)  + \frac{\gamma \tilde{r}_{\text{max}}}{2(1-\gamma)}  \sum_{k=1}^{K-1} \Delta^{p}_{(k)}(s,a) \right)  \\
    \end{align*}
    Note that above upper bound \jj{on} $\mathfrak{R}_{I}$ holds \jj{under the following conditions for} $\Delta^{r}_{(k)}(s,a)$ \jj{and} $\Delta^{p}_{(k)}(s,a)$\jj{:}
    \begin{align*}
         \Delta^{r}_{(k)}(s,a) &\leq  \sqrt{ \min_{q,\bar{r}} \left(\sum_{k^\prime=1}^{k-1} \sum_{h=0}^{H-1} q_{(k^\prime,h)}\left(\widetilde{R}_{(k+1)}(s,a)-R^{k^\prime}(s,a) \right)^2 + \text{disc}(q) + \frac{1}{wH} \cdot \lambda ||\bar{r}||_2 \right)}, \\
        \Delta^{p}_{(k)}(s,a) &\leq  \sum_{s^\prime \in \mathcal{S}} \sqrt{ \min_{q,\bar{p}} \left(\sum_{k^\prime=1}^{k-1} \sum_{h=0}^{H-1} q_{(k^\prime,h)}\left(\widehat{P}_{(k+1)}(s^\prime|s,a)-P^{k^\prime}(s^\prime|s,a) \right)^2 + \text{disc}(q) + \frac{1}{wH} \cdot \lambda ||\bar{p}||_2 \right)}. 
    \end{align*}
\end{proof}

\begin{proof}[\textbf{Proof of Remark \ref{rmk:W-LSE_modelerror}}]
    The proof starts \jj{with} \eqref{eqn:startfromhere}. \jj{Define}  
    \begin{align}
        q^{sw}_{t} &= 
        \begin{cases}
            \frac{1}{wH} & \text{if } t \in (k-w,k] \\
            0 & \text{otherwise} 
        \end{cases}, \nonumber \\
        r^{sw} &= \argmin_{\bar{r}} \left( \lambda ||\bar{r}||^2 + \sum_{t=1}^{N}  \left(r(s,a)-\widehat{Y}_{t} \right)^2 \right). \label{eqn:r_slidingwindow}
    \end{align}
    where $r_{sw}$ is the same reward estimation \jj{as in} \eqref{eqn:r_p_estimation}. Then the minimum \jj{of \eqref{eqn:thm3}} yields that
    \begin{align}
        &\min_{\bar{r},q} \left( \sum_{t=1}^{N}   q_{t} \left(r(s,a)-\widehat{Y}_{t} \right)^2 + \text{disc}(q) + \frac{1}{wH} \cdot \lambda ||\bar{r}||_2    \right) \label{eqn:thm3_1} \\ 
        &\leq \min_{\bar{r}} \left( \sum_{t=1}^{N}   q^{sw}_{t} \left(r(s,a)-\widehat{Y}_{t} \right)^2 + \text{disc}(q^{sw}) + \frac{1}{Hw} \cdot \lambda ||\bar{r}||_2 \right)\nonumber \\
        &\leq \frac{1}{Hw} \underbrace{\min_{\bar{r}} \left(  \sum_{t=1}^{N} (Hw) \cdot q^{sw}_{t} \left(r(s,a)- \widehat{Y}_{t} \right)^2  + \lambda ||\bar{r}||_2 \right)}_{\circled{1}}+~\text{disc}(q_{sw}). \label{eqn:thm3_2}
    \end{align}
    The term $\circled{1}$ is the optimization problem \jj{in} \eqref{eqn:r_slidingwindow} \jj{whose} minimizer is $r^{sw}$. \jj{An} inspection \jj{of} \eqref{eqn:thm3_1} \jj{and} \eqref{eqn:thm3_2} \jj{concludes} that the \jj{optimal solution} ($q^{*},\bar{r}^{*}$), \jj{namely the} minimizer of \eqref{eqn:thm3_1} \jj{provides a} smaller value than $(q^{sw},r^{sw})$. Since the \jj{right-hand side} \eqref{eqn:startfromhere} is same \jj{as} \eqref{eqn:thm3_1}, $(q^{*},\bar{r}^{*})$ \jj{provides} a tighter upper bound \jj{on the left-hand side} term of equation \eqref{eqn:thm3} than $q^{sw},r^{sw}$. 
    Therefore, \eqref{eq:thm3_3} \jj{implies} that the optimal \jj{solution} $(q^{*},\bar{r}^{*})$ gives a tighter upper bound \jj{on} $\Delta^{r}_{(k)}$ \jj{than using} $(q^{sw},\bar{r}^{sw})$. \jj{One can repeat \jj{the} above argument} for the upper bound \jj{on} $\Delta^{p}_{(k)}$. Then, by Corollary \ref{corollary2} and \ref{corollary3}, the tighter upper bound\jj{s on} $\Delta^{r}_{(k)}(s,a)$ \jj{and} $\Delta^{p}_{(k)}(s,a)$ \jj{provide} smaller upper \jj{bounds on} $-\iota^{(k)}_H,\bar{\iota}^{K}_\infty$ and \jj{lead to} a tighter upper bound on $\mathfrak{R}_{I}$.
\end{proof}
\section{Experimental design and results}
\label{section:Experimental design and results}

\subsection{Environment setting details}
\label{appendix:environment setting details}
\textbf{Reward function design. } 

All three environments share the same reward function structure and have an identical goal. The reward function $R$ consists of three parts $R = R_h + R_f - R_c$\jj{, where} $R_h$ is the healthy reward, $R_f = k_{f} (x_{t+1} - x_{t})/ \Delta t , k_{f}>0$ is the forward reward, and $R_c$ is the control cost. \jj{The agents} have a goal to run faster in the $+x$ direction, \jj{and therefore} the faster they run, the higher the forward reward $R_f$ \jj{is}. We \jj{modify} the environment to make the agent's desired directions change as the episode goes by. To be specific, we \jj{design} the forward reward $R_f$ to change as episodes progress \jj{in the form of}  $R^k_f = o_k \cdot k_{f} (x_{t+1} - x_{t})/ \Delta t$ where $o_k=a\text{sin}(w bk)$ and $k$ is a episode where $a,b>0$ are constants. \jj{A} positive $o_k$ causes the agent to desire \jj{a} forward $+x$ direction as an optimal policy, \jj{and a} negative $o_k$ causes it to desire \jj{a} backward $-x$ direction. We \jj{generate} different speeds of non-\jj{stationarity} by changing the frequency variable $w \in \{ 1,2,3,4,5 \} $.

\textbf{Non-stationary variable $o_k$ generator. } 
\begin{enumerate}
    \item Sine function: The non-stationary \jj{parameter} $o_k$ is designed as
$o_k = \sin{ \left( 2\pi w k / 37 \right)}$\jj{, where} $w$ is the integer speed of the environment change and $k$ is the \jj{episode number}. We \jj{change} $w$ \jj{in the set} $[1,2,3,4,5]$. We \jj{divide} $2 \pi w k$ by 37, \jj{a} prime number, to \jj{ensure} that the environment has various non-stationary modes and to avoid certain non-stationary \jj{parameters} appearing frequently.
    \item Real data: we \jj{bring} the stock price data to model \jj{a} non-stationary real \jj{dataset}.
\end{enumerate}
\begin{figure}[h]
    \centering
    \includegraphics[width=0.8\textwidth]{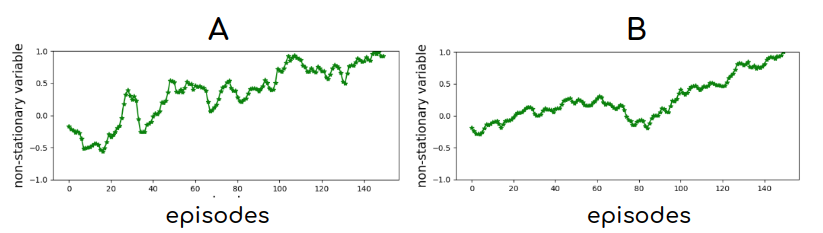}
    \caption{Nonstationary parameter from real data A,B}
    \label{fig:ns_variable}
\end{figure}

\textbf{Non-stationary \jj{parameter} $o_k$ generator (ablation study). }
$B(G)$ satisfies the property of the time-elapsing variation budget that $B(G)$ increases as $G$ increases. For the ablation study, we generate $o_k = \sin{(2\pi\cdot G \cdot k} / 37)$\jj{, where} $G \in \{ 38, 76, 114, 152, 190\}$. We estimated $B(G)$ as $\sum_{k=1}^{150} |o_{k+1}-o_k|$:
\begin{center}
\begin{tabular}{||c | c c c c c||} 
 \hline  
 & $G=38$ & $G=76$ & $G=114$ & $G=152$ & $G=190$ \\ [0.5ex] 
 \hline\hline
  $B(G)$ &  15.98 & 31.85 &  47.49 & 62.79 & 77.64 \\  [0.5ex] 
 \hline
\end{tabular}
\end{center}

\subsection{Hyperparameters and implementation details}
\label{appendix:hyperparameters and implementation details}
\textbf{Training Details. }

For the ARIMA model that serves as a forecatser $f$, we \jj{use the} auto$\_$arima function of pmdarima python package to find the optimal $p,q,d$. To compare the results between \texttt{ProST-G} and MBPO, we \jj{train} the MBPO and \texttt{ProST-G} with \jj{the} initial learning rate $lr=0.0003$ with the decaying parameter $0.999$. For \texttt{ProST-G}, We \jj{add the} uniform noise $\eta \sim \text{Unif}([-b,b])$ \jj{to the} non-stationary \jj{parameter} $o^k$ to generate \jj{the} noisy non-stationary parameter $\hat{o}_k = o_k +\eta$ with different noise bounds $b \in \{ 0.01,0.03,0.05 \}$. \hh{We denote} \text{Unif}$([-b,b])$ \hh{as continuous uniform distributions over the interval $[-b,b]$.}

To compare the \jj{results} between \texttt{ProST-G} \jj{and} ProOLS, ONPG, FTML, we \jj{train} these three baselines with eight different initial learning \jj{rates} $lr \in \{0.001,0.003,0.005,0.007,0.01,0.03,0.05,0.07 \}$.

\textbf{Hyper parameters.}

{\renewcommand{\arraystretch}{1.4}
\small
\begin{tabular}{ ||c|c||c|c|c||  }
    \hline 
    Letter &  hyper parameters & Swimmer-v2 & Half cheetah-v2 & Hopper-v2 \\ 
    \hline
    \hline
    $K$ & episodes  &  100   &  150 & 150   \\
    \hline
    $H$ & environment steps per episodes & \multicolumn{3}{c||}{100} \\
    \hline
    $G$ & policy updates per epochs & \multicolumn{3}{c||}{50} \\
    \hline
    $\widehat{H}$ & model rollout length & \multicolumn{3}{c||}{1 $\rightarrow$ 15 over episodes 20 $\rightarrow$150}\\ 
    \hline
    $N$ & iteration of policy update and policy evaluation & \multicolumn{3}{c||}{1} \\
    \hline
    $M$ & model rollout batch size ($D_{syn}$) & \multicolumn{3}{c||}{1e5} \\
    \hline
    $\tau$ & entropy regularization parameter & \multicolumn{3}{c||}{0.2} \\
    \hline
    $\gamma$ & reward discounting factor & \multicolumn{3}{c||}{0.99} \\
    \hline

\end{tabular}
}

Note that $\widehat{H}$ increases linearly within a certain range as episode goes by. We denote $h_{min} \rightarrow h_{max}$ over episodes $k_{min}\rightarrow k_{max}$ as $\widehat{H}(k) = \min( \max(h_{min}+(k-k_{min})/(k_{max}-k_{min})\cdot (h_{max}-h_{min}) ,h_{min}), h_{max})$.

\newpage
\subsection{Full results}
\label{subsection:Full results}
\subsubsection{Non-stationarity: sine wave}
\textbf{(1) Swimmer-v2}
\begin{figure}[h]
    \centering
    \includegraphics[width=0.95\textwidth]{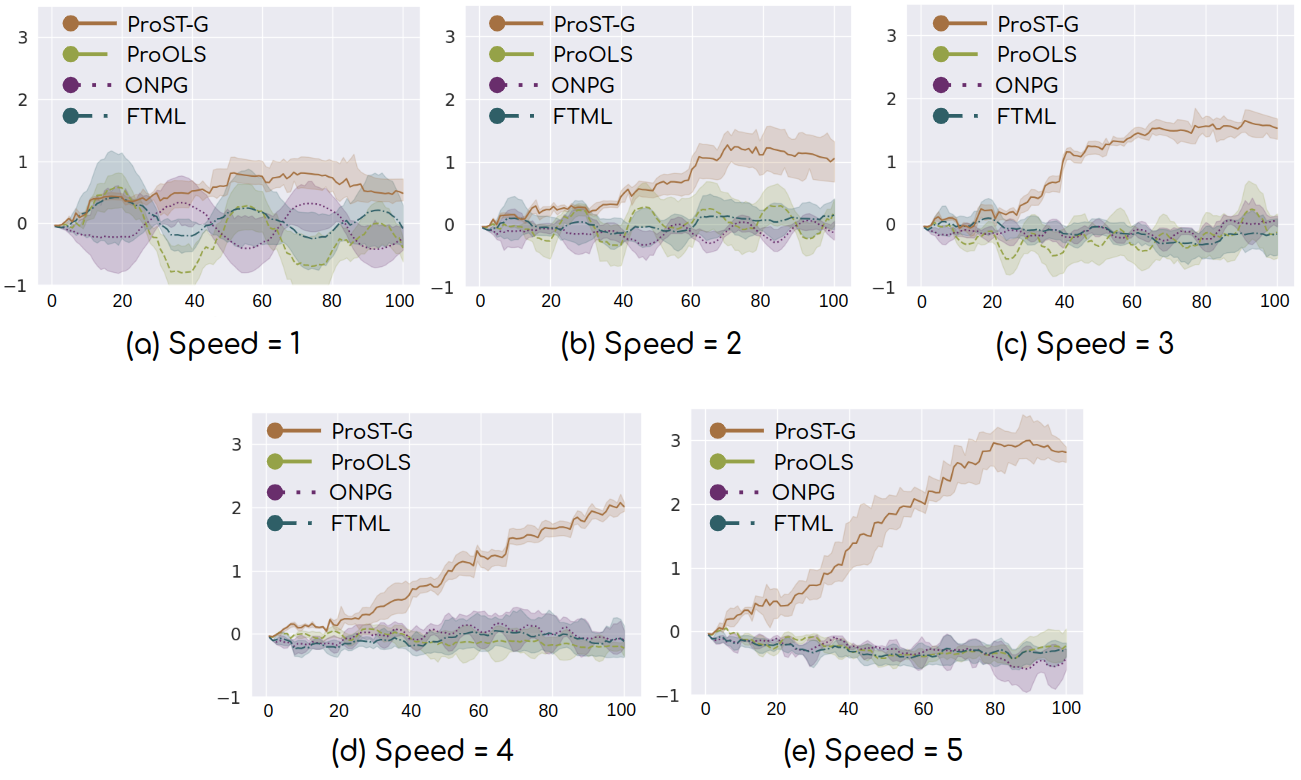}
     \caption{(a) $\sim $ (e) the average rewards of \texttt{ProST-G}, and the three baselines: ProOLS, ONPG, FTML for 5 different speeds ($x$-axis indicates the episode). The shaded area of \texttt{ProST-G} is 95\% confidence area among 3 different noise bounds, and the shaded areas of three baselines are the 95 \% confidence area among 8 different learning rates.}
\end{figure}

\begin{figure}[h]
    \centering
    \includegraphics[width=1.0\textwidth]{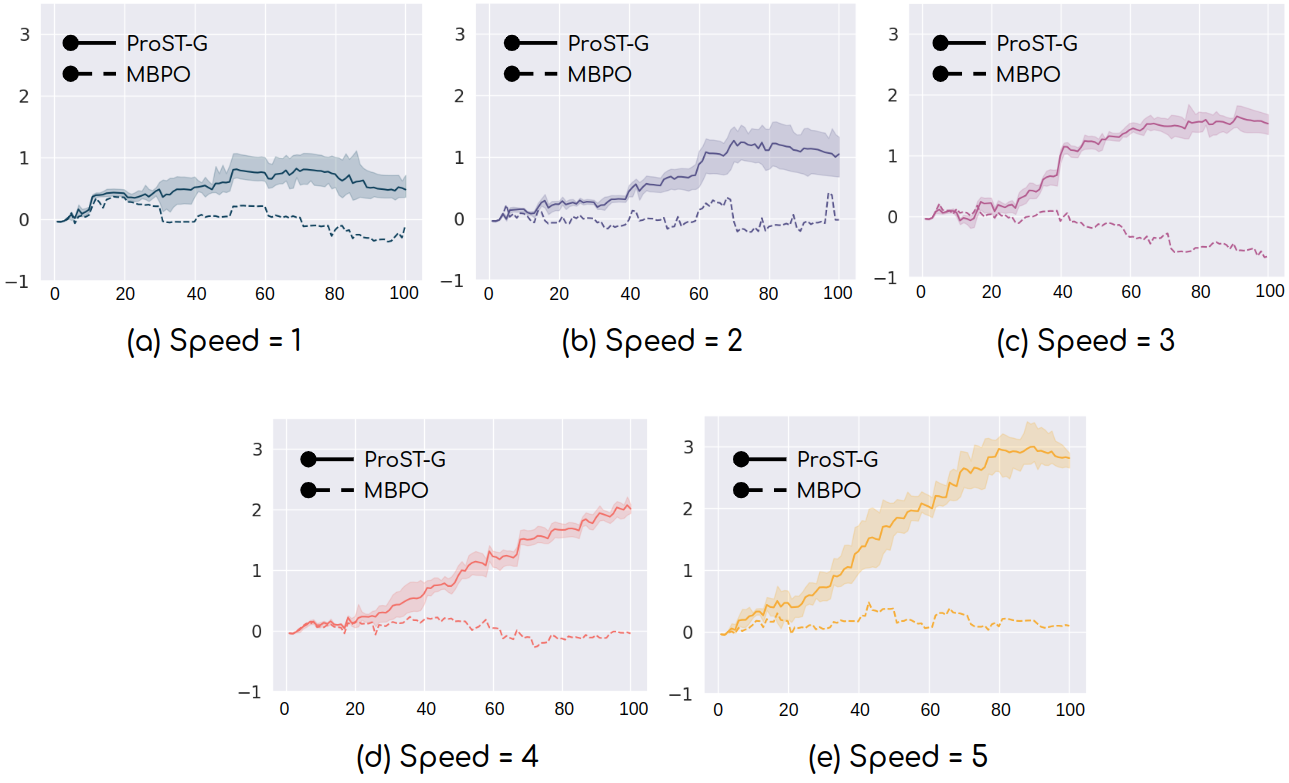}
    \caption{(a) $\sim $ (e) the average rewards of \texttt{ProST-G} and MBPO. The shaded area of \texttt{ProST-G} is 95\% confidence area among 3 different noise bounds.}
\end{figure}

\newpage

\textbf{(2) Halfcheetah-v2}

\begin{figure}[h]
    \centering
    \includegraphics[width=0.95\textwidth]{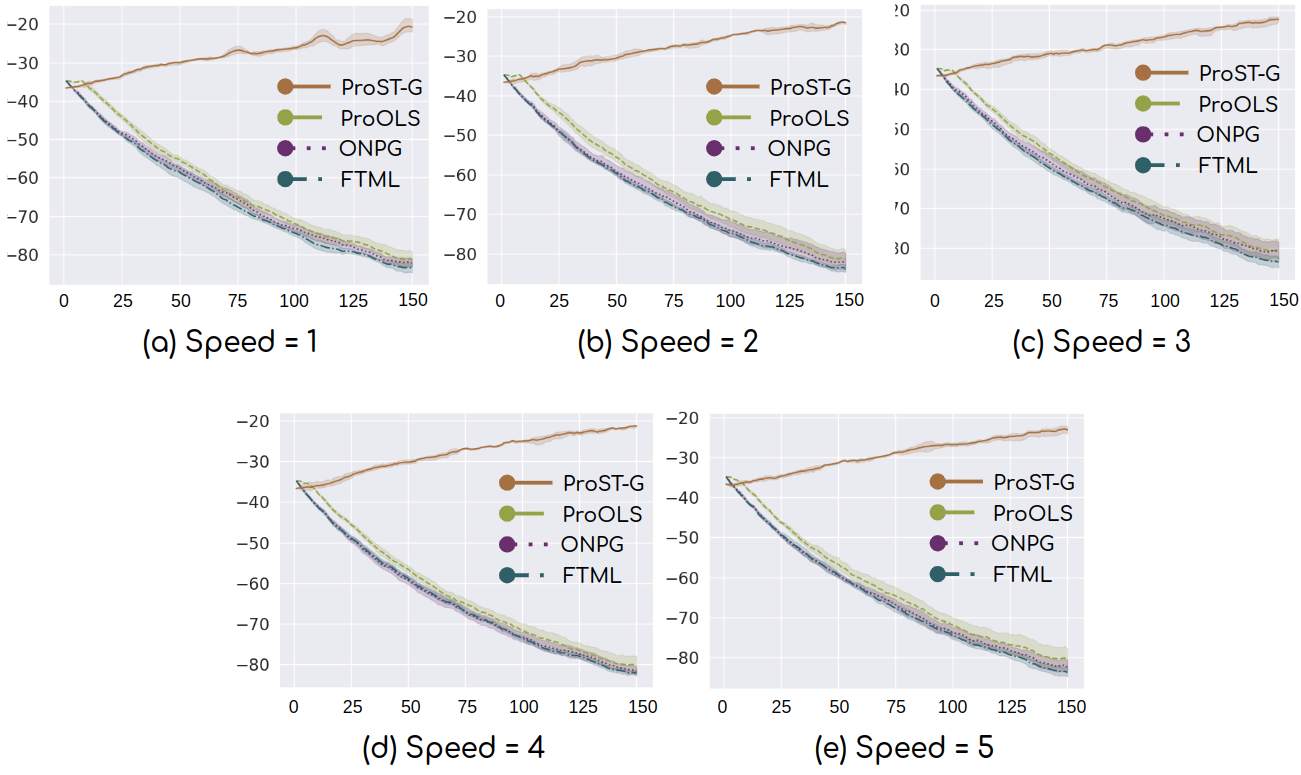}
     \caption{(a) $\sim $ (e) the average rewards of \texttt{ProST-G}, and the three baselines: ProOLS, ONPG, FTML for 5 different speeds ($x$-axis indicates the episode). The shaded area of \texttt{ProST-G} is 95\% confidence area among 3 different noise bounds, and the shaded ares of three baselines are the 95\% confidence areas among 8 different learning rates.}
\end{figure}

\begin{figure}[h]
    \centering
    \includegraphics[width=1.0\textwidth]{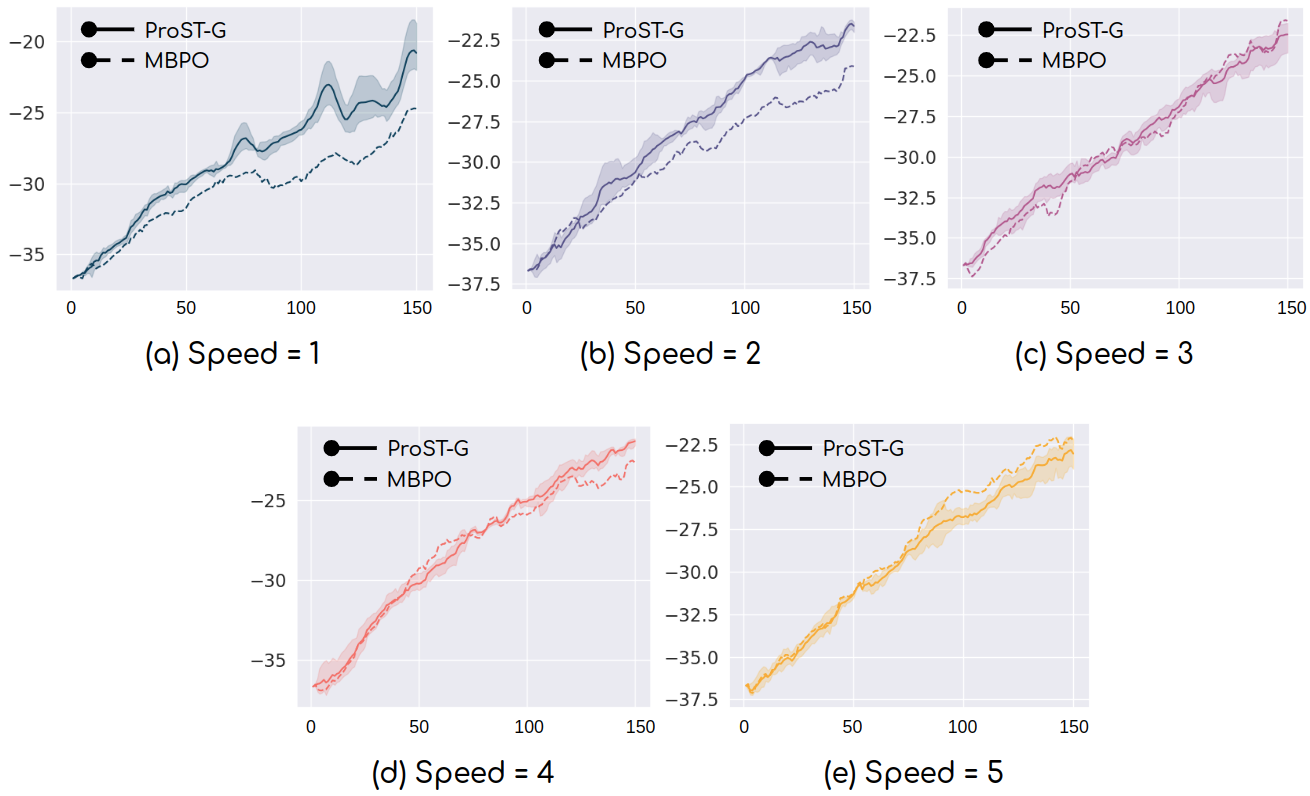}
    \caption{(a) $\sim $ (e) the average rewards of \texttt{ProST-G} and MBPO ($x$-axis indicates the episode). The shaded area of \texttt{ProST-G} is 95\% confidence area among 3 different noise bounds.}
\end{figure}

\newpage 

\textbf{(3) Hopper-v2}

\begin{figure}[h]
    \centering
    \includegraphics[width=0.95\textwidth]{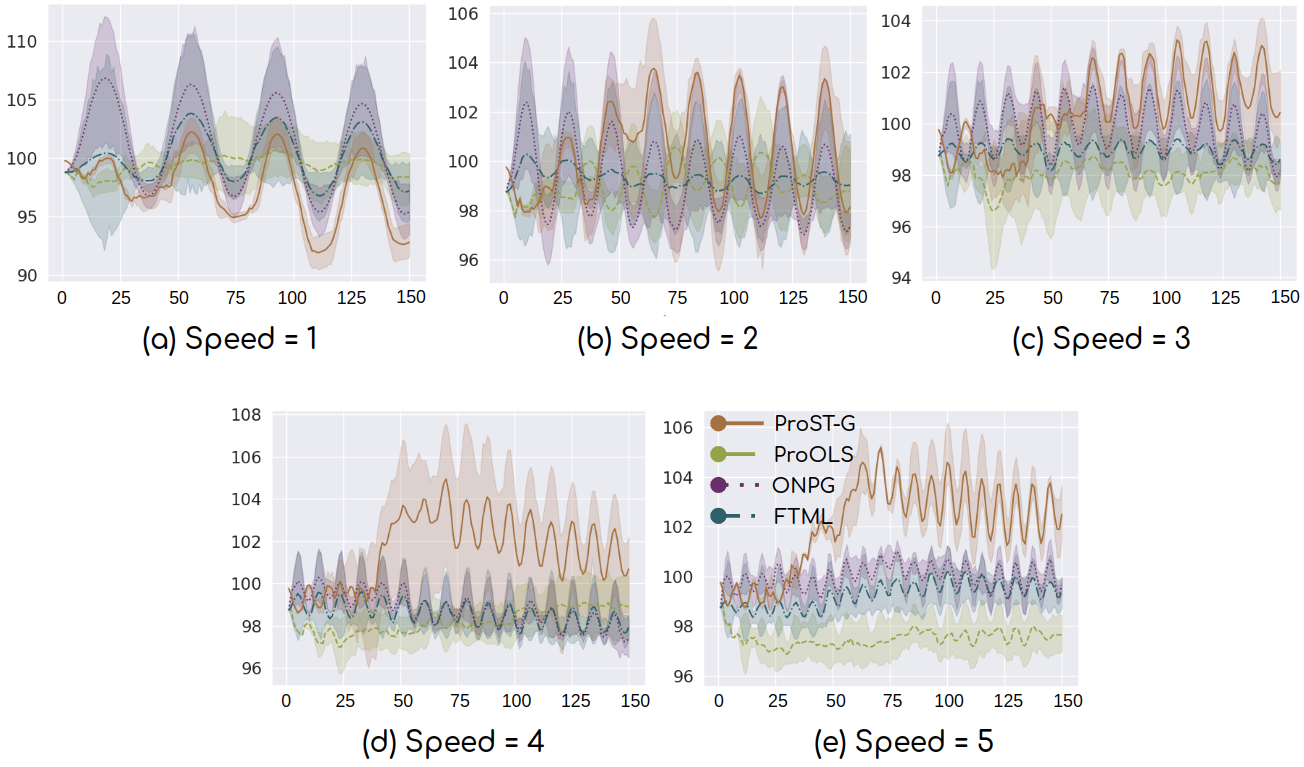}
     \caption{(a) $\sim $ (e) the average rewards of \texttt{ProST-G}, and the three baselines : ProOLS, ONPG, FTML for 5 different speeds ($x$-axis indicates the episode). The shaded area of \texttt{ProST-G} is 95\% confidence area among 3 different noise bounds, and the shaded areas of three baselines are the 95\% confidence areas among 8 different learning rates.}
\end{figure}

\begin{figure}[h]
    \centering
    \includegraphics[width=1.0\textwidth]{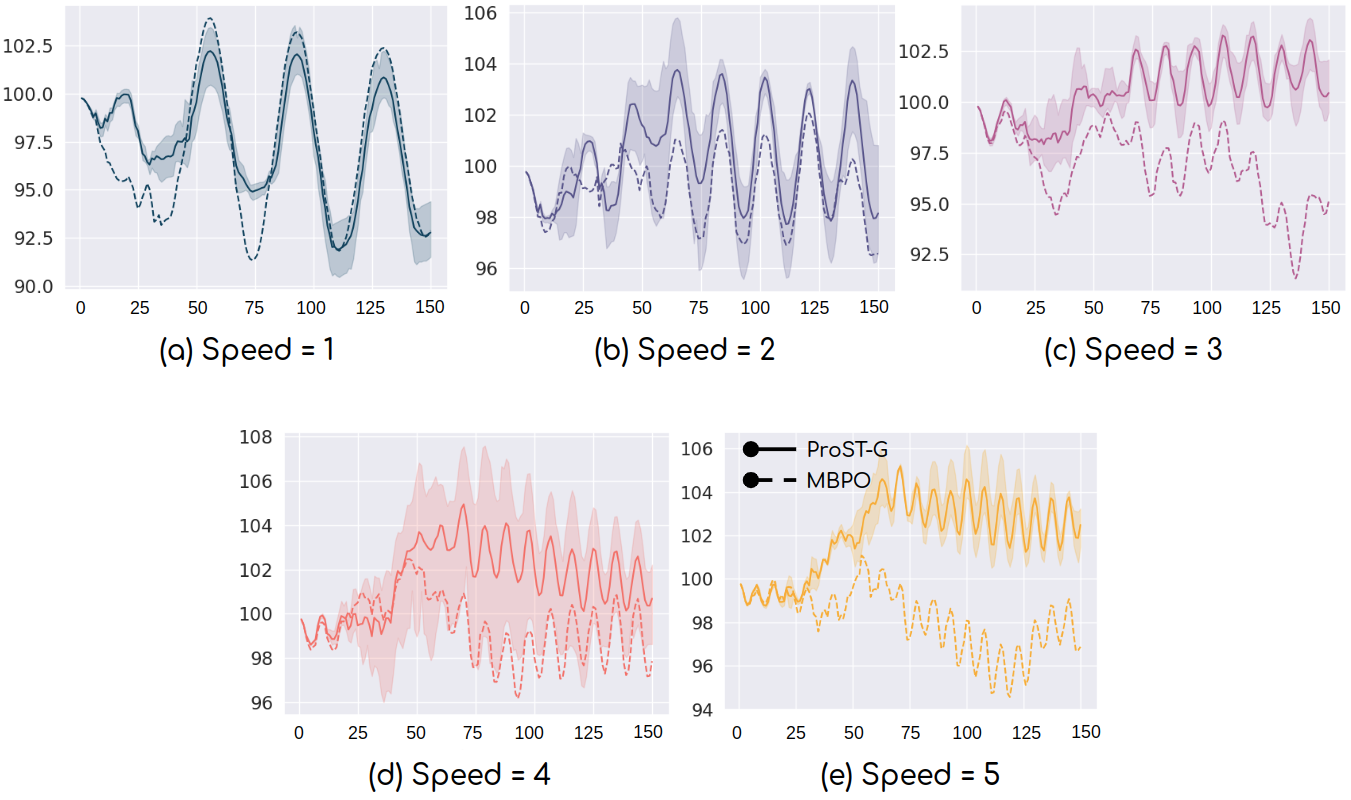}
    \caption{(a) $\sim $ (e) the average rewards of \texttt{ProST-G} and MBPO ($x$-axis indicates the episode). The shaded area of \texttt{ProST-G} is 95\% confidence area among 3 different noise bounds.}
\end{figure}

\newpage 

\subsubsection{Non-stationarity : real data}
The shaded area of \texttt{ProST-G} is 95\% confidence area among 3 different noise bounds, and the shaded ares of three baselines are the 95\% confidence area among 8 different learning rates.

\textbf{(1) Swimmer-v2}

\begin{figure}[h]
    \centering
    \includegraphics[width=1.0\textwidth]{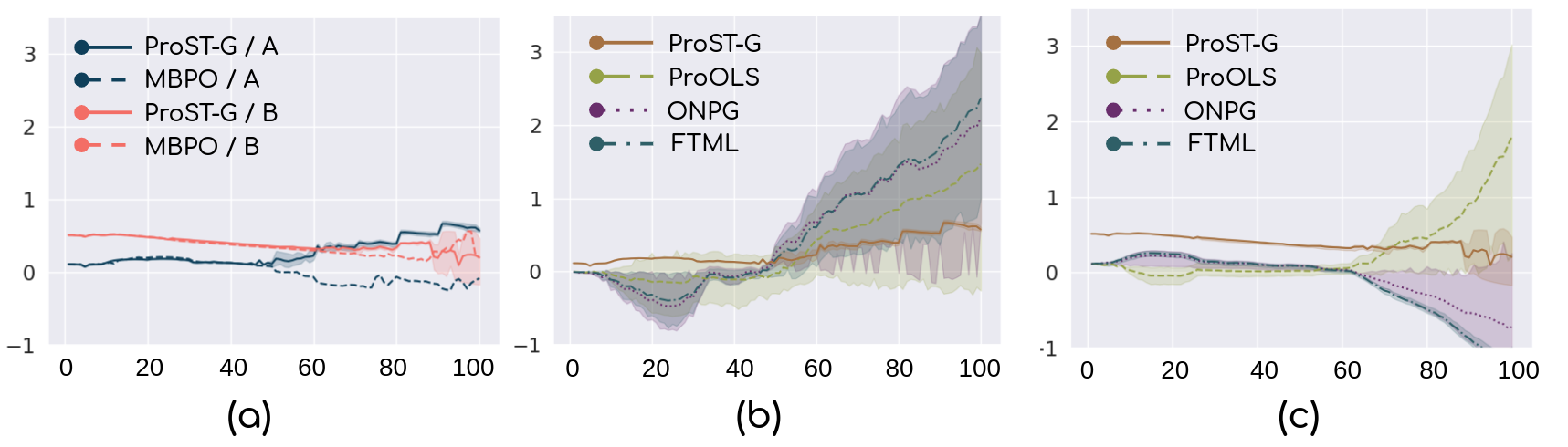}
    \caption{(a) average reward with \texttt{ProST-G} and MBPO on real data A,B ($x$-axis is episode). (b) average reward with ProST-G and three baselines on realdata A. (c) average reward with ProST-G and three baselines on realdata B.}
\end{figure}

\textbf{(2) Halfcheetah-v2}

\begin{figure}[h]
    \centering
    \includegraphics[width=1.0\textwidth]{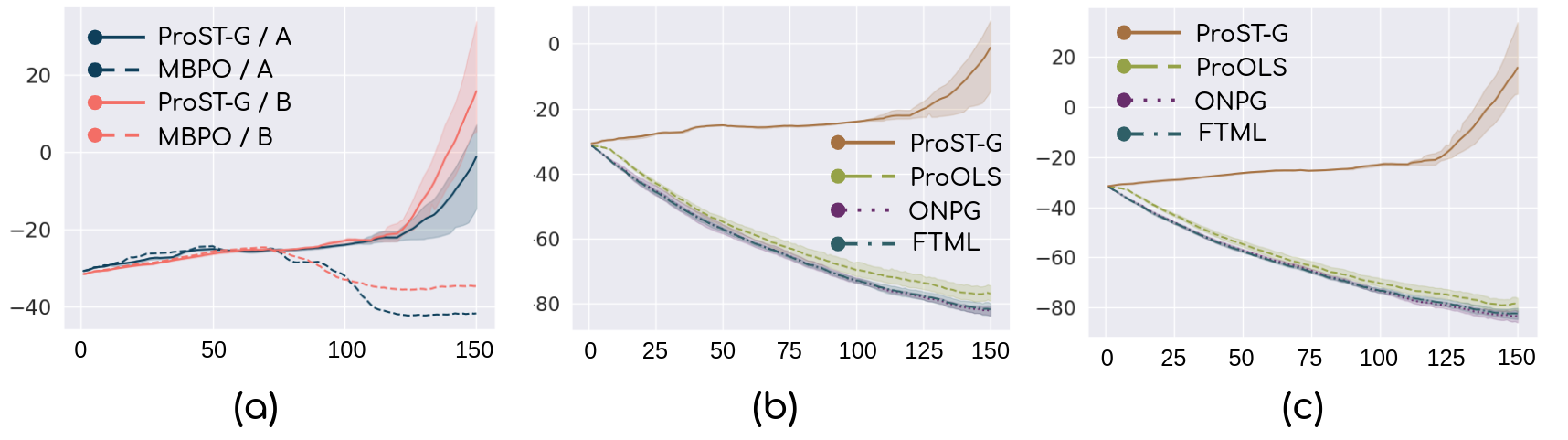}
    \caption{(a) average reward with \texttt{ProST-G} and MBPO on real data A,B ($x$-axis is episode). (b) average reward with ProST-G and three baselines on realdata A. (c) average reward with ProST-G and three baselines on realdata B.}
\end{figure}

\textbf{(3) Hopper-v2}

\begin{figure}[h]
    \centering
    \includegraphics[width=1.0\textwidth]{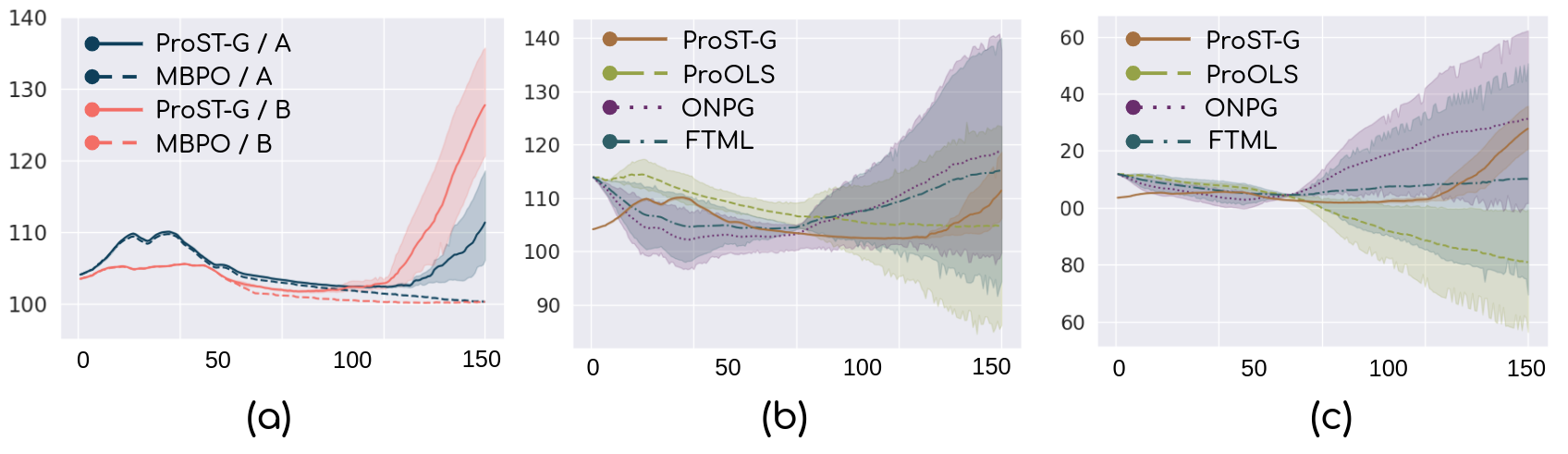}
    \caption{(a) average reward with \texttt{ProST-G} and MBPO on real data A,B ($x$-axis is episode). (b) average reward with ProST-G and three baselines on realdata A. (c) average reward with ProST-G and three baselines on realdata B.}
\end{figure}

\section{Algorithms}


\subsection[Meta-algorithm IgfA]{\texttt{ProST} framework}
\label{section:meta_algorithm}
\begin{algorithm}[H]
\SetNoFillComment
\textbf{Set} : $k_f=1$ \\
\textbf{Init} : policy $\pi^{0}$, forecaster $f_{\phi^0_{f}}$, model estimator $g_{\phi^0_{g}}$ , two dataset $\mathcal{D}_{env}, \mathcal{D}_{syn}$

\For{episode $k$}{
    Execute the agent with $\pi^{k}$ in a environment $\mathcal{M}_{k}$ and add a trajectory to $\mathcal{D}_{env}$. \\
    \tcc{MDP forecaster $g \circ f$}
    \tcc{(1) Observe and forecast:}
    Observe a noisy non-stationary parameter $\hat{o}_k$\\
    Update $f_{\phi_f},g_{\phi_g}$ using $\mathcal{D}_{env} $ and $\hat{o}_{k-(w-1):k}$.\\
    Use $f_{\phi^k_f},g_{\phi^k_g}$ to predict the future $\widehat{\mathcal{P}}^{k+1},\widehat{\mathcal{R}}^{k+1}$ and construct future MDP $\widehat{\mathcal{M}}_{k+1}$\\
    \tcc{Baseline $A$}
    \tcc{(2) Optimize:}
    Roll out synthetic trajectories in  $\widehat{\mathcal{M}}_{k+1}$ and add them to $\mathcal{D}_{syn}$ \\
    
    Use $\mathcal{D}_{syn}$to evaluate and update $\pi^{k}$ to $\widehat{\pi}^{k+1}$
}
\caption{\texttt{ProST} framework}
\label{alg:meta_algorithm}
\end{algorithm}

\subsection{\texttt{ProST-T} algorithm}
\label{section:FT-MBPO}
\begin{algorithm}[H]
\SetNoFillComment
\textbf{Set} : $k_f=1$ \\
\textbf{Init} : policy $\pi^{k}$ , forecaster $f_{\phi^k_f}$, tabular reward model $g^R_k$, tabular transition probability model $g^P_k$, forecasted state-action value $\widehat{Q}^{\boldsymbol{\cdot},k+1}$, empty dataset $\mathcal{D}_{env},\mathcal{D}_{syn}$ \\
Explore $w$ episodes and add $(\tau^{-k},\hat{o}_{-k})$ to $D_{env}$ where $k \in [w]$ before starts \\
\For{episodes $k=1,..,K$}{
    Rollout a trajectory $\tau_k$ using $\pi^{k}$and $\mathcal{D}_{env} = \mathcal{D}_{env} \cup \{ \tau_k \}$ \\
    Observe a noisy non-stationary parameter $\hat{o}_k$ \\
    \tcc{MDP forecaster $g \circ f$: (1) update $f,g$}
    Update $f_{\phi_f}$ : $\phi^k_f \leftarrow \argmin_{\phi} \mathcal{L}_f (\hat{o}_{k-(w-1):k} ; \phi )$ \\
    Update $g^P_k(s^\prime,s,a,o)$ \\
    Update $g^R_k(s,a,o)$ \\
    \tcc{MDP forecaster $g \circ f$: (2) predict $\widehat{\mathcal{P}}^{k+1},\widehat{\mathcal{R}}^{k+1}$}
    Forecast $1$ episode ahead non-stationary parameter:  $\hat{o}_{k+1} = f_{\phi^k_f}(\hat{o}_{k-(w-1):k})$ \\
    Forecast transition probability function: $\widehat{g}^P_{k+1} = g^P_{k}(\cdot,\hat{o}_{k+1})$ \\
    Forecast reward function: $\widehat{g}^R_{k+1} = g^R_{k}(\cdot,\hat{o}_{k+1})$ \\
    Reset $\mathcal{D}_{syn}$ to empty.\\
    \tcc{Baseline $A$:  NPG with entropy regularization}
    Set $\widehat{\pi}^{(0)} \leftarrow \pi^{k}$ \\
    \For{$g=0,..,G-1$}{
        Evaluate $Q_{\tau}^{\widehat{\pi}^{(g)}}$ using the rollouts from the future model $\widehat{g}^{P}_{k+1},\widehat{g}^{R}_{k+1}$ \\
        Update $\widehat{\pi}$ : $\widehat{\pi}^{(g+1)} \leftarrow 1/Z^{(t)} \cdot  \left( \widehat{\pi}^{(g)} \right)^{1-\frac{\eta\tau}{1-\gamma}} \exp{\left( (\eta\widehat{Q}_{\tau}^{\widehat{\pi}^{(g)}}) / (1-\gamma) \right)} $ \\
        where $Z^{(t)} = \sum_{a\in \mathcal{A}} \left( \widehat{\pi}^{(g)} \right)^{1-\frac{\eta\tau}{1-\gamma}} \exp{\left( (\eta\widehat{Q}_{\tau}^{\widehat{\pi}^{(g)}}) / (1-\gamma) \right)}$
    }
    Set $\pi^{k+1} \leftarrow \widehat{\pi}^{(G)}$
}
\caption{\texttt{ProST-T}}
\label{alg:FT-MBPO}
\end{algorithm}

\newpage

\subsection{\texttt{ProST-G} algorithm}
\label{section:F-MBPO}

\textbf{(1) Forecaster $f$.} 
We adopt the ARIMA model to forecast $\hat{o}_{k+1}$ from the noisy \jj{observation} $\widehat{o}_{k-(w-1):k}$. \jj{The} ARIMA model is one of the most general \jj{classes} of models for forecasting a time series\jj{, which} can be made to be stationary by taking a difference among the data. For given time series data $X_t$, we define ARIMA$(p,d,q)$ as given by $X_t - \alpha_1X_{t-1}- \dots -\alpha_{p}X_{t-p} = \epsilon_t + \theta_1 \epsilon_{t-1} + \cdots + \theta_q \epsilon_{t-q}$\jj{, where} $\alpha_{i}$\jj{'s} are the parameters of the autoregressive part of the model, the $\theta_i$\jj{'s} are the parameters of the moving average part\jj{, and }$\epsilon_t$\jj{'s} are the error terms that take $d$ times difference between $X_t$s, which we assume to be independent and follow a normal distribution with \jj{a} zero mean. 

\textbf{(2) Model predictor $g$.}
We use a bootstrap ensemble of dynamic models $\{ g^1_{\phi_g},g^2_{\phi_g},...,g^M_{\phi_g} \}$. Each ensemble model is a probabilistic neural network whose output is parameterized by the mean vector $\mu$ and the diagonal vector of the standard deviation $\text{Diag}(\Sigma)$ of a Gaussian distribution\jj{, namely} $g^i_{\phi_g}(s_{h+1},r_h | s_h,a_h,\widehat{o}_{k+1}) = \mathcal{N}(\mu^{i}_{\phi_{g}}(s_h,a_h),\Sigma^{i}_{\phi_{g}}(s_h,a_h))$. To efficiently handle uncertainty due to the non-stationary environment, we \jj{design} each neural network to be a probabilistic model to capture the aleatoric uncertainty, \jj{i.e.} the noise of the output, and learn multiple models as bootstrap ensemble to handle the epistemic uncertainty, \jj{i.e.} the uncertainty in the model parameters. Then we predict $s_{h+1}$ \jj{and} $r_{h}$ from a model uniformly chosen from its ensemble randomly that admits different transitions along a single model rollout to be sampled from different dynamics modes.

\textbf{(3) Baseline algorithm $A$.}
We adopt soft-actor critic (SAC) as our policy optimization algorithm. SAC alternates the policy evaluation step and the policy optimization step. For a given policy $\widehat{\pi}$, it estimates the forecasted $\widehat{Q}^{\widehat{\pi},k+1}$ value using \jj{the} Bellman backup operator and optimizes the policy that minimizes the expected KL-divergence between $\pi$ and the exponential of the difference $\widehat{Q}^{\widehat{\pi},k+1}-\widehat{V}^{\widehat{\pi},k+1}$ : $\mathbb{E}_{s \sim D_{syn}}[D_{KL}(\widehat{\pi}||\exp{(\widehat{Q}^{\widehat{\pi},k+1}-\widehat{V}^{\widehat{\pi},k+1})}]$.

\begin{algorithm}[H]
\SetNoFillComment
\textbf{Set} : $k_f=1$ \\
\textbf{Init} : policy $\pi^{k}$, forecaster $f_{\phi^k_{f}}$, model estimator $g_{\phi^k_{g}}$ , two dataset $\mathcal{D}_{env}, \mathcal{D}_{syn}$ \\

Explore $w$ episodes and add $(\tau^{-k},\hat{o}_{-k})$ to $D_{env}$ where $k \in [w]$ before starts \\
\For{episodes $k=1,..,K$}{
    Execute the agent with $\pi^{k}$ in a environment $\mathcal{M}_{k}$ and add a trajectory to $\mathcal{D}_{env}$. \\
    \tcc{MDP forecaster $g \circ f$: (1) update $f,g$}
    Observe a noisy non-stationary variable $\hat{o}_k$\\
    Optimize $f_{\phi^k_f}$ on $\hat{o}_{k-(w-1):k}$ \\
    Optimize $g_{\phi^k_g}$ on $\mathcal{D}_{env} $ \\
    \tcc{MDP forecaster $g \circ f$: (2) predict $f,g$}
    Forecast $\hat{o}_{k+1} = f_{\phi^{k}_f}(\hat{o}_{k-(w-1):k})$\\
    Forecast model : $\widehat{g}_{k+1} = g_{\phi^k_g}(\cdot,\hat{o}_{k+1})$ \\ 
    Reset $\mathcal{D}_{syn}$ to empty.\\
    \tcc{Baseline $A$: SAC}
    Set $\widehat{\pi}^{k+1} \leftarrow \pi^{k}$ \\ 
    \For{epochs $n=1,...,N$}{
        \For{model rollouts $m=1,..,M$}{
        Sample $\hat{s}^m_0$ uniformly from $D_{env}$.\\
        Perform a $\widehat{H}$-step model rollout using $\hat{a}^m_h = \widehat{\pi}^{k+1}(\hat{s}^m_h)$, $\hat{s}^{m}_{h+1} = \widehat{g}_{k+1}(\hat{s}^{m}_{h}, \hat{a}^{m}_{h})$ 
 and add a rollout  to $\mathcal{D}_{syn}$ . \\
        }
        
        \For{updates $g=1,..,G$}{
        Evaluate and update forecasted policy $\widehat{\pi}^{k+1}$ on $\mathcal{D}_{syn}$
        }
    }
    Set $\pi_{k+1}  \leftarrow \widehat{\pi}^{k+1}$ \\ 
}
\caption{\texttt{ProST-G}}
\label{alg:F-MBPO}
\end{algorithm}

\section{Experiment Platforms and Licenses}
\subsection{Platforms}
All experiments are done on 12 Intel Xeon CPU E5-2690 v4 and 2 Tesla V100 GPUs.

\subsection{Licenses}
We have used the following libraries/ repos for our python codes: 
\begin{itemize}
    \item Pytorch (BSD 3-Clause "New" or "Revised" License).
    \item OpenAI Gym (MIT License).
    \item Numpy (BSD 3-Clause "New" or "Revised" License).
    \item Official codes distributed from the paper \cite{chandak2020optimizing}: to compare the four baselines.
    \item Official codes distributed from the paper \cite{janner2019trust}: to build \texttt{PMT-G}.
\end{itemize}

\end{document}